\def\eqref#1{equation~\ref{#1}}
\def\1{\bm{1}}
\def\rva{{\mathbf{a}}}
\def\rvb{{\mathbf{b}}}
\def\rvc{{\mathbf{c}}}
\def\rvg{{\mathbf{g}}}
\def\rvu{{\mathbf{i}}}
\def\rvk{{\mathbf{k}}}
\def\rvo{{\mathbf{o}}}
\def\rvr{{\mathbf{r}}}
\def\rvu{{\mathbf{u}}}
\def\rvv{{\mathbf{v}}}
\def\rvx{{\mathbf{x}}}
\def\rvy{{\mathbf{y}}}
\def\rvz{{\mathbf{z}}}
\def\rmA{{\mathbf{A}}}
\def\rmB{{\mathbf{B}}}
\def\rmD{{\mathbf{D}}}
\def\rmE{{\mathbf{E}}}
\def\rmG{{\mathbf{G}}}
\def\rmH{{\mathbf{H}}}
\def\rmI{{\mathbf{I}}}
\def\rmK{{\mathbf{K}}}
\def\rmM{{\mathbf{M}}}
\def\rmT{{\mathbf{T}}}
\def\rmU{{\mathbf{U}}}
\def\rmV{{\mathbf{V}}}
\def\rmX{{\mathbf{X}}}
\def\rmZ{{\mathbf{Z}}}
\def\va{{\mathbf{a}}}
\def\vb{{\mathbf{b}}}
\def\vc{{\mathbf{c}}}
\def\ve{{\mathbf{e}}}
\def\vk{{\mathbf{k}}}
\def\vo{{\mathbf{o}}}
\def\vs{{\mathbf{s}}}
\def\vu{{\mathbf{u}}}
\def\vv{{\mathbf{v}}}
\def\vw{{\mathbf{w}}}
\def\vx{{\mathbf{x}}}
\def\vz{{\mathbf{z}}}
\def\mE{{\bm{E}}}
\def\mH{{\bm{H}}}
\def\mU{{\bm{U}}}
\def\mW{{\bm{W}}}
\DeclareMathAlphabet{\mathsfit}{\encodingdefault}{\sfdefault}{m}{sl}
\SetMathAlphabet{\mathsfit}{bold}{\encodingdefault}{\sfdefault}{bx}{n}
\newcommand{\E}{\mathbb{E}}
\newcommand{\R}{\mathbb{R}}
\newcommand{\Z}{\mathbb{Z}}
\newcommand{\N}{\mathbb{N}}
\newcommand{\Var}{\mathrm{Var}}
\DeclareMathOperator*{\argmax}{arg\,max}
\newcommand{\norm}[1]{\lVert #1 \rVert}
\newcommand{\col}{\begin{pmatrix}}
\newcommand{\con}{\end{pmatrix}}
\newcommand{\nin}{\notin}
\newcommand{\ReLU}{\operatorname{ReLU}}
\newcommand{\poly}{\operatorname{poly}}
\newcommand{\ip}[2]{\langle {#1}, {#2} \rangle}
\newcommand{\Sph}{\mathbb{S}^{d-1}}
\newcommand{\psione}{\psi_{1}}
\newcommand{\psitwo}{\psi_{2}}
\newcommand{\codeMatrix}{\rmH}
\newcommand{\noisyCodeMatrix}{\codeMatrix}
\newcommand{\decoder}{\rmM}
\newcommand{\outEmbedding}{\rvv}
\newcommand{\compressedOutEmbedding}{\outEmbedding}
\newcommand{\auxiliaryEmbedding}{\rvu}
\newcommand{\modelDim}{d}
\newcommand{\numVectors}{{|\rmK|}}
\newcommand{\codeDim}{m}
\newcommand{\gaussianMatrix}{\rmD}
\newcommand{\decodeDiff}{\rva}
\newcommand{\auxHold}{\rvb}
\newcommand{\decodingSphereX}{\sphereX}
\newcommand{\decodingSphereY}{\sphereY}
\newcommand{\outCompEmbedMat}{\rmV}
\newcommand{\targetDirectionMatrix}{\rmU}
\newcommand{\compressedAuxEmbedding}{\auxiliaryEmbedding}
\newcommand{\isoErr}{\rmE}
\newcommand{\keyEmbeddings}{\rvk}
\newcommand{\valueEmbeddings}{\rvv}
\newcommand{\numKV}{F}
\newcommand{\kvDim}{d}
\newcommand{\AOne}{\rmA}
\newcommand{\BOne}{\rmB}
\newcommand{\ATwo}{\rmA'}
\newcommand{\BTwo}{\rmB'}
\newcommand{\invertible}[1]{\mathrm{GL}(#1)}
\newcommand{\auxTheoryEmbed}{\rvu}
\newcommand{\tMatrix}{\rmT}
\newcommand{\tSubK}{\rmT_{\rvk}}
\newcommand{\embeddingC}{\rvc}
\newcommand{\deferredOutEmbedding}{\outEmbedding}
\newcommand{\deferredDecoder}{\decoder}
\newcommand{\sphereX}{\rvx}
\newcommand{\sphereY}{\rvy}
\newcommand{\probDelta}{\delta}
\newcommand{\maxDifference}{\mu}
\newcommand{\tBound}{t}
\newcommand{\normXi}{\rmZ}
\newcommand{\decodingNumVectors}{\numVectors}
\newcommand{\defCompOutEmbedding}{\tilde{\deferredOutEmbedding}}
\newcommand{\fixedCompressed}{\rvz}
\newcommand{\inSPM}{\rvz}
\newcommand{\vSet}{\bm{V}}
\newcommand{\inEmbedding}{\rvx}
\newcommand{\numKeys}{F}
\newcommand{\tokenDim}{d}
\newcommand{\hiddenDim}{h}
\newcommand{\encDim}{m}
\newcommand{\gaussMat}{\rmG}
\newcommand{\gaussRow}[1]{\rmG[#1]}
\newcommand{\gaussEle}[2]{\rmG[#1][#2]}
\newcommand{\target}{\rvo}
\newcommand{\encMat}{\rmE}
\newcommand{\decMat}{\rmD}
\newcommand{\gateMat}{\rmA}
\newcommand{\mMat}{\rmM}
\newcommand{\inVec}{\rvx}
\newcommand{\randVec}{\rvr}
\newcommand{\orthoMat}{\rmU}
\newcommand{\vvec}{\rvv}
\newcommand{\uvec}{\rvu}
\newcommand{\avec}{\rva}
\newcommand{\yvec}{\rvy}
\newcommand{\bvec}{\rvb}
\newcommand{\xvec}{\rvx}
\newcommand{\gvec}{\rvg}
\newcommand{\key}{\rvk}
\newcommand{\canCen}{\rvc}
\newcommand{\normDec}{\rmM}
\newcommand{\enc}{\text{\normalfont\bfseries enc}}
\newcommand{\dec}{\text{\normalfont\bfseries dec}}
\newcommand{\aMat}{\rmA}
\theoremstyle{plain}
\newtheorem{theorem}{Theorem}[subsection]
\newtheorem{definition}[theorem]{Definition}
\newtheorem{corollary}[theorem]{Corollary}
\newtheorem{construction}{Construction}[section]
\newtheorem{lemma}[theorem]{Lemma}
\newtheorem{remark}{Remark}
\newtheorem{proposition}[theorem]{Proposition}
\title{Constructing Efficient Fact-Storing MLPs for Transformers}
\renewcommand{\thefootnote}{\fnsymbol{footnote}}
\newcommand{\thanksmark}[1]{\textsuperscript{\@fnsymbol{#1}}}
\newcommand{\corrauthormark}{\kern0.35em\textsuperscript{\@fnsymbol{2}}}
\author{
Owen Dugan\textsuperscript{1}
  \thanks{Equal first author}
  \corrauthormark
\quad
Roberto Garcia\textsuperscript{2}
  \thanksmark{1}
\quad
Ronny Junkins\textsuperscript{1}
  \thanksmark{1}
\quad
Jerry Liu\textsuperscript{2}
  \thanksmark{1}
\\[0.3ex]
Dylan Zinsley\textsuperscript{3}\quad
Sabri Eyuboglu\textsuperscript{1}\quad
Atri Rudra\textsuperscript{4}\quad
Chris R\'{e}\textsuperscript{1}
\\[1.5ex]
\textsuperscript{1}Computer Science Department, Stanford University \\
\textsuperscript{2}Institute for Computational \& Mathematical Engineering, Stanford University \\
\textsuperscript{3}Computer Science Department, University of Wisconsin–Madison \\
\textsuperscript{4}Computer Science and Engineering Department, University at Buffalo
}
\renewcommand{\thefootnote}{\arabic{footnote}}
\begin{document}
\maketitle

\renewcommand{\thefootnote}{\fnsymbol{footnote}}
\footnotetext[2]{Corresponding author: \href{mailto:odugan@stanford.edu}{\texttt{odugan@stanford.edu}}}
\footnotetext[3]{Preprint, working draft version.}

\setcounter{footnote}{0}
\renewcommand{\thefootnote}{\arabic{footnote}}

\begin{abstract}
    The success of large language models (LLMs) can be attributed in part to their ability to efficiently store factual knowledge as key-value mappings within their MLP parameters.
    Recent work has proposed explicit weight constructions to build such fact-storing MLPs, providing an improved understanding of LLM fact storage mechanisms.
    In this paper, we introduce an MLP construction framework that improves over previous constructions in three areas: it 1) works for all but a measure zero set of feasible input-output pairs, 2) achieves asymptotically optimal parameter efficiency matching information-theoretic bounds for some embeddings, and 3) maintains usability within Transformers for factual recall.
    Through our improvements, we 1) discover a metric on value embeddings that characterizes facts-per-parameter scaling for both constructed and gradient-descent-trained MLPs, 2) identify a simple encoder-decoder mechanism that empirically matches gradient-descent MLP facts-per-parameter asymptotics across all the inputs and outputs we test, and 3) uncover a fundamental tradeoff between an MLP's fact-storage capacity and its usability within Transformers.
    Finally, we demonstrate a proof-of-concept application of fact-storing MLPs: modular fact editing on one-layer Transformers by \textit{replacing entire MLPs at once}.
\end{abstract}
\section{Introduction}
\label{sec:intro}

Large language models (LLMs) achieve remarkable performance across domains such as mathematics, science, and law \citep{AlphaProof2024_DMind,guha2023legalbench,saab2024capabilities}, in part because of their ability to store vast amounts of knowledge within their parameters~\citep{petroni2019language,meng2023locatingeditingfactualassociations}. As a result, there has been considerable interest in understanding the mechanism by which LLMs store knowledge.

A body of prior work seeks to understand how and where LLMs store knowledge by probing pretrained LLMs. These works observed that knowledge is often stored within Multi-Layer Perceptrons (MLPs) via key-value mappings (\textit{facts})~\citep{geva2021transformerfeedforwardlayerskeyvalue,dai2022knowledgeneuronspretrainedtransformers} and have explored LLM fact-editing by modifying MLP parameters~\citep{geva2022transformerfeedforwardlayersbuild, meng2023locatingeditingfactualassociations, nanda2023factfinding}. Another line of work measures the empirical fact storage capacity of LLMs~\citep{allenzhu2024physicslanguagemodels33, zucchet2025languagemodelslearnfacts, morris2025languagemodelsmemorize}, observing that their facts-per-parameter scaling is asymptotically optimal.
More recently, \citet{nichani2024understandingfactualrecalltransformers} further the understanding of MLP fact storage by introducing the first construction for fact-storing MLPs that provably comes within a polylog factor of matching the empirical facts-per-parameter scaling of LLMs.

Despite progress from recent constructions, particularly \citet{nichani2024understandingfactualrecalltransformers}, several key questions remain unanswered about the mechanics and properties of MLPs as fact-storage devices:
\begin{enumerate}[label=\textbf{Q\arabic*:}]
    \item \textbf{How do MLP input and output geometries affect fact-storage capacity?} 
    Existing fact-storing MLP constructions~\citep{nichani2024understandingfactualrecalltransformers} assume that inputs and outputs are uniformly distributed, even though MLPs in the wild have uncentered and non-uniform inputs and outputs (\Cref{sec:theory_main}).
    \label{item:desiderata_2}
    
    \item \textbf{How do MLPs achieve parameter-efficient fact-storage?}
    Existing constructions still fall short of explaining the fact-storage efficiency observed in practice. For instance, the theoretical guarantees in \citet{nichani2024understandingfactualrecalltransformers} suggest that their construction stores $O(\log^{11} F)$ fewer facts per parameter than the information-theoretic optimal for a fact set of size $F$.
    \label{item:desiderata_1}
    
    \item \textbf{How do fact-storing MLPs interface with the rest of the Transformer stack?} Prior work focuses on MLP constructions in isolation~\citep{bubeck2020networksizeweightssize, nichani2024understandingfactualrecalltransformers} or the capacity of a full Transformer stack at once \citep{allenzhu2024physicslanguagemodels33}. However, we still lack a clear understanding of how a transformer might learn to perform recall tasks using a fact-storing MLP.
    \label{item:desiderata_3}
\end{enumerate}

\begin{figure*}
    \centering
    \includegraphics[width=\linewidth]{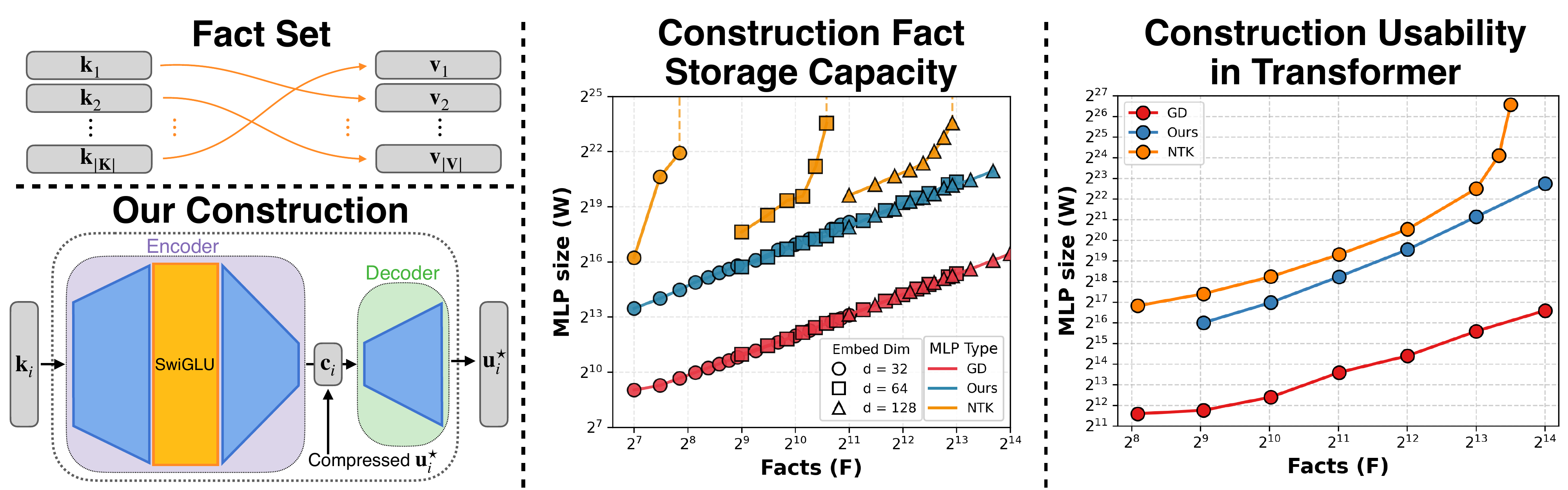}
    \caption{
    (\textbf{Left}) \textit{Top:} We formalize factual knowledge as discrete maps between key and value embeddings. \textit{Bottom:} Our construction consists of an \textit{encoder MLP} that exactly maps keys to compressed intermediate values, and a \textit{decoder linear layer} that linearly decompresses the intermediate values.
    (\textbf{Center}) We compare how the number of parameters ($y$-axis) needed to represent a fact set scales with the number of facts ($x$-axis). Our construction matches gradient-descent trained (GD) MLP asymptotics and requires $5$--$150\times$ fewer parameters than prior constructions.
    (\textbf{Right}) We compare how the number of parameters ($y$-axis) needed for an MLP to represent a fact set in a way that is \emph{usable within a transformer} scales with the number of facts ($x$-axis). Our constructed MLPs exhibit similar asymptotic scaling to GD MLPs, unlike NTK MLPs.
    \textit{Note:} NTK refers to the construction from \citet{nichani2024understandingfactualrecalltransformers}.
    }
    \label{fig:main-figure}
\end{figure*}

We address each of the above questions by improving over existing constructed fact-storing MLPs in a way that uncovers new insights into fact-storing MLPs more broadly. Together, our improvements form an MLP construction framework which produces MLPs that 1) work on all but a measure-zero set of feasible MLP inputs and outputs, 2) match asymptotic information theoretic lower bounds on parameter count for some embeddings, and 3) can be directly used by transformers for factual recall. These improvements allow us to 1) discover a metric on value embeddings that is predictive of MLP facts-per-parameter scaling for both our constructed MLPs and gradient-descent-trained MLPs (GD MLPs), 2) identify a simple encoder-decoder mechanism which is sufficient to empirically match GD MLP facts-per-parameter asymptotics across all of inputs and outputs we test, and 3) identify a fundamental capacity-usability tradeoff for MLPs inside transformers.

\textbf{Q1:}
In \Cref{sec:preconditioning}, 
we study the effect of desired output geometry on MLP capacity.
We improve the construction from \citet{nichani2024understandingfactualrecalltransformers}, improving facts-per-parameter scaling by $2$--$4\times$ and extending it to anisotropic output distributions through an output-whitening procedure.
These improvements provide an insight into MLP scaling: we propose a measure, the \emph{decodability}, which predicts fact-storage capacity for both constructed and GD MLPs with an $R^2$ greater than $97\%$.

\textbf{Q2:}
In \Cref{sec:theory_main}, we improve over existing constructions by providing an MLP construction framework requiring asymptotically fewer parameters than the lowest proven bounds for existing constructions, while also generalizing to nearly all feasible input and output distributions. Our closed-form constructed MLPs match the information-theoretic lower bound for some embeddings, empirically require $5$--$150\times$ fewer parameters than NTK MLPs, and are the first constructed MLPs to match GD MLP asymptotics regardless of input/output dimension.
This construction leads to a key insight about fact-storing MLPs: a simple encoder-decoder MLP framework using dimensionality reduction on the desired MLP outputs (e.g., \citet{johnson1984extensions}) can asymptotically match information-theoretically optimal facts-per-parameter scaling.

\textbf{Q3:}
In \Cref{sec:usability}, we improve existing constructions by identifying a set of modifications to the transformer architecture that enable training a transformer block to use fact storing MLPs for factual recall. We find that our transformer block can use our constructed MLPs, storing an amount of facts per parameter comparable to the information-theoretically optimal, unlike previous constructions.Additionally, we gain insight into fact-storing MLPs interactions with transformers by identifying a fundamental tradeoff between their capacity and usability in transformers.

Finally, in \Cref{sec:fact-editing}, inspired by our results on MLP usability within transformers, we demonstrate modular fact editing in 1-layer transformers as an application of fact-storing MLPs. If, given a transformer block, we modularly swap its fact-storing MLP with another one storing new facts, the transformer outputs the new facts accurately and only increases the cross-entropy loss of non-fact-related tokens by $\sim3$\% \textit{without any additional training}. Further, our modular MLP-swapping approach to fact editing doubles the \textit{fact-editing score} (defined in \Cref{sec:fact-editing}) of SoTA fact-editing weight updates (e.g. MEMIT \cite{memit}, Alpha-Edit \cite{fang2025alphaeditnullspaceconstrainedknowledge}, and ROME \cite{rome}) when editing 10\% of the fact set.

In summary, we present a construction that a) supports a broader class of embeddings than prior constructions, b) produces MLPs with asymptotically fewer parameters than the bounds proven for alternative constructions, and c) produces MLPs that are usable within transformers for factual recall. We use this construction to gain insights into 1) MLP fact-storage capacity's dependence on output geometry, 2) mechanisms behind MLP facts-per-parameter scaling, and 3) the tradeoff between MLP capacity and usability in transformers. By directly constructing MLPs to store facts, we provide a theoretical framework for studying fact storage and a path toward more robust fact manipulation in LLMs.

\section{Preliminaries}
\label{sec:definitions}
\subsection{Definitions}
\label{subsec:definitions_maintext}
We first formalize our notion of factual knowledge, which matches the definitions of~\citet{nichani2024understandingfactualrecalltransformers}. 

\paragraph{Formalizing Factual Knowledge.}
Inspired by prior work~\citep{nichani2024understandingfactualrecalltransformers,zoology,allenzhu2024physicslanguagemodels33}, we define a \textit{fact set} as a discrete mapping between integers. In particular, given a list of keys $K$ and a list of values $V$, a fact set is a function $f: [|K|]\to [|V|]$. For example, given $K = [$``France'', ``USA''$]$ and $V = [$``Washington, D.C.'', ``Paris''$]$, the fact set mapping countries to capitals would be $f(1) = 2,$ $f(2) = 1.$

Although we use human-interpretable examples of key-value maps above, our definition of fact sets applies broadly to transformer tasks. In particular, a language model specifies a fixed vocabulary and encodes maps between tokens as maps between integers, which is also representable in this framework.

Transformers interface with tokens through embedding tables. Motivated by this, we consider \textit{key embeddings} $\mathbf{K}\in \mathbb{R}^{|K| \times d}$ and \textit{value embeddings} $\mathbf{V}\in\mathbb{R}^{|V| \times d}$, which map keys and values, respectively, to vectors. We define $|\mathbf{K}|$ and $|\mathbf{V}|$ as the number of key and value embeddings, respectively, and we denote the $i$th key and value embedding as $\mathbf{k}_i$ and $\mathbf{v}_i$, respectively.
In the case of MLPs within transformers, key and value embeddings come from the internal representations of the surrounding transformer.

\paragraph{Storing a fact set.}
We say that a model $\mathbf{g}_\theta: \mathbb{R}^d\to \mathbb{R}^d$ \textit{stores a fact set} $f: [|\mathbf{K}|]\to [|\mathbf{V}|]$ given embeddings $\mathbf{K}$ and $\mathbf{V}$ if, for all $i \in [|\mathbf{K}|],$ and all $j\neq f(i)\in [|\mathbf{V}|]$,
\begin{equation}
    \label{eq:decoding-criterion-maintext}
    \begin{aligned}
    \langle \mathbf{g}_\theta(\mathbf{k}_i), \mathbf{v}_{f(i)}\rangle > \langle \mathbf{g}_\theta(\mathbf{k}_i), \mathbf{v}_{j}\rangle,
    \end{aligned}
\end{equation}
or, equivalently, $\langle \mathbf{g}_\theta(\mathbf{k}_i), \mathbf{v}_{f(i)} - \mathbf{v}_j\rangle > 0$. In the context of language modeling, this definition is equivalent to outputting the correct value token for each input key token under softmax decoding (see \Cref{sec:info_theory_bound}). For an MLP output $\mathbf{o}$, we refer to $\langle \mathbf{o}, \mathbf{v}_{i}\rangle$ as the \textit{score} of $\mathbf{o}$ with respect to the $i$th value.

We define the \emph{fact-storage cost} of key/value embeddings $\mathbf{K}$ and $\mathbf{V}$ given a model class $\mathbf{g}$ as the smallest number of model parameters needed to store \emph{all possible fact sets} over those embeddings:
\begin{equation}
    W(\mathbf{g}; \mathbf{K}, \mathbf{V}) =
    \min \left\{\#(\theta)\Bigg|\;
    \begin{aligned}
        &\forall f : [|\mathbf{K}|] \to [|\mathbf{V}|], \\
        &\exists\, \theta \; \text{s.t.} \; \mathbf{g}_\theta \text{ stores } f
    \end{aligned}
    \right\}.
    \label{def:complexity}
\end{equation}
A standard information-theoretic lower bound for fact storage cost~\citep{allenzhu2024physicslanguagemodels33}, which we prove for completeness in \Cref{sec:info_theory_bound}, is the following:
\begin{proposition}
    \label{thm: info_bounds_capacity-const}
    Assuming a constant number of bits per parameter, the fact-storage cost of embeddings $\mathbf{K}$ and $\mathbf{V}$ for \emph{any} model family $\mathbf{g}$ satisfies $W(\mathbf{g}; \mathbf{K}, \mathbf{V}) = \Omega(|\mathbf{K}|\log [|\mathbf{V}|])$.
\end{proposition}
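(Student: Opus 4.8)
The plan is a counting argument of the standard information-theoretic type. First I would count fact sets: since a fact set is an arbitrary function $f:[|\mathbf{K}|]\to[|\mathbf{V}|]$, there are exactly $|\mathbf{V}|^{|\mathbf{K}|}$ of them. Second, the key structural point is that a single parameter vector $\theta$ stores at most one fact set. Indeed, if $\mathbf{g}_\theta$ stores $f$, then the strict inequalities in the storage criterion \eqref{eq:decoding-criterion-maintext} say that for every $i$ the index $f(i)$ is the \emph{unique} maximizer of $j\mapsto\langle\mathbf{g}_\theta(\mathbf{k}_i),\mathbf{v}_j\rangle$; hence $f$ is completely determined by $\mathbf{g}_\theta$, and two distinct fact sets cannot both be stored by the same $\theta$.

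Third, I would bring in the ``constant bits per parameter'' hypothesis: if each of the $W:=W(\mathbf{g};\mathbf{K},\mathbf{V})$ parameters is represented with at most $b=O(1)$ bits, then at most $2^{bW}$ distinct parameter vectors, and hence at most $2^{bW}$ distinct functions $\mathbf{g}_\theta$, are realizable. By the definition of $W$ in \eqref{def:complexity}, every fact set is stored by some realizable $\mathbf{g}_\theta$, so the partial map sending $\theta$ to the fact set stored by $\mathbf{g}_\theta$ is a surjection onto the set of all $|\mathbf{V}|^{|\mathbf{K}|}$ fact sets. Therefore $2^{bW}\ge|\mathbf{V}|^{|\mathbf{K}|}$, which rearranges to $W\ge\frac1b\,|\mathbf{K}|\log_2|\mathbf{V}|=\Omega\!\left(|\mathbf{K}|\log [|\mathbf{V}|]\right)$.

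There is no serious obstacle here; the only two points needing care are (i) stating the ``constant bits per parameter'' assumption precisely enough that ``only finitely many realizable models'' is justified — equivalently, one can phrase everything information-theoretically by noting that a uniformly random fact set carries $|\mathbf{K}|\log_2|\mathbf{V}|$ bits of entropy, which the $b$-bit-per-parameter description of $\theta$ must be able to encode — and (ii) using the strict-inequality form of the storage criterion, so that each model pins down exactly one fact set rather than a family of mutually consistent ones (allowing ties would break argmax uniqueness and hence the fact that the encoding map is well defined).
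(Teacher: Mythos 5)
Your proposal is correct and follows essentially the same counting argument as the paper's proof (Theorem~\ref{thm:info_capacity}): with $O(W)$ total bits the model family can realize at most $2^{O(W)}$ distinct hypotheses, while all $|\mathbf{V}|^{|\mathbf{K}|}$ fact sets must be representable, forcing $W=\Omega(|\mathbf{K}|\log|\mathbf{V}|)$. Your explicit observation that the strict-inequality storage criterion makes the stored fact set unique for each $\theta$ is a point the paper leaves implicit, but it is the same argument.
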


Following prior work~\citep{allenzhu2024physicslanguagemodels33, zucchet2025languagemodelslearnfacts}, we define the \emph{fact-storage capacity} of a model as the maximum number of facts it can store for a given number of parameters. See \Cref{sec:info_theory_bound} for a formal definition.

\subsection{Related Work}
\label{sec:related_work}
A first body of prior work has attempted to understand and manipulate LLM knowledge storage by probing pretrained LLMs. \citet{geva2021transformerfeedforwardlayerskeyvalue,geva2022transformerfeedforwardlayersbuild} observed that knowledge is often stored within MLPs via key-value mappings. This discovery sparked a number of studies which attempt to reverse engineer the facts found in MLPs \citep{dai2022knowledgeneuronspretrainedtransformers,nanda2023factfinding}.

After identifying the facts stored by individual LLM MLPs, researchers naturally turned to editing this knowledge. Works such as \citet{dai2022knowledgeneuronspretrainedtransformers, meng2023locatingeditingfactualassociations, memit, model_edit_scaling, gu2024modeleditingharmsgeneral, fang2025alphaeditnullspaceconstrainedknowledge, sun2025mitigatingnegativeinterferencemultilingual} have developed increasingly more accurate, general, and targeted methods for editing of specific facts within LLM MLPs.

Building on the insights from probing LLMs, a second body of work attempts to formalize factual knowledge, often focusing on its scaling. Typically, these works treat knowledge as a key-value store and study the scaling of factual knowledge through associative recall synthetics \citep{allenzhu2024physicslanguagemodels33, zucchet2025languagemodelslearnfacts}, design choices which we also follow. Remarkably, these works consistently find empirically that trained LLMs store facts at the asymptotically optimal rate provided in \Cref{thm: info_bounds_capacity-const} \citep{allenzhu2024physicslanguagemodels33, zucchet2025languagemodelslearnfacts, morris2025languagemodelsmemorize}.

The discovery that trained MLPs store facts at the asymptotically optimal rate raises the question of how MLPs achieve such a scaling. In an attempt to answer this question, \citet{elhage2022toymodelssuperposition} have explored the geometric properties and learning dynamics of MLPs that store a large number of facts. Recently, \citet{nichani2024understandingfactualrecalltransformers} have taken an additional step toward uncovering the mechanisms underlying MLP fact storage; they propose a construction for fact-storing MLPs that comes within a (large) polylog factor of matching the asymptotic fact-scaling of LLM MLPs.

In this work, we improve upon the results of~\citet{nichani2024understandingfactualrecalltransformers} by
a) improving MLP fact-storage cost asymptotics,
b) handling more general input and output embeddings,
and c) enabling constructed MLPs to be usable within transformers.
We use insights from our construction to gain insight into fact-storing MLPs.

\section{Embedding Geometry and Fact-Storage Cost}
\label{sec:preconditioning}
In this section, we investigate how the fact-storage cost of an MLP depends on
the geometry of a fact set's value embeddings. We first gain insight into fact-storing MLPs by developing a metric on the value embeddings which is predictive of MLP fact-storage cost, achieving an $R^2 > 97\%$ (\Cref{subsec: rho}). Further, we use this insight to improve the NTK construction from \citet{nichani2024understandingfactualrecalltransformers}, by generalizing it to non-isotropic embeddings with an embedding by using an embedding whitening procedure. Moreover, we enhance gradient-descent-trained MLPs (GD MLPs), reducing its fact-storage cost for non-isotropic embeddings (\Cref{subsec: whitening}) using the same procedure.

\subsection{A Metric $\rho(\mathbf{V})$ that Predicts Fact-Storage Cost}\label{subsec: rho}
First, we introduce $\rho(\mathbf{V})$ to measure the \textit{decodability} of value embeddings $\mathbf{V}$.
Intuitively, $\rho(\mathbf{V})$ is the minimum normalized margin between the margin-optimal MLP outputs $\mathbf{U^*} \in{\mathbb{R}^{n,d}}$ and the value embeddings $\mathbf{V}\in{\mathbb{R}^{n,d}}$.

\begin{definition}
The decodability $\rho(\mathbf{V})$ of embeddings $\mathbf{V}$ is
\begin{equation}
\label{eq:decodability}
\rho(\mathbf{V}) = \max_{\mathbf{u_i} \in \mathbb{R}^{d}} \left[ \min_{i \neq j} \frac{\langle \mathbf{v}_i - \mathbf{v}_j, \mathbf{u}_i \rangle}{\|\mathbf{u}_i\|_2\|\mathbf{v}_i - \mathbf{v}_j\|_2} \right].
\end{equation}
\end{definition}

Given the margin-optimal output embeddings $\mathbf{u}_i$, $\rho(\mathbf{V})$ measures the minimum margin $\langle \mathbf{u}_i, \mathbf{v}_i \rangle - \langle \mathbf{u}_i, \mathbf{v}_j \rangle$ normalized by $\|\mathbf{u}_i\|_2$ and $\|\mathbf{v}_i - \mathbf{v}_j\|_2$\footnote{
A related notion is the \emph{coherence} of the value embeddings, 
defined as 
$\mu(\mathbf{V}) = \max_{i\neq j} 
\frac{\mid \langle \mathbf{v}_i, \mathbf{v}_j\rangle \mid}
{\|\mathbf{v}_i\|\;\|\mathbf{v}_j\|}$.  
When all $\mathbf{v}_i$ have unit norm, one can show 
that $\rho(\mathbf{V}) \ge \sqrt{[1-\mu(\mathbf{V})]/2}$.  
However, no corresponding \emph{upper} bound on $\rho(\mathbf{V})$ 
in terms of $\mu(\mathbf{V})$ exists in general (Appendix~\ref{sec:rho_relation_coherence}).
Empirically, coherence is not as predictive of fact-storage cost as $\rho(\mathbf{V})$ is
for either our constructed MLPs ($R^2 \approx 0.44$) or GD MLPs ($R^2 \approx 0.10$): see~\Cref{fig:coherence_appendix}.
This helps motivate the use of $\rho(\mathbf{V})$ rather than coherence as the relevant geometric predictor of decoding difficulty.
}. Such a normalization ensures that arbitrary scalings of $\mathbf{u}_i$ or $\mathbf{v}_i$ do not affect the decoding difficulty of $\mathbf{V}$, as one would expect. Notably, the quantity $\rho(\mathbf{V})$ also appears naturally in our decoder construction in \cref{subsec:construction_decoder}. 

\paragraph{$\rho(\mathbf{V})$ predicts fact storage capacity.}

In \Cref{fig:scaling-rho}a, we find empirically that fact-storage cost scales inversely with $\rho$ for both our constructed MLPs (presented in \Cref{sec:theory_main}) and GD MLPs. We show that $\rho$ is predictive of fact set difficulty ($R^2 > 97\%$), as measured by the size of MLP required to store a fact set, for both our constructed MLPs and GD MLPs. This ability to predict capacity for multiple types of fact-storing MLPs suggests that $\rho$ is not a construction-dependent quantity, and that it is instead a property of \textit{near-optimal} fact-storing MLPs.

\subsection{Defining Optimal MLP Outputs}\label{subsec: optimal outputs}
Interestingly, using $\mathbf{u}_i = \mathbf{v}_i$ is generally suboptimal for decoding to index $i$ of $\mathbf{V}$.

As an extreme case, consider the embeddings $\mathbf{v}_1 = \mathbf{e}_1$ and $\mathbf{v}_2 = 2\mathbf{e}_1$. If we wish to select an output that decodes to index 1, outputting $\mathbf{v}_1 = \mathbf{e}_1$ is incorrect and will instead decode to index 2. In fact, outputting $-\mathbf{e}_1$ is optimal, in the sense that it is the unit vector that maximizes the gap between its score with respect to $\mathbf{v}_1$ ($\text{score}_1 = \langle -\mathbf{e}_1, \mathbf{v}_1 \rangle = -1$) and its score with respect to $\mathbf{v}_2$ ($\text{score}_2 = \langle -\mathbf{e}_1, \mathbf{v}_2 \rangle = -2$).

Instead, we can define the margin-optimal output embeddings as the unit $\mathbf{u}_i$ that achieve the maximum value in the definition of $\rho(\mathbf{V})$: \begin{definition}
    \label{def:optimal_output_embeddings}
    The {\upshape margin-optimal output embeddings} (optimal output embeddings for short) $\mathbf{U}^\star\in\mathbb{R}^{|\mathbf{V}|\times d}$ for value embeddings $\mathbf{V}$ is
    \begin{equation}
        \label{eq:decodability}
        \mathbf{u}_i^\star(\mathbf{V}) = \argmax_{\mathbf{u} \in \mathbb{S}^{d-1}}
        \left[\min_{j}
        \frac{\langle \mathbf{v}_i - \mathbf{v}_j, \mathbf{u} \rangle}
        {\|\mathbf{v}_i - \mathbf{v}_j\|_2}\right].
    \end{equation}
\end{definition}

We can obtain $\mathbf{u}_i^\star$ as the solution to a convex program by relaxing the domain to $\|\mathbf{u}_i\|_2 \le 1$ (See Appendix \ref{app:extended_theory}).

Interestingly, $\mathbf{u}_i^\star$ is the spherical Chebyshev center \citep{spherical_chebyshev2024} of the set $S_i = \{\mathbf{v}_i - \mathbf{v}_j \mid j\neq i\}$. Similarly, $\rho(\mathbf{V})$ is the maximum of the spherical Chebyshev radii of the $S_i$. We explore the resulting bounds on $\rho(\mathbf{V})$ in Appendix \ref{app:extended_theory}.

\subsection{Embedding Whitening}\label{subsec: whitening}

Interestingly, the \textit{decodability} $\rho$ is \emph{not} invariant to affine transformations of the value embeddings, but MLPs \emph{are} equivariant to such transformations.
If the MLP $\mathbf{g}(x) = \mathbf{B}\,\mathrm{ReLU}(\mathbf{A}\mathbf{x}+\mathbf{b})$ stores a fact given the value embeddings $\{\mathbf{v}_i\}$,  
then for any invertible affine transformation of the value embeddings\footnote{Here $\mathrm{GL}(d)$ is the usual set of $d \times d$ real valued matrices with non-zero determinant.} \(
    T(\mathbf{v}) = \mathbf{M}\mathbf{v} + \mathbf{c} \,\, \text{for} \,\, \mathbf{M} \in \mathrm{GL}(d),\; \mathbf{c} \in \mathbb{R}^d,
\)
the reparameterized MLP $\tilde{\mathbf{g}}(\mathbf{x}) = \tilde{\mathbf{B}}\,\mathrm{ReLU}(\mathbf{A} \mathbf{x}+\mathbf{b})$ stores the fact set given value embeddings $\{T(\mathbf{v}_i)\}$, where
$\mathbf{\tilde B} = \mathbf{M}^{-1} \mathbf{B}$.\footnote{We prove this for completeness in \Cref{thm:affine_invariance_app}.}

This motivates the following procedure for improving the fact-storage cost of MLPs. 
Given embeddings $\mathbf{V} = \{\mathbf{v}_1, \ldots, \mathbf{v}_n\} \subset \mathbb{R}^d$, we search for an invertible affine transform $T(\mathbf{v})$ that maximizes the decodability of the transformed set:
\begin{equation}
    \label{eq:preconditioning_problem}
    \max_{\mathbf{M} \in \mathrm{GL}(d),\, \mathbf{c} \in \mathbb{R}^d} \;
    \rho (\{\,T(\mathbf{v}_i)\,\}_{i=1}^n).
\end{equation}
Let $\widetilde{\mathbf{V}} = \{\,T(\mathbf{v}_i)\,\}_{i=1}^{|\mathbf{V}|}$ denote the resulting embeddings, 
so that $\rho(\widetilde{\mathbf{V}}) \geq \rho(\mathbf{V})$. 
We then train or construct the MLP on $\widetilde{\mathbf{V}}$, then fold the affine transformation into the network parameters.

We find that a simple heuristic choice of transformation, where $\mathbf{M}$ is the whitening transform of the empirical covariance of $\mathbf{V}$ and $\vc$ is the negative of the mean of $\mathbf{V}$, often improves the decodability: see~\Cref{app:subsec_embeddings_theory} for formal bounds.
We refer to this procedure as \emph{embedding whitening}, and we refer to MLPs trained or constructed with and without embedding whitening as \textit{whitened} and \textit{non-whitened MLPs}, respectively.

\paragraph{Embedding whitening improves fact storage capacity.} In Figure~\ref{fig:scaling-rho}a, we find that embedding whitening improves constructed MLP fact-storage cost\footnote{For Figure~\ref{fig:scaling-rho}a, to obtain embeddings with small $\rho$, we use embeddings which are sampled uniformly from a unit sphere and then multiplied by an ill-conditioned transformation matrix. For this choice of embeddings, whitening exactly removes the dependence on $\rho$, but for other embeddings a dependence on $\rho$ may remain (See Appendix \ref{app:expt}).} for embeddings with low $\rho$ by up to $32\times$. However, as we will show in~\Cref{sec:usability}, whitening the embeddings results in MLPs with large Lipschitz constant that are harder to use within transformers.

\begin{figure*}[ht]
    \centering
    \includegraphics[width=0.99\linewidth]{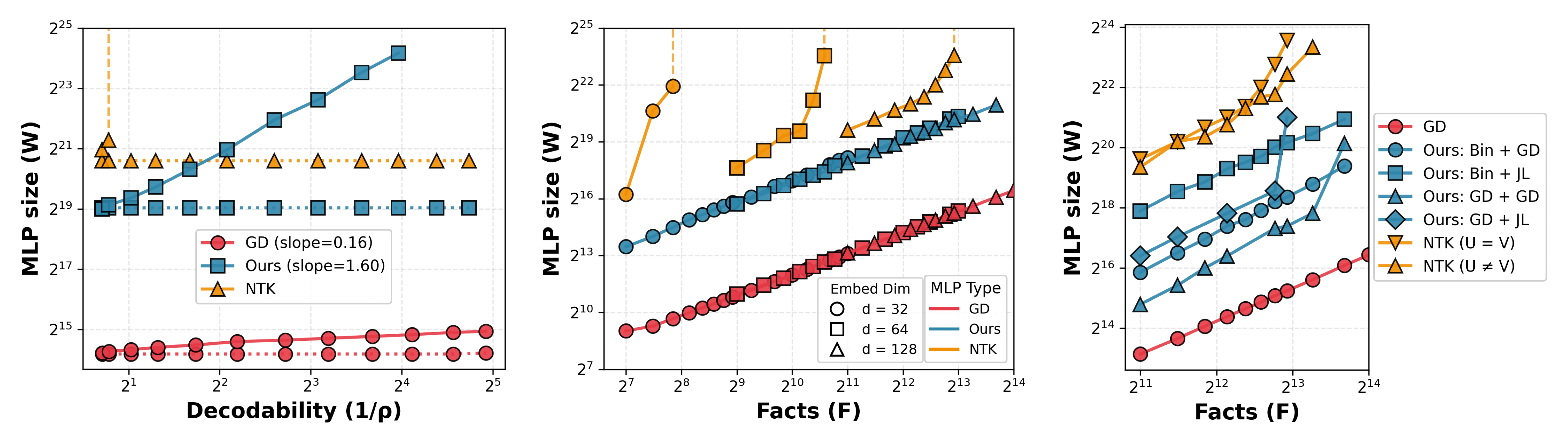}
   \caption{\textbf{(a)} For both GD and our constructed MLPs, $\rho$ is predictive ($R^2 > 0.97$) of MLP size for a fixed number of facts. Embedding whitening reduces our constructed MLPs' fact-storage cost by up to $32\times$ and allows NTK MLPs to generalize to highly anisotropic embeddings.
    \textbf{(b)} GD MLPs and our constructed MLPs exhibit consistent facts-per-parameter scaling as embedding dimension and number of facts vary jointly, whereas NTK MLPs exhibit asymptotically worse scaling as more facts are squeezed into a fixed embedding dimension (pictured for spherical embeddings). Our constructed MLPs have between $5$--$150\times$ lower fact-storage cost than NTK MLPs, while GD MLPs have $\sim\!20\times$ lower fact-storage cost than ours.
    \textbf{(c)} When training the encoder and decoder with gradient descent, the fact-storage cost gap to GD MLPs narrows from $\sim\!20\times$ to $\sim\!4\times$.
    }
    \label{fig:scaling-rho}
\end{figure*}

\section{MLP Constructions}
\label{sec:theory_main}
\begin{algorithm}[t]
\caption{Fact-Storing MLP Framework}
\label{alg:fact_mlp_framework}
\begin{algorithmic}[1]
\REQUIRE $\mathbf{K} \in \mathbb{R}^{|\mathbf{K}|\times d}$, $\mathbf{V} \in \mathbb{R}^{|\mathbf{V}|\times d}$, $f : [|\mathbf{K}|] \to [|\mathbf{V}|]$
\REQUIRE Hidden size $h$, compressed dim. $m$, activation $\sigma$
\STATE $(\mathbf{C}\in\mathbb{R}^{|\mathbf{V}|\times m}, \mathbf{D}\in\mathbb{R}^{d\times m}) \gets \textsc{Dec}(\mathbf{V}, m)$
\STATE $(\mathbf{A}, \mathbf{G}\in\mathbb{R}^{h\times d}, \mathbf{E}\in\mathbb{R}^{m\times h}) \gets \textsc{Enc}(\mathbf{K}, \mathbf{C}, f, h, \sigma)$
\STATE $\mathbf{MLP}(\mathbf{x}) \coloneqq \mathbf{D} \mathbf{E} \left(\sigma(\mathbf{G}\mathbf{x}) \odot (\mathbf{A}\mathbf{x})\right)$
\STATE \textbf{return}  $\mathbf{MLP}$
\end{algorithmic}
\end{algorithm}

We now present our framework for fact-storing MLPs (\Cref{alg:fact_mlp_framework}). The core insight of our framework is to define \textit{compressed output embeddings} $\mathbf{C}\in\mathbb{R}^{|\mathbf{V}|\times m}$ and to decompose the MLP into an \textit{encoder}, which maps keys $\mathbf{k}_i$ to compressed outputs $\mathbf{c}_{f(i)}$, and a \textit{decoder}, which decompresses $\mathbf{c}_{f(i)}$ into an output in $\mathbb{R}^d$ which decodes to $\mathbf{v}_{f(i)} \in \mathbb{R}^d$. This encoder-decoding framework is sufficient to match the asymptotic scaling of GD MLPs' fact-storage cost across a range of embeddings.

In \Cref{subsec:encoder_twohot} and \Cref{subsec:construction_decoder}, we present the details of the encoder and decoder portions of our frameworks, respectively. For each, we 1) present the encoder/decoder structure and objective, 2) demonstrate how an encoder/decoder can be obtained through gradient descent, and 3) present explicit, closed-form weight constructions with asymptotic analysis.

In \Cref{subsec:full_construction} we present the full construction and show that it provides tighter asymptotic fact-storage cost than has been proven for prior constructions, even matching the information-theoretic lower bounds in some cases. Finally, in \Cref{subsec:3_expts} we demonstrate empirically that 1) our construction has a lower fact-storage cost than prior constructions and 2) unlike prior constructions, our construction's \emph{fact-storage cost} scaling matches that of GD MLPs even when varying the number of facts or input-output dimensions independently.

\subsection{The Encoder}
\label{subsec:encoder_twohot}

Our encoder is a single-hidden layer MLP mapping key embeddings to compressed output embeddings.

\paragraph{Encoder Structure}
Our encoder is a gated MLP\footnote{For the rest of \Cref{sec:theory_main}, we drop biases for notational simplicity.}
\begin{equation*}
    \enc(\mathbf{x}) \;=\; \mathbf{E} \left(\sigma(\mathbf{G}\mathbf{x}) \odot (\mathbf{A}\mathbf{x})\right)
\end{equation*}
where $\mathbf{A}, \mathbf{G} \in \mathbb{R}^{h \times d}$, $\mathbf{E}\in \mathbb{R}^{m\times h}$, $\mathbf{x}\in\mathbb{R}^d$, and $\sigma:\mathbb{R}^h \to \mathbb{R}^h$ is an activation function.

Gated MLPs simplify our analysis and are now popular across frontier models~\citep{yang2025qwen3, dubey2024llama}. In \cref{app:extended_theory}, we extend to non-gated MLPs.

\paragraph{Encoder Framework Objective}
Given key embeddings $\mathbf{K}\in\mathbb{R}^{|\mathbf{K}|\times d}$, compressed output embeddings $\mathbf{C}\in\mathbb{R}^{|\mathbf{V}|\times m}$, and a mapping $f$, the objective of our encoder framework is to produce an MLP $\enc$ with a minimal number of parameters such that $\enc(\mathbf{k}_i) = \mathbf{c}_{f(i)}$ for all $i\in|\mathbf{K}|.$

\paragraph{Gradient-Descent Construction}
One strategy to build an encoder MLP is to use gradient descent (a \textit{GD Encoder}) by optimizing for $\enc$ in the Mean-Squared Error (MSE) objective
\begin{equation*}
    \mathcal{L}(\mathbf{K}, \mathbf{C}; \enc) = \sum_{i\in|\mathbf{K}|} ||\enc(\mathbf{k}_i) - \mathbf{c}_{f(i)}||^2.
\end{equation*}

\paragraph{Closed-Form Weight Construction}
Alternatively, we can construct an encoder via a closed-form weight construction. Our constructed encoder builds $m$ encoder gadgets\footnote{
We can set the down projection to $\mathbf{1}^\top$ without loss of generality by replacing $\mathbf{A}$ with $\textrm{diag}(\mathbf{E})\mathbf{A}$.}
\[
    \enc_j(\mathbf{x}) \;=\; \mathbf{1}_{\tilde h}^\top \left[\sigma(\mathbf{G}\mathbf{x}) \odot (\mathbf{A}\mathbf{x})\right],\quad\mathbf{G},\mathbf{A}\in\mathbb{R}^{\tilde h\times d},
\]
that map $\mathbf{k}_i$ to $\mathbf c_{f(i)}[j] \in \mathbb{R}$, respectively, where $\tilde h = h/m$. We will demonstrate that these gadgets require only $O(|\mathbf{K}|)$ parameters. By stacking all $m$ gadgets together, one for each target dimension $j$, we can construct $\mathbf c_{f(i)}$ with a total of $O(m|\mathbf{K}|)$ parameters, as shown in \Cref{alg:encoder_construction}.

\begin{algorithm}[t]
\caption{Encoder Construction (\textsc{Enc})}
\label{alg:encoder_construction}
\begin{algorithmic}[1]
\REQUIRE $\mathbf{K} \in \mathbb{R}^{|\mathbf{K}|\times d}$, $\mathbf{C} \in \mathbb{R}^{|\mathbf{V}|\times m}$, $f : [|\mathbf{K}|] \to [|\mathbf{V}|]$
\REQUIRE Hidden size $h$, activation $\sigma$
\STATE $\tilde h \coloneqq h/m$
\vspace{7pt}
\STATE \textbf{for} $j = 1$ \textbf{to} $m$ \textbf{do}
\STATE \hspace{1em} $\mathbf{o}^{(j)} \coloneqq [\mathbf{C}_{f(1), j},\ldots,\mathbf{C}_{f(|\mathbf{K}|), j}]  \in \mathbb{R}^{|\mathbf{K}|}$
\STATE \hspace{1em} $(\mathbf{A}^{(j)}, \mathbf{G}^{(j)}\in\mathbb{R}^{\tilde h\times d}) \gets \textsc{EncGad}(\mathbf{K}, \mathbf{o}^{(j)}, \tilde h, \sigma)$
\STATE \textbf{end for}
\vspace{7pt}
\STATE Stack encoder gadgets $\mathbf{A}, \mathbf{G}\in \mathbb{R}^{m\times d}$:
\begin{equation*}
    \mathbf{A} \coloneqq
    \begin{bmatrix}
    \mathbf{A}^{(1)} \\
    \vdots \\
    \mathbf{A}^{(m)}
    \end{bmatrix},
    \quad
    \mathbf{G} \coloneqq
    \begin{bmatrix}
    \mathbf{G}^{(1)} \\
    \vdots \\
    \mathbf{G}^{(m)}
    \end{bmatrix}
\end{equation*}
\STATE $\mathbf{E} \coloneqq \begin{bmatrix}
    \mathbf{1}_{1\times \tilde h} & \mathbf{0}_{1\times \tilde h} & \cdots & \mathbf{0}_{1\times \tilde h} \\
    \mathbf{0}_{1\times \tilde h} & \mathbf{1}_{1\times \tilde h} & \cdots & \mathbf{0}_{1\times \tilde h} \\
    \vdots & \vdots & \ddots & \vdots \\
    \mathbf{0}_{1\times \tilde h} & \mathbf{0}_{1\times \tilde h} & \cdots & \mathbf{1}_{1\times \tilde h}

\end{bmatrix}\in\mathbb{R}^{m\times h}$
\STATE \textbf{return} $(\mathbf{A}, \mathbf{G}, \mathbf{E})$
\end{algorithmic}
\end{algorithm}

\textit{Simple Two-Hot Encoder Gadget:} For clarity, we first present the encoder gadget in a simplified setting (Construction \ref{con:two_hot_unit_interval_main}), where the key embeddings are \textit{two-hot}, i.e.,  $\mathbf{K} =\{\mathbf{e}_{i} - \mathbf{e}_{j}\in \mathbb{R}^{d} \, | \, i\neq j \in [d]\}$, with $|\mathbf{K}| = d(d-1).$

Intuitively, Construction \ref{con:two_hot_unit_interval_main} involves two sequential steps: 1) pick a gating term that selects different portions of the input for different hidden neurons (in the case below, $\mathrm{ReLU}(\mathbf{I}_d\mathbf{x})$) and 2) find the $\mathbf{A}$ that fits the data. These two steps underlie our generalization of Construction \ref{con:two_hot_unit_interval_main} to arbitrary gating functions and embeddings.

\begin{construction}[Encoder, Two-Hot]
\label{con:two_hot_unit_interval_main}
Let \[h:\{(i,j)\,|\,i\neq j \mathrel{\in} [d]\} \to \mathbb{R}\]
be a function mapping each pair $(i,j)$ to the desired output for key embedding $\mathbf{e}_i - \mathbf{e}_j$.
Define $\enc(\mathbf{x}) \;=\; \mathbf{1}_d^\top \left[\mathrm{ReLU}(\mathbf{I}_d\mathbf{x}) \odot (\mathbf{A}\mathbf{x})\right]$, where $\mathbf{A} \in \mathbb{R}^{d \times d}$ with
\begin{align*}
    \mathbf{A}[p,q] \;=\;
    \begin{cases}
    0 & \text{if } p=q \\[4pt]
    -\,h(p,q) & \text{if } p\neq q.
    \end{cases}
\end{align*}
Then $\enc(\mathbf{e}_{i} - \mathbf{e}_{j}) = h(i,j)$ for all $i\neq j\in [d]$.  
This encoder has $2|\mathbf{K}| + O(d)$ parameters.\footnote{By a simple degrees-of-freedom argument, any MLP that can map $d^2-d = |\mathbf{K}|$ inputs each to an arbitrary real requires at least $|\mathbf{K}|$ parameters. Hence, the construction is asymptotically optimal in parameter count.}
\end{construction}
\begin{proof}[Proof:]
\begin{align*}
    &\;\mathrm{ReLU}(\mathbf{I}_d(\mathbf{e}_{i} - \mathbf{e}_{j})) \odot (\mathbf{A}(\mathbf{e}_{i} - \mathbf{e}_{j}))\\
    =&\;\mathbf{e}_{i} \odot (\mathbf{A}(\mathbf{e}_{i} - \mathbf{e}_{j}))\\
    =&\;(\mathbf{A}[i, i] - \mathbf{A}[i, j])\, \mathbf{e}_{i}\\
    =&\;h(i,j)\, \mathbf{e}_{i}.
\end{align*}
Finally, multiplying by $\mathbf{1}_d^\top$ extracts $h(i,j)$.
\end{proof}

\textit{A Generalized Gated Encoder Gadget:}
Following the two-hot example, our generalized gated encoder gadget will follow two simple steps: 1) pick $\mathbf{G}$, and 2) solve the resulting linear system for $\mathbf{A}$. The rest of this section will be dedicated to defining the linear system for $\mathbf{A}$ and providing conditions for a solution to exist.

Define\begin{align*}
    \mathbf{\Sigma} &= \sigma(\mathbf{G}\mathbf{K}^\top)\in\mathbb{R}^{h\times|\mathbf{K}|}\\
    \mathbf{o} &= [\mathbf c_{f(1)}[j],\ldots, \mathbf c_{f(|\mathbf{K}|)}[j]]^\top\\
    \mathbf{M}(\mathbf{\Sigma}, \mathbf{K}) &= [\textrm{\upshape diag}(\mathbf{\Sigma}_1)\mathbf{K},\; \ldots, \;\textrm{\upshape diag}(\mathbf{\Sigma}_{h})\mathbf{K}] \in \mathbb{R}^{|\mathbf{K}|\times dh}.
\end{align*}

The $\mathbf{A}$ matrices such that $\enc(\mathbf{k}_i) = \mathbf c_{f(i)}[j]$ for all $i\in|\mathbf{K}|$ are exactly the solutions to the linear system\footnote{We define $\textrm{vec}(\mathbf{A}) = [\mathbf{a}_1, \ldots, \mathbf{a}_h]^\top \in\mathbb{R}^{dh}.$}:
\[
\mathbf{M}(\mathbf{\Sigma}, \mathbf{K})\,\textrm{vec}(\mathbf{A}) = \mathbf{o}
\]
To obtain a construction, we need to choose $\mathbf{\Sigma}$ such that the system is solvable for every choice of $\mathbf{o}$, which is true if and only if $\mathbf{M}(\mathbf{\Sigma}, \mathbf{K})$ has full row-rank. 
Interestingly, this is true for generic $\mathbf{K}$ provided a simple rank condition on $\mathbf{\Sigma}:$
\begin{lemma}\label{lem:rank_condition}
    The matrix $\mathbf{M}(\mathbf{\Sigma}, \mathbf{K})$ has full row-rank for generic\footnote{I.e., for all $\mathbf{K}$ in a Zariski open set. The set of $\mathbf{K}$ not satisfying this condition is measure 0.} $\mathbf{K}$ if and only if
    \begin{equation}\label{eq: rank_cond_main}
        d\cdot \textrm{\upshape rank}(\mathbf{\Sigma}[:, S]) \ge |S|\quad\quad \forall S \subseteq [|\mathbf{K}|].
    \end{equation}
\end{lemma}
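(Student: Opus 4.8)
The plan is to recognize $\mathbf{M}(\mathbf{\Sigma},\mathbf{K})$ as a matrix whose entries are bilinear in the entries of $\mathbf{\Sigma}$ and $\mathbf{K}$ (indeed, for fixed $\mathbf{\Sigma}$ the entries of $\mathbf{M}$ are linear forms in the entries of $\mathbf{K}$), so that the locus where $\mathbf{M}$ fails to have full row rank is a proper Zariski-closed subset of the $\mathbf{K}$-space \emph{precisely when} there exists at least one $\mathbf{K}$ achieving full row rank. Thus the "for generic $\mathbf{K}$" statement is equivalent to the existence of a single good $\mathbf{K}$, and I must show this existence is equivalent to the rank condition \eqref{eq: rank_cond_main}. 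The combinatorial shape of $\mathbf{M}(\mathbf{\Sigma},\mathbf{K})$ is key: its rows are indexed by $i\in[|\mathbf{K}|]$, and row $i$ is the concatenation over $t\in[h]$ of the blocks $\mathbf{\Sigma}[t,i]\,\mathbf{k}_i^\top\in\mathbb{R}^{1\times d}$. So $\mathbf{M} = \big[\,\mathrm{diag}(\mathbf{\Sigma}[t,:])\,\mathbf{K}\,\big]_{t=1}^h$, and row $i$ lives in the span of $d$ coordinate-blocks weighted by the $i$-th column of $\mathbf{\Sigma}$.

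The natural tool is a Hall-type / matroid-union argument. First I would establish necessity: if some $S\subseteq[|\mathbf{K}|]$ violates $d\cdot\mathrm{rank}(\mathbf{\Sigma}[:,S]) \ge |S|$, then the rows of $\mathbf{M}$ indexed by $S$ lie in a subspace of dimension $< |S|$ for \emph{every} $\mathbf{K}$, so $\mathbf{M}$ never has full row rank. This follows because the $S$-rows of the $t$-th block are $\mathrm{diag}(\mathbf{\Sigma}[t,S])\,\mathbf{K}[S,:]$, whose row space is contained in the row space of $\mathbf{K}[S,:]$ restricted to the support of $\mathbf{\Sigma}[t,S]$; more carefully, the $S$-rows of all of $\mathbf{M}$ together span a space of dimension at most $d\cdot\mathrm{rank}(\mathbf{\Sigma}[:,S])$ (each of the $\mathrm{rank}(\mathbf{\Sigma}[:,S])$ independent "patterns" among the columns $\mathbf{\Sigma}[:,S]$ contributes at most a $d$-dimensional block). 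For sufficiency, assuming \eqref{eq: rank_cond_main} holds for all $S$, I would construct a witness $\mathbf{K}$ (e.g. with rows in general position, or algebraically independent entries) and argue by the symbolic-rank / Edmonds-type criterion that $\mathbf{M}(\mathbf{\Sigma},\mathbf{K})$ attains the maximal row rank compatible with the combinatorial constraints — which the Hall condition \eqref{eq: rank_cond_main} says is full. Concretely, one shows that for generic $\mathbf{K}$ one can greedily assign to each row $i$ one of the $d$ coordinates within a block indexed by some $t$ with $\mathbf{\Sigma}[t,i]\neq 0$, in such a way that the chosen columns of $\mathbf{M}$ form an identity-like pattern; the feasibility of such an assignment is exactly a bipartite matching whose existence is guaranteed by the deficiency version of Hall's theorem applied with the "capacity $d$ per independent $\mathbf{\Sigma}$-pattern" bound, i.e. by the rank condition.

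The cleanest route to sufficiency is probably to phrase it via the \emph{matroid union theorem}: the row space of $\mathbf{M}$ is the "sum" of $h$ matroids, where the $t$-th matroid on ground set $[|\mathbf{K}|]$ is the column matroid of $\mathrm{diag}(\mathbf{\Sigma}[t,:])\mathbf{K}^\top$ — but for generic $\mathbf{K}$ this matroid is the \emph{uniform matroid of rank $\min(d,\,|\mathrm{supp}\,\mathbf{\Sigma}[t,:]|)$} on the support of row $t$ of $\mathbf{\Sigma}$. The matroid union theorem then gives $\mathrm{rank}(\mathbf{M}) = \min_{S}\big(|[|\mathbf{K}|]\setminus S| + \sum_{t} r_t(S)\big)$ where $r_t$ is the rank function of the $t$-th matroid; and one checks that $\sum_t r_t(S) \ge \min(|S|, d\cdot\mathrm{rank}(\mathbf{\Sigma}[:,S]))$, while the reverse-type bound gives the exact equality with $d\cdot\mathrm{rank}$ under genericity. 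The main obstacle I anticipate is exactly this last reconciliation: showing that $\sum_{t=1}^h r_t(S)$ — a sum of uniform-matroid ranks indexed by the \emph{rows} of $\mathbf{\Sigma}$ — equals (for generic $\mathbf{K}$, and after taking the min over $S$) the quantity $d\cdot\mathrm{rank}(\mathbf{\Sigma}[:,S])$ governed by the \emph{column rank} of $\mathbf{\Sigma}$. This requires a careful argument that linearly dependent rows of $\mathbf{\Sigma}[:,S]$ do not contribute "new" dimensions to $\mathbf{M}[S,:]$ even over a generic $\mathbf{K}$ — essentially that the only obstructions to full rank are the "global" linear relations among columns of $\mathbf{\Sigma}$, not spurious coincidences from specific $\mathbf{K}$ — which is the heart of why the clean rank condition \eqref{eq: rank_cond_main} is both necessary and sufficient.
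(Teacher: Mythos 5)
Your necessity direction is fine and matches the paper's argument (the rows of $\mathbf{M}$ indexed by $S$ span at most $d\cdot\operatorname{rank}(\mathbf{\Sigma}[:,S])$ dimensions for \emph{every} $\mathbf{K}$), and your reduction of genericity to the existence of a single witness via Zariski-closedness of the rank-deficient locus is also the paper's strategy. The gap is in sufficiency, and it is not just the "reconciliation" you flag as an obstacle: the identity you build on is false. You decompose by the $h$ rows of $\mathbf{\Sigma}$ and claim that, for generic $\mathbf{K}$, the row matroid of $\mathbf{M}$ is the union of the $h$ matroids $N_t$ (uniform of rank $\min(d,|\operatorname{supp}\mathbf{\Sigma}[t,:]|)$ on the support of row $t$), so that $\operatorname{rank}(\mathbf{M})=\min_S\bigl(|[|\mathbf{K}|]\setminus S|+\sum_t r_t(S)\bigr)$. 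Take $d=1$, $h=2$, $|\mathbf{K}|=2$, $\mathbf{\Sigma}=\begin{pmatrix}1&1\\1&1\end{pmatrix}$: then $\mathbf{M}=\begin{pmatrix}k_1&k_1\\k_2&k_2\end{pmatrix}$ has rank $1$ for every $\mathbf{K}$, while your union formula gives $2$, and your Hall/matching argument also certifies a perfect matching on nonzero entries of a square submatrix whose determinant is nevertheless identically zero. The reason is that the entries of $\mathbf{M}$ are not independent indeterminates: the same vector $\mathbf{k}_i$ is reused across all $h$ blocks with fixed multipliers $\mathbf{\Sigma}[t,i]$, so row-support combinatorics (and hence $\sum_t r_t(S)$, which equals $h\cdot\min(d,|S|)$ for all-ones $\mathbf{\Sigma}$, not $d\cdot\operatorname{rank}(\mathbf{\Sigma}[:,S])=d$) overestimates the generic rank. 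Indeed, if your sufficiency argument worked it would prove full rank under a support-only Hall condition, contradicting the necessity direction you proved yourself.

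The fix is to group the columns of $\mathbf{M}$ the other way, by the $d$ coordinates of $\mathbf{K}$: writing $\mathbf{M}_j=\operatorname{diag}(\mathbf{K}[:,j])\mathbf{\Sigma}^\top$, element $i$ of the ground set is represented by the stacked vector $(\mathbf{K}[i,1]\mathbf{\Sigma}[:,i];\dots;\mathbf{K}[i,d]\mathbf{\Sigma}[:,i])$, i.e.\ generic $\mathbf{K}$ realizes the union of $d$ copies of the \emph{column} matroid of $\mathbf{\Sigma}$. This is the paper's route: it proves $\operatorname{rank}(\mathbf{M}(\mathbf{\Sigma},\mathbf{K}))=\min_{S}\bigl[|\mathbf{K}|-|S|+d\cdot\operatorname{rank}(\mathbf{\Sigma}[:,S])\bigr]$ for generic $\mathbf{K}$, with the witness $\mathbf{K}$ constructed explicitly from a maximizing disjoint $d$-partition supplied by the Matroid Union Theorem applied to $d$ copies of the column matroid of $\mathbf{\Sigma}$ (for that structured $\mathbf{K}$ the blocks $\mathbf{M}_j$ have mutually orthogonal column spaces, so the ranks add). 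Full row rank for generic $\mathbf{K}$ is then equivalent to this minimum equaling $|\mathbf{K}|$, which is exactly condition \eqref{eq: rank_cond_main}. Replacing your row-wise union and matching step with this column-wise decomposition (or any other argument that genuinely tracks the linear dependencies among the columns of $\mathbf{\Sigma}$, not just their supports) is necessary to close the proof.
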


Further, for analytic $\sigma$, such a $\mathbf{\Sigma}$ is easy to find:
\begin{lemma}\label{lem:random_satisfies_rank}
    Let $\sigma: \mathbb{R}\to \mathbb{R}$ be a non-polynomial analytic activation. As long as $dh \ge |\mathbf{K}|$, for generic $\mathbf{K}\in\mathbb{R}^{|\mathbf{K}|\times d}$ and $\mathbf{G}\in\mathbb{R}^{h\times d}$, we have that $\mathbf{\Sigma} = \sigma(\mathbf{G}\mathbf{K}^\top)$ satisfies Equation \ref{eq: rank_cond_main}.
\end{lemma}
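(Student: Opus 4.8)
The plan is to prove \Cref{lem:random_satisfies_rank} by first rewriting \Cref{eq: rank_cond_main} in a convenient form, then reducing the ``generic $(\mathbf{K},\mathbf{G})$'' claim to exhibiting a single witness configuration for each subset $S$, and finally building that witness from a one-dimensional linear-independence fact about non-polynomial analytic functions. Since ranks are integers, ``$d\cdot\operatorname{rank}(\mathbf{\Sigma}[:,S])\ge|S|$'' is equivalent to ``$\operatorname{rank}(\mathbf{\Sigma}[:,S])\ge\lceil|S|/d\rceil$.'' The hypothesis $dh\ge|\mathbf{K}|$ gives $|S|\le|\mathbf{K}|\le dh$ for every $S\subseteq[|\mathbf{K}|]$, so $\lceil|S|/d\rceil\le h$, and trivially $\lceil|S|/d\rceil\le|S|$; hence $\lceil|S|/d\rceil\le m_S:=\min(h,|S|)$, which is the largest possible rank of the $h\times|S|$ matrix $\mathbf{\Sigma}[:,S]$. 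So it is enough to prove the stronger statement that, outside a measure-zero set of $(\mathbf{K},\mathbf{G})$, every column submatrix $\mathbf{\Sigma}[:,S]$ of $\mathbf{\Sigma}=\sigma(\mathbf{G}\mathbf{K}^\top)$ has full rank $m_S$.

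Next I would set up the genericity reduction. Each entry $\mathbf{\Sigma}_{a,i}=\sigma(\langle\mathbf{g}_a,\mathbf{k}_i\rangle)$ is a real-analytic function of $(\mathbf{G},\mathbf{K})$ on the connected domain $\mathbb{R}^{(h+|\mathbf{K}|)d}$, and every fixed minor of $\mathbf{\Sigma}$ is a polynomial in these entries, hence also real-analytic in $(\mathbf{G},\mathbf{K})$. A real-analytic function on a connected open set is either identically zero or has a zero set of measure zero; therefore, if for a fixed $S$ some $m_S\times m_S$ submatrix of $\mathbf{\Sigma}[:,S]$ is nonsingular for at least one choice of $(\mathbf{G},\mathbf{K})$, then $\operatorname{rank}(\mathbf{\Sigma}[:,S])=m_S$ for all $(\mathbf{G},\mathbf{K})$ outside a measure-zero set. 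Taking the union over the finitely many $S\subseteq[|\mathbf{K}|]$ still leaves a measure-zero exceptional set, which proves the claim. This is where the hypothesis that $\sigma$ is non-polynomial forces the replacement of the Zariski-open argument used for \Cref{lem:rank_condition} by this real-analytic one; the price is that the exceptional set is only guaranteed to be measure zero.

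The remaining work is to produce, for each $S$ and $m:=m_S$, one configuration with an invertible $m\times m$ submatrix. I would take $m$ rows of $\mathbf{G}$ equal to $g_1\mathbf{e}_1,\dots,g_m\mathbf{e}_1$ and, among the keys in $S$, $m$ of them equal to $\mathbf{e}_1,2\mathbf{e}_1,\dots,m\mathbf{e}_1$ (all other rows of $\mathbf{G}$ and $\mathbf{K}$ arbitrary); the corresponding $m\times m$ submatrix is $[\sigma(g_a b)]_{a,b=1}^m$. The crux is the one-dimensional lemma that the functions $t\mapsto\sigma(bt)$, $b=1,\dots,m$, are linearly independent over $\mathbb{R}$: if $\sum_b\lambda_b\sigma(bt)\equiv 0$, differentiating $k$ times at $0$ gives $\sigma^{(k)}(0)\sum_b\lambda_b b^k=0$ for every $k$; since $\sigma$ is analytic and non-polynomial, $\sigma^{(k)}(0)\neq 0$ for infinitely many $k$, so $\sum_b\lambda_b b^k=0$ for infinitely many $k$, and dividing by $m^k$ and letting $k\to\infty$ along those indices forces $\lambda_m=0$, then inductively all $\lambda_b=0$. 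Hence the curve $g\mapsto(\sigma(g),\dots,\sigma(mg))$ is not contained in any hyperplane of $\mathbb{R}^m$, so I can pick $g_1,\dots,g_m$ with linearly independent images, making $[\sigma(g_a b)]_{a,b}$ invertible. This supplies the witness and, with the previous step, finishes the proof.

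The hard part is this last step: the one-dimensional linear-independence lemma and its conversion into an explicit nonsingular sample matrix. This is precisely where ``non-polynomial'' is essential — for a polynomial $\sigma$ of degree $D$ all the functions $\sigma(bt)$ lie in the $(D+1)$-dimensional space of polynomials of degree $\le D$, so at most $D+1$ of them can be independent and both the lemma and \Cref{lem:random_satisfies_rank} genuinely fail. The two routine points to watch are that the parameter domain is connected, so the real-analytic identity theorem applies, and that using a \emph{different} witness for each $S$ is legitimate, since a finite union of measure-zero sets is still measure zero.
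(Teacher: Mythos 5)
Your proposal is correct and follows essentially the same route as the paper: reduce the genericity claim to exhibiting a single witness per subset $S$ via analyticity of the minors, and build that witness from the linear independence of the dilates $t\mapsto\sigma(bt)$ (your derivative-at-zero argument is the Taylor-coefficient argument of \Cref{lem:rank_determines_independence_sigma}) combined with the sample-point selection argument of \Cref{lem:indep_func_to_full_rank_matrix}. The differences are cosmetic: you prove the slightly stronger claim that every $\mathbf{\Sigma}[:,S]$ generically has rank $\min(h,|S|)$ and phrase the exceptional set in measure-theoretic terms, whereas the paper routes the witness through the rank-one statement \Cref{lem:d-rank_gives_rank_partial} and describes the bad set as a proper analytic subvariety.
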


\begin{algorithm}[t]
\caption{Encoder Gadget Construction (\textsc{EncGad})}
\label{alg:encoder_gadget}
\begin{algorithmic}[1]
\REQUIRE $\mathbf{o} \in \mathbb{R}^{|\mathbf{K}|}$, generic $\mathbf{K} \in \mathbb{R}^{|\mathbf{K}|\times d}$
\REQUIRE Hidden size $h$ with $dh \ge |\mathbf{K}|$, analytic $\sigma$
\STATE Sample generic $\mathbf{G} \in \mathbb{R}^{h\times d}$ (e.g.\ i.i.d.\ Gaussian)
\STATE $\mathbf{\Sigma} \coloneqq \sigma(\mathbf{G}\mathbf{K}^\top)\in \mathbb{R}^{h\times |\mathbf{K}|}$
\vspace{7pt}
\STATE $\mathbf{M}
\coloneqq
\big[
\operatorname{diag}(\mathbf{\Sigma}_{1})\mathbf{K}, \cdots, 
\operatorname{diag}(\mathbf{\Sigma}_{h})\mathbf{K}
\big]\in \mathbb{R}^{|\mathbf{K}|\times (dh)}$
\STATE Solve for $\mathbf{v}\in \mathbb{R}^{dh}$ in $\mathbf{M} \, \mathbf{v} = \mathbf{o}$
\vspace{7pt}
\STATE $\mathbf{A} \coloneqq \begin{bmatrix}
    \mathbf{v}[1:d]\\
    \mathbf{v}[d+1:2d]\\
    \vdots\\
    \mathbf{v}[(h-1)d+1:hd]
\end{bmatrix} \in \mathbb{R}^{h\times d}$
\STATE \textbf{return} $(\mathbf{A}, \mathbf{G})$
\end{algorithmic}
\end{algorithm}

Putting these results together gives the more general construction in \Cref{alg:encoder_gadget}, proven in Appendix \ref{sec:new_encoding_results} along with generalizations to other activations functions $\sigma$ such as ReLU.

\paragraph{Asymptotic Analysis}

When $m$ copies of the generalized encoder gadget from \Cref{alg:encoder_gadget} are stacked to produce full output vectors, the full encoder contains $2m|\mathbf{K}| + O(md) + O(mh)$ parameters, which for $d,h = o(|\mathbf{K}|)$ is within a factor of two of the degrees-of-freedom lower bound of $m|\mathbf{K}|$ (up to lower order terms).

To our knowledge, our generalized encoder gadget is the first demonstration that gated MLPs can exactly memorize $N$ generic datapoints with $O(N)$ parameters, asymptotically matching the degrees-of-freedom lower bound.

In Appendix \ref{sec:new_encoding_results}, we show that our results extend to non-gated MLPs (up to an arbitrarily small $\epsilon$ error) by implementing a neural tangent kernel approximation similar to \citet{nichani2024understandingfactualrecalltransformers}. 
Interestingly, when this generalization is applied to ReLU MLPs, we obtain a construction which generalizes that from \citet{bubeck2020networksizeweightssize}.

Naively, if we allow $m=d$, the encoder alone could output the target embeddings exactly.
However, this construction would yield an MLP with $\Theta(d|\mathbf{K}|)$ parameters, which does not match the information-theoretic limit of $\Omega(|\mathbf{K}|\log |\mathbf{V}|)$ from \Cref{thm: info_bounds_capacity-const}.
As we explore in the next subsection, we can obtain a $\Theta(|\mathbf{K}|\log |\mathbf{V}|)$ construction by instead setting $m < d$ and picking \emph{compressed output embeddings} that can be approximately decoded into the optimal output embeddings.

\subsection{The Decoder and \texorpdfstring{$\rho$}{rho}}
\label{subsec:construction_decoder}

We next describe our decoder framework.

\paragraph{Decoder Structure}
The decoder consists of a single linear layer $\mathbf{dec}(\mathbf{x}) = \mathbf{D} \mathbf{x}$, where $\mathbf{D}\in \mathbb{R}^{d \times m}$ and $\mathbf{x}\in\mathbb{R}^m.$

\paragraph{Decoder Framework Objective}
Given value embeddings $\mathbf{V}\in\mathbb{R}^{|\mathbf{V}|\times d}$, the objective of our decoder framework is to produce 1) compressed output embeddings $\mathbf{C}\in\mathbb{R}^{|\mathbf{V}|\times m}$ and 2) a decoder $\mathbf{dec}$ such that \begin{equation}
    \label{eq:decoding_condition_maintext}
    \langle \mathbf{v}_i,\, \mathbf{dec}(\mathbf{c}_i) \rangle \;>\; \langle \mathbf{v}_j,\, \mathbf{dec}(\mathbf{c}_i) \rangle, \quad \forall i \neq j \in [|\mathbf{V}|],
\end{equation}
for a minimal value of $m$. We seek to minimize $m$ because the overall MLP parameter count is proportional to $m$.

\paragraph{Gradient Descent Construction}
We can easily construct such a pair of compressed output embeddings and a decoder linear layer using gradient descent (a \textit{GD Decoder}) by optimizing for $\mathbf{C}$ and $\mathbf{D}$ in the objective
\begin{equation*}
    \mathcal{L}(\mathbf{C}, \mathbf{D}, \mathbf{K}) = \sum_{i\neq j \in [|\mathbf{V}|]} \langle \mathbf{v}_i - \mathbf{v}_j,\, \mathbf{D} \mathbf{c}_i \rangle.
\end{equation*}

\paragraph{Closed-Form Weight Construction}
We will now provide a closed-form construction for such a decoder framework where $m = O(\log |\mathbf{V}|)$ with high probability for most embedding common embeddings distributions (e.g., normal, spherical, etc.). This gives $O(|\mathbf{K}|\log |\mathbf{V}|)$ parameters\footnote{We describe this in detail in Appendix \ref{app:subsec_bit_complexity}.} for the full encoder-decoder MLP.
\begin{construction}[Decoder Construction]
    \label{const: decoder_main}
    Sample an i.i.d. random Gaussian matrix $\mathbf{D}\in\mathbb{R}^{d\times m}.$ Then, define $\mathbf{c}_i = \mathbf{D}^\top \mathbf{u}_i^\star(\mathbf{V})$. For $m = O\left([\rho(\mathbf{V})]^{-2}\ \log |\mathbf{V}|\right)$, Equation \ref{eq:decoding_condition_maintext} holds with probability $>2/3$. Thus, $\mathbf{dec}(\mathbf{x}) = \mathbf{D}\mathbf{x}$ is a valid decoder construction with probability greater than $2/3$.
\end{construction}
\begin{proof}[Proof Sketch.]
    \(\langle \mathbf{v}_i - \mathbf{v}_j,\, \mathbf{D} \mathbf{c}_i \rangle = \langle \mathbf{D}^\top(\mathbf{v}_i - \mathbf{v}_j),\, \mathbf{D}^\top \mathbf{u}_i^\star \rangle.\)
    By Johnson-Lindenstrauss \citep{johnson1984extensions}, for $m =\Omega\left([\rho(\mathbf{V})]^{-2}\ \ln |\mathbf{V}|\right)$ and for all $i,j \in [|\mathbf{V}|]$,
    \[
        \text{sign}\left(\langle \mathbf{D}^\top(\mathbf{v}_i - \mathbf{v}_j),\, \mathbf{D}^\top \mathbf{u}_i^\star \rangle\right) = \text{sign}\left(\langle \mathbf{v}_i - \mathbf{v}_j,\, \mathbf{u}_i^\star \rangle\right)
    \]
    with probability $>2/3$. See \Cref{thm:NEW_main_decoding} for a full proof.
\end{proof}

\begin{algorithm}[t]
\caption{Decoder Construction (\textsc{Dec})}
\label{alg:decoder_construction}
\begin{algorithmic}[1]
\REQUIRE $\mathbf{V} = \in \mathbb{R}^{|\mathbf{V}|\times d}$, compressed dimension $m$
\STATE $\mathbf{U}^*\in\mathbb{R}^{|\mathbf{V}|\times d} \gets \textsc{OptimalOut}(\mathbf{V})$
\STATE Sample an i.i.d.\ Gaussian matrix $\mathbf{D} \in \mathbb{R}^{d\times m}$
\STATE $\mathbf{C}\coloneqq \mathbf{U}^\star \mathbf{D} \in \mathbb{R}^{|\mathbf{V}|\times m}$
\STATE \textbf{return} $(\mathbf{C}, \mathbf{D})$
\end{algorithmic}
\end{algorithm}

The decodability $\rho(\mathbf{V})$ (\Cref{eq:decodability}) quantifies how large $m$ needs to be as a function of how tightly clustered the value embeddings are. Notably, our construction applies to all feasible embeddings ($\rho(\mathbf{V}) > 0$).

\subsection{Full MLP Construction}
\label{subsec:full_construction}

Finally, we put the encoder and decoder together and describe our full fact MLP construction.
\begin{theorem}[Full Construction]\label{thm:capacity-final}
    For any fact set $f$, generic key embeddings $\mathbf{K}$, and value embeddings $\mathbf{V}$ with $\rho(\mathbf{V})>0$, construct $\enc$ as described in \Cref{subsec:encoder_twohot} and construct $\mathbf{dec}$ as described in \Cref{subsec:construction_decoder}. Our constructed fact MLP
    \[
        \mathbf{g}(\mathbf{x})\;=\;\mathbf{dec}(\enc(\mathbf{x})) = \mathbf{D}\,\mathbf{E}\  (\sigma(\mathbf{G} \mathbf{x})\odot (\mathbf{A}\mathbf{x}))
    \]
    stores $f$ given $\mathbf{K}$ and $\mathbf{V}$. Our constructed fact MLP has fact-storage cost $\Theta\left(\left[\rho(\mathbf{V})\right]^{-2}|\mathbf{K}|\log |\mathbf{V}|\right)$.
\end{theorem}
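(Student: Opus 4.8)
The plan is to prove correctness by \emph{composition}. I would first run the decoder construction of \Cref{subsec:construction_decoder} to fix the compressed output embeddings $\mathbf{C}\in\mathbb{R}^{|\mathbf{V}|\times m}$ and the linear decoder $\mathbf{dec}$, then run the encoder construction of \Cref{subsec:encoder_twohot} to obtain an encoder $\enc$ that maps each $\mathbf{k}_i$ \emph{exactly} to $\mathbf{c}_{f(i)}$, and finally note that $\mathbf{g}=\mathbf{dec}\circ\enc$ satisfies the storage criterion \Cref{eq:decoding-criterion-maintext} precisely because $\mathbf{dec}$ was built to decode $\mathbf{c}_i$ to $\mathbf{v}_i$. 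So the theorem is essentially a ``gluing'' of \Cref{const: decoder_main} with \Cref{lem:rank_condition} and \Cref{lem:random_satisfies_rank}, plus a parameter tally. Concretely, set $m=\Theta([\rho(\mathbf{V})]^{-2}\log|\mathbf{V}|)$ and apply \Cref{const: decoder_main}: sample $\mathbf{D}\in\mathbb{R}^{d\times m}$ i.i.d.\ Gaussian and put $\mathbf{c}_i=\mathbf{D}^\top\mathbf{u}_i^\star(\mathbf{V})$, so $\mathbf{C}=\mathbf{U}^\star\mathbf{D}$. By that construction (full argument in \Cref{thm:NEW_main_decoding}) the decoding inequalities \Cref{eq:decoding_condition_maintext} hold simultaneously for all $i\neq j$ with probability $>2/3$; since this is positive, I would fix one realization of $\mathbf{D}$ for which they all hold. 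Crucially, the resulting $\mathbf{C}$ depends only on $\mathbf{V}$ and the chosen $\mathbf{D}$, not on $\mathbf{K}$.

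\textbf{Encoder and composition.} Next I would feed $\mathbf{K}$, this $\mathbf{C}$, and $f$ into \Cref{alg:encoder_construction}, choosing the hidden size $h$ so that the per-gadget width $\tilde h=h/m$ is minimal subject to $d\tilde h\ge|\mathbf{K}|$. For each coordinate $j\in[m]$, \Cref{alg:encoder_gadget} samples a generic gating matrix $\mathbf{G}^{(j)}$; by \Cref{lem:random_satisfies_rank}, $\mathbf{\Sigma}=\sigma(\mathbf{G}^{(j)}\mathbf{K}^\top)$ satisfies the rank condition \Cref{eq: rank_cond_main}, hence by \Cref{lem:rank_condition} the system $\mathbf{M}(\mathbf{\Sigma},\mathbf{K})\,\mathrm{vec}(\mathbf{A}^{(j)})=\mathbf{o}^{(j)}$ with $\mathbf{o}^{(j)}=(\mathbf{C}_{f(1),j},\dots,\mathbf{C}_{f(|\mathbf{K}|),j})^\top$ is solvable for generic $\mathbf{K}$; its solution yields a gadget with $\enc_j(\mathbf{k}_i)=\mathbf{c}_{f(i)}[j]$. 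Stacking the $m$ gadgets and the fixed block matrix $\mathbf{E}$ as in \Cref{alg:encoder_construction} gives $\enc(\mathbf{k}_i)=\mathbf{c}_{f(i)}$ for all $i$; the measure-zero ``bad $\mathbf{K}$'' set here is determined independently of $\mathbf{C}$ (by the previous paragraph), so this is compatible with the genericity hypothesis. Then for each $i$, $\mathbf{g}(\mathbf{k}_i)=\mathbf{dec}(\enc(\mathbf{k}_i))=\mathbf{D}\,\mathbf{c}_{f(i)}$, and instantiating \Cref{eq:decoding_condition_maintext} at index $f(i)$ gives $\langle\mathbf{g}(\mathbf{k}_i),\mathbf{v}_{f(i)}\rangle>\langle\mathbf{g}(\mathbf{k}_i),\mathbf{v}_j\rangle$ for all $j\neq f(i)$, which is exactly \Cref{eq:decoding-criterion-maintext}; so $\mathbf{g}$ stores $f$, and by construction $\mathbf{g}(\mathbf{x})=\mathbf{D}\,\mathbf{E}\,(\sigma(\mathbf{G}\mathbf{x})\odot(\mathbf{A}\mathbf{x}))$ as in \Cref{alg:fact_mlp_framework}.

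\textbf{Parameter count.} The decoder contributes $dm$ entries and the encoder, by the tally in \Cref{subsec:encoder_twohot}, contributes $2m|\mathbf{K}|+O(md)+O(mh)$. With $\tilde h=\lceil|\mathbf{K}|/d\rceil$ and in the relevant regime $d,h=o(|\mathbf{K}|)$ (equivalently $d=o(|\mathbf{K}|)$ and $m=o(d)$), the terms $dm$, $O(md)$, and $O(mh)=O(m^2|\mathbf{K}|/d)$ are all $o(m|\mathbf{K}|)$, so the full MLP has $\Theta(m|\mathbf{K}|)$ parameters; substituting $m=\Theta([\rho(\mathbf{V})]^{-2}\log|\mathbf{V}|)$ yields the claimed $\Theta([\rho(\mathbf{V})]^{-2}|\mathbf{K}|\log|\mathbf{V}|)$. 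The matching lower bound is immediate since the construction instantiates that many parameters with $m$ of exactly that order, and it is consistent with (though generally larger than) the information-theoretic bound $\Omega(|\mathbf{K}|\log|\mathbf{V}|)$ of \Cref{thm: info_bounds_capacity-const}.

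\textbf{Main obstacle.} The real content sits in the already-established sub-results; what needs care in assembling them is: (i) the passage from ``holds with probability $>2/3$'' to ``a good $\mathbf{D}$ exists'' — harmless, because only one such $\mathbf{D}$ is needed and the encoder step is exact and deterministic given $\mathbf{C}$, so no additional union bound over the fact set is incurred; (ii) verifying that the measure-zero exclusion of $\mathbf{K}$ from \Cref{lem:rank_condition} and \Cref{lem:random_satisfies_rank} does not interact with the random choice of $\mathbf{C}$ — it does not, since $\mathbf{C}$ is a function of $\mathbf{V}$ and $\mathbf{D}$ alone; and (iii) the bookkeeping of the hidden size $h$ and the lower-order parameter terms so that the total is genuinely $\Theta([\rho(\mathbf{V})]^{-2}|\mathbf{K}|\log|\mathbf{V}|)$ rather than being inflated by a factor of $d$ or $h$. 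Point (iii) is the one I expect to be the main obstacle: it is where the ambient-dimension regime assumptions actually get used, and where one must confirm that $m=\Theta([\rho(\mathbf{V})]^{-2}\log|\mathbf{V}|)$ stays at most $d$ (otherwise one is in the ``uncompressed'' regime with $\Theta(d|\mathbf{K}|)$ parameters).
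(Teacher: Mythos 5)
Your proposal is correct and follows essentially the same route as the paper's own proof: fix a compressed dimension $m=\Theta([\rho(\mathbf{V})]^{-2}\log|\mathbf{V}|)$ and a good decoder $(\mathbf{C},\mathbf{D})$ via the JL-based argument of \Cref{thm:NEW_main_decoding}, exactly fit the compressed codes coordinate-wise with the generic-rank encoder gadgets, compose, and tally $\Theta(m|\mathbf{K}|)$ parameters. Your extra remarks on fixing a single realization of $\mathbf{D}$, the non-interaction of the measure-zero bad set of $\mathbf{K}$ with the choice of $\mathbf{C}$, and the lower-order parameter terms are consistent with (and slightly more careful than) the paper's argument.
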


We compare our construction to other fact-storing MLP constructions in Table \ref{tab:theory}. For value embeddings with $\rho(\mathbf{V}) = \Omega(1)$, our construction is the first to match the asymptotic parameter count predicted by the information-theory lower bound (\Cref{thm: info_bounds_capacity-const}) and requires a $\log^{11} |\mathbf{V}|$ factor fewer parameters than \cite{nichani2024understandingfactualrecalltransformers}. Additionally, in the case of two-hot key and value embeddings (using Construction \ref{con:two_hot_unit_interval_main} for the encoder), our construction matches the information-theory lower bound (\Cref{thm: info_bounds_capacity-const}) in terms of bits.

\begin{table*}[t]
\centering
\caption{Comparison of construction fact storage costs and assumptions.
\citet{nichani2024understandingfactualrecalltransformers} assumes $|\mathbf K| = |\mathbf V|$. The na\"ive construction is detailed in \Cref{subsec:naive_const}.}
\label{tab:theory}
\scriptsize
\renewcommand{\arraystretch}{1.5}
\begin{tabular}{@{}lccll@{}}
\toprule
 & Parameters & Hidden Sizes & Assumptions on $\mathbf{K}$ & Assumptions on $\mathbf{V}$ \\ \midrule
Info-Theory Bound 
& $|\mathbf{K}| \log |\mathbf{V}|$ 
& $d^{-1} |\mathbf{K}| \log |\mathbf{V}|$ 
& None 
& None \\

\hline

Na\"ive
& $d |\mathbf{K}|$ 
& $|\mathbf{K}|$ 
& General Position 
& $\rho(\mathbf{V}) > 0$ \\

\citet{nichani2024understandingfactualrecalltransformers} 
& $|\mathbf{K}| \log^{12} |\mathbf{V}|$ 
& $d^{-1} |\mathbf{K}| \log^{12} |\mathbf{V}|$ 
& Uniform on $S^{d-1}$ 
& Uniform on $S^{d-1}$ \\

Ours 
& $[\rho(\mathbf{V})]^{-2}|\mathbf{K}| \log |\mathbf{V}|$ 
& $d^{-1} [\rho(\mathbf{V})]^{-2} |\mathbf{K}| \log |\mathbf{V}|$ 
& General Position 
& $\rho(\mathbf{V}) > 0$ \\
\bottomrule
\end{tabular}
\end{table*}

\subsection{Constructed and GD fact MLPs Empirical Scaling}\label{subsec:3_expts}

In \Cref{fig:scaling-rho} we show the fact-storage cost of our constructed MLPs, the constructed MLPs from \cite{nichani2024understandingfactualrecalltransformers} (NTK MLPs), and MLPs trained with gradient descent (GD MLPs) across a range of embeddings.

In \Cref{fig:scaling-rho}a, we demonstrate that our constructed MLP fact-storage cost scales inversely with $\rho$ at a rate matching the prediction from Construction \ref{thm:capacity-final}.

In \Cref{fig:scaling-rho}b, we show that for embeddings sampled from an i.i.d. uniform spherical distribution (\textit{spherical embeddings}), our MLPs empirically match the asymptotic fact-storage cost of GD MLPs unlike NTK MLPs.

Additionally, we ablate the effect of using gradient descent for the encoder and decoder of our construction: replacing our encoder construction with a gradient-descent-trained encoder (\textit{GD + JL}) increases our construction fact-storage capacity by  $\sim\!3\times$, replacing our decoder construction with a gradient-descent-trained decoder (\textit{Bin + GD}) increases our construction fact-storage capacity by $\sim\!4\times$, and replacing both our encoder and decoder constructions with gradient-descent-trained counterparts (\textit{GD + GD}) increases our construction fact-storage capacity by $\sim\!8\times$.

In \Cref{fig:scaling-rho}c, we show the fact-storage cost on spherical embeddings for $d\in\{32, 64, 128\}$ and variable $F = |\mathbf{K}| = |\mathbf{V}|$, specifically by setting $F = \alpha d^2$ for various $\alpha.$ We see that like GD MLPs, our construction exhibits the same scaling regardless of the choice of $d$. On the other hand, for each choice of $d$, NTK MLPs diverge for sufficiently large $\alpha$ and $F$,
indicating that NTK MLPs do not mimic the ability of fact MLPs to store large fact sets with small input-output dimension.

\section{Integrating fact-storing MLPs into Transformers}
\label{sec:usability}
\begin{figure*}[ht]
    \centering
    \includegraphics[width=0.99\linewidth]{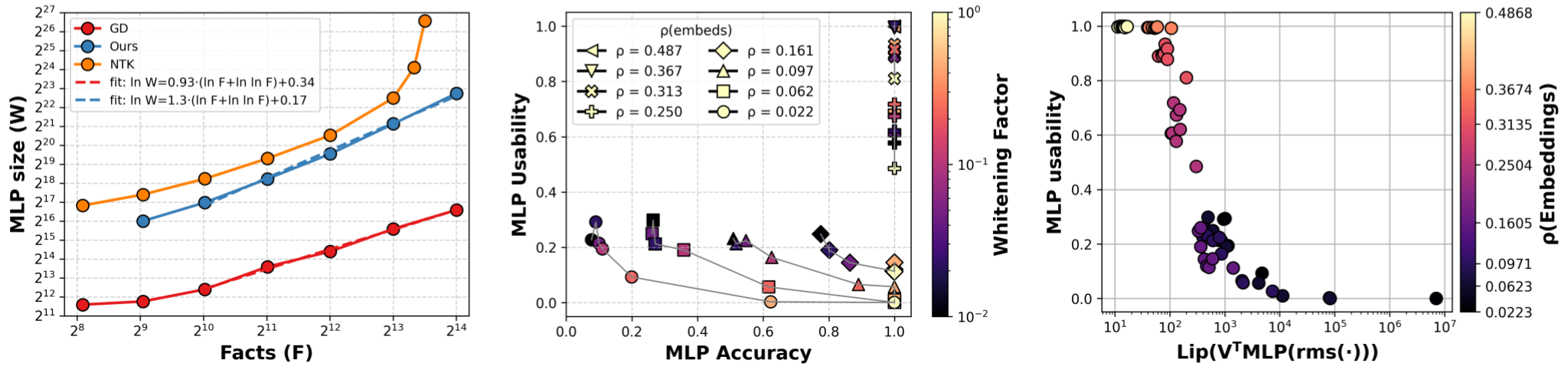}
   \caption{\textbf{(a) MLP size vs. fact-set size for MLPs with $\ge 99\%$ usability within Transformer.} We find that fact-storing MLPs are usable within 1-layer Transformers and that our constructed MLPs and GD MLPs exhibit similar $\ge 99\%$ usability scaling. \textbf{(b) MLP usability within Transformer v.s. MLP storage capacity.} We observe a tradeoff between MLP usability within a Transformer and the MLP's fact-storage capacity. \textbf{(c) MLP usability within Transformer v.s. its Lipschitz constant.} We observe that the measured Lipschitz constant is predictive of an MLP's usability within Transformers.}
    \label{fig:ssfr_main_fig}
\end{figure*}

We now investigate the extent to which fact-storing MLPs can be used by a transformer for factual recall. In \Cref{sec:ssfr}, we introduce the \emph{Synthetic Sequential Factual Recall} (SSFR) task, which formalizes the notion of transformer factual recall. We then find a small set of architectural modifications that enable vanilla transformers to use constructed MLPs for factual recall. Under this setup, we show that the number of MLP parameters required for a transformer to properly use the for factual recall grows at a comparable rate to the information-theoretically optimal one.

In \Cref{subsec:tradeoff}, we uncover a tradeoff between the capacity of an MLP to store facts and its usability for factual recall within transformers. We demonstrate that this tradeoff can be navigated through embedding whitening. In \Cref{subsec:conditioning}, we further show that an MLP’s Lipschitz constant serves as an indicator of its usability for factual recall by transformers.

Finally, in \cref{sec:fact-editing}, we explore using fact-storing MLPs within 1-layer transformers on a synthetic language-modeling (LM) task. We find that fact-storing MLPs within transformers can be swapped by MLPs storing entirely different fact sets, incurring only a $\sim$3\% cross-entropy increase on non-fact tokens while enabling the transformer to produce the new facts. Moreover, our MLP-swapping method outperforms prior fact-editing MLP updates, doubling their fact-editing score when editing 10\% of the fact set.

\subsection{Transformers can use fact-storing MLPs for factual recall}
\label{sec:ssfr}

We first demonstrate that fact-storing MLPs can be used for factual recall within a transformer. Further, we show that, together with GD MLPs, our construction is the first to be usable within a transformer while storing an amount of facts per parameter comparable to the information-theory optimal one.

\paragraph{Task.}
We introduce an associative-recall-style task \citep{zoology, nichani2024understandingfactualrecalltransformers}, which we term \emph{Synthetic Sequential Factual Recall} (SSFR), to test whether fact-storing MLPs can be used by transformers for factual recall. In SSFR, a transformer processes a sequence of ``junk'' tokens containing a single \emph{key} token and must predict the corresponding \emph{value} token at the end of the sequence. For example,
\[
\underbrace{*\ \%\ \&\ \#\ \$}_{\text{junk prefix}}
\ \underbrace{A}_{\text{key}}
\ \underbrace{\&\ \%\ *\ \$\ \#}_{\text{junk suffix}}
\ \rightarrow\ 
\underbrace{B}_{\text{value}}.
\]
This mirrors how, in a sentence such as ``The capital of France is Paris,'' the key and value (``capital of France'' and ``Paris'') are separated by an unrelated prefix and suffix (``The'' and ``is''). See Appendix \ref{appendix:ssfr-task} for details.

\paragraph{Training setup.}
Our goal is to evaluate to what extent fact-storing MLPs can be used by transformers on an SSFR task. To test this, we create a fact-storing MLP that stores the SSFR key-value mapping. We then freeze the fact-storing MLP and insert it into a single-layer transformer. Finally, we train the transformer to output the correct value for each SSFR sequence.

\paragraph{Metrics.}
To evaluate whether a transformer is actually using its fact-storing MLP for factual recall, as opposed to memorizing the facts in its attention weights, we define the \textit{fact-adaptive accuracy}. We take a transformer trained on SSFR and replace its fact-storing MLP with a new MLP storing a different fact set. We define the transformer's \textit{fact-adaptive accuracy} as the modified transformer's accuracy on the SSFR task corresponding to the fact set \textit{of the new MLP}. Intuitively, if a transformer has high fact-adaptive accuracy, it is using its fact-storing MLP for factual recall.

\paragraph{Fact-Storing MLPs are usable within transformers.}

We find that a simple set of modifications to the vanilla transformer architecture are sufficient for transformers to use both constructed and GD-trained MLPs for factual recall, achieving $>99\%$ fact-adaptive accuracy, while approximately using an information-theoretically optimal amount of parameters. \Cref{fig:ssfr_main_fig}a shows the minimum fact-storing MLP parameters required for a transformer using it to reach $99\%$ fact-adaptive accuracy as a function of fact-set size. Strikingly, our constructed and GD MLPs both exhibit empirical scaling similar to the theoretical optimum $\log W \approx \log F + \log \log F$, in contrast to NTK MLPs, whose fact-adaptive accuracy explodes for large fact sets. We attribute such a deterioration in fact-adaptive accuracy of NTK MLPs to their sharp decline in fact-storage capacity on large fact sets, as shown in \Cref{fig:scaling-rho}b. See Appendix \ref{appendix:mlp-size-vs-facts} for experimental details.

Concretely, we empirically find that i) tying transformer and MLP embeddings, ii) removing residual connections, iii) freezing the pre-MLP RMSNorm layer, and iv) freezing the \textit{value} and \textit{out-project} matrices of the attention layer to the identity matrix are sufficient for transformers to use fact-storing MLPs for factual recall.

Further, as observed in \cref{fig:app-fact-adaptive-front-full}, we find that the minimum MLP size needed to achieve $>99\%$ fact-adaptive accuracy for GD gated and non-gated MLPs is almost identical, suggesting that fact-storage within a transformer doesn't depend on the specific MLP architecture, but instead on its number of parameters.

\subsection{Tradeoff Between Capacity and Usability of an MLP}\label{subsec:tradeoff}

We uncover a tradeoff between an fact-storing MLP's \emph{storage capacity}, the fraction of facts of a fact set that it can successfully store, and \emph{usability}, the fraction of those stored facts that a transformer using the fact-storing can correctly retrieve, as can be seen in \Cref{fig:ssfr_main_fig}b and \Cref{fig:ssfr_relu}a. Formally, we define:

\begin{align*}
    \text{capacity} &= \frac{\text{\# facts MLP stores}}{\text{total \# facts}} \\
    \text{usability} &= \frac{\text{transformer fact-adaptive accuracy}}{\text{capacity}}.
\end{align*}

To study this capacity-usability tradeoff, we use our \emph{embedding whitening} technique from~\Cref{sec:preconditioning} but vary the strength $\alpha \in [0,1]$ of the empirical covariance whitening transform
\(
T(\mathbf{x}) = \mathbf{M}^\alpha \mathbf{x} + \mathbf{b}.
\)
For a fixed pair of transformer key and value embeddings, characterized by $\rho(\mathbf{K}) = \rho(\mathbf{V})$, we apply different whitening strengths $\alpha$, train an MLP to store a fact set using the corresponding MLP embeddings, and then train a Transformer to use that whitened MLP in SSFR.

We find that adjusting the whitening degree allows us to explore the tradeoff between usability and capacity. MLPs trained on less-whitened embeddings store fewer facts but are more usable by transformers, whereas MLPs trained on highly whitened embeddings store more facts but are harder for transformers to use. See Appendix \ref{appendix:usability-vs-storage-capacity} for experimental details.

\subsection{MLP Usability Depends on Lipschitz Constant} \label{subsec:conditioning}

In \Cref{subsec:tradeoff} we observe that whitened MLPs, with high fact storage capacity, tend to be less usable by transformers. Here, we find that the Lipschitz constant of an MLP serves as an indicator of its usability within a transformer. Concretely, given an MLP trained to represent a fact-set mapping from transformer key embeddings $\mathbf{K}\in\mathbb{R}^{|\mathbf{K}|\times d}$ to value embeddings $\mathbf{V}\in\mathbb{R}^{|\mathbf{V}|\times d}$, we look at:
\begin{equation}
    \text{Lip}(\mathbf{V}^T\text{MLP}(\text{rms}(\cdot)))
    \approx \operatorname{max}_i \sigma_1(\mathbf{J}(\mathbf{k}_i)),
    \label{eq:lip}
\end{equation}
where
\[
    \mathbf{J}(\mathbf{x}_i)
    = \frac{\partial \, \mathbf{V}^\top \mathrm{MLP}(\mathrm{RMSNorm}(\mathbf{x}_i))}{\partial \mathbf{x}_i}.
\]

As seen in ~\cref{fig:ssfr_main_fig}c and \cref{fig:ssfr_relu}a, increased MLP Lipschitz constant correlates with reduced MLP usability for factual recall. Intuitively, we believe this relationship arises due to optimization dynamics, similar to how training convergence under first-order optimizers depends on the largest Hessian singular value~\citep{2022arXiv220911920M}. We note there likely exist other MLP conditioning related metrics that can also capture this relationship. See Appendix \ref{appendix:usability-vs-conditioning} for experimental details.

\subsection{Language Modeling and Fact Editing with fact-storing MLPs} \label{sec:fact-editing}

\begin{figure}[h]
    \centering
    \includegraphics[width=0.31\textwidth]{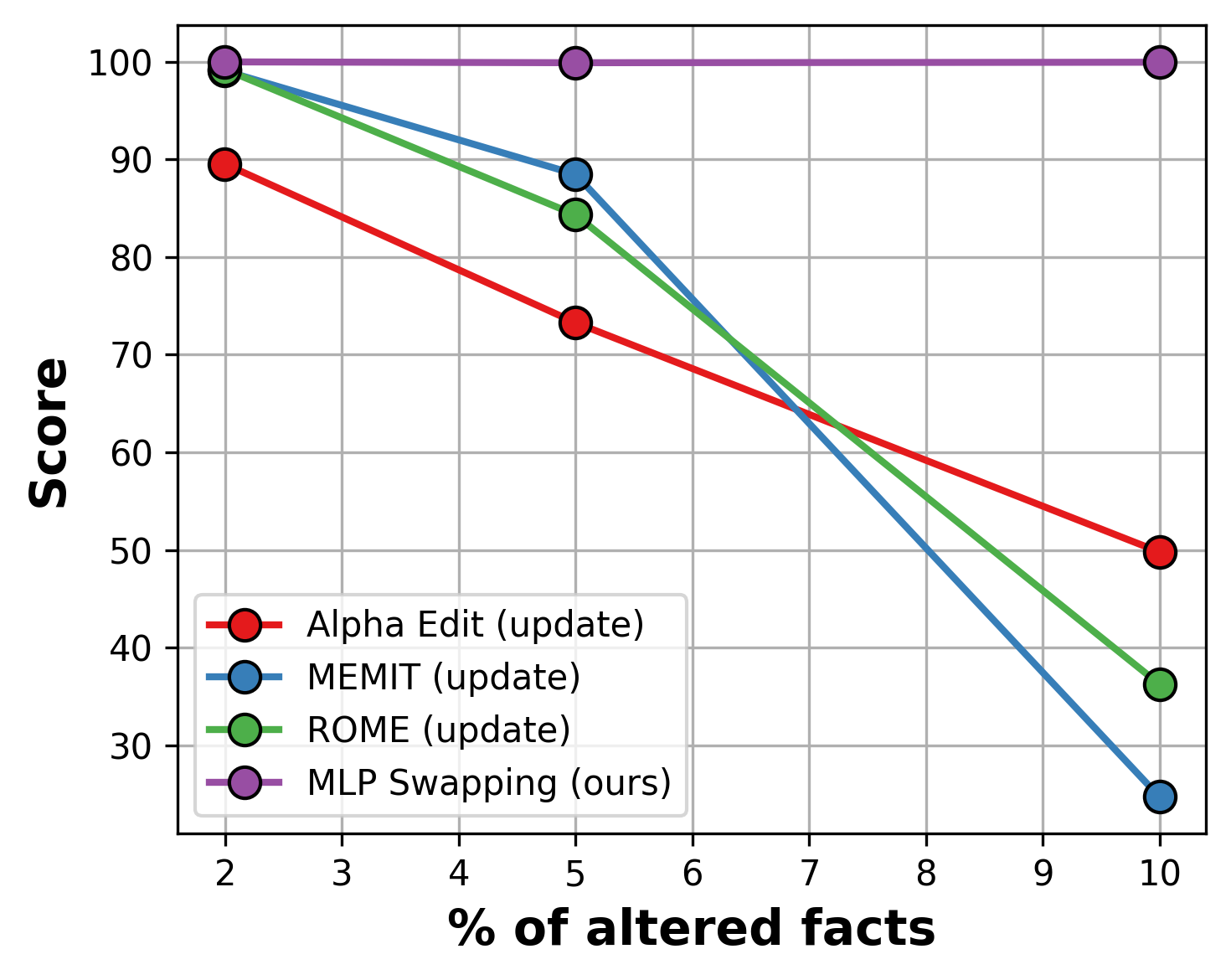}
    \caption{\textbf{Fact editing score as number of altered facts increases}. Fact editing via MLP swapping outperforms prior weight updates as the number of altered facts increase. The fact-editing score is computed as the geometric mean of the efficacy, specificity and paraphrase accuracies. }
    \label{fig:fact-editing}
\end{figure}

Finally, we explore whether fact-storing MLPs can be used by transformers for language modeling. On a synthetic task involving sentences about author-book relations (see Appendix \ref{appendix:lm-task}), we demonstrate that 1-layer transformers can use fact-storing MLPs for factual recall (\cref{fig:lm-fact-adaptive-fronteir}a). Remarkably, when we swap a transformer’s MLP for an entirely new fact-storing MLP, the transformer outputs the new facts with $>99\%$ accuracy while incurring less than a $\sim3\%$ increase in cross-entropy on non-fact tokens (Appendix \ref{fig:lm-fact-adaptive-fronteir}b). See Appendix \ref{appendix:lm-scaling} for experimental details.

Under the same setup, we show that transformers equipped with fact-storing MLPs can be modularly fact-edited. As shown in \Cref{fig:fact-editing}, our modular fact-editing procedure (MLP Swapping) consistently outperforms prior fact editing updates, including those of MEMIT \citep{memit}, ROME \citep{rome}, and Alpha Edit \citep{fang2025alphaeditnullspaceconstrainedknowledge}, doubling their fact-editing scores (defined in \Cref{fig:fact-editing}) on our 1-layer transformers when editing as little as 10\% of the facts stored in its MLP (see Appendix \ref{appendix:fact-editing}). These results suggest a path toward more robust and modular fact manipulation in LLMs.

\section{Discussion}\label{sec:discussion}
We have presented a construction that produces fact-storing MLPs with asymptotically fewer parameters than prior approaches, supports a broader class of embeddings, and can be used by transformers for factual recall. Using this construction, we characterized how output geometry affects fact-storage capacity, identified a simple encoder–decoder mechanism that matches information-theoretic facts-per-parameter scaling, and uncovered a capacity–usability tradeoff for fact-storing MLPs within transformers. These results offer a coherent framework for understanding how MLPs store and expose knowledge within transformers.

More broadly, our work outlines a constructive path forward for studying LLMs. Rather than relying solely on descriptive analyses of pretrained models, we show that explicitly building MLPs with interpretable, provable mechanisms can reveal principles that are otherwise difficult to extract from their learned weights. This constructive approach suggests several promising directions such as designing modular and robust memory systems, developing more parameter-efficient training and inference pipelines, and exploring whether similar constructions can shed light over LLM behaviors beyond factual recall.

In summary, by directly constructing MLPs that store and expose facts, we provide both a theoretical foundation and practical tools for understanding knowledge storage in transformers, as well as a path toward more interpretable and controllable mechanisms in large language models.

\section*{Acknowledgements}
\label{sec:acknowledgements}

The authors thank Neel Guha, Yasa Baig, Catherine Deng, Kelly Buchanan, Sam Buchanan, Avanika Narayan, Andy Dimnaku, Mayee Chen, Hermann Kumbong, Francois Chaubard, Jon Saad-Falcon, Stuart Sul, Alex Waitz, Dan Biderman, Ben Spector, Simran Arora and Michael Zhang for their helpful feedback and discussion.

The authors gratefully acknowledge the support of NIH under No. U54EB020405 (Mobilize), NSF under Nos. CCF2247015 (Hardware-Aware), CCF1763315 (Beyond Sparsity), CCF1563078 (Volume to Velocity), and 1937301 (RTML); US DEVCOM ARL under Nos. W911NF-23-2-0184 (Long-context) and W911NF-21-2-0251 (Interactive Human-AI Teaming); ONR under Nos. N000142312633 (Deep Signal Processing); Stanford HAI under No. 247183; NXP, Xilinx, LETI-CEA, Intel, IBM, Microsoft, NEC, Toshiba, TSMC, ARM, Hitachi, BASF, Accenture, Ericsson, Qualcomm, Analog Devices, Google Cloud, Salesforce, Total, the HAI-GCP Cloud Credits for Research program,  the Stanford Data Science Initiative (SDSI), and members of the Stanford DAWN project: Meta, Google, and VMWare. The U.S. Government is authorized to reproduce and distribute reprints for Governmental purposes notwithstanding any copyright notation thereon. Any opinions, findings, and conclusions or recommendations expressed in this material are those of the authors and do not necessarily reflect the views, policies, or endorsements, either expressed or implied, of NIH, ONR, or the U.S. Government.
OD is supported by the Hertz Foundation Fellowship, the Stanford Knight-Hennessy Scholarship, and the NSF GRFP.
JL is supported by the Department of Energy Computational Science Graduate Fellowship under Award Number DE-SC0023112.
AR's research is supported by NSF grant CCF\#2247014.

\bibliography{example_paper}
\bibliographystyle{icml2025}

\newpage

\appendix
\onecolumn
\section{Experiments}\label{app:expt}
\subsection{MLP Experiments}
Here we describe the experimental setup used for the MLP fact-storage capacity results in~\Cref{sec:preconditioning,sec:theory_main}.

\subsubsection{Task Setup}
\label{ssec:mlps_sweeps_task_setup_appendix}

\paragraph{Fact sets.}
Following the definition of the synthetic fact-storage task (\Cref{eq:decoding-criterion-maintext}), we index facts by $i \in [F]$.
Although fact-storage cost is defined as the smallest number of parameters needed to represent \emph{all possible fact sets} (\Cref{def:complexity}), in our experiments we approximate fact-storage cost as the smallest number of parameters needed to represent \emph{randomly sampled} bijective key-value maps $f : [F] \to [F]$.

\textit{Facts vs. embedding dimension.}
In our experiments, for each embedding dimension $d_{\text{model}}$, we set the number of facts to $F = \beta d_{\text{model}}^2$, where the multiplier $\beta=0.25$ unless otherwise specified.

Empirically, we find that the choice of $\beta$ does not affect the fact-storage capacity of gradient-descent-trained MLPs or our constructed MLPs. However, interestingly, larger values of $\beta$ significantly \emph{decrease} the fact-storage capacity of the MLP construction of~\citet{nichani2024understandingfactualrecalltransformers}: see~\Cref{ssec:mlp_ablations_appendix}.

\paragraph{Embeddings.}
Following prior work~\citep{nichani2024understandingfactualrecalltransformers}, key and value embeddings $\mathbf{K}, \mathbf{V} \in \mathbb{R}^{F \times d}$ are uniformly sampled from the unit sphere. Mirroring how word embeddings in LLMs work, our experiments \emph{tie keys and values}, i.e. $\mathbf{K} = \mathbf{V}$.

\textit{Anisotropic value embeddings.}
To vary the condition number of the value embeddings while preserving their geometric structure, we modify only the singular values of the embeddings matrix. We keep the left and right singular vectors fixed and apply a log-affine rescaling to the singular values so that the largest one is preserved and the smallest one is set to achieve a desired condition number $\kappa$.

\paragraph{Approximating MLP fact-storage cost via binary search.}
For each choice of $(d,F,\kappa,\text{MLP family})$, we determine the minimum number of parameters needed to perfectly store a randomly-sampled fact set given randomly-sampled embeddings. To do so, we perform a one-dimensional binary search over a single scalar hyperparameter characterizing the ``size'' of the MLP. The hyperparameter we sweep over depends on the family of MLPs we evaluate:
\begin{itemize}
    \item For gradient-descent-trained (GD) and NTK MLPs~\citep{nichani2024understandingfactualrecalltransformers}, we search over the hidden dimension $h$.
    \item For our constructed MLPs, we either search over the decoder dimension $m$ or the \emph{encoder width multiplier}.
\end{itemize}
See~\Cref{ssec:mlp_sweeps_architectures_appendix} for details about each of the MLP variants we evaluate.

\subsubsection{Metrics}

\paragraph{Accuracy-based success criterion.}
We evaluate models using the same dot-product scoring rule used in the definition of fact storage (\Cref{eq:decoding-criterion-maintext}), which we restate here for convenience.
Given a trained model $\mathbf{g}_\theta$ and embeddings $(\mathbf{K}, \mathbf{V})$, the predicted value index for a key $i \in [F]$ is
\[
\hat{f}(i)
\;=\;
\arg\max_{j \in [F]}\;
\langle \mathbf{g}_\theta(\mathbf{k}_i),\, \mathbf{v}_j \rangle,
\]
i.e.\ the index achieving the highest score with respect to the MLP output.

The fact-storage \emph{accuracy} of $\mathbf{g}_\theta$ on a fact set $f : [F] \to [F]$ is then
\[
\mathrm{Acc}
\;=\;
\frac{1}{F}
\sum_{i \in [F]}
\mathbf{1}\!\left[\hat{f}(i) = f(i)\right].
\]

Within our binary searches, we declare that a model successfully stores a fact set if it achieves an accuracy of at least $1-\varepsilon_{acc}$. For our MLP fact-storage capacity experiments, we set $\varepsilon_{acc}=0$ unless otherwise stated.

When multiple random seeds are used for a given binary search experiment (e.g. where the randomness is over the choice of fact set and embeddings), we aggregate by taking the \emph{minimum accuracy across seeds} before comparing to this threshold.
The binary search then returns the smallest number of parameters for which the aggregated accuracy is at least $1-\varepsilon_{acc}$.

\subsubsection{MLP architectures and variants}
\label{ssec:mlp_sweeps_architectures_appendix}

Here we summarize all MLP variants evaluated in the capacity sweeps, corresponding to the methods compared in~\Cref{fig:scaling-rho}c and described formally in~\Cref{sec:theory_main}.
Each configuration consists of (i) a choice of MLP variant (gradient-descent-trained, our explicit construction, or the NTK construction of \citet{nichani2024understandingfactualrecalltransformers}), (ii) variant-specific configuration details, including optional use of margin-optimal outputs for NTK MLPs and encoder-decoder settings for our construction models, and (iii) optional embedding whitening.

We start by describing each MLP variant and variant-specific configuration details:

\begin{itemize}
\item \textbf{Gradient-descent-trained (GD) MLPs.}
GD MLPs use the standard two-layer gated MLP (SwiGLU-style) architecture described in~\Cref{subsec:encoder_twohot}, with an ``up'' projection $\mathbb{R}^d \!\to\! \mathbb{R}^h$ followed by a ``down'' projection $\mathbb{R}^h \!\to\! \mathbb{R}^d$.  
Given an input $\mathbf{x}\in\mathbb{R}^d$, the block computes
\[
\mathbf{g}_\theta(\mathbf{x})
\;=\;
W_{\mathrm{down}}\!\left(
  \sigma(W_{\mathrm{gate}}\mathbf{x} + \mathbf{b}_{\mathrm{gate}})
  \odot
  (W_{\mathrm{up}}\mathbf{x} + \mathbf{b}_{\mathrm{up}})
\right)
+ \mathbf{b}_{\mathrm{down}},
\]
where $W_{\mathrm{up}}, W_{\mathrm{gate}} \in \mathbb{R}^{h\times d}$, $W_{\mathrm{down}}\in\mathbb{R}^{d\times h}$, $\sigma$ is Swish, and $\odot$ denotes element-wise multiplication.

Models are trained with full-batch gradient descent using Adam and a cosine-annealed learning rate schedule (initial rate $10^{-3}$, final rate $10^{-6}$) for up to $20{,}000$ epochs with early stopping. We use the cross-entropy objective formed from dot-product logits $\mathbf{g}_\theta(\mathbf{K})\mathbf{V}^\top$, matching the decoding rule of~\Cref{eq:decoding-criterion-maintext}.

In the sweeps, the hidden dimension $h$ is the sole capacity parameter, which means binary search identifies the smallest $h$ for which the trained GD MLP achieves perfect fact-storage accuracy.

\item \textbf{Our constructed MLPs.}
Our construction decomposes the fact-storing MLP into an \emph{encoder} and a \emph{decoder}, each of which admits both an explicit construction and a learnable gradient-descent–based alternative.
For completeness, we summarize all variants evaluated in the sweeps.

\textit{Encoder variants.}
\begin{itemize}
    \item \textbf{Binning / explicit (Bin) encoder.}  
    This is the encoder defined in~\Cref{subsec:encoder_twohot} and~\Cref{alg:encoder_construction}, built by stacking $m$ closed-form encoder gadgets (\Cref{alg:encoder_gadget}).
    Each gadget solves a linear system to map keys to the $j$th coordinate of the compressed code $\mathbf{C}$; the full encoder has the gated form
    \[
       \enc(\mathbf{x}) \;=\; \mathbf{E}\big( \sigma(\mathbf{G}\mathbf{x}) \odot (\mathbf{A}\mathbf{x}) \big).
    \]
    This encoder is fully explicit and requires no training.

    \item \textbf{Gradient-descent-trained (GD) encoder.}  
    Instead of constructing $(\mathbf{A},\mathbf{G},\mathbf{E})$ analytically, we train a gated encoder $g_\theta:\mathbb{R}^d\to\mathbb{R}^m$ via full-batch gradient descent to fit the compressed codes $\mathbf{C}$.
    Given keys $\mathbf{K}$ and targets $\mathbf{C}$ permuted by $f$, we minimize
    \[
    \mathcal{L}_{\mathrm{enc}}
    =
    \frac{1}{F}\sum_{i=1}^F
      \big\| g_\theta(\mathbf{k}_i + \eta_i) - \mathbf{c}_{f(i)} \big\|_2^2,
    \qquad
    \eta_i\sim\mathcal{N}(0, \varepsilon_{\mathrm{key}}^2 I_d),
    \]
    with $\varepsilon_{\mathrm{key}}=10^{-7}$.  
    The encoder uses the same gated MLP architecture as the explicit encoder, but with hidden dimension
    \[
        h = \left\lceil m\,(F/d)\cdot\mathrm{encoder\_width\_multiplier} \right\rceil
    \]
    (where the encoder width multiplier $=1$ by default), and is trained for $1000$ Adam updates with learning rate $10^{-2}$.
    After training, $g_\theta$ is used as the encoder and produces the hidden codes used by the decoder.
\end{itemize}

\textit{Decoder variants.}
\begin{itemize}
    \item \textbf{Johnson-Lindenstrauss (JL) decoder.}  
    This is the explicit decoder of~\Cref{subsec:construction_decoder} and~\Cref{alg:decoder_construction}.
    We sample a Gaussian matrix $\mathbf{D}\in\mathbb{R}^{d\times m}$ and set compressed codes $\mathbf{C}=\mathbf{U}^\star \mathbf{D}$, where $\mathbf{U}^\star$ is the \emph{margin-optimal output embeddings} (\Cref{def:optimal_output_embeddings}).
    For $m = \Theta(\rho(\mathbf{V})^{-2}\log|\mathbf{V}|)$, the JL decoder satisfies the decoding inequalities with high probability.

    \item \textbf{Gradient-descent-trained (GD) decoder.}  
    We replace the random projection with learnable compressed codes $\mathbf{C} \in\mathbb{R}^{F\times m}$ and a learnable decoding matrix $\mathbf{M}\in\mathbb{R}^{m\times d}$. 
    Predicted values are $\hat{\mathbf{V}} = \mathbf{C} \mathbf{M}$ with dot-product scores $S=\hat{\mathbf{V}}\mathbf{V}^\top$.  
    We train $(\mathbf{C},\mathbf{M})$ using full-batch Adam (with a learning rate of $1$, cosine decay to $0.01$, and $1000$ steps) with cross-entropy loss over the scores:
    \[
        \mathcal{L}_{\mathrm{dec}} = \mathrm{CE}(S,f).
    \]
    After training, we normalize the rows of $\mathbf{C}$ and $\mathbf{M}$ for numerical stability, and $(\mathbf{C},\mathbf{M})$ replaces the analytic JL decoder in the full construction.
\end{itemize}

Each constructed MLP is uniquely identified by its encoder/decoder pair (Bin+JL, GD+JL, Bin+GD, GD+GD).

In the sweeps, the decoder width $m_{\text{dec}}$ is the capacity parameter for the Bin+JL and Bin+GD construction variants. For the GD+JL and GD+GD variants, we use a two-step procedure. First, we sweep over the decoder width $m$, obtaining the smallest value $\hat{m}$ for which the constructed MLP achieves perfect fact-storage accuracy. Next, we fix $m = \hat{m}$ and further sweep over the \emph{encoder width multiplier} to find the smallest value in the range $[0, 2]$ for which the MLP achieves perfect accuracy.

\item \textbf{NTK MLPs.}
We also evaluate the Hermite-feature construction of~\citet{nichani2024understandingfactualrecalltransformers}, which we refer to throughout as ``NTK MLPs''.

Given key embeddings $\mathbf{K}\in\mathbb{R}^{F\times d}$, value embeddings $\mathbf{V}\in\mathbb{R}^{F\times d}$, and a mapping $f:[F]\to[F]$, the NTK MLP of width $h$ is constructed as in~\Cref{alg:ntk_mlp_construction}.

\begin{itemize}
\item
We first (optionally) replace $\mathbf{V}$ by the \emph{minimum-margin output embeddings} $\mathbf{U}^\star$: in our ablations, we find this improves fact-storage capacity by $2$-$4\times$ (\Cref{fig:ntk_trainU_rho_appendix}).

\item
We then apply the construction from~\citet{nichani2024understandingfactualrecalltransformers}. Crucially, although~\citet{nichani2024understandingfactualrecalltransformers}'s Theorem 2 describes a \emph{non-gated} MLP construction, in fact their work first defines a \emph{gated} MLP, then uses an NTK argument to show that a non-gated MLP can be used to approximate the gated MLP by rescaling the magnitudes of the MLP weights. In our experiments, we find the non-gated MLP exhibits large Lipschitz constant, making it impractical to use within a Transformer; as such, we directly implement their gated MLP without the NTK approximation.
\end{itemize}

The resulting gated MLP has the form
\[
\mathbf{g}_{\mathrm{NTK}}(\mathbf{x})
\;=\;
\mathbf{P}\Big(
  \sigma(\mathbf{W}_{\mathrm{gate}}\mathbf{x})
  \odot
  (\mathbf{W}_{\mathrm{up}}\mathbf{x})
\Big),
\]
with $\sigma$ equal to the chosen activation.
In our experiments, mirroring the GD and our constructed MLPs, we use $\sigma = \text{Swish}$.

In the sweeps, the hidden dimension $h$ is the sole capacity parameter for NTK MLPs, and we perform binary search over $h$ exactly as for GD MLPs.

Note that~\citet{nichani2024understandingfactualrecalltransformers} proposes their construction for uniformly spherically distributed key and value embeddings that are \emph{not tied}; in our experiments, we evaluate how well the NTK MLP construction can generalize to more realistic settings, such as tied + anisotropic embeddings.

\begin{algorithm}[t]
\caption{NTK MLP Construction}
\label{alg:ntk_mlp_construction}
\begin{algorithmic}[1]
\REQUIRE Keys $\mathbf{K} \in \mathbb{R}^{F \times d}$, values $\mathbf{V} \in \mathbb{R}^{F \times d}$, mapping $f:[F]\to[F]$
\REQUIRE Hidden width $h$, activation choice $\sigma$, Hermite degree $k$, finite-difference step $\varepsilon$ (for plain MLP)
\REQUIRE Flag \texttt{margin\_optimal} (whether to use $\mathbf{U}^\star$)

\vspace{4pt}
\IF{\texttt{margin\_optimal} is True}
  \STATE $\mathbf{V} \gets \mathbf{U}^\star$ \COMMENT{margin-optimal output embeddings}
\ENDIF

\vspace{4pt}
\STATE Sample gate weights $\mathbf{W}_{\mathrm{gate}} \sim \mathcal{N}(0,1)^{h\times d}$
\STATE Sample $\mathbf{P}_{\mathrm{raw}} \sim \mathcal{N}(0,1)^{d\times h}$ and normalize each column to unit norm to obtain $\mathbf{P}$

\vspace{4pt}
\STATE $\mathbf{Z} \gets \mathbf{K}\mathbf{W}_{\mathrm{gate}}^\top \in \mathbb{R}^{F\times h}$ \COMMENT{project inputs}
\STATE Choose Hermite degree $k$ (from activation or configuration)
\STATE $\mathbf{H} \gets \widehat{\mathbf{H}}_k(\mathbf{Z}) \in \mathbb{R}^{F\times h}$ \COMMENT{degree-$k$ normalized Hermite features}

\vspace{4pt}
\STATE $\mathbf{Y} \gets [\mathbf{V}_{f(0)};\ldots;\mathbf{V}_{f(F-1)}] \in \mathbb{R}^{F\times d}$ \COMMENT{reorder values by $f$}
\STATE $\mathbf{A} \gets \mathbf{Y}\mathbf{P} \in \mathbb{R}^{F\times h}$ \COMMENT{feature coefficients}
\STATE $\mathbf{W}_{\mathrm{up}} \gets \frac{1}{h} (\mathbf{H} \odot \mathbf{A})^\top \mathbf{K} \in \mathbb{R}^{h\times d}$

\vspace{6pt}
\textbf{return} the gated MLP:
\[
  \mathbf{g}(\mathbf{x})
  \;=\;
  \mathbf{P}\Big(
    \sigma(\mathbf{W}_{\mathrm{gate}}\mathbf{x})
    \odot
    (\mathbf{W}_{\mathrm{up}}\mathbf{x})
  \Big)
\]

\end{algorithmic}
\end{algorithm}

\end{itemize}

\paragraph{Computing margin-optimal output embeddings.}
For both our constructed MLPs and the NTK baseline, we optionally replace the original value embeddings $\mathbf{V}\in\mathbb{R}^{F\times d}$ by a new set $\mathbf{U}^\star$ obtained by maximizing the dot-product decoding margin (as in~\Cref{def:optimal_output_embeddings}).  
Specifically, for each $i$ we solve the convex optimization problem
\[
\max_{\|u\|_2\le 1} \;\min_{j\neq i}
\frac{\langle \mathbf{v}_i - \mathbf{v}_j,\; u\rangle}{\|\mathbf{v}_i - \mathbf{v}_j\|_2},
\]
and denote the optimizer by $u_i^\star$.  
We solve these problems using ADMM.

\paragraph{Embedding whitening.}
For anisotropic value embeddings, we optionally apply a ZCA whitening preconditioning step prior to training or construction.  
Given an embedding matrix $E\in\mathbb{R}^{F\times d}$ (keys or values), we estimate its second-moment matrix
\[
\Sigma \;=\; \frac{1}{F} E^\top E,
\qquad 
\tilde{\Sigma} \;=\; \Sigma + \varepsilon I_d
\]
with a small ridge $\varepsilon\!\approx\!10^{-6}$ to ensure invertibility.  
Let $\tilde{\Sigma} = Q \Lambda Q^\top$ be the eigendecomposition, where $Q$ is orthonormal and $\Lambda=\mathrm{diag}(\lambda_1,\dots,\lambda_d)$ with $\lambda_i>0$.  
Full ZCA whitening corresponds to the transform
\[
W_{\mathrm{zca}} \;=\; Q\,\Lambda^{-1/2}\,Q^\top.
\]
We also investigate \emph{interpolating} between no whitening and full whitening using a strength parameter $\alpha\in[0,1]$:
\[
W_\alpha \;=\; W_{\mathrm{zca}}^{\alpha}.
\]

Before training or construction, we replace $E$ by the whitened embeddings $E_{\mathrm{white}} = E\,W_\alpha$.
The inverse transform $W_\alpha^{-1}$ is then folded into the final linear block of the resulting MLP, so that the MLP output remains in the original embedding basis.

\subsubsection{Ablations}
\label{ssec:mlp_ablations_appendix}

\paragraph{Effect of margin-optimal output embeddings on NTK MLPs.}
\Cref{fig:scaling-rho} shows that NTK MLPs fail to achieve perfect fact storage once the value embeddings become sufficiently anisotropic.  
Here, we investigate whether applying the NTK construction to the \emph{margin-optimal output embeddings} $\mathbf{U}^\star$ improves its robustness.
As shown in \Cref{fig:ntk_trainU_rho_appendix}, although replacing the raw value embeddings by $\mathbf{U}^\star$ improves fact-storage capacity by a factor of $2$-$4\times$, the NTK construction still breaks down once the condition number exceeds a moderate threshold.
In contrast, both GD MLPs and our constructed MLPs maintain consistent scaling across a broad range of anisotropic embeddings.

\begin{figure}[h]
    \centering
    \includegraphics[width=0.55\linewidth]{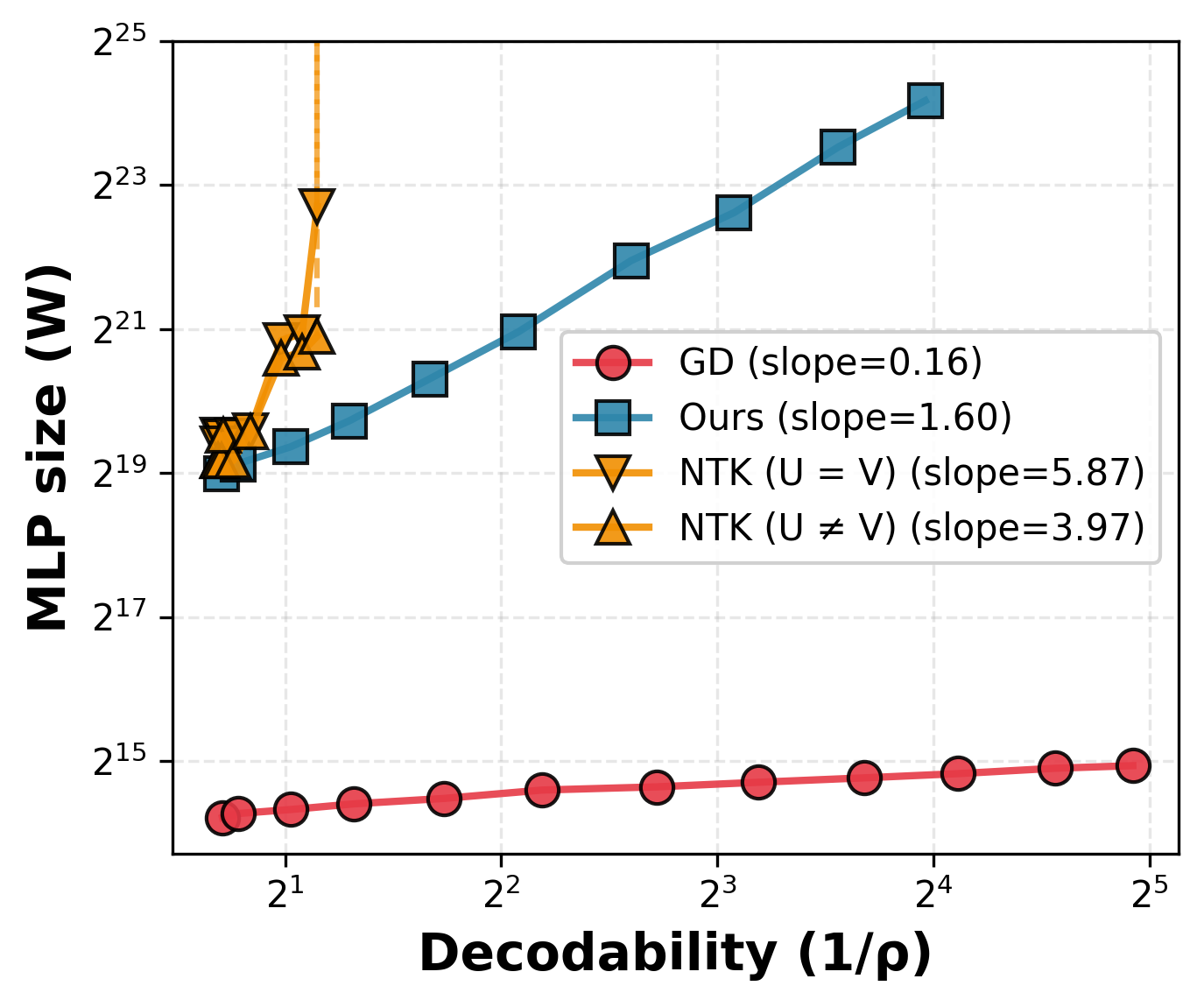}
    \caption{NTK MLPs fail to achieve perfect fact storage for sufficiently anisotropic output embeddings. Using the margin-optimal output embeddings for the NTK construction improves fact-storage capacity by up to $4\times$, but does not improve robustness to anisotropic embeddings.}
    \label{fig:ntk_trainU_rho_appendix}
\end{figure}

\paragraph{Coherence exhibits weak predictive power for fact-storage capacity.}
\Cref{fig:coherence_appendix} compares fact-storage capacity against the coherence of the embedding matrix, a commonly used measure of geometric spread.  
Unlike our decodability statistic $\rho(\mathbf{V})$, coherence does not strongly correlate with the number of parameters needed to store a fixed number of facts; this is true for both GD MLPs ($R^2 = 0.10$) and our constructed MLPs ($R^2 = 0.44$).
This supports our use of $\rho$, rather than coherence or related spectral heuristics, as a natural predictor of separability for the decoder and, ultimately, of fact-storage capacity.

\begin{figure}[h]
    \centering
    \includegraphics[width=0.55\linewidth]{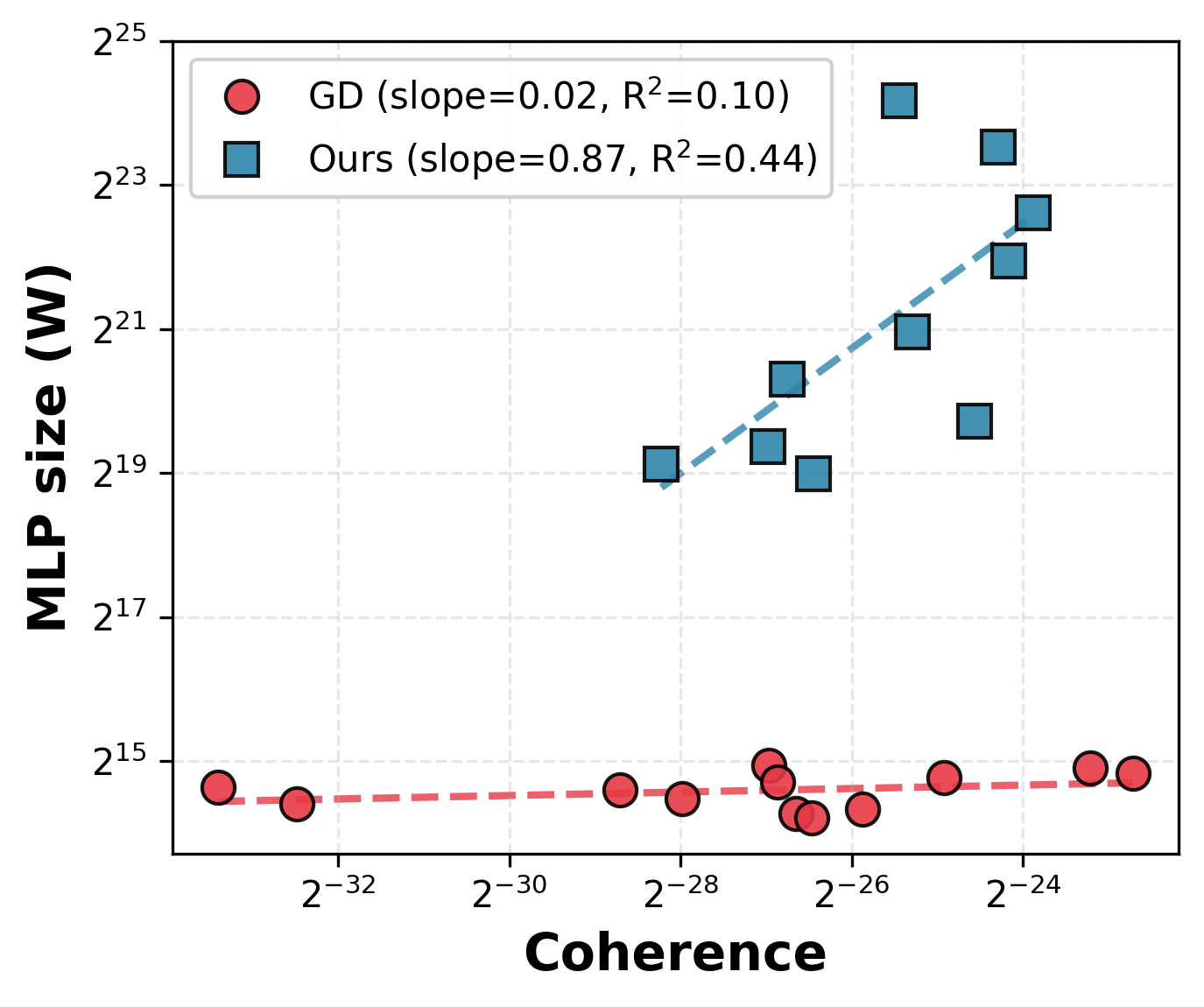}
    \caption{Unlike our decodability metric, $\rho$, coherence is not strongly predictive of fact-storage capacity for GD nor our constructed MLPs.}
    \label{fig:coherence_appendix}
\end{figure}

\newpage
\subsection{SSFR Experiments}
\subsubsection{SSFR Task}\label{appendix:ssfr-task}
We introduce the SSFR task to evaluate a model’s ability to retrieve facts stored in its weights. In this task, the model is presented with a sequence containing a single key token surrounded by “junk” tokens and is required to output the corresponding value token according to the task's \textit{fact set}. 

Formally, let $f : \mathcal{S}_k \to \mathcal{S}_v$ be a fact set over tokens $\mathcal{S}_k \, \cup \,  \mathcal{S}_v$. Let $\mathcal{J}= \{(j^{\text{prefix}}_1, j^{\text{suffix}}_1), (j^{\text{prefix}}_2, j^{\text{suffix}}_2), \dots\}$ be the set containing junk prefixes and suffixes tuples. The SSFR task is then defined as the set of sequences:
\[
    \mathcal{S}_{SSFR}[f] = \{\text{concat}(j_{\text{prefix}},\, k,\, j_{\text{suffix}}, f(k)) \;|\; k\in \mathcal{S}_k, \;\; (j^{\text{prefix}}, j^{\text{suffix}})  \in \mathcal{J}\}.
\]

The model’s task, given a sequence from $\mathcal{S}_{SSFR}[f]$, is then to predict $f(k)$ as the final token of the sequence. For example, given the sequence
\[
\underbrace{*\hspace{0.7em} \%\hspace{0.7em} \& \hspace{0.7em} \# \hspace{0.7em} \$}_{\text{junk prefix}} \ 
 \underbrace{\hspace{0.7em} A \hspace{0.7em}}_{\text{key}} \ 
\underbrace{*\hspace{0.7em} \%\hspace{0.7em} \& \hspace{0.7em} \# \hspace{0.7em} \$}_{\text{junk suffix}}
 \underbrace{\hspace{0.7em} B \hspace{0.7em}}_{\text{value}} \ 
\]
from $\mathcal{S}_{SSFR}[f]$, the model's task is to predict the final token $B=f(A)$.

In practice, across all of our experiments, the junk prefix and junk suffixes have a length between 8 and 16. Further, the amount of junk prefixes and suffixes tuples we use, i.e. $|\mathcal{J}|$, is 16. Finally, we reserve 16 additional tokens (to those representing the keys and values of the fact-set), as the junk tokens.

\subsubsection{Training Setup}\label{app:ssfr-training-setup}
The setup we use to train transformers using fact-storing MLPs in all SSFR experiments is as follows:
\begin{enumerate}
    \item Randomly sample the transformer embeddings for the key, value and junk tokens from a standard normal distribution. We optionally ill-condition the embeddings, as in the MLP fact-storage capacity experiments (Appendix~\ref{ssec:mlps_sweeps_task_setup_appendix}). We do not ill-condition embeddings unless stated otherwise.
    \item Randomly sample a fact set.
    \item Compute the MLP embeddings. To obtain the MLP key embeddings, we just project all the transformer key embeddings to the unit sphere (since the transformer stack forwards them through a normalization layer before feeding them to the MLP). The MLP value embeddings stay the same as the transformer value embeddings.
    \item Construct or train with gradient-descent a fact-storing MLP that stores the fact set under the MLP embeddings.
    \item Train the modified transformer, as outlined in \cref{sec:ssfr}, with frozen key and value transformer embeddings, in the SSFR task corresponding to the fact set we sampled.
\end{enumerate}

\paragraph{Constructed / GD MLPs Setup.}
Across our SSFR experiments, we use constructed and GD fact-storing MLPs as outlined in Appendix~\ref{ssec:mlp_sweeps_architectures_appendix}.

\paragraph{Transformer Setup.}
Across all our SSFR experiments we use a modified 1-layer GPT2 transformer~\citep{radford2019language, Karpathy2022} with RoPE~\citep{su2023roformerenhancedtransformerrotary} positional embeddings, frozen key and value transformer embeddings, RMSNorm normalization layers, single-head attention. Moreover, as outlined in \cref{sec:ssfr}, we tie the transformer and MLP embeddings, remove residual connections, freeze the RMSNorm before the MLP (so that it just projects to the unit sphere) and freeze the \textit{value} and \textit{out-project} matrices of the attention layer to the identity matrix. Across all experiments, we train transformers on a total of 4.8M sequences randomly sampled from the SSFR task, or until convergence, using an AdamW optimizer, with a learning rate of $2\times10^{-4}$ unless stated otherwise.

\subsubsection{MLP Size v.s. Facts}\label{appendix:mlp-size-vs-facts}
\begin{figure}[h]
    \centering
    \includegraphics[width=0.6\textwidth]{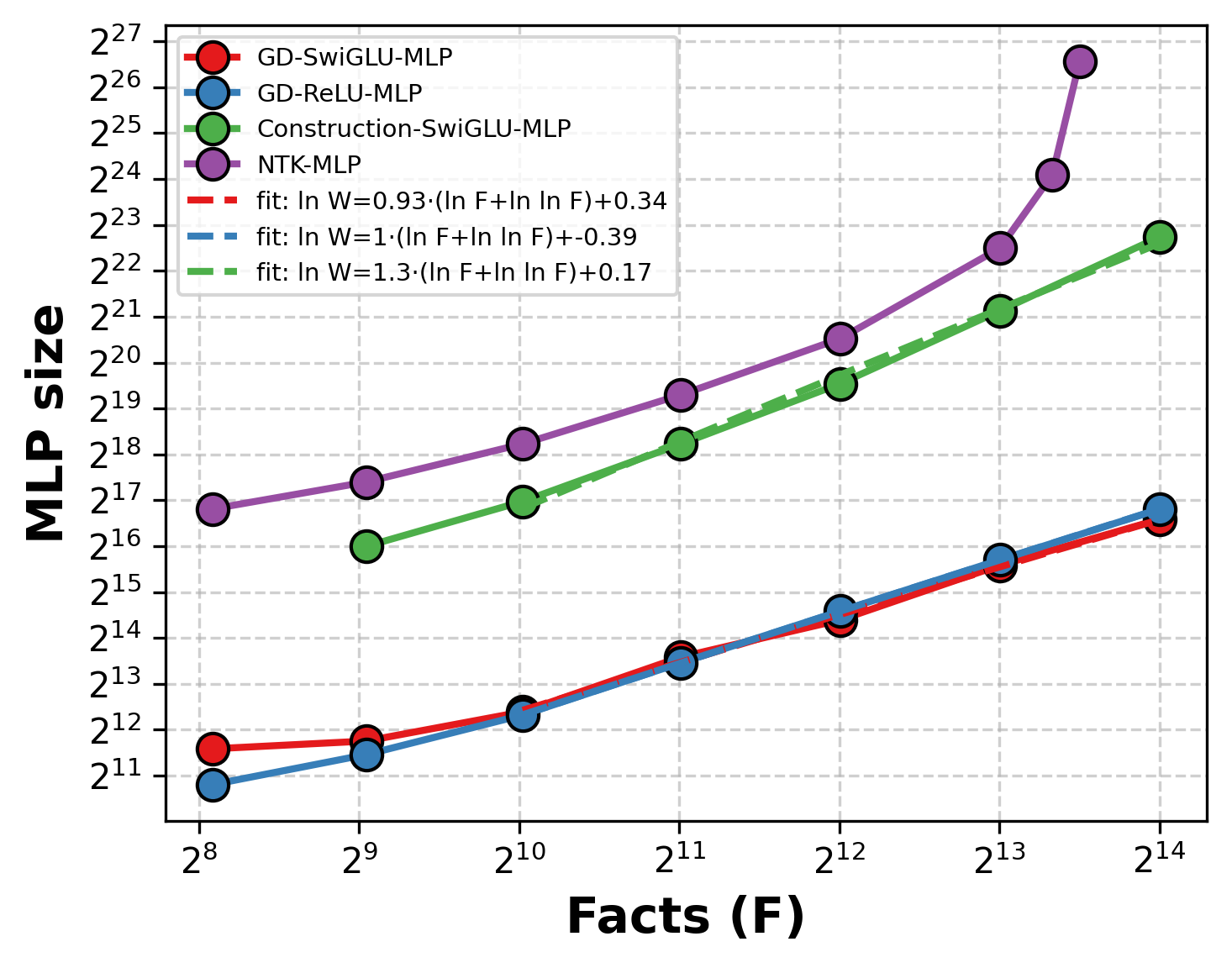}
    \caption{MLP size vs. fact-set size for MLPs with $\ge 99\%$ usability within a Transformer, including ReLU MLPs.}
    \label{fig:app-fact-adaptive-front-full}
\end{figure}

In our MLP size (W) v.s. Facts (F) scaling experiments, presented in \cref{sec:ssfr} and observed in \cref{fig:ssfr_main_fig}.a and \cref{fig:app-fact-adaptive-front-full}, we seek to find the smallest MLP size such that the MLP is usable for factual recall by a transformer. We determine whether an MLP is usable by a transformer by testing whether its \textit{fact-adaptive accuracy} is $>99\%$. To this end, we take a transformer using a fact-storing MLP with embedding-dimension $d=128$ and run a binary search to find the minimum hidden size $h$ needed to store every fact-set size $F\in\{2^8, \dots, 2^{14}\}$. In this binary search, to reduce noise, we run each experiment corresponding to an MLP size with 4 seeds and take the maximum \textit{fact-adaptive accuracy} out of them. We then report the total MLP size v.s. \# of Facts curve outlined by our binary search results.

\subsubsection{MLP Usability v.s. Capacity}\label{appendix:usability-vs-storage-capacity}
\begin{figure}[h]
    \centering
    \includegraphics[width=0.9\textwidth]{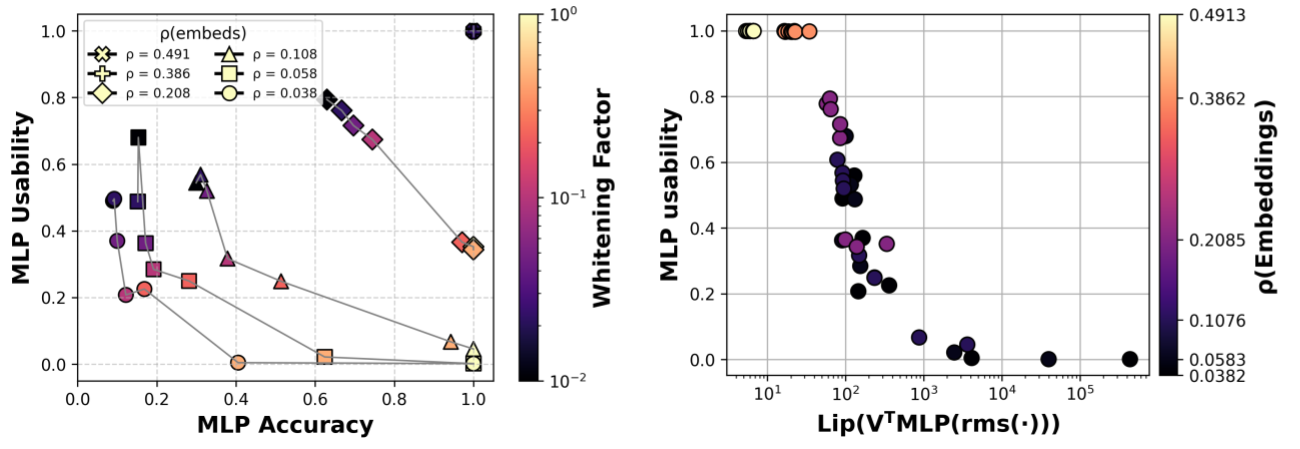}
    \caption{\textbf{(a) MLP usability within Transformer v.s. MLP storage capacity for a ReLU MLP.} We observe a tradeoff between MLP usability within a Transformer and the MLP's fact-storage capacity. \textbf{(b) MLP usability within Transformer v.s. its Lipschitz constant for a ReLU MLP.} We observe that the measured Lipschitz constant is predictive of an MLP's usability within Transformers.}
    \label{fig:ssfr_relu}
\end{figure}

In our MLP Usability v.s. Accuracy experiments, we study the effect of embedding whitening on the usability v.s. accuracy tradeoff of GD fact MLPs (trained with Cross-Entropy loss), as outlined in \cref{subsec:tradeoff}. Concretely, we look at transformers using SwiGLU and ReLU fact MLPs, with $d=128$ and hidden size $m=1.1h^*$, where $h^*$ is the hidden dimension size found in our scaling experiments from \cref{fig:app-fact-adaptive-front-full}.

Concretely, for SwiGLU MLP's we study ill-conditioned transformer embeddings with $\kappa(\mathbf{K}_{t})=\kappa(\mathbf{V}_{t}) \in \{1.1\times10^{0},\;1.0\times10^{1},\;2.5\times10^{1},\;5.0\times10^{1},\;2.5\times10^{2},\;1.0\times10^{3},\;1.0\times10^{4},\;1.0\times10^{6}\}
$, yielding a varied spectrum of $\rho$ values, as observed in \cref{fig:ssfr_main_fig}.b.

In addition, for ReLU MLPs, we look at transformer embeddings with $\kappa(\mathbf{K}_{t})=\kappa(\mathbf{V}_{t}) \in \{1.1\times10^{0},\;1.0\times10^{1},\;1.0\times10^{2},\;1.0\times10^{3},\;1.0\times10^{4},\;1.0\times10^{5}\}$, yielding a varied spectrum of $\rho$ values, as observed in \cref{fig:ssfr_relu}.a.

Further, for every $\rho$, we study the whitening degrees $\alpha\in\{0.0, 0.01, 0.022, 0.046, 0.1, 0.22, 0.46, 1.0\}$. To reduce noise, for every combination of $\alpha, \rho$, we run experiments for the learning rates $lr\in\{2\times10^{-6}, 2\times10^{-5}, 2\times10^{-4}, 2\times10^{-3}, 2\times10^{-2}\}$ with 4 seeds each, keeping the transformer with the largest \textit{fact-adaptive} accuracy.

\subsubsection{MLP Usability v.s. Lipschitz constant} \label{appendix:usability-vs-conditioning}
In our MLP Usability v.s. Lipschitz constant experiments, we study the variation of MLP Usability v.s. an approximation of the Lipschitz constant, as outlined in \cref{subsec:conditioning} and observed in \cref{fig:ssfr_main_fig}.c and \cref{fig:ssfr_relu}.b. Concretely, for every transformer obtained in our MLP Usability v.s. Accuracy experiments \cref{subsec:tradeoff}, we approximate its fact-storing MLP's Lipchitz constant as the maximum out of 100 random $\mathbf{k}_i$ samples of \cref{eq:lip}.

\subsection{Language Modeling Experiments}
\subsubsection{Authors and Books Dataset}\label{appendix:lm-task}
We introduce a simple language modeling (LM) task to evaluate a transformer’s ability to perform next-token prediction while recalling factual information. In this task, the model is presented with a natural-language sentence expressing a \((\textit{book}, \textit{author})\) relation and is required to predict each subsequent token in the sequence. Notably, we curate this dataset using author-books relations from the Goodreads Book Graph Dataset \citep{authors_dataset}.

Formally, let \(f : S_k \to S_v\) be the authors \textit{fact set}, where  
\(S_k = \{\text{``It''},\ \text{``1984''},\ \text{``And Then There Were None''},\ \ldots\}\) is the set of book titles (keys) and  
\(S_v = \{\text{``Stephen King''},\ \text{``George Orwell''},\ \text{``Agatha Christie''},\ \ldots\}\) is the set of corresponding authors (values).  
To simplify analysis, we select exactly one book per author.  
Let \(J = \{(\text{``The author of''},\ \text{``is''}),\ (\text{``Who is the author of''},\ \text{``? It is''}),\ \ldots\}\) denote the set of natural-language template prefix--suffix pairs.  
The LM task given \(f\) can then be defined as:
\[
    \mathcal{S}_{LM}[f] = \{\text{concat}(t_{\text{prefix}},\ k,\ t_{\text{suffix}}, f(k)) \ |\ (t_{\text{prefix}}, t_{\text{suffix}}) \in J,\ k \in S_k \}.
\]

For example, given the sequence:
\[
\underbrace{\text{The author of}}_{\text{template prefix}}\ 
\underbrace{1984}_{\text{key}}\ 
\underbrace{\text{is}}_{\text{template suffix}}\ 
\underbrace{\text{George Orwell}}_{\text{value}}
\]
from \(\mathcal{S}_{LM}[f]\), the model’s task is to perform next-token prediction \textit{at every position} in the sentence. This LM task allows us to study factual recall in a more natural language modeling setting, complementing the SSFR setup.

\subsubsection{Training Setup}
The setup we use to train transformers using fact-storing MLPs in the Language Modeling experiments is the same as that outlined in \cref{app:ssfr-training-setup}. However, instead of using a random fact set, we use the authors and books fact-set and use uniformly sampled embeddings.

\paragraph{GD MLP Setup.}
Notably, in our LM experiments, we only use GD trained fact-storing MLPs, which are trained in a MSE objective (as opposed to a Cross-Entropy objective) to store the fact set under arg-max decoding. Concretely, these MLPs are trained to minimize $L_{MLP}(\mathbf{K}, \mathbf{V}, f) \propto \sum_{i=1}^{|\mathbf{K}|}||MLP(\mathbf{k}_i) - \mathbf{v}_{f(i)}||_2^2$.

\paragraph{Transformer Setup}
In our LM experiments, we use a similar setup as that outlined in \cref{app:ssfr-training-setup}, with some additional modifications we find empirically helpful:
\begin{itemize}
    \item Replace the state-mixer of the transformer with a Mixture-of-Experts (MoE) module with 2 experts and an MLP router. Concretely, we use a \textit{fact-expert}, which is the frozen fact-storing MLP and a \textit{language-expert}, which is a trainable low-rank linear layer. Intuitively, this MoE setup enables the transformer to selectively use the fact-storing MLP only for factual recall.
    \item Parametrize the \textit{query} and \textit{key} projections in the attention module with MLPs.
\end{itemize}

\subsubsection{MLP Size v.s. Facts}\label{appendix:lm-scaling}
\begin{figure}[h]
    \centering
    \includegraphics[width=0.9\textwidth]{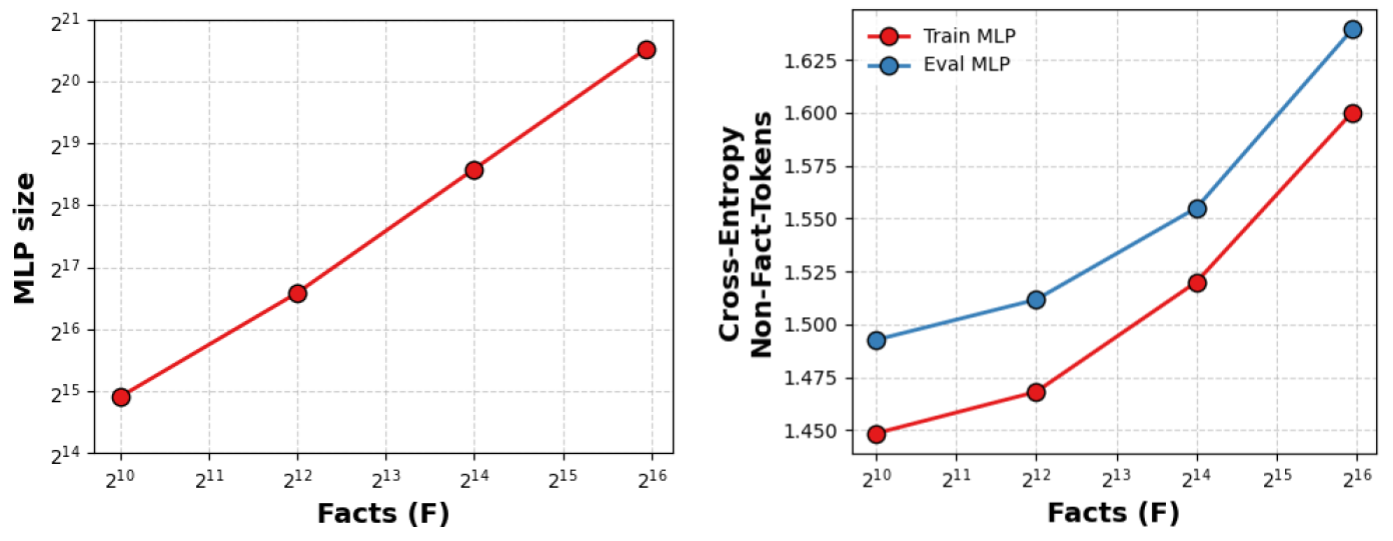}
    \caption{\textbf{(a) MLP size vs. fact-set size for MLPs with $\ge 99\%$ usability in LM task within a transformer.} Notably, fact MLPs are usable within transformers for Language Modeling. \textbf{(b) CE Loss of on non-fact tokens on a LM task for the transformers in \cref{fig:lm-fact-adaptive-fronteir}.a after swapping their fact-storing MLP for different one.} Notably, the CE Loss of the transformers decays minimally ($\sim3$\%) when replacing the original MLP (train) with another one storing a different fact-set (eval).}
    \label{fig:lm-fact-adaptive-fronteir}
\end{figure}

Similar to \cref{appendix:mlp-size-vs-facts}, we perform MLP size (W) v.s. Facts (F) scaling experiments for our transformers, equipped with GD fact MLPs, in the LM task. Concretely, we take transformers equipped with SwiGLU fact MLPs with $d=256$ and use a binary search with 4 seeds per experiment to determine to find the smallest MLP size W s.t. a transformer can use such an MLP for factual recall on a fact set of size F. As can be observed in \cref{fig:lm-fact-adaptive-fronteir}.a, our transformers can use fact-storing MLPs for factual recall with reasonable scaling in facts per parameter. Furthermore, each of these transformers only suffer a small decay of $\sim3\%$ in average Cross-Entropy loss for the non-fact tokens of the LM task (e.g. "The",  "author" "of", etc.) when their MLP is swapped by another one storing a different fact-set (i.e. a different mapping from books to authors).

\subsubsection{Fact Editing} \label{appendix:fact-editing}

We evaluate fact-editing methods in the same setting used for our Language Modeling experiments. Concretely, we use the model obtained in those experiments storing 16,000 author-book facts, each represented by 16 rephrases.

To study how different fact-editing approaches behave, we divide the fact set into two subsets: a \emph{preserved fact set}, whose facts the editor should maintain, and an \emph{altered fact set}, whose facts the editor should modify. We run experiments using several combinations of preserved/altered fact set sizes: $\{(6554, 1638), (3277, 819), (1311, 327)\}$, which are subsets of the original fact set of 16,000 facts.

We evaluate each editing method using three standard metrics. \emph{Specificity} measures accuracy on the altered-fact set, indicating how well the method performs the intended edits. \emph{Efficacy} measures accuracy on the preserved-fact set, capturing whether the method avoids unintended side effects. \emph{Paraphrase} evaluates the accuracy on paraphrases of the altered facts, measuring how well edits generalize beyond the training prompts. We also report a \emph{Score}, defined as the harmonic mean of these three metrics.

We compare four editing methods. Our method, \emph{MLP swapping}, trains an MLP to store the full altered-fact set and swaps it into the transformer in place of the original fact-storing MLP. The remaining three methods: MEMIT \citep{memit}, AlphaEdit  \citep{fang2025alphaeditnullspaceconstrainedknowledge}, and ROME \citep{rome}, are existing weight-update-based editors, which are set up to alter the \emph{altered fact set} and preserve the \emph{preserved fact set}. Because these methods are designed for large language models and real-world text, we adapt them to our simplified 1-layer transformer setup. For each, we perform a grid search over its hyperparameters and report the accuracies corresponding to the configuration achieving the best overall score.
\begin{itemize}
    \item \textbf{MEMIT:} We search over $\texttt{train\_steps} \in \{10, 25, 100\}$, $\texttt{lr} \in \{0.005, 0.05, 0.5\}$, $\lambda \in \{1.5\times10^4, 1.5\times10^3, 1.5\times10^2, 1\}$, and $\texttt{clip\_norm} \in \{0.5, 0.75, 1\}$.
    \item \textbf{AlphaEdit:} We search over $\texttt{train\_steps} \in \{10, 25, 100\}$, $\texttt{lr} \in \{0.005, 0.05, 0.5\}$, $\texttt{clip\_norm} \in \{0.5, 0.75, \text{None}\}$, and $\texttt{singular\_value\_tolerance} \in \{10^{-2}, 1, 10\}$.
    \item \textbf{ROME:} We search over $\texttt{train\_steps} \in \{10, 25, 100\}$, $\texttt{lr} \in \{0.005, 0.05, 0.5\}$, $\texttt{wd} \in \{1.5\times10^{-3}, 1.5\times10^{-4}, 0\}$, and $\texttt{early\_stopping\_loss} \in \{5\times10^{-2}, \text{None}\}$.
\end{itemize}

For these methods, we apply residual updates to the output of the MLP inside the MoE module on the final token of the input prompt. We find this appropriate since our transformer has a single layer, so the fact-storing MLP directly precedes the logits without any intervening attention layers. Moreover, we do not introduce random token prefixes when computing residual vectors. Instead, we use a single templated prompt per fact. In addition, for ROME, we omit the KL-divergence term from the residual computation given the simplicity of our dataset, where each subject (author) appears in only one relation, mapping uniquely to a book.

\newpage
\section{Theoretical Results}\label{app:extended_theory}
This section is organized as follows: 
\begin{enumerate}
    \item In \Cref{sec:notaion_external_res} we discuss notation and external results that will be useful throughout the appendix.
    \item In \Cref{sec:info_theory_bound} we provide additional preliminary information on softmax decoding and fact storage capacity in support of \Cref{subsec:definitions_maintext}.
    \item In \Cref{theory_sec_3.1} we detail our encoding construction in support of \Cref{subsec:encoder_twohot}.
    \item In \Cref{sec:new_decoding_results} we prove bounds on $\rho$, and detail our decoding construction in support of \Cref{subsec:construction_decoder}.
    \item In \Cref{theory_sec_3.3} we prove our full construction in support of \Cref{subsec:full_construction}.
    \item In \Cref{app:subsec_embeddings_theory} we explore the interaction between $\rho$ and transformations on embeddings in support of \Cref{sec:preconditioning}. 
    \item In \Cref{app:subsec_bit_complexity} we prove that our construction has bounded bit complexity.  
    \item In \Cref{app:subsec_spherical_shevyshev} we prove bounds on the spherical Chebyshev value. 
    \item In \Cref{app:extended_theory_proofs} we collect deferred proofs from the previous sections. 
\end{enumerate}

\subsection{Notation and External Results}\label{sec:notaion_external_res}
All vectors are denoted by bold lower case letters ($e.g.,\vx$), and matrices by bold uppercase letters ($e.g.,\rmV$). All vectors are assumed to be in column form and indices will start from 1. We denote $\mathbb{S}^{\modelDim-1}$ to be the unit sphere in $\R^d$.

For a set $\mU=[\vu_1, \hdots, \vu_N^{\top}]$ with $\vu_i\in\mathbb S^{d-1}$, set
\[
\rho(\rmU;\rmV)\;:=\;\min_{i\neq j}\ \frac{\langle \vv_i-\vv_j,\;\vu_i\rangle}{\|\vv_i-\vv_j\|_2},
\qquad
\rho(\rmV)\;:=\;\max_{\mU}\ \rho(\rmU;\rmV).
\]

We use the former definition of $\rho$ in several sections of the appendix as it is somewhat easier to work with. 

We generally abbreviate $\|x\|_2$ to $\|x\|$; other norms are explicitly marked. We occasionally use $|\cdot|$ to denote the number of rows in a matrix (ie. $| \rmK |$ = \# of rows in $\rmK$). Additionally, note that $O(d)$ is the set of $d\times d$ orthonormal matrices and is distinguishable from Big-O notation by the type of its elements (eg. $\rmU \in O(d)$). 

A random vector $x \in \R^d$ is \emph{rotationally invariant} if
\[
Vx \sim x \qquad \forall V \in O(d),
\]
i.e., its distribution depends only on $\|x\|_2$ and not on its direction (e.g.\ $x \sim \mathcal N(0,I_d)$). When we say the keys are rotationally invariant, we mean they are i.i.d.\ draws from such a distribution.

\subsubsection{The Bubeck Result}

Fix some dataset $\mathcal{D}=\{(\vx_i, y_i)\}_{i \in [n]} \subset (\mathbb R^d \times \mathbb R)^n$. Let $\mathcal F_k$ be the set of functions of the form 

\begin{align*}
    f(\vx) = a^\top \ReLU(\mW\vx + b)
\end{align*}
where $\va = (a_1, ..., a_k)^\top \in \mathbb R^k$, $b = (b_1, ..., b_k)^\top \in \mathbb R^k$, and $\mW \in \mathbb R^{k \times d}$ with rows $\vw_1^\top, ..., \vw_k^\top$. Denote $\mathbf y = (y_1 , ..., y_n)$ and $\mathbf f = (f(\vx_1), ..., f(\vx_n))$ with $f \in \mathcal F, \; f:\mathbb R^d \rightarrow \mathbb R$. Note that this is equivalent to the definition in \citep{bubeck2020networksizeweightssize}.

We will use the following result from \citep{bubeck2020networksizeweightssize}: 
\begin{theorem}\label{thm:bubeck_result}
    Let $(\vx_i)_{i \in [n]}$ be in general position in $\mathbb R^d$ (i.e., any hyperplane contains at most $d$ points). Then there exists $f \in \mathcal F_{4 \cdot \lceil \frac{n}{d} \rceil}$ such that $\mathbf f = \mathbf y$.
\end{theorem}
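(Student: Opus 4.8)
The plan is to reproduce the memorization argument of \citet{bubeck2020networksizeweightssize}: turn the $d$-dimensional interpolation problem into a one-dimensional sweep, while using the $d$ ambient directions to absorb $d$ interpolation constraints per constant-size cluster of hidden units. First I would fix a generic direction $\mathbf{u}\in\mathbb{S}^{d-1}$; since the $\mathbf{x}_i$ are distinct, the scalars $t_i:=\langle\mathbf{u},\mathbf{x}_i\rangle$ are pairwise distinct for all $\mathbf{u}$ outside a measure-zero set, so after relabelling assume $t_1<\cdots<t_n$. Partition the sorted indices into $m=\lceil n/d\rceil$ consecutive blocks $B_1,\dots,B_m$, each of size at most $d$, separated by scalar gaps in the $t$-ordering. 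The key structural fact, which general position supplies, is that any set of at most $d$ of the $\mathbf{x}_i$ is affinely independent (no $d{+}1$ of them lie on a hyperplane), so the vectors $(\mathbf{x}_i,1)$ for $i\in B_j$ are linearly independent and the $|B_j|\le d$ constraints ``value on $B_j$ equals a prescribed vector'' are solvable by a single affine functional $g_j:\mathbb{R}^d\to\mathbb{R}$.

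The interpolant is then built as a sum $f=\sum_{j=1}^m\psi_j$ of $m$ ``wedge'' gadgets, each realized by $4$ ReLU units, with the property that $\psi_j$ vanishes on $B_1,\dots,B_{j-1}$, realizes a fully arbitrary affine function on $B_j$, and is constant in the $\mathbf{u}$-direction on $B_{j+1},\dots,B_m$. The prototype is a pair $\mathbf{x}\mapsto\mathrm{ReLU}\!\big(\langle\mathbf{a},\mathbf{x}\rangle+\lambda(\langle\mathbf{u},\mathbf{x}\rangle-\tau)\big)-\mathrm{ReLU}\!\big(\langle\mathbf{a},\mathbf{x}\rangle+\lambda(\langle\mathbf{u},\mathbf{x}\rangle-\tau)-M\big)$: for $\lambda$ large and $\tau,M$ placing the ``active slab'' (the region $\approx\{\,\tau<\langle\mathbf{u},\mathbf{x}\rangle<\tau+M/\lambda\,\}$ where the ramp is in its linear regime) so that it contains exactly the $t_i$ with $i\in B_j$, this function is $0$ on earlier blocks, equals the affine map $\langle\mathbf{a}+\lambda\mathbf{u},\mathbf{x}\rangle-\lambda\tau$ on $B_j$ (whose parameters are tuned against the current residuals), and equals the constant $M$ on later blocks; the two remaining units per block are spent reconciling the slab location with the desired constant term of $g_j$ and absorbing the constant tail left on later blocks. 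Processing blocks in increasing order and updating residuals, $\psi_j$ pins down $B_j$ without perturbing $B_1,\dots,B_{j-1}$, and its (constant) effect on later blocks is folded into their residuals before those blocks are handled; summing the $m$ gadgets yields $\mathbf{f}=\mathbf{y}$ with $4m=4\lceil n/d\rceil$ hidden units, i.e.\ $f\in\mathcal{F}_{4\lceil n/d\rceil}$.

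The main obstacle is proving the gadget is simultaneously feasible: for each block one must choose a single direction, two thresholds, and a gain $\lambda$ so that (i) the affine part realizes an \emph{arbitrary} target on the $d$ points of $B_j$ — here all $d$ ambient degrees of freedom are essential, and this is the source of the $n/d$ saving — (ii) every earlier block lies strictly on the inactive side and every later block strictly on the saturated side, and (iii) the leftover constant on later blocks is cheap to track. These requirements couple the placement of the active slab to the constant term of the fitted affine function, which is exactly why a bare two-ReLU ramp does not suffice and one needs four units per block; making these choices mutually consistent, and excluding the measure-zero configurations where they clash, is where general position is used most heavily and requires a short genericity argument (perturbing $\mathbf{u}$, the $\mathbf{a}_j$, and the thresholds). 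Once this packing lemma is in hand the bookkeeping and the neuron count are routine; I would refer to \citet{bubeck2020networksizeweightssize} for the complete construction.
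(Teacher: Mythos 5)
Your construction is genuinely different from the one the paper sketches, and the difference matters for where the work lies. The paper (following \citet{bubeck2020networksizeweightssize}) partitions the data \emph{arbitrarily} into $\lceil n/d\rceil$ groups of size $d$, takes for each group a hyperplane $\{\vz_S\cdot\vx=b_S\}$ which, by general position, contains exactly that group, and uses two finite-difference pairs of ReLUs, $h_{\vz,\vv,b,\delta}=g_{\vz,\vv,b-\tau,\delta}-g_{\vz,\vv,b+\tau,\delta}$, to build a gadget that equals $\vv\cdot\vx_i$ on the group and is \emph{exactly zero at every other data point}; the per-group systems $X_S\vv_S=y_S$ are then solved independently and the gadgets summed, with no ordering of the data, no saturation constant on ``later'' points, and no residual propagation. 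You instead sort along a generic direction $\mathbf{u}$, take consecutive blocks, and use a saturating ramp that is zero before the block, affine on it, and constant after it, patched by a two-unit constant adjuster and a sequential residual update.

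The gap is precisely the step you yourself flag as the main obstacle: you never establish that the four units per block can be chosen consistently, and deferring to \citet{bubeck2020networksizeweightssize} does not close it, because their construction is the hyperplane-bump one above, not your sweep. Concretely, on the active slab your wedge computes $\langle\mathbf{a}+\lambda\mathbf{u},\vx\rangle-\lambda\tau$, so its intercept is tied to the slab location and the gain; even after spending the last two units on a constant $c$, you must simultaneously satisfy the interpolation equations on $B_j$, the ordering constraints ($s_i<0$ before, $0<s_i<M$ on, $s_i>M$ after the block), and track the constants leaking onto later blocks, and none of this bookkeeping is actually verified. (Your supporting claim that general position forces any $\le d$ points to be affinely independent also needs an argument: it follows by augmenting an affinely dependent $k$-subset with $d+1-k$ further data points to get a hyperplane containing $d+1$ points, hence requires $n\ge d+1$.) The clean repair is to transplant the paper's trick into your sweep: let $g_j$ be an affine interpolant of the current targets on block $B_j$, place thresholds $\tau_1,\tau_2$ in the two gaps of the $t$-ordering bracketing $B_j$, and set
\[
\psi_j(\vx)=\Bigl[\ReLU\bigl(g_j(\vx)+\lambda(\langle\mathbf{u},\vx\rangle-\tau_1)\bigr)-\ReLU\bigl(\lambda(\langle\mathbf{u},\vx\rangle-\tau_1)\bigr)\Bigr]-\Bigl[\ReLU\bigl(g_j(\vx)+\lambda(\langle\mathbf{u},\vx\rangle-\tau_2)\bigr)-\ReLU\bigl(\lambda(\langle\mathbf{u},\vx\rangle-\tau_2)\bigr)\Bigr]
\]
for $\lambda$ large. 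This is still four ReLUs per block, equals $g_j$ on $B_j$, and vanishes at every other data point, so the saturation level $M$, the coupling between intercept and slab, and the residual updates all disappear; only affine solvability on each block remains, which is the point where general position enters.
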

We now provide a proof sketch of the result to provide intuition. For a full proof, see Proposition 4 of \citep{bubeck2020networksizeweightssize}.

\begin{proof}
Split the $n$ samples into $r=\lceil n/d\rceil$ disjoint sets of indices $S_1,\dots,S_r$ of size $d$ (last may be smaller).
By general position, for each block $S$ there is a hyperplane $H_S=\{\vx:\ \vz_S\!\cdot \vx=b_S\}$ that contains exactly $\{\vx_i:i\in S\}$.

Define the function, for small enough $\delta>0$:
\[
g_{\vz,\vv,b,\delta}(\vx)\ :=\ \frac{\ReLU((\vz+\delta \vv)\!\cdot \vx-b)-\ReLU(\vz\!\cdot \vx-b)}{\delta}.
\]
If $\delta$ preserves the signs of $\vz\!\cdot \vx_i-b$ for all data (i.e., if no input crosses the ReLU boundary), then
\[
g_{\vz,\vv,b,\delta}(\vx_i)=\begin{cases} \vv\!\cdot \vx_i,& \vz\!\cdot \vx_i>b,\\ 0,& \vz\!\cdot \vx_i<b.\end{cases}
\]
Set
\[
h_{\vz,\vv,b,\delta}(\vx):=g_{\vz,\vv,b - \tau,\delta}(\vx) - g_{\vz,\vv,b + \tau,\delta}(\vx),
\]
for small enough $\tau > 0$. We then have that $h_{\vz_S,\vv,b_S,\tau, \delta}(\vx_i)=\vv\!\cdot \vx_i$ for $\vx_i \in S$ and $0$ otherwise. Choices which always work are $0 < \tau < \frac{1}{2} \min_{i \nin S} |\vu_S \cdot \vx_i - b_S|$ and $\delta \leq \frac{1}{2} \min_{i \in [n]} \min_{\sigma \in \{-1, 1\}} \frac{|\vz_S \cdot \vx_i - (b_S + \sigma \tau)|}{|\vv \cdot \vx_i|}$. 

Pick $S_i$ such that $X_{S_i}$ (the matrix collecting all $\vx_j \in S_i$), by general position of $\vv_i$s, has full rank for all $i$. For each block $S$, solve $X_S \vv_S=y_S$ and define
$f_S(\vx):=h_{\vz_S,\vv_S,b_S,\tau, \delta}(\vx)$.
Then $f_S(\vx_i)=y_i$ for $i\in S$ and $0$ for $i\notin S$.

Finally,
\[
f(\vx):=\sum_{t=1}^r f_{S_t}(\vx)\ \in\ \mathcal F_{4r}=\mathcal F_{4\lceil n/d\rceil}
\quad\text{and}\quad
f(\vx_i)=y_i\ \ \forall i\in[n].
\]
\end{proof}

\subsubsection{Johnson-Lindenstrauss Inner Product Preservation}
We will use the following result from \citep{kalavasis2024replicablelearninglargemarginhalfspaces}. 

We say that a random matrix $\mathbf A \in \mathbb R^{k \times d}$ is a $\textit{JL-matrix}$ if either $\mathbf A_{i,j} \sim_{i.i.d} \mathcal N(0, 1/k)$ or $\mathbf A_{i,j} \sim_{i.i.d} U\{-1 / \sqrt{k}, 1 / \sqrt{k}\}$. 

\begin{corollary} \label{jl-ip-result}
    Fix $\epsilon, \delta_{\text{JL}}\in(0,1)$. Let $\rmA\in\R^{k\times d}$ be a $JL$-matrix for $k=\Omega(\epsilon^{-2}\log(\frac{1}{\delta_{\text{JL}}})).$ Then for any $x,z\in\R^{d}$, 
    \[ \Pr_{\rmA}[|\rvz^{\top}\rvx-(\rmA\rvz)^{\top}\rmA\rvx|>\epsilon\|\rvz\|\cdot\|\rvx\|]\le \delta_{\text{JL}}.\]
\end{corollary}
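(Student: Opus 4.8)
The plan is to deduce this inner-product statement from the standard norm-preservation form of the Johnson--Lindenstrauss lemma via the polarization identity. First I would reduce to unit vectors: both sides of the claimed inequality are homogeneous of degree one in $\rvx$ and in $\rvz$ separately, so it suffices to prove it when $\|\rvx\|_2 = \|\rvz\|_2 = 1$ and rescale afterward. For unit vectors, the polarization identities
\[
\rvz^\top \rvx \;=\; \tfrac14\bigl(\|\rvz+\rvx\|_2^2 - \|\rvz-\rvx\|_2^2\bigr), \qquad (\rmA\rvz)^\top(\rmA\rvx) \;=\; \tfrac14\bigl(\|\rmA(\rvz+\rvx)\|_2^2 - \|\rmA(\rvz-\rvx)\|_2^2\bigr)
\]
yield, with $\rvp = \rvz+\rvx$ and $\rvq = \rvz-\rvx$,
\[
(\rmA\rvz)^\top(\rmA\rvx) - \rvz^\top\rvx \;=\; \tfrac14\Bigl[\bigl(\|\rmA\rvp\|_2^2 - \|\rvp\|_2^2\bigr) - \bigl(\|\rmA\rvq\|_2^2 - \|\rvq\|_2^2\bigr)\Bigr],
\]
where $\|\rvp\|_2^2,\|\rvq\|_2^2 \le 4$. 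So it is enough to control the deviation $\bigl|\,\|\rmA\rvv\|_2^2 - \|\rvv\|_2^2\,\bigr|$ for each of the two fixed vectors $\rvp,\rvq$.

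The key ingredient is the single-vector JL concentration bound: for a JL-matrix $\rmA \in \R^{k\times d}$, any fixed $\rvv$, and any $\eta \in (0,1)$,
\[
\Pr_{\rmA}\Bigl[\,\bigl|\,\|\rmA\rvv\|_2^2 - \|\rvv\|_2^2\,\bigr| > \eta\,\|\rvv\|_2^2\,\Bigr] \;\le\; 2e^{-c k \eta^2}
\]
for an absolute constant $c>0$. For Gaussian entries this is immediate, since $\|\rmA\rvv\|_2^2/\|\rvv\|_2^2$ is $\tfrac1k$ times a $\chi^2_k$ variable and the Laurent--Massart tail (equivalently, sub-exponential concentration of a sum of $k$ i.i.d.\ $\chi^2_1$'s) gives exactly this form for $\eta \le 1$. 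For Rademacher entries I would apply the Hanson--Wright inequality to the quadratic form $\rvv^\top\bigl(\tfrac1k\sum_i \rva_i\rva_i^\top\bigr)\rvv$ (or Bernstein's inequality to the bounded i.i.d.\ summands $\langle \rva_i,\rvv\rangle^2 - \|\rvv\|_2^2/k$), which gives the same sub-exponential tail. Applying the bound with $\eta = \epsilon/2$ to each of $\rvp,\rvq$ and taking a union bound, with probability at least $1 - 4e^{-c k \epsilon^2/4}$ both deviations are at most $(\epsilon/2)\cdot 4 = 2\epsilon$, hence
\[
\bigl|(\rmA\rvz)^\top(\rmA\rvx) - \rvz^\top\rvx\bigr| \;\le\; \tfrac14(2\epsilon + 2\epsilon) \;=\; \epsilon\,\|\rvz\|_2\,\|\rvx\|_2,
\]
where the last equality restores the general normalization. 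Choosing $k \ge C\epsilon^{-2}\log(1/\delta_{\text{JL}})$ for a large enough absolute constant $C$ makes $4e^{-c k\epsilon^2/4} \le \delta_{\text{JL}}$, which is the claim.

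The only genuinely technical step is the single-vector concentration bound, and even that is classical; the mild subtlety is ensuring the Rademacher case achieves the same $\Omega(\epsilon^{-2}\log(1/\delta_{\text{JL}}))$ dimension as the Gaussian case, which Hanson--Wright (or a direct moment-generating-function computation for $\sum_i(\langle \rva_i,\rvv\rangle^2 - \|\rvv\|_2^2/k)$) handles cleanly. Everything else --- the polarization identity, the crude bound $\|\rvz\pm\rvx\|_2^2 \le 4$, the reduction to unit vectors, and the two-event union bound --- is routine bookkeeping.
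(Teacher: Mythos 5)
Your argument is correct. Note that the paper does not actually prove this corollary---it is imported verbatim from the cited reference---but the route you take (reduce to unit vectors, polarization identity on $\rvz\pm\rvx$, single-vector norm concentration with sub-exponential tails, union bound over the two events) is exactly the technique the paper itself uses for its closely analogous internal result, Lemma~\ref{lem:angleJL} (proved in \Cref{pf:angleJL}) in the Gaussian case; your only addition is handling the Rademacher case via Hanson--Wright/Bernstein, which is needed to cover the full ``JL-matrix'' definition and is handled correctly.
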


\subsubsection{Sub-gaussian rows}
We will use the following result from \citep{vershynin2018high}.
\begin{theorem} \label{sub-gaussian-rows}
    Let $\rmA$ be an $N\times n$ matrix whose rows $\rmA_i$ are independent sub-gaussian isotropic random vectors in $\R^n$. Then for every $t\ge 0$, with probability at least $1-2\exp(-ct^2)$ one has
    \[\sqrt{N}-C\sqrt{n}-t\le s_{\text{min}}(\rmA)\le \sqrt{N}+C\sqrt{n}+t.\]
    Here $C=C_K, c=c_K >0$ depend only on the subgaussian norm $K=\max_{i}\|\rmA_i\|_{\psi_{2}}$ of the rows. 
\end{theorem}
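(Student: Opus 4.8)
The plan is to prove the standard fact that a random matrix with independent sub-gaussian isotropic rows is an approximate isometry, via the classical $\varepsilon$-net argument. First I would reduce the singular-value bounds to a deviation bound on the Gram matrix: writing $\delta := C\sqrt{n/N} + t/\sqrt N$, it suffices to show $\|\tfrac1N\rmA^\top\rmA - \rmI_n\|_{\mathrm{op}} \le \max(\delta,\delta^2)$ with probability at least $1-2\exp(-ct^2)$. Since $\rmA^\top\rmA$ is symmetric PSD, this operator norm equals $\max\big(|s_{\max}(\rmA)^2/N - 1|,\, |s_{\min}(\rmA)^2/N - 1|\big)$, and the elementary implication ``$|z^2-1| \le \max(\delta,\delta^2) \Rightarrow |z-1|\le\delta$'' (valid for all $z\ge0$, verified by splitting into the cases $z\ge1$ and $z<1$) applied to $z = s_{\min}(\rmA)/\sqrt N$ converts the Gram bound into exactly $\sqrt N - C\sqrt n - t \le s_{\min}(\rmA) \le \sqrt N + C\sqrt n + t$ (the analogous bound for $s_{\max}$ comes along for free).

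For the Gram bound I would fix a $\tfrac14$-net $\mathcal N$ of $\mathbb S^{n-1}$ with $|\mathcal N| \le 9^n$, so that the usual symmetric-matrix net inequality gives $\|\tfrac1N\rmA^\top\rmA - \rmI_n\|_{\mathrm{op}} \le 2\max_{x\in\mathcal N}|\tfrac1N\norm{\rmA x}_2^2 - 1|$. For a fixed $x\in\mathbb S^{n-1}$, the scalars $Y_i := \langle \rmA_i,x\rangle$ are i.i.d., sub-gaussian with $\norm{Y_i}_{\psitwo} \le K$, and satisfy $\E[Y_i^2] = x^\top \E[\rmA_i\rmA_i^\top]x = 1$ by isotropy; hence $\norm{\rmA x}_2^2 = \sum_i Y_i^2$ is a sum of i.i.d.\ sub-exponential terms with $\norm{Y_i^2 - 1}_{\psione} \lesssim K^2$, and Bernstein's inequality yields $\Prob[|\tfrac1N\norm{\rmA x}_2^2 - 1| \ge u] \le 2\exp(-c_0 N\min(u^2/K^4, u/K^2))$. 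Taking $u = \max(\delta,\delta^2)/2$ and using $\delta^2 \ge C^2 n/N + t^2/N$, one checks $N\min(u^2/K^4, u/K^2) \gtrsim (C^2 n + t^2)/K^4$ in both regimes $\delta\le1$ and $\delta>1$; a union bound over $\mathcal N$ then bounds the failure probability by $2\exp(n\ln 9 - c_1 C^2 n/K^4 - c_1 t^2/K^4)$.

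Finally, choosing $C = C_K$ large enough that $c_1 C^2/K^4 \ge 2\ln 9$ kills the $n$-dependent term and leaves $2\exp(-c_K t^2)$ with $c_K \asymp K^{-4}$, which closes the argument. The main obstacle — and the reason both $C$ and $c$ must depend on $K$ — is precisely this last bookkeeping step: one has to carry the $K$-dependence of the sub-exponential norm faithfully through Bernstein's inequality so that the exponent it produces provably dominates the $\ln 9$ metric-entropy cost of the net. The two other points needing a little care are the elementary quadratic implication in the reduction step (where writing $\max(\delta,\delta^2)$ rather than $\delta$ is essential once $\delta>1$) and the constant in the symmetric-matrix net inequality, but both are routine.
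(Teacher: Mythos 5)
Your proposal is correct: this statement is quoted in the paper as an external result from \citet{vershynin2018high} (it is Vershynin's two-sided bound for matrices with independent sub-gaussian isotropic rows), and the paper gives no proof of its own. Your argument — reduction to $\|\tfrac1N\rmA^\top\rmA-\rmI\|$ via the $|z^2-1|\le\max(\delta,\delta^2)\Rightarrow|z-1|\le\delta$ lemma, a $\tfrac14$-net of size $9^n$, Bernstein for the sub-exponential variables $\langle\rmA_i,x\rangle^2-1$, and choosing $C_K$ large enough to absorb the $n\ln 9$ entropy term — is exactly the standard proof from that reference, with only the routine normalization $K\gtrsim 1$ left implicit.
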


\subsection{Additional Details on \Cref{subsec:definitions_maintext}}\label{sec:info_theory_bound}
In \Cref{subsec:definitions_maintext} we define what it means for a model to store a fact set. Here, we describe why this is equivalent to outputting the correct value token under softmax decoding, and for completeness provide a proof of \Cref{thm: info_bounds_capacity-const}. We use the definition of softmax decodability as follows.

\begin{definition}\label{def:EHSM-decode-exact}
Let $\codeMatrix\in\mathbb{R}^{\numVectors\times\modelDim}$. A family of output embeddings
$\{\outEmbedding_i\}_{i=1}^{\numVectors}\subset\mathbb{R}^d$ is \emph{softmax-decodable} if there exists a matrix $\rmM\in\R^{\modelDim\times\codeDim}$ such that for all $i$,
\begin{align}\label{eq:alpha-bound}
    \left|\left|\textbf{softmax}_j\left(\langle \rmM\cdot \noisyCodeMatrix[i], \outEmbedding_j\rangle\right) - \ve_i\right|\right|_\infty < \alpha.
    \end{align} \label{eq:softmax-hypothesis}
    \footnote{As a reminder, here the $\textbf{softmax}_j\left(\left\langle \vz, \textbf{y}_j \right\rangle \right)$ notation means a vector of length $n$ where the $\ell$-th coordinate is given, for some arbitrary $\vz\in\R^d$, by: 
    \begin{align*}
        \left(\textbf{softmax}_j\left(\left\langle \vz, \outEmbedding_j \right\rangle\right) \right)_\ell = \frac{\exp\left(\left\langle \vz, \outEmbedding_\ell \right\rangle\right)}{\sum_{k = 1}^n \exp\left(\left\langle \vz, \outEmbedding_k \right\rangle\right)}
    \end{align*}
    
    }
    for some $\frac{1}{2} > \alpha > 0$.
    \end{definition}

In the notation of \Cref{subsec:construction_decoder}, we have $\noisyCodeMatrix[i] := \gaussianMatrix \compressedAuxEmbedding_i$. The following lemma shows that this is equivalent to the provided ``dot-product'' version.

\begin{lemma}\label{lem:softmax_reformulation}
    A set of output embeddings $\{\compressedOutEmbedding_i\}$ is \textit{softmax-decodable} if and only if there exists an $\rmM$ such that, for every $i\neq j$,
    \(
        \left\langle \rmM\cdot \noisyCodeMatrix[i], \compressedOutEmbedding_i\right\rangle > \left\langle \rmM\cdot \noisyCodeMatrix[i], \compressedOutEmbedding_j\right\rangle.
    \)
\end{lemma}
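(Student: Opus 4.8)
The plan is to prove the two implications separately, in each case exploiting that $\ell \mapsto \textbf{softmax}_\ell(\langle \rmM\noisyCodeMatrix[i], \outEmbedding_\ell\rangle)$ is a \emph{strictly increasing} function of the single logit $\langle \rmM\noisyCodeMatrix[i], \outEmbedding_\ell\rangle$ (the remaining logits held fixed), together with the fact that the index set $[\numVectors]$ is finite.

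For the ``only if'' direction, suppose $\{\compressedOutEmbedding_i\}$ is softmax-decodable, witnessed by some matrix $\rmM$ and some $\alpha \in (0, \tfrac12)$. Fixing $i$ and writing $p^{(i)}_\ell := \textbf{softmax}_\ell(\langle \rmM\noisyCodeMatrix[i], \outEmbedding_\ell\rangle)$, the hypothesis $\|p^{(i)} - \ve_i\|_\infty < \alpha$ gives $p^{(i)}_i > 1 - \alpha$ and $p^{(i)}_j < \alpha$ for every $j \ne i$; since $\alpha < \tfrac12$ this forces $p^{(i)}_i > p^{(i)}_j$. By strict monotonicity of softmax in the $\ell$-th logit, $p^{(i)}_i > p^{(i)}_j$ is equivalent to $\langle \rmM\noisyCodeMatrix[i], \compressedOutEmbedding_i\rangle > \langle \rmM\noisyCodeMatrix[i], \compressedOutEmbedding_j\rangle$, so the same $\rmM$ verifies the stated dot-product inequalities.

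For the ``if'' direction, suppose some $\rmM_0$ satisfies $\langle \rmM_0\noisyCodeMatrix[i], \compressedOutEmbedding_i\rangle > \langle \rmM_0\noisyCodeMatrix[i], \compressedOutEmbedding_j\rangle$ for all $i \ne j$. Because there are finitely many indices, the margin $\gamma := \min_i \min_{j \ne i}\big(\langle \rmM_0\noisyCodeMatrix[i], \compressedOutEmbedding_i\rangle - \langle \rmM_0\noisyCodeMatrix[i], \compressedOutEmbedding_j\rangle\big)$ is attained and strictly positive. I would then rescale, setting $\rmM = t\rmM_0$ for $t > 0$, and for each $i$ estimate the softmax coordinates in terms of $z_\ell := \langle t\rmM_0\noisyCodeMatrix[i], \outEmbedding_\ell\rangle$: from $z_i - z_j \ge t\gamma$ one gets $p^{(i)}_i \ge \big(1 + (\numVectors - 1)e^{-t\gamma}\big)^{-1}$ and $p^{(i)}_j \le e^{-t\gamma}$ for $j \ne i$, hence $\|p^{(i)} - \ve_i\|_\infty \le (\numVectors - 1)e^{-t\gamma}$. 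Since $\gamma$ is a \emph{uniform} lower bound over all $i$, choosing $t$ large enough that $(\numVectors - 1)e^{-t\gamma} < \tfrac14$ makes this bound hold simultaneously for every $i$, so $\rmM = t\rmM_0$ witnesses softmax-decodability with $\alpha = \tfrac14 < \tfrac12$.

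There is no substantial obstacle; the argument is elementary. The two points that warrant a little care are (i) invoking strict monotonicity of softmax in each logit, rather than the weaker statement that softmax preserves the argmax, so that the coordinate comparison and the logit comparison are genuinely equivalent; and (ii) noting that $\gamma$ is bounded away from zero \emph{uniformly} over the finite index set, so a single scaling factor $t$ suffices for all $i$ at once.
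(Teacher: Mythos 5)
Your proposal is correct and follows essentially the same route as the paper: the forward direction uses that $\alpha<\tfrac12$ forces the $i$-th softmax coordinate to dominate, which transfers to the logits since all coordinates share the same normalizer (the paper phrases this as a short contradiction argument), and the backward direction rescales the decoder by a factor $t$ (the paper's $\lambda$) using the uniform positive margin over the finite index set, with the same bounds $p^{(i)}_i \ge \bigl(1+(\numVectors-1)e^{-t\gamma}\bigr)^{-1}$ and $p^{(i)}_j \le e^{-t\gamma}$. The only cosmetic difference is that the paper lets the tolerance $\alpha$ be arbitrary (choosing $\lambda$ accordingly), while you fix $\alpha=\tfrac14$; since the definition only requires some $\alpha\in(0,\tfrac12)$, and your scaling argument handles any tolerance anyway, this changes nothing.
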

\begin{proof}
    See \Cref{pf:softmax_reformulation}
\end{proof}

The following theorem is a formalized version of \Cref{thm: info_bounds_capacity-const}.

\begin{theorem}[Information‑theoretic capacity bounds]\label{thm:info_capacity}
Let an MLP have $W$ trainable real weights, each stored with a fixed
precision of $p$ bits; write $B = pW = \Theta(W)$ for the total number
of bits that can be set by training.  Let $F$ be the number of
(key,value) pairs (``facts’’) we wish to memorize.

\begin{enumerate}
    \item \textbf{Multi‑valued facts.}  
          If every key may take any of the $F$ values—
          i.e.\ the fact set is a function
          $f:[F]\to[F]$—then any such table representable by the
          network satisfies
          \[
              F \;=\; O\!\bigl(\tfrac{W}{\log W}\bigr).
          \]
    \item \textbf{Binary facts.}  
          If every key is mapped to a bit
          ($f:[F]\to\{0,1\}$) the capacity bound tightens to
          \[
              F \;=\; O(W).
          \]
\end{enumerate}
\end{theorem}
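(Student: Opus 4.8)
The plan is a counting argument over parameter configurations, the standard route for lower bounds of this flavour (this is the formal version of \Cref{thm: info_bounds_capacity-const}).

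First I would isolate the structural fact that makes the count work: once the embeddings $\mathbf{K},\mathbf{V}$ are fixed, a single weight vector $\theta$ can store at most one fact set. Indeed, the strict inequalities in \Cref{eq:decoding-criterion-maintext} force $f(i)=\argmax_{j}\langle\mathbf{g}_\theta(\mathbf{k}_i),\mathbf{v}_j\rangle$ with a \emph{unique} maximizer for each $i$, so $f$ is determined by $\theta$; equivalently the solution sets $\Theta(f):=\{\theta:\mathbf{g}_\theta\text{ stores }f\}$ are pairwise disjoint across fact sets $f$. If one prefers the softmax-decoding formulation, \Cref{lem:softmax_reformulation} shows it is equivalent to this dot-product criterion, so nothing changes.

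Next comes the quantization step: each of the $W$ weights carries $p$ bits, so there are at most $2^{pW}=2^{B}$ realizable weight vectors, hence — by the disjointness above — at most $2^{B}$ distinct fact sets can be stored. Now if the network memorizes $F$ facts in the sense that, for some fixed choice of $F$ keys and $F$ values, \emph{every} admissible fact set is realized by its own $\theta$ (which is exactly what \Cref{def:complexity} asks of the fact-storage cost), then the number of admissible fact sets is at most $2^{B}$. In case (1) the admissible fact sets are the $F^{F}$ functions $[F]\to[F]$, giving $F^{F}\le 2^{B}$, i.e.\ $B\ge F\log_2 F$; in case (2) they are the $2^{F}$ functions $[F]\to\{0,1\}$, giving $2^{F}\le 2^{B}$, i.e.\ $B\ge F$. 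Since $B=pW=\Theta(W)$, case (2) is already the claimed $F=O(W)$.

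For case (1) it remains to invert $W=\Omega(F\log F)$ into $F=O(W/\log W)$, which is routine: if $F\le\sqrt{B}$ the bound holds trivially, while if $F>\sqrt{B}$ then $\log_2 F>\tfrac{1}{2}\log_2 B$, so $B\ge F\log_2 F\ge\tfrac{1}{2}F\log_2 B$ yields $F\le 2B/\log_2 B=\Theta(W/\log W)$. I expect the only point needing real care is the precise reading of what it means for the network to ``memorize $F$ facts'' — the interpretation via \Cref{def:complexity}, that all $F^{F}$ (resp.\ $2^{F}$) maps over some fixed embedding set must be simultaneously representable, each by its own $\theta$ — since that is what licenses comparing $F^{F}$ (resp.\ $2^{F}$) against the configuration count $2^{B}$; the rest is bookkeeping.
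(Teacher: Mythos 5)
Your proof is correct and follows essentially the same counting argument as the paper: compare the at most $2^{B}$ realizable weight configurations against the $F^{F}$ (resp.\ $2^{F}$) fact sets that must all be representable, then invert $B \ge F\log_2 F$ to get $F = O(W/\log W)$ (resp.\ $F = O(W)$). Your extra care — the disjointness observation that a single $\theta$ stores at most one fact set because the strict inequalities force a unique argmax, and the case-split inversion that avoids assuming $F$ is polynomial in $W$ — merely makes explicit steps the paper's proof treats implicitly.
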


\begin{proof}
Let $\mathcal H$ be the set of hypothesis functions the parameterised
family can express. Because each of the $B=\Theta(W)$ bits can be chosen independently,
\[
        |\mathcal H|
        \;\;\le\;\;
        2^{B}
        \;=\;2^{\Theta(W)} .
\]

In the case of multi-valued facts, there are $F^{F}$ distinct functions
$[F]\to[F]$.
Representability of \emph{all} such maps demands
\[
       2^{\Theta(W)}
       \;\ge\;
       F^{F}.
\]
Taking $\log_{2}$ and rearranging:
\[
       F\log_2 F
       \;=\;O(W)
       \quad\Longrightarrow\quad
       F = O\!\bigl(\tfrac{W}{\log_2 W}\bigr),
\]
since $\log_2 F = \Theta(\log_2 W)$ whenever $F$ grows at most
polynomially in $W$.

For binary facts there are only $2^{F}$ possibilities, so the same
counting gives
\[
        2^{\Theta(W)}
        \;\ge\;
        2^{F}
        \quad\Longrightarrow\quad
        F = O(W).
\]
\end{proof}

\subsection{Additional Details for \Cref{subsec:encoder_twohot}}\label{theory_sec_3.1}
\subsubsection{A Na\"ive Construction}\label{subsec:naive_const}
We briefly describe a na\"ive construction, which we compare to ours in \Cref{tab:theory}. Let $\mathbf K=\{\mathbf k_i\}_{i=1}^{|\mathbf K|}\subset\mathbb R^{d}$ and stack input embeddings as columns $\tilde{\mathbf K}=[\,\mathbf k_1\ \cdots\ \mathbf k_{|\mathbf K|}\,]\in\mathbb R^{d\times |\mathbf K|}$. Consider
\[
g(\vx)\;=\;\rmV\,\mathrm{ReLU}(\tilde{\mathbf K}^\top \vx - \vb),\qquad \rmV\in\mathbb R^{d\times |\mathbf K|},\ \ \vb\in\mathbb R^{|\mathbf K|}.
\]
For each $j$, define $\alpha_j:=\langle \mathbf k_j,\mathbf k_j\rangle$ and $\beta_j:=\max_{i\neq j}\langle \mathbf k_j,\mathbf k_i\rangle$, and assume $\alpha_j>\beta_j$. Choose any $b_j\in(\beta_j,\alpha_j)$ and set $a_i:=\alpha_i-b_i>0$. Then
\[
\mathrm{ReLU}(\tilde{\mathbf K}^\top \mathbf k_i - \vb)\;=\;a_i\,\mathbf e_i,
\]
so taking
\[
\rmV\;=\;\big[\,\mathbf v_{f(1)}/a_1\ \ \mathbf v_{f(2)}/a_2\ \ \cdots\ \ \mathbf v_{f(H)}/a_H\,\big]
\]
gives exact retrieval $g(\mathbf k_i)=\mathbf v_{f(i)}$. However, the hidden size is $|\mathbf K|$, and the parameter count is $\Theta(d |\mathbf K|)$ which is much too large.

\subsubsection{Two-hot Construction}\label{con:two_hot_unit_interval}
\begin{construction}[Encoder Construction, Two-Hot]
Fix a dimension \(d\ge 2\) and let \(\{\ve_1,\dots,\ve_d\}\subset\R^d\) be the standard basis.
Define the key set
\[
\mathbf{K}\;:=\;\bigl\{\vk_{i,j}\;=\;\ve_i-\ve_j:\ i\ne j,\ i,j\in[d]\bigr\},\qquad |\mathbf{K}|=d(d-1).
\]
Let \(h:\{(i,j)\,|\,i\neq j,\ i,j\in[d]\}\to[0,1]\) prescribe a target scalar for each key \(\vk_{i,j}\).
Define the (one-hidden-layer) encoder \(\enc:\R^d\to\R\) by
\[
\enc(\vx)\;=\;\mathbf{1}^\top\,\mathrm{ReLU}\bigl(\rmA\,\vx-\mathbf{1}\bigr),
\]
where \(\mathbf{1}\in\R^d\) is the all-ones vector, \(\mathrm{ReLU}\) acts elementwise, and the weight matrix \(\rmA\in\R^{d\times d}\) is
\[
\rmA[p,q]\;=\;
\begin{cases}
1 & \text{if } p=q,\\[4pt]
-\,h(p,q) & \text{if } p\neq q.
\end{cases}
\]
Then, for every \(i\neq j\in[d]\),
\[
\enc(\vk_{i,j})\;=\;h(i,j).
\]

\begin{proof}
Fix \(i\neq j\) and consider \(\vk_{i,j}=\ve_i-\ve_j\).
For each coordinate \(p\in[d]\),
\begin{align*}
    \bigl(\rmA\,\vk_{i,j}-\mathbf{1}\bigr)[p]
\;&=\;\rmA[p,i]-\rmA[p,j]-1 \\ 
\;&=\;
\begin{cases}
1-(-h(i,j))-1 \;=\; h(i,j), & p=i,\\[4pt]
(-h(j,i))-1-1 \;=\; -\,h(j,i)-2, & p=j,\\[4pt]
(-h(p,i))-(-h(p,j))-1 \;=\; h(p,j)-h(p,i)-1, & p\notin\{i,j\}.
\end{cases}
\end{align*}
Since \(h(\cdot,\cdot)\in[0,1]\), we have:
(i) the \(i\)-th coordinate equals \(h(i,j)\ge 0\);
(ii) the \(j\)-th coordinate is \(\le -2\) and thus strictly negative; and
(iii) for \(p\notin\{i,j\}\), \(h(p,j)-h(p,i)-1\le 1-0-1=0\), hence these coordinates are nonpositive.
Applying \(\mathrm{ReLU}\) elementwise zeroes out all nonpositive coordinates and preserves the \(i\)-th coordinate, yielding
\[
\mathrm{ReLU}\bigl(\rmA\,\vk_{i,j}-\mathbf{1}\bigr)[p]
\;=\;
\begin{cases}
h(i,j), & p=i,\\
0, & p\neq i.
\end{cases}
\]
Finally, summing with \(\mathbf{1}^\top\) gives \(\enc(\vk_{i,j})=\mathbf{1}^\top \mathrm{ReLU}(\rmA\,\vk_{i,j}-\mathbf{1})=h(i,j)\), as claimed.
\end{proof}
\end{construction}

\paragraph{Remark}
In the above proof, we say that $h$ outputs values in $[0,1]$ without loss of generality. Because the domain of $h$ is finite, let
\(a:=\min_{i\neq j} h(i,j)\) and \(b:=\max_{i\neq j} h(i,j)\).
Set \(\Delta:=b-a\) (take \(\Delta=1\) if \(a=b\)) and define the normalized function
\[
\tilde h(i,j)\;=\;\frac{h(i,j)-a}{\Delta}\in[0,1].
\]
Build the encoder above for \(\tilde h\), yielding
\(\widetilde{\enc}(\vk_{i,j})=\tilde h(i,j)\).
Recover \(h\) exactly with the 1D transform:
\[
\enc_h(\vx)\;=\;a+\Delta\cdot \widetilde{\enc}(\vx).
\]
This post-composition changes only \(O(1)\) top-layer parameters and does not affect the gating argument, so we may assume \(\operatorname{range}(h)\subset[0,1]\) without loss of generality.

\subsubsection{Discussion of Nichani et al.'s polylog factor}

Throughout the paper, we compare our construction with that given by \citet{nichani2024understandingfactualrecalltransformers}. Here, we discuss why the number of parameters of the \citet{nichani2024understandingfactualrecalltransformers} construction is at least $\Omega(|\mathbf K| \log^{12} |\mathbf V|)$. For comparability, we use some notation such as $m,d$ from \citet{nichani2024understandingfactualrecalltransformers}.

\citet{nichani2024understandingfactualrecalltransformers}'s result for a one-layer MLP with non-linear activation is presented in their Theorem 9 in Appendix B. Their theorem statement  is as follows for $\mathbf V, \mathbf W \in \mathbb R^{m \times d}$:

\noindent\textbf{Assumption 3.} $\sigma$ is a polynomial of degree $q$. Furthermore, if $\sigma(z)=\sum_{k=0}^{q} c_k h_k(z)$ is the Hermite decomposition of $\sigma$, then $c_k \ne 0$ for all $0 \le k \le q$.

\medskip

\noindent\textbf{Theorem 9~\citep{nichani2024understandingfactualrecalltransformers}.} Let $\epsilon \in (0,1)$ be a fixed constant. Assume that $d \ge N^{\epsilon}$ and $N \ge C_1(\epsilon)$, where $C_1(\epsilon)$ is a constant depending only on $\epsilon$. Assume that $q$ in Assumption 3 satisfies $q=\frac{C_2}{\epsilon}$ for some $C_2>2$. Then, if
\[
md \;\gtrsim\; N\bigl(C_3 \log(MN/\delta)\bigr)^{C_4/\epsilon},
\]
with probability $1-\delta$ over the draw of the embeddings, there exists $\mathbf V,\mathbf W$ such that
\[
\operatorname*{arg\,max}_{y\in[M]}\; \mathbf u_y^{\top}\mathbf V^{\top}\sigma(\mathbf W \mathbf e_x) \;=\; f^{*}(x)
\tag{20}
\]
for all $x \in [N]$.

Mapping their notation to ours, we have $N := |\mathbf K|$ and $M := 32$. In Theorem~9, they require $md \gtrsim\; N\,C_3^{\,q}\,\log^{\,4q+4}\!\Bigl(\tfrac{1}{\delta'}\Bigr)$ where $\delta'=\tfrac{\delta}{MN}$. This gives $\frac{md}{N}
\gtrsim\; C_3^{\,q}\,\bigl(\log(MN/\delta)\bigr)^{\,4q+4}$ and for $\delta=N^{-c}$ for a constant $c>0$,
\begin{equation*}
\log\!\Bigl(\tfrac{MN}{\delta}\Bigr)
=\Theta(\log N)
\quad\Longrightarrow\quad
\frac{md}{N} \;\gtrsim\; C_3^{\,q}\,(\log N)^{\,4q+4}.
\end{equation*}
Using their dimensional regime $d\ge N^{\varepsilon}$ gives $\log N=\Theta(\log d)$. 
In addition, they assume that $|\mathbf K| = |\mathbf V|$, so
\begin{equation*}
\log N \;\asymp\; \log|\mathbf V|
\quad\Longrightarrow\quad
md \;\gtrsim\; C_3^{\,q}\, |\mathbf K|(\log|\mathbf V|)^{\,4q+4}.
\end{equation*}
Since $q=\tfrac{C_2}{\varepsilon}>2$ implies $4q+4\ge 12$, we have
\begin{equation*}
\#\text{Parameters}\simeq\;md\;\gtrsim\; |\mathbf K|\,\log^{12}\!|\mathbf V|.
\end{equation*}

\newpage

\subsection{Additional Details for \Cref{subsec:encoder_twohot}}\label{sec:new_encoding_results}
This section is divided into three parts:

\begin{enumerate}
    \item In \Cref{subapp:enc_overview}, we provide an overview of our encoder architecture, desiderata, and more. We describe how we break the encoder into gated or non-gated encoder gadgets, each of which output one component of the final result.
    \item In \Cref{subapp:gated_enc_theory}, we describe the gated encoder gadget in more detail and prove that it works for asymptotically optimal parameter counts.
    \item In \Cref{subapp:non-gated_enc_theory}, we describe the non-gated encoder gadget in more detail. We show how we can construct the non-gated encoder gadget using the gated encoder gadget algorithm, and we illustrate how, in the special case of a ReLU encoder, we obtain a generalization of the Baum network from \citet{bubeck2020networksizeweightssize}.
\end{enumerate}

\subsubsection{Overview of the Encoder}\label{subapp:enc_overview}

Our encoder is a single-hidden layer MLP mapping key embeddings to compressed output embeddings.

\paragraph{Encoder Structure}
Our encoder is a either a gated MLP
\begin{equation*}
    \enc(\mathbf{x}) \;=\; \mathbf{E} \left(\sigma(\mathbf{G}\mathbf{x} + \mathbf{b}_G) \odot (\mathbf{A}\mathbf{x} + \mathbf{b}_A)\right) + \mathbf{b}_E,
\end{equation*}
or a non-gated MLP \begin{equation*}
    \enc(\mathbf{x}) \;=\; \mathbf{E} \sigma(\mathbf{A}\mathbf{x} + \mathbf{b}_A) + \mathbf{b}_E
\end{equation*}
with $\mathbf{A}, \mathbf{G} \in \mathbb{R}^{h \times d}$, $\mathbf{E}\in \mathbb{R}^{m\times h}$, $\mathbf{b}_A,\mathbf{b}_G\in\mathbb{R}^{h}$, $\mathbf{b}_E\in\mathbb{R}^{m},$  $\mathbf{x}\in\mathbb{R}^d$, and $\sigma: \R\rightarrow\R$. 

Gated MLPs simplify our analysis and are now popular across frontier models~\citep{yang2025qwen3technicalreport, dubey2024llama}. In \Cref{subapp:non-gated_enc_theory}, we extend our arguments to non-gated encoders.

\paragraph{Encoder Framework Objective}
Given key embeddings $\mathbf{K}\in\mathbb{R}^{|\mathbf{K}|\times d}$, compressed output embeddings $\mathbf{C}\in\mathbb{R}^{|\mathbf{V}|\times m}$, and a mapping $f$, the objective of our encoder framework is to produce an MLP $\enc$ with a minimal number of parameters such that $\mathbf{enc}(\mathbf{k}_i) = \mathbf{c}_{f(i)}$ for all $i\in[|\mathbf{K}|].$

\paragraph{Construction}
Our constructed encoder builds $m$ encoder gated or non-gated gadgets, for each $j\in[m]$:
\begin{align*}
    \enc_j(\mathbf{x}) \;&=\; \mathbf{1}_{\tilde h}^\top \left[\sigma(\mathbf{G}^{(j)}\mathbf{x}+\vb_{G}^{(j)}) \odot (\mathbf{A}^{(j)}\mathbf{x}+\vb_{A}^{(j)})\right]+b_{E}^{(j)};
\end{align*} 
or alternatively,
 \begin{align*}
    \enc_j(\mathbf{x}) \;&=\; \mathbf{E}^{(j)} \sigma(\mathbf{A}^{(j)}\mathbf{x} + \vb_{A}^{(j)}) + b_{E}^{(j)}\\
    \\
    \text{with}\quad\quad\mathbf{G}^{(j)},\mathbf{A}^{(j)}&\in\mathbb{R}^{\tilde h\times d},\quad  \mE^{(j)}\in\mathbb{R}^{1\times \tilde h} ,\quad  \vb_G^{(j)},\vb_A^{(j)}\in\mathbb{R}^{\tilde h}, \quad b_{E}^{(j)}\in \mathbb{R}
\end{align*}
that map $\mathbf{k}_i$ to $\mathbf c_{f(i)}[j] \in \mathbb{R}$, respectively, where $\tilde h = h/m$. We can set the down projection to $\mathbf{1}^\top$ in the gated encoder gadget without loss of generality by replacing $\mathbf{A}^{(j)}$ with $\textrm{diag}(\mathbf{E}^{(j)})\mathbf{A}^{(j)}$ and $\mathbf{b}_A^{(j)}$ with $\textrm{diag}(\mathbf{E}^{(j)})\mathbf{b}_A^{(j)}$. We will apply a similar technique in the case of the non-gated encoder gadget, but it is more involved.

We will demonstrate that these gadgets require only $O(|\mathbf{K}|)$ parameters. By stacking all $m$ gadgets together, one for each target dimension $j$, we can construct $\mathbf c_{f(i)}$ with a total of $O(m|\mathbf{K}|)$ parameters, as shown in \Cref{alg:encoder_construction_app}.

We will describe the gated and non-gated encoder gadgets in Appendix~\ref{subapp:gated_enc_theory} and \ref{subapp:non-gated_enc_theory}, respectively. We will drop the $j$ indexing everywhere for notational simplicity.

\begin{algorithm}[t]
\caption{Encoder Construction (\textsc{Encoder})}
\label{alg:encoder_construction_app}
\begin{algorithmic}[1]
\REQUIRE Key embeddings $\mathbf{K} \in \mathbb{R}^{|\mathbf{K}|\times d}$, Compressed output embeddings $\mathbf{C} \in \mathbb{R}^{|\mathbf{V}|\times m}$, Fact-mapping $f : [|\mathbf{K}|] \to [|\mathbf{V}|]$
\REQUIRE Hidden size $h$, activation $\sigma$, gated MLP flag $\textsc{gated}$, bias flag $\textsc{bias}$, tolerance $\delta$
\STATE $\tilde h \coloneqq h/m$
\vspace{7pt}
\STATE \textbf{for} $j = 1$ \textbf{to} $m$ \textbf{do}
\STATE \hspace{1em} $\mathbf{o}^{(j)} \coloneqq [\mathbf{C}_{f(1), j},\ldots,\mathbf{C}_{f(|\mathbf{K}|), j}]  \in \mathbb{R}^{|\mathbf{K}|}$
\STATE \hspace{1em} \textbf{if} \textsc{gated}:
\STATE \hspace{2em} $\enc_j(\vx)\coloneqq \mE^{(j)} \left(\sigma(\mathbf{G}^{(j)}\mathbf{x} + \vb_G^{(j)}) \odot (\mathbf{A}^{(j)}\mathbf{x} + \vb_A^{(j)})\right) + b_E^{(j)} \gets \textsc{GatedEncoderGadget}(\mathbf{K}, \mathbf{o}^{(j)}, \tilde h, \sigma, \textsc{bias})$
\STATE \hspace{1em} \textbf{else}:
\STATE \hspace{2em} $\enc_j(\vx)\coloneqq \mE^{(j)} \sigma(\mathbf{A}^{(j)}\mathbf{x} + \vb_A^{(j)}) + b_E^{(j)}  \gets \textsc{EncoderGadget}(\mathbf{K}, \mathbf{o}^{(j)}, \tilde h, \sigma, \textsc{bias}, \delta)$
\STATE \textbf{end for}
\vspace{7pt}
\STATE Stack $\mathbf{A} \coloneqq
    \begin{bmatrix}
    \mathbf{A}^{(1)} \\
    \vdots \\
    \mathbf{A}^{(m)}
    \end{bmatrix} \in \mathbb{R}^{h\times d}$, $\mathbf{b}_A \coloneqq
    \begin{bmatrix}
    \mathbf{b}_A^{(1)} \\
    \vdots \\
    \mathbf{b}_A^{(m)}
    \end{bmatrix} \in \mathbb{R}^{h}$, and $\mathbf{b}_E \coloneqq
    \begin{bmatrix}
    b_E^{(1)} \\
    \vdots \\
    b_E^{(m)}
    \end{bmatrix} \in \mathbb{R}^{m}$
\STATE $\mathbf{E} \coloneqq \begin{bmatrix}
    \mE^{(1)} & \mathbf{0}_{1\times \tilde h} & \cdots & \mathbf{0}_{1\times \tilde h} \\
    \mathbf{0}_{1\times \tilde h} & \mE^{(2)} & \cdots & \mathbf{0}_{1\times \tilde h} \\
    \vdots & \vdots & \ddots & \vdots \\
    \mathbf{0}_{1\times \tilde h} & \mathbf{0}_{1\times \tilde h} & \cdots & \mE^{(m)}
\end{bmatrix}\in\mathbb{R}^{m\times h}$
\vspace{7pt}
\STATE \textbf{if} \textsc{gated}:
\STATE \hspace{1em} Stack $\mathbf{G} \coloneqq
    \begin{bmatrix}
    \mathbf{G}^{(1)} \\
    \vdots \\
    \mathbf{G}^{(m)}
    \end{bmatrix} \in \mathbb{R}^{h\times d}$ and $\mathbf{b}_G \coloneqq
    \begin{bmatrix}
    \mathbf{b}_G^{(1)} \\
    \vdots \\
    \mathbf{b}_G^{(m)}
    \end{bmatrix} \in \mathbb{R}^{h}$
\vspace{7pt}
\STATE \textbf{if} \textsc{gated}:
\STATE \hspace{1em} $\enc(\vx)\coloneqq \mathbf{E} \left(\sigma(\mathbf{G}\mathbf{x} + \mathbf{b}_G) \odot (\mathbf{A}\mathbf{x} + \mathbf{b}_A)\right) + \mathbf{b}_E$
\STATE \textbf{else}
\STATE \hspace{1em} $\enc(\vx)\coloneqq \mathbf{E} \sigma(\mathbf{A}\mathbf{x} + \mathbf{b}_A) + \mathbf{b}_E$
\vspace{7pt}
\STATE \textbf{return} $\enc$
\end{algorithmic}
\end{algorithm}

\begin{algorithm}[t]
\caption{Gated Encoder Gadget Construction (\textsc{GatedEncoderGadget})}
\label{alg:gated_encoder_gadget_app}
\begin{algorithmic}[1]
\REQUIRE $\mathbf{o} \in \mathbb{R}^{|\mathbf{K}|}$, generic $\mathbf{K} \in \mathbb{R}^{|\mathbf{K}|\times d}$
\REQUIRE Hidden size $h$ with $dh \ge |\mathbf{K}|$, analytic $\sigma$, bias flag $\textsc{bias}$
\STATE Sample generic $\mathbf{G} \in \mathbb{R}^{h\times d}$ (e.g., i.i.d.\ Gaussian)
\STATE \textbf{if} \textsc{bias}:
\STATE \hspace{1em} Sample arbitrary $\mathbf{b}_G \in \mathbb{R}^{h}$ (e.g., all zeros)
\STATE \textbf{else}:
\STATE \hspace{1em} $\mathbf{b}_G \coloneqq \mathbf{0}_h \in \mathbb{R}^{h}$ (e.g., all zeros)
\vspace{7pt}
\STATE $\mathbf{\Sigma} \coloneqq \sigma(\mathbf{G}\mathbf{K}^\top + \mathbf{b}_G)\in \mathbb{R}^{h\times |\mathbf{K}|}$
\vspace{7pt}
\STATE \textbf{if} \textsc{bias}:
\STATE \hspace{1em} $\tilde d \coloneqq d+1$
\STATE \hspace{1em} $\mathbf{\tilde K} \coloneqq [\mathbf{K}, \mathbf{1}_{|\mathbf{K}|}]\in \mathbb{R}^{|\mathbf{K}|\times \tilde d}$
\STATE \textbf{else}:
\STATE \hspace{1em} $\tilde d \coloneqq d$
\STATE \hspace{1em} $\mathbf{\tilde K} \coloneqq \mathbf{K}\in \mathbb{R}^{|\mathbf{K}|\times \tilde d}$
\vspace{7pt}
\STATE $\mathbf{M}
\coloneqq
\big[
\operatorname{diag}(\mathbf{\Sigma}_{1})\mathbf{\tilde K}, \cdots, 
\operatorname{diag}(\mathbf{\Sigma}_{h})\mathbf{\tilde K}
\big]\in \mathbb{R}^{|\mathbf{K}|\times (dh)}$
\STATE \textbf{if} \textsc{bias}:
\STATE \hspace{1em} $D \coloneqq dh + 1$
\STATE \hspace{1em} $\mathbf{\widetilde M} \coloneqq [\mathbf{M}, \mathbf{1}_{|\mathbf{K}|}]\in \mathbb{R}^{|\mathbf{K}|\times \tilde D}$
\STATE \textbf{else}:
\STATE \hspace{1em} $\tilde D \coloneqq dh$
\STATE \hspace{1em} $\mathbf{\widetilde M} \coloneqq \mathbf{M}\in \mathbb{R}^{|\mathbf{K}|\times \tilde D}$
\vspace{7pt}

\STATE Solve for $\mathbf{v}\in \mathbb{R}^{dh}$ in $\mathbf{\widetilde M} \, \mathbf{v} = \mathbf{o}$
\vspace{7pt}
\STATE $\mathbf{A} \coloneqq \begin{bmatrix}
    \mathbf{v}[1:\tilde{d}-1]\\
    \mathbf{v}[\tilde d+1:2\tilde{d}-1]\\
    \vdots\\
    \mathbf{v}[(h-1)\tilde d+1:h\tilde{d}-1]
\end{bmatrix} \in \mathbb{R}^{h\times d}$
\STATE \textbf{if} \textsc{bias}:
\STATE \hspace{1em} $\mathbf{b}_A \coloneqq \begin{bmatrix}
    \mathbf{v}[\tilde d]\\
    \mathbf{v}[2\tilde d]\\
    \vdots\\
    \mathbf{v}[h\tilde d]
\end{bmatrix} \in \mathbb{R}^{h}$
\STATE \hspace{1em} $b_E \coloneqq \mathbf{v}[D] \in \mathbb{R}$
\STATE \textbf{else}:
\STATE \hspace{1em} $\mathbf{b}_A \coloneqq \mathbf{0}_h \in \mathbb{R}^{h}$
\STATE \hspace{1em} $b_E \coloneqq 0 \in \mathbb{R}$

\vspace{7pt}
\STATE $\enc(\vx)\coloneqq \mathbf{1}_h \left(\sigma(\mathbf{G}\mathbf{x} + \vb_G) \odot (\mathbf{A}\mathbf{x} + \vb_A)\right) + b_E$
\STATE \textbf{return} $\enc$
\end{algorithmic}
\end{algorithm}

\begin{algorithm}[t]
\caption{Encoder Gadget Construction (\textsc{EncoderGadget})}
\label{alg:encoder_gadget_app}
\begin{algorithmic}[1]
\REQUIRE $\mathbf{o} \in \mathbb{R}^{|\mathbf{K}|}$, generic $\mathbf{K} \in \mathbb{R}^{|\mathbf{K}|\times d}$
\REQUIRE Hidden size $h$ with $dh \ge |\mathbf{K}|$, analytic $\sigma$, bias flag $\textsc{bias}$, tolerance $\delta$
\STATE $\enc(\vx)\coloneqq \mathbf{1}_{1\times h/2} \left(\frac{d\sigma}{d x}(\mathbf{G}\mathbf{x} + \vb_G) \odot (\mathbf{A}\mathbf{x} + \vb_A)\right) + b_E \gets \textsc{GatedEncoderGadget}(\mathbf{K}, \mathbf{o}, h/2, \frac{d\sigma}{d x}, \textsc{bias})$
\vspace{7pt}
\STATE \textbf{for} $i=1$ \textbf{to} $[|\mathbf{K}|]$ \textbf{do}
\STATE \hspace{1em} $S_i\coloneqq \left\{\epsilon\, \Bigg|\, \left|[\epsilon^{-1}/2, -\epsilon^{-1}/2]\sigma\left(\begin{bmatrix}
    \mathbf{G} + \text{diag}(\epsilon) \mathbf{A}\\
    \mathbf{G} - \text{diag}(\epsilon) \mathbf{A}\\
\end{bmatrix}\mathbf{k}_i + \begin{bmatrix}
    \mathbf{b}_G + \epsilon \odot \mathbf{b}_A\\
    \mathbf{b}_G - \epsilon \odot \mathbf{b}_A\\
\end{bmatrix}\right) - \enc(\mathbf{k}_i)\right| \le \delta \right\}$
\STATE \textbf{end for}
\STATE Pick any $\epsilon \in \bigcap_{i=1}^{|\mathbf{K}|} S_i$
\vspace{7pt}
\STATE $\mathbf{A}\coloneqq \begin{bmatrix}
    \mathbf{G} + \text{diag}(\epsilon) \mathbf{A}\\
    \mathbf{G} - \text{diag}(\epsilon) \mathbf{A}\\
\end{bmatrix}\in\mathbb{R}^{h\times d}$
\STATE $\mathbf{b}_A\coloneqq \begin{bmatrix}
    \mathbf{b}_G + \epsilon \odot \mathbf{b}_A\\
    \mathbf{b}_G - \epsilon \odot \mathbf{b}_A\\
\end{bmatrix}\in\mathbb{R}^{h}$
\STATE $\mathbf{E}\coloneqq [\epsilon^{-1}/2, -\epsilon^{-1}/2]\in\mathbb{R}^{1\times h}$
\vspace{7pt}
\STATE $\enc(\vx)\coloneqq \mathbf{E}\sigma(\mathbf{A}\mathbf{x} + \vb_A) + b_E$
\STATE \textbf{return} $\enc$
\end{algorithmic}
\end{algorithm}

\subsubsection{Gated Encoder Theory}\label{subapp:gated_enc_theory}

Our gated encoder gadget will follow two simple steps: 1) pick $\mathbf{G}$, and 2) solve the resulting linear system for $\mathbf{A}$. The rest of this section will be dedicated to defining the linear system for $\mathbf{A}$ and providing conditions for a solution to exist.

Define\begin{align*}
    \mathbf{\Sigma} &= \sigma(\mathbf{G}\mathbf{K}^\top + \vb_G \mathbf{1}_{|\rmK|}^{\top})\in\mathbb{R}^{h\times|\mathbf{K}|}\\
    \mathbf{o} &= [\mathbf c_{f(1)}[j],\ldots, \mathbf c_{f(|\mathbf{K}|)}[j]]^\top
\end{align*}
where $\vb_G=\bm{0}$ if $\enc$ has no biases.

If $\enc$ has no biases, further define 
\begin{align*}
    \mathbf{M}(\mathbf{\Sigma}, \mathbf{K}) &= [\textrm{\upshape diag}(\mathbf{\Sigma}_1)\mathbf{K},\; \ldots, \;\textrm{\upshape diag}(\mathbf{\Sigma}_{h})\mathbf{K}] \in \mathbb{R}^{|\mathbf{K}|\times dh}\\
    \textrm{vec}(\mathbf{A}) &= [\mathbf{a}_1, \ldots, \mathbf{a}_h]^\top \in\mathbb{R}^{dh}.
\end{align*}
The $\mathbf{A}$ matrices such that $\enc(\mathbf{k}_i) = \mathbf c_{f(i)}[j]$ for all $i\in|\mathbf{K}|$ are exactly the solutions to the linear system \[\mathbf{M}(\mathbf{\Sigma}, \mathbf{K})\,\textrm{vec}(\mathbf{A}) = \mathbf{o}.\]
The above holds since once $\mathbf{\Sigma}$ entries are fixed, the encoder output is linear in the entries of $\mathbf{A}$.

If instead $\enc$ does have biases, define 
\begin{align*}
    \tilde d &= d+1\\
    D &= h\tilde d + 1\\
    \mathbf{\tilde K} &= [\mathbf{K}, \mathbf{1}_{|\mathbf{K}|}]\in \mathbb{R}^{|\mathbf{K}|\times \tilde d}\\
    \mathbf{\widetilde M}(\mathbf{\Sigma}, \mathbf{K}) &= [\textrm{\upshape diag}(\mathbf{\Sigma}_1)\mathbf{\tilde K},\; \ldots, \;\textrm{\upshape diag}(\mathbf{\Sigma}_{h})\mathbf{\tilde K}, \mathbf{1}_{|\mathbf{K}|}] \in \mathbb{R}^{|\mathbf{K}|\times D}\\
    \textrm{vec}(\mathbf{A}, \mathbf{b}_A, b_E) &= [\mathbf{a}_1, \mathbf{b}_A[1], \ldots, \mathbf{a}_h, \mathbf{b}_A[h], b_E]^\top \in\mathbb{R}^{D}.
\end{align*}
The $\mathbf{A}$, $\mathbf{b}_A$, and $b_E$ such that $\enc(\mathbf{k}_i) = \mathbf c_{f(i)}[j]$ for all $i\in|\mathbf{K}|$ are exactly the solutions to the linear system \[\mathbf{\widetilde M}(\mathbf{\Sigma}, \mathbf{K})\,\textrm{vec}(\mathbf{A}, \mathbf{b}_A, b_E) = \mathbf{o}.\]

To obtain a construction, it is sufficient to choose $\mathbf{\Sigma}$ such that the system is solvable for every choice of $\mathbf{o}$, which is true if and only if $\mathbf{M}(\mathbf{\Sigma}, \mathbf{K})$ or $\mathbf{\tilde M}(\mathbf{\Sigma}, \mathbf{K})$ has full row-rank. Since $\mathbf{\tilde M}(\mathbf{\Sigma}, \mathbf{K})$ always has full row rank if $\mathbf{M}(\mathbf{\Sigma}, \mathbf{K})$ does (because $\mathbf{\tilde M}(\mathbf{\Sigma}, \mathbf{K})$ is a submatrix of $\mathbf{\tilde M}(\mathbf{\Sigma}, \mathbf{K})$ with the same number of rows), we focus below on proving $\mathbf{M}(\mathbf{\Sigma}, \mathbf{K})$ has full row rank. Tighter bounds can be obtained for the bias case by considering $\mathbf{\tilde M}(\mathbf{\Sigma}, \mathbf{K})$ directly, but they do not affect parameter-count asymptotics (or even constant multipliers).

\paragraph{Rank condition on $\mathbf{\Sigma}$}
Interestingly, the above is true for generic $\mathbf{K}$ provided a simple rank condition on $\mathbf{\Sigma}.$ We start with the following definitions.
\begin{definition}
    Given a set $S$, define a $d$-partition of $S$ as a tuple of sets $\mathcal{I} = (I_1,\ldots,I_d)$ with $I_1,\ldots,I_d\subseteq[|S|]$ satisfying $I_i \cap I_j = \emptyset$ for all $i \neq j\in[d]$. Define a complete $d$-partition of $S$ as a $d$ partition also satisfying $\bigcup_{i\in[d]} I_i = S.$
\end{definition}
\begin{definition}\label{def:katri_k}
    Let $I_1,\ldots,I_d$ be a $d$-partition of $[|\mathbf{K}|]$ and let $\mathbf{a}\in\mathbb{R}^{|\mathbf{K}|}.$ Define $\mathbf{K}(\mathbf{a}, I_1,\ldots,I_d) \in \mathbb{R}^{|\mathbf{K}| \times d}$ according to the rule
    \[
        \mathbf{K}(\mathbf{a}, I_1,\ldots,I_d)[i,j] = \mathbf{a}[i]\mathbbm{1}\{i\in I_j\}.
    \]
    We abbreviate $\mathbf{K}(I_1,\ldots,I_d)\equiv \mathbf{K}(\mathbf{1}_{|\mathbf{K}|}, I_1,\ldots,I_d)$.
\end{definition}
Next, we provide the following lemmas characterizing the rank of $\mathbf{M}(\mathbf{\Sigma}, \mathbf{K})$ and $\mathbf{\widetilde M}(\mathbf{\Sigma}, \mathbf{K})$.
\begin{lemma}\label{lem:katri_rao_rank_equality}
    Let $I_1,\ldots,I_d$ be a $d$-partition of $[|\mathbf{K}|]$, pick any $\mathbf{\Sigma}\in \mathbb{R}^{h\times |\mathbf{K}|}$, and pick any $\mathbf{a}\in\mathbb{R}^{|\mathbf{K}|}$ with $\mathbf{a}[i]\neq 0$ for all $i\in [|\mathbf{K}|]$. Then
    \[
    \textrm{\upshape rank}(\normDec(\mathbf{\Sigma}, \mathbf{K}(\alpha, I_1,\ldots,I_d))) = \sum_{j=1}^d \operatorname{rank}\!\big(\boldsymbol{\Sigma}[:, I_j]\big).
    \]
\end{lemma}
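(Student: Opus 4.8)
The plan is to exploit the pairwise disjointness of the partition classes to exhibit $\mathbf{M}(\mathbf{\Sigma}, \mathbf{K}(\mathbf{a}, I_1,\ldots,I_d))$ as a block-diagonal matrix, up to permutation of rows and columns, with one block per class $I_j$, and then to evaluate the rank of each block directly.

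First I would unpack the definitions. By construction $\mathbf{M}(\mathbf{\Sigma}, \mathbf{K})$ has rows indexed by $i\in[|\mathbf{K}|]$ and columns indexed by pairs $(\ell, j)\in[h]\times[d]$ (the $j$-th column of the $\ell$-th block $\textrm{diag}(\mathbf{\Sigma}_\ell)\mathbf{K}$), with entry $\mathbf{\Sigma}[\ell,i]\cdot\mathbf{K}[i,j]$. Substituting $\mathbf{K}=\mathbf{K}(\mathbf{a},I_1,\ldots,I_d)$ turns this into $\mathbf{\Sigma}[\ell,i]\,\mathbf{a}[i]\,\mathbbm{1}\{i\in I_j\}$. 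Because the $I_j$ are pairwise disjoint, row $i$ has nonzero entries only in the column block $\{(\ell,j):\ell\in[h]\}$ for the unique $j$ (if any) with $i\in I_j$; a row $i$ lying in no class is identically zero and may be deleted without changing the rank. Grouping the surviving rows by which class they lie in, and grouping columns by their second index $j$, realizes $\mathbf{M}(\mathbf{\Sigma}, \mathbf{K}(\mathbf{a},I_1,\ldots,I_d))$ as a block-diagonal matrix with blocks $B_1,\ldots,B_d$, where $B_j\in\mathbb{R}^{|I_j|\times h}$ has $(i,\ell)$ entry $\mathbf{a}[i]\,\mathbf{\Sigma}[\ell,i]$.

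Next I would identify each block. Writing $\textrm{diag}(\mathbf{a}|_{I_j})$ for the diagonal matrix of the entries of $\mathbf{a}$ indexed by $I_j$, we have $B_j=\textrm{diag}(\mathbf{a}|_{I_j})\,\mathbf{\Sigma}[:,I_j]^\top$. Since every entry of $\mathbf{a}$ is nonzero, $\textrm{diag}(\mathbf{a}|_{I_j})$ is invertible, so $\operatorname{rank}(B_j)=\operatorname{rank}(\mathbf{\Sigma}[:,I_j]^\top)=\operatorname{rank}(\mathbf{\Sigma}[:,I_j])$. Finally, since $B_1,\ldots,B_d$ occupy pairwise disjoint row sets and pairwise disjoint column sets, the rank of the full matrix is the sum of the block ranks, giving $\operatorname{rank}(\mathbf{M}(\mathbf{\Sigma}, \mathbf{K}(\mathbf{a},I_1,\ldots,I_d)))=\sum_{j=1}^d\operatorname{rank}(\mathbf{\Sigma}[:,I_j])$.

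There is no substantive obstacle here beyond careful bookkeeping with the double index on the columns; the one point worth stating explicitly is that the column blocks for distinct $j$ are genuinely disjoint (the column index is the pair $(\ell,j)$), which is what promotes a merely block-triangular structure to exact additivity of ranks. The hypothesis that $\mathbf{a}$ has no zero entry is used precisely — and only — to make each diagonal rescaling invertible; dropping it would leave only the inequality $\operatorname{rank}(B_j)\le\operatorname{rank}(\mathbf{\Sigma}[:,I_j])$, hence only a $\le$ version of the claim.
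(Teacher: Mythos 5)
Your proof is correct and follows essentially the same route as the paper's: group the columns by which column of $\mathbf{K}$ they came from, observe that each resulting block is an invertible diagonal rescaling of $\mathbf{\Sigma}[:,I_j]^\top$ supported on the disjoint row set $I_j$, and conclude rank additivity. Your explicit block-diagonal (after row/column permutation) phrasing, including the remark about rows lying in no class, is just a slightly more concrete presentation of the paper's direct-sum-of-column-spaces argument.
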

\begin{proof}
    We define $\mathbf{K}\coloneqq \mathbf{K}(\alpha, I_1,\ldots,I_d)$ for notational simplicity.

    The columns of $\normDec$ can be re-grouped to form $d$ blocks of size $|\mathbf{K}| \times h$. Let $\normDec_j$ be the $j$-th new block, $j \in [d]$. This block contains all columns from $\normDec$ that were constructed using $\mathbf{K}[:, j]$ and can be written as
    $\normDec_j = \textrm{\upshape diag}(\mathbf{K}[:, j])\mathbf{\Sigma}^\top.$
    
    The matrix $\textrm{\upshape diag}(\mathbf{K}[:,j])$ acts as a row-selector. It zeroes out all rows of $\mathbf{\Sigma}^\top$ except for those with indices in $I_j$. Thus, $\textrm{\upshape col}(\normDec_i) \perp \textrm{\upshape col}(\normDec_j)$ for all $i,j\in[d],$ so
    \[
        \dim\left(\textrm{\upshape col}(\normDec(\mathbf{\Sigma}, \mathbf{K}))\right) = \dim\left(\bigoplus_{j=1}^d \textrm{\upshape col}(\normDec_j)\right) = \sum_{j=1}^d \operatorname{rank}\!\big(\normDec_j\big).
    \]
    
    Furthermore, \begin{align*}
        \operatorname{rank}(\normDec_j) &= \operatorname{rank}(\textrm{\upshape diag}(\mathbf{K}[:, j])\mathbf{\Sigma}^\top)\\
        &= \operatorname{rank}(\textrm{\upshape diag}(\mathbf{K}[I_j, j])\mathbf{\Sigma}^\top[I_j,:])\\
        &= \operatorname{rank}(\mathbf{\Sigma}^\top[I_j,:])\\
        &= \operatorname{rank}(\mathbf{\Sigma}[:,I_j]).
    \end{align*}
    Thus
    \[
    \textrm{\upshape rank}(\normDec(\mathbf{\Sigma}, \mathbf{K})) = \sum_{j=1}^d \operatorname{rank}\!\big(\normDec_j\big) = \sum_{j=1}^d \operatorname{rank}\!\big(\boldsymbol{\Sigma}[:, I_j]\big),
    \]
    as desired.
\end{proof}

\begin{lemma}\label{lem:katri_rao_matroid_union}
    For generic $\mathbf{K}$, we have that
    \begin{align}
        \textrm{\upshape rank}(\mathbf{M}(\mathbf{\Sigma}, \mathbf{K})) &= \min_{S\subseteq [|\mathbf{K}|]}\Big[|\mathbf{K}| - |S| + d\cdot \textrm{\upshape rank}(\mathbf{\Sigma}[:, S])\Big]\equiv R(\mathbf{\Sigma}).\label{eq:rank_eq1}
    \end{align}
    More specifically, the set $\mathcal{K} = \{\mathbf{K}\, |\, \text{\upshape rank}(\mathbf{M}(\mathbf{\Sigma}, \mathbf{K})) = R(\mathbf{\Sigma})\}$ is a non-empty Zariski open set (i.e. its complement is an algebraic set) and hence has full measure.
\end{lemma}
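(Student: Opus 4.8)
The plan is to prove both displayed claims at once via three ingredients: (i) a \emph{universal} upper bound $\operatorname{rank}(\mathbf{M}(\mathbf{\Sigma},\mathbf{K}))\le R(\mathbf{\Sigma})$ valid for \emph{every} $\mathbf{K}$; (ii) \emph{achievability}, i.e.\ the existence of one particular $\mathbf{K}^\star$ with $\operatorname{rank}(\mathbf{M}(\mathbf{\Sigma},\mathbf{K}^\star))=R(\mathbf{\Sigma})$; and (iii) a semicontinuity argument promoting (ii) to a statement about generic $\mathbf{K}$. For (iii) I would note that, with $\mathbf{\Sigma}$ fixed, the entries of $\mathbf{M}(\mathbf{\Sigma},\mathbf{K})=[\operatorname{diag}(\mathbf{\Sigma}_1)\mathbf{K},\dots,\operatorname{diag}(\mathbf{\Sigma}_h)\mathbf{K}]$ are linear in the entries of $\mathbf{K}$, so every $r\times r$ minor is a polynomial in $\mathbf{K}$ and hence $\{\mathbf{K}:\operatorname{rank}(\mathbf{M}(\mathbf{\Sigma},\mathbf{K}))\ge r\}$ is Zariski open for each $r$. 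Once (i) and (ii) are in hand, the set $\mathcal{K}$ coincides with $\{\mathbf{K}:\operatorname{rank}(\mathbf{M}(\mathbf{\Sigma},\mathbf{K}))\ge R(\mathbf{\Sigma})\}$ (the upper bound forces equality on this set), which is Zariski open and nonempty, so its complement is contained in a proper algebraic set and therefore has Lebesgue measure zero.

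\textbf{The upper bound.} Fix any $S\subseteq[|\mathbf{K}|]$ and split the rows of $\mathbf{M}(\mathbf{\Sigma},\mathbf{K})$ into those indexed by $S$ and those indexed by its complement; the latter contribute at most $|\mathbf{K}|-|S|$ to the rank. For an index $i\in S$, reshaping the corresponding length-$dh$ row into an $h\times d$ array identifies it with the rank-one outer product $\mathbf{\Sigma}[:,i]\,\mathbf{k}_i^\top$, whose column space is $\operatorname{span}(\mathbf{\Sigma}[:,i])$. Hence each such reshaped row lies in the space of $h\times d$ matrices with column space contained in $W_S:=\operatorname{span}\{\mathbf{\Sigma}[:,i]:i\in S\}$, and $\dim\{A:\operatorname{col}(A)\subseteq W_S\}=d\cdot\dim W_S=d\cdot\operatorname{rank}(\mathbf{\Sigma}[:,S])$. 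Subadditivity of rank under row concatenation then gives $\operatorname{rank}(\mathbf{M}(\mathbf{\Sigma},\mathbf{K}))\le |\mathbf{K}|-|S|+d\cdot\operatorname{rank}(\mathbf{\Sigma}[:,S])$, and minimizing over $S$ yields (i).

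\textbf{Achievability.} Here I would invoke the combinatorial meaning of $R(\mathbf{\Sigma})$: the formula $\min_S[|\mathbf{K}|-|S|+d\cdot\operatorname{rank}(\mathbf{\Sigma}[:,S])]$ is precisely the Nash--Williams rank formula for the $d$-fold union of the linear matroid on $[|\mathbf{K}|]$ given by the columns of $\mathbf{\Sigma}$. By the matroid union theorem there exist pairwise disjoint $I_1,\dots,I_d\subseteq[|\mathbf{K}|]$, each independent in that matroid (so $\operatorname{rank}(\mathbf{\Sigma}[:,I_j])=|I_j|$), with $\sum_{j=1}^d|I_j|=R(\mathbf{\Sigma})$. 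Taking $\mathbf{K}^\star:=\mathbf{K}(\mathbf{1}_{|\mathbf{K}|},I_1,\dots,I_d)$ from Definition~\ref{def:katri_k}, whose weight vector $\mathbf{1}_{|\mathbf{K}|}$ is nowhere zero, Lemma~\ref{lem:katri_rao_rank_equality} gives $\operatorname{rank}(\mathbf{M}(\mathbf{\Sigma},\mathbf{K}^\star))=\sum_j\operatorname{rank}(\mathbf{\Sigma}[:,I_j])=\sum_j|I_j|=R(\mathbf{\Sigma})$, establishing (ii).

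\textbf{Main obstacle.} The hard step is achievability: it genuinely relies on the combinatorial content of matroid union (equivalently, that the maximum over $d$-partitions of $\sum_j\operatorname{rank}(\mathbf{\Sigma}[:,I_j])$ equals $R(\mathbf{\Sigma})$), rather than on a direct linear-algebra manipulation, so one should either cite Edmonds--Nash--Williams in the linear-matroid case or, for a self-contained treatment, prove this special case by an augmenting-path/exchange argument. By contrast the upper bound and the Zariski/measure bookkeeping are routine. One small point to handle carefully is that Lemma~\ref{lem:katri_rao_rank_equality} is stated for $d$-partitions that need not exhaust $[|\mathbf{K}|]$, which is exactly the regime we land in since $\bigcup_j I_j$ has size $R(\mathbf{\Sigma})$ and may be a proper subset of $[|\mathbf{K}|]$.
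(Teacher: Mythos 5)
Your proposal is correct and follows essentially the same route as the paper's proof: a universal upper bound $\operatorname{rank}(\mathbf{M}(\mathbf{\Sigma},\mathbf{K}))\le R(\mathbf{\Sigma})$, an explicit witness $\mathbf{K}^\star=\mathbf{K}(\mathbf{1},I_1,\dots,I_d)$ built from the matroid-union (Nash--Williams) characterization of $R(\mathbf{\Sigma})$ together with Lemma~\ref{lem:katri_rao_rank_equality}, and the observation that the minors of $\mathbf{M}(\mathbf{\Sigma},\mathbf{K})$ are polynomials in $\mathbf{K}$, so $\mathcal{K}$ is a nonempty Zariski open set of full measure. Your reshaping-into-outer-products argument for the upper bound is a harmless cosmetic variant of the paper's block-column decomposition, and your worry about non-exhaustive $d$-partitions is already accommodated by the paper's statement of Lemma~\ref{lem:katri_rao_rank_equality}, so no gap arises.
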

\begin{proof}
    For the full proof, see \Cref{pf:katri_rao_matroid_union}. A sketch of the proof is as follows.

    We first show that $\mathcal{K}$ is a Zariski open set. We show this by demonstrating that the $\mathbf{K}$ contained in $\mathcal{K}$ are exactly those for which not all $R(\mathbf{\Sigma})$th order minors of $\mathbf{M}(\mathbf{\Sigma}, \mathbf{K})$ are 0.

    Thus, we simply need to show that $\mathcal{K}$ is non-empty. Fortunately, by noting that Equations \ref{eq:rank_eq1} matches the form of the the Matroid Union Theorem~\citep{oxley2011matroid}, we can use the Matroid Union Theorem to construct an explicit $\mathbf{K}$ contained in $\mathcal{K}$, thus completing the proof.
\end{proof}

\begin{lemma}\label{lem:rank_condition_app}
    The set $\mathcal{K} = \{\mathbf{K}\, |\, \text{\upshape rank}(\mathbf{M}(\mathbf{\Sigma}, \mathbf{K})) = |\mathbf{K}|\}$ is a non-empty Zariski open set (and hence has full measure) if and only if
    \begin{equation}\label{eq: rank_cond_1_app}
        d\cdot \textrm{\upshape rank}(\mathbf{\Sigma}[:, S]) \ge |S|\quad\quad \forall S \subseteq [|\mathbf{K}|].
    \end{equation}
\end{lemma}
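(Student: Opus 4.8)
The plan is to derive this lemma directly from the rank formula in \Cref{lem:katri_rao_matroid_union}. That lemma shows that for generic $\mathbf{K}$,
\[
\operatorname{rank}(\mathbf{M}(\mathbf{\Sigma}, \mathbf{K})) = R(\mathbf{\Sigma}) = \min_{S \subseteq [|\mathbf{K}|]}\bigl[\,|\mathbf{K}| - |S| + d\cdot\operatorname{rank}(\mathbf{\Sigma}[:,S])\,\bigr],
\]
and that $\mathcal{K}_R := \{\mathbf{K} \mid \operatorname{rank}(\mathbf{M}(\mathbf{\Sigma}, \mathbf{K})) = R(\mathbf{\Sigma})\}$ is a non-empty Zariski open set. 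Since $\mathbf{M}(\mathbf{\Sigma}, \mathbf{K})$ has exactly $|\mathbf{K}|$ rows, full row rank is the same as $\operatorname{rank} = |\mathbf{K}|$, so the whole statement reduces to comparing $R(\mathbf{\Sigma})$ with $|\mathbf{K}|$. The key elementary observation is that $R(\mathbf{\Sigma}) = |\mathbf{K}|$ if and only if \eqref{eq: rank_cond_1_app} holds: taking $S = \emptyset$ gives the term $|\mathbf{K}|$, so $R(\mathbf{\Sigma}) \le |\mathbf{K}|$ always, and $R(\mathbf{\Sigma}) \ge |\mathbf{K}|$ exactly when $|\mathbf{K}| - |S| + d\cdot\operatorname{rank}(\mathbf{\Sigma}[:,S]) \ge |\mathbf{K}|$ for every $S$, i.e.\ $d\cdot\operatorname{rank}(\mathbf{\Sigma}[:,S]) \ge |S|$ for every $S$.

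For the ($\Leftarrow$) direction I would assume \eqref{eq: rank_cond_1_app}, conclude $R(\mathbf{\Sigma}) = |\mathbf{K}|$ by the observation above, and then note that $\mathcal{K} = \{\mathbf{K}\mid \operatorname{rank}(\mathbf{M}(\mathbf{\Sigma}, \mathbf{K})) = |\mathbf{K}|\}$ is literally equal to $\mathcal{K}_R$, which \Cref{lem:katri_rao_matroid_union} already certifies as a non-empty Zariski open set (and hence, as its complement is a proper algebraic set, of full Lebesgue measure).

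For the ($\Rightarrow$) direction I would argue the contrapositive. First I would record that $\mathcal{K}$, if non-empty, is automatically Zariski open, since $\operatorname{rank} = |\mathbf{K}|$ is equivalent to the non-vanishing of one of the finitely many maximal-order minors of $\mathbf{M}(\mathbf{\Sigma}, \mathbf{K})$, each a polynomial in the entries of $\mathbf{K}$; so it suffices to show that if \eqref{eq: rank_cond_1_app} fails then $\mathcal{K} = \emptyset$. If some $S_0$ violates the condition, then $R(\mathbf{\Sigma}) \le |\mathbf{K}| - |S_0| + d\cdot\operatorname{rank}(\mathbf{\Sigma}[:,S_0]) < |\mathbf{K}|$. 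Were $\mathcal{K}$ non-empty, it would be a non-empty Zariski open subset of the irreducible affine space $\mathbb{R}^{|\mathbf{K}|\times d}$, hence Zariski dense, and would therefore intersect the non-empty Zariski open set $\mathcal{K}_R$; but any $\mathbf{K}$ in the intersection would have $\operatorname{rank}(\mathbf{M}(\mathbf{\Sigma}, \mathbf{K}))$ equal to both $|\mathbf{K}|$ and $R(\mathbf{\Sigma}) < |\mathbf{K}|$, a contradiction. So $\mathcal{K} = \emptyset$, and in particular it is not a non-empty Zariski open set.

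The only step that uses more than definition-chasing is this last intersection argument, which rests on the irreducibility of affine space (any two non-empty Zariski open subsets of $\mathbb{R}^N$ meet). I do not expect a real obstacle here: the substantive content — constructing an explicit full-rank witness $\mathbf{K}$ via the Matroid Union Theorem, together with the identification of $R(\mathbf{\Sigma})$ as the generic rank — has already been absorbed into \Cref{lem:katri_rao_matroid_union}, so this proof is essentially a bookkeeping corollary of it.
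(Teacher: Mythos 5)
Your proof is correct and follows essentially the same route as the paper: both directions are read off from \Cref{lem:katri_rao_matroid_union}, with the forward implication coming from the identity $R(\mathbf{\Sigma})=|\mathbf{K}|$ under \eqref{eq: rank_cond_1_app}, and the failing case handled by noting the full-rank locus cannot coexist with the generic-rank open set (the paper phrases this as containment in a proper Zariski closed set; your irreducibility/intersection argument is the same fact). The only difference is presentational: you make the equivalence $R(\mathbf{\Sigma})=|\mathbf{K}|\iff\eqref{eq: rank_cond_1_app}$ and the density step explicit, and you conclude the slightly stronger statement $\mathcal{K}=\emptyset$ when the condition fails.
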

\begin{proof}
    $(\implies)$ Follows immediately from Lemma~\ref{lem:katri_rao_matroid_union}.
    
    $(\impliedby)$ Conversely, suppose there exists a subset $S \subseteq [|\mathbf{K}|]$ such that
\[
d\,\operatorname{rank}(\mathbf{\Sigma}[:,S]) < |S|.
\]
Then
\[
R(\mathbf{\Sigma})
= \min_{T \subseteq [|\mathbf{K}|]} \bigl[\,|\mathbf{K}| - |T| + d\cdot \operatorname{rank}(\mathbf{\Sigma}[:,T])\,\bigr]
\le |\mathbf{K}| - |S| + d\cdot \operatorname{rank}(\mathbf{\Sigma}[:,S])
< |\mathbf{K}|.
\]
By Lemma~\ref{lem:katri_rao_matroid_union}, there exists a non-empty Zariski open set
$\mathcal{K}_0$ such that for all $\mathbf{K}\in\mathcal{K}_0$,
\[
\operatorname{rank}(\mathbf{M}(\mathbf{\Sigma},\mathbf{K})) = R(\mathbf{\Sigma}) < |\mathbf{K}|.
\]
Therefore the full-rank locus
\[
\mathcal{K}_{\mathrm{full}}
:= \{\mathbf{K} : \operatorname{rank}(\mathbf{M}(\mathbf{\Sigma},\mathbf{K})) = |\mathbf{K}|\}
\]
is contained in the complement of $\mathcal{K}_0$, which is a proper Zariski closed set.
Hence $\mathcal{K}_{\mathrm{full}}$ cannot be a non-empty Zariski open set.
\end{proof}

Further, for analytic $\sigma$, such a $\mathbf{\Sigma}$ is easy to find. To show this, we first start with the following standard lemmas (proofs given for completeness):
\begin{lemma}\label{lem:indep_func_to_full_rank_matrix}
Let $f_1,\dots,f_r$ be linearly independent real-valued functions on some set $S$. Then there exist
points $\avec^{(1)},\dots,\avec^{(r)}\in S$ such that the $r\times r$ matrix $\mathbf{M} = \bigl(f_i(\avec^{(j)})\bigr)_{1\le i,j\le r}$
has rank $r$ (equivalently, is invertible).
\end{lemma}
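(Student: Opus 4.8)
The plan is to argue by induction on $r$. For the base case $r=1$, linear independence of a single function $f_1$ simply means $f_1$ is not identically zero, so there is a point $\avec^{(1)}\in S$ with $f_1(\avec^{(1)})\neq 0$, and the $1\times 1$ matrix $\bigl(f_1(\avec^{(1)})\bigr)$ has rank $1$.

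For the inductive step, I would assume the statement holds for any collection of $r-1$ linearly independent real-valued functions on $S$. Given linearly independent $f_1,\dots,f_r$, the sub-collection $f_1,\dots,f_{r-1}$ is also linearly independent, so the inductive hypothesis produces points $\avec^{(1)},\dots,\avec^{(r-1)}\in S$ for which the $(r-1)\times(r-1)$ matrix $N=\bigl(f_i(\avec^{(j)})\bigr)_{1\le i,j\le r-1}$ is invertible. The key construction is to introduce the function $D:S\to\mathbb{R}$ given by
\[
D(\avec)=\det
\begin{pmatrix}
f_1(\avec^{(1)}) & \cdots & f_1(\avec^{(r-1)}) & f_1(\avec)\\
\vdots & & \vdots & \vdots\\
f_r(\avec^{(1)}) & \cdots & f_r(\avec^{(r-1)}) & f_r(\avec)
\end{pmatrix}.
\]
Expanding this determinant along its last column writes $D=\sum_{i=1}^{r} c_i f_i$, where each $c_i$ is (up to sign) the minor obtained by deleting row $i$ and the last column; in particular the coefficient $c_r$ equals $\pm\det N\neq 0$. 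Hence $D$ is a nontrivial linear combination of $f_1,\dots,f_r$, and by their linear independence $D$ is not the zero function on $S$. Therefore there exists $\avec^{(r)}\in S$ with $D(\avec^{(r)})\neq 0$.

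With this choice of $\avec^{(r)}$, the $r\times r$ matrix $\mathbf{M}=\bigl(f_i(\avec^{(j)})\bigr)_{1\le i,j\le r}$ satisfies $\det\mathbf{M}=D(\avec^{(r)})\neq 0$, so $\mathbf{M}$ has rank $r$, which closes the induction. I do not expect any genuine obstacle in this argument: the only point that needs a moment of care is confirming that the coefficient of $f_r$ in the last-column cofactor expansion of $D$ is exactly $\pm\det N$, which is immediate from the definition of $N$ and the cofactor formula; everything else is routine.
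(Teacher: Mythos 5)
Your proof is correct. It differs from the paper's argument in mechanism, though both are inductive and both ultimately rest on the same core fact that a nontrivial linear combination of linearly independent functions cannot vanish identically on $S$. You induct on the number of functions $r$: the inductive hypothesis applied to $f_1,\dots,f_{r-1}$ gives an invertible $(r-1)\times(r-1)$ block $N$, and the cofactor expansion of the bordered determinant $D(\avec)$ along its last column exhibits an explicit nontrivial combination $\sum_i c_i f_i$ with $c_r=\det N\neq 0$ (indeed exactly $+\det N$, since the sign is $(-1)^{2r}$), whose nonvanishing at some $\avec^{(r)}$ directly certifies $\det\mathbf{M}\neq 0$. The paper instead fixes the $r$ functions, views the points through the evaluation map $F(\avec)=(f_1(\avec),\dots,f_r(\avec))$, and adds points one at a time: if every $F(\avec)$ lay in the span $W$ of the columns chosen so far, a nonzero functional annihilating the proper subspace $W$ would yield a vanishing nontrivial combination of the $f_i$, a contradiction. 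Your determinant route is slightly more constructive (it produces the obstructing function $D$ explicitly and gives the determinant of $\mathbf{M}$ in closed form), while the paper's annihilator argument avoids determinants and generalizes verbatim to extending any partial independent family of columns; neither has a gap, and both are standard proofs of this classical lemma.
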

\begin{proof}
    See \Cref{pf:indep_func_to_full_rank_matrix}.
\end{proof}

\begin{lemma}\label{lem:rank_determines_independence_sigma}
    Let $\sigma$ be a non-polynomial analytic function and define $f_\lambda(t) = \sigma(\lambda t)$. Further, define $\mathcal{S} = \operatorname{span}\{ f_\lambda | \lambda \in \mathbb{R} \}.$ The dimension of $\mathcal{S}$ is infinite.
\end{lemma}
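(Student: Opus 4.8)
The plan is to pass to the Taylor expansion of $\sigma$ at $0$ and reduce the statement to a nonsingularity claim for a generalized Vandermonde matrix. First I would write $\sigma(z) = \sum_{k\ge 0} c_k z^k$ on a neighborhood of $0$, which is possible since $\sigma$ is analytic; because $\sigma$ is \emph{not} a polynomial, the support $N := \{k\ge 0 : c_k\neq 0\}$ is infinite. Then for every $\lambda$ we have $f_\lambda(t) = \sigma(\lambda t) = \sum_{k\in N} c_k\lambda^k t^k$ as an analytic function near $0$. To conclude $\dim\mathcal S = \infty$ it suffices to show that for each $r\in\N$ one can pick values $\lambda_1,\dots,\lambda_r$ with $f_{\lambda_1},\dots,f_{\lambda_r}$ linearly independent. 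So I would suppose $\sum_{i=1}^r a_i f_{\lambda_i}\equiv 0$ on a common interval around $0$; comparing the coefficient of $t^k$ gives $c_k\sum_i a_i\lambda_i^k = 0$, hence $\sum_{i=1}^r a_i\lambda_i^k = 0$ for every $k\in N$, and it remains to choose the $\lambda_i$ so that these equations force $a=0$.

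Concretely, I would fix exponents $k_1 < \dots < k_r$ in $N$ (possible since $N$ is infinite) and aim for the $r\times r$ matrix $W = (\lambda_i^{k_j})_{i,j}$ to be nonsingular, since then the equations at $k=k_1,\dots,k_r$ read $W^\top a = 0$ and give $a = 0$. The step requiring care is that a generalized Vandermonde matrix with arbitrary real nodes need not be invertible; I would avoid this by forcing positive nodes through the substitution $\lambda_i = e^{x_i}$, under which $W = (e^{k_j x_i})_{i,j}$. The functions $x\mapsto e^{k_1 x},\dots,e^{k_r x}$ are linearly independent on $\R$ (distinct real exponentials), so \Cref{lem:indep_func_to_full_rank_matrix} supplies points $x_1,\dots,x_r\in\R$ making $(e^{k_j x_i})_{i,j}$ nonsingular; the $x_i$ are then automatically distinct, so $\lambda_i := e^{x_i}$ are distinct positive reals realizing a nonsingular $W$.

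If $\sigma$ is only analytic on a bounded neighborhood of $0$ rather than entire, one further bookkeeping step is to rescale $\lambda_i\mapsto\eps\lambda_i$ for a small $\eps>0$ so that every $\eps\lambda_i$ lies in the disc of convergence and the expansions $f_{\eps\lambda_i}(t) = \sum_{k\in N} c_k(\eps\lambda_i)^k t^k$ hold on a common interval around $0$; this only scales the $j$-th row of $W$ by $\eps^{k_j}\neq 0$ and hence preserves nonsingularity, while the coefficient-comparison step is unchanged. Having produced $r$ linearly independent elements of $\mathcal S$ for every $r$, I would conclude $\dim\mathcal S = \infty$. I expect the main obstacle to be exactly this Vandermonde step — not a hard computation, but the need to recognize that one must restrict to positive nodes (equivalently, work with exponentials) before applying \Cref{lem:indep_func_to_full_rank_matrix}; everything else is routine power-series manipulation.
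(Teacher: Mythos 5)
Your proposal is correct, and it reaches the conclusion by a genuinely different finishing argument than the paper. Both proofs begin identically: expand $\sigma$ at $0$, use non-polynomiality to get an infinite support $N=\{k: c_k\neq 0\}$, and reduce a hypothetical relation $\sum_i a_i f_{\lambda_i}\equiv 0$ to the coefficient equations $\sum_i a_i \lambda_i^k=0$ for all $k\in N$. The paper then fixes the specific nodes $\lambda=1,2,\dots$ and kills the relation by an asymptotic dominance argument: dividing $S(k)=\sum_n \beta_n n^k$ by $n_{\max}^k$ and letting $k\to\infty$ along the infinite support forces $\beta_{n_{\max}}=0$. You instead select only $r$ exponents $k_1<\dots<k_r$ from $N$ and \emph{choose} the nodes so that the generalized Vandermonde matrix $(\lambda_i^{k_j})$ is nonsingular, obtaining the nodes via the substitution $\lambda_i=e^{x_i}$ and an application of \Cref{lem:indep_func_to_full_rank_matrix} to the linearly independent exponentials $e^{k_j x}$ — a nice reuse of machinery the paper already has. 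What each buys: the paper's argument gives the stronger, more explicit statement that the fixed countable family $\{\sigma(nt)\}_{n\in\mathbb N}$ is linearly independent, at the cost of a tail/limit argument over the whole infinite support; yours is a finite linear-algebra argument per $r$, needing only $r$ support exponents at a time, but it produces $r$-element independent subfamilies with nodes depending on $r$ rather than one universal independent family, and it leans on the standard (uncited in your sketch, but unproblematic) fact that distinct real exponentials are linearly independent. Your remark about restricting to positive nodes is well taken — arbitrary real nodes can indeed give a singular generalized Vandermonde matrix — and your rescaling step for a finite radius of convergence is harmless (indeed unnecessary, since any finite set of $\lambda_i$ already shares a common interval of convergence around $0$).
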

\begin{proof}
    See \Cref{pf:rank_determines_independence_sigma}.
\end{proof}
\begin{lemma}\label{lem:d-rank_gives_rank_partial}
    Given a non-polynomial analytic function $\sigma:\mathbb{R}\to\mathbb{R},$ for generic $\mathbf{x}\in\mathbb{R}^{d_1}$ and $\mathbf{y}\in\mathbb{R}^{d_2}$, we have that
    \begin{align}
        \text{\upshape rank}(\sigma(\mathbf{x}\mathbf{y}^\top)) &= \min\{d_1, d_2\}.
    \end{align}
    More specifically, the set
    \begin{align*}
        \mathcal{S} &= \left\{(\mathbf{x}, \mathbf{y})\, \Big|\, \text{\upshape rank}(\sigma(\mathbf{x}\mathbf{y}^\top)) = \min\{d_1, d_2\}\right\}
    \end{align*}
    is the complement of a proper analytic subvariety of $\mathbb{R}^{d_1}\times \mathbb{R}^{d_2}$.
\end{lemma}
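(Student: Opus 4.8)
The plan is to show that the set $\mathcal{S} = \{(\mathbf{x},\mathbf{y}) : \operatorname{rank}(\sigma(\mathbf{x}\mathbf{y}^\top)) = \min\{d_1,d_2\}\}$ is the complement of a proper analytic subvariety by combining three ingredients already developed in the excerpt: (i) the fact that the rank of a matrix drops below $r$ exactly on the common zero locus of its $r\times r$ minors, (ii) the one-variable fact (Lemma \ref{lem:rank_determines_independence_sigma}) that $\{t\mapsto \sigma(\lambda t) : \lambda\in\mathbb{R}\}$ spans an infinite-dimensional space when $\sigma$ is non-polynomial analytic, and (iii) the abstract Lemma \ref{lem:indep_func_to_full_rank_matrix} converting linear independence of functions into the existence of a full-rank evaluation matrix. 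Without loss of generality assume $r := \min\{d_1,d_2\} = d_1 \le d_2$.

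The first step is analyticity and the minor argument. Each entry of $\sigma(\mathbf{x}\mathbf{y}^\top)$ is $\sigma(x_i y_j)$, an analytic function of $(\mathbf{x},\mathbf{y})\in\mathbb{R}^{d_1}\times\mathbb{R}^{d_2}$; hence every $r\times r$ minor of $\sigma(\mathbf{x}\mathbf{y}^\top)$ is analytic. Since $\sigma(\mathbf{x}\mathbf{y}^\top)$ has exactly $r$ rows, $\operatorname{rank}(\sigma(\mathbf{x}\mathbf{y}^\top)) = r$ iff at least one $r\times r$ minor is nonzero. Therefore the complement of $\mathcal{S}$ is the common zero set of finitely many analytic functions, i.e.\ an analytic subvariety. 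It remains to prove this subvariety is \emph{proper}, i.e.\ that $\mathcal{S}$ is nonempty; by the identity theorem for real-analytic functions (an analytic function vanishing on an open set vanishes identically on the connected domain), exhibiting a single point of $\mathcal{S}$ forces at least one minor to be not identically zero, and then its zero set is a proper subvariety.

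The second step is to exhibit a point of $\mathcal{S}$. Pick any $\mathbf{y}\in\mathbb{R}^{d_2}$ whose coordinates $y_1,\dots,y_{d_2}$ are distinct and nonzero. Consider the $r$ one-variable functions $t \mapsto \sigma(y_j t)$ restricted to, say, $j \in \{1,\dots,r\}$ — or more carefully, view the row space: the $i$th row of $\sigma(\mathbf{x}\mathbf{y}^\top)$ as $\mathbf{x}$ varies is the vector $(\sigma(x_i y_1),\dots,\sigma(x_i y_{d_2}))$, so it suffices to find scalars $x_1,\dots,x_r$ making the $r\times r$ submatrix $\big(\sigma(x_i y_j)\big)_{1\le i,j\le r}$ invertible. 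By Lemma \ref{lem:rank_determines_independence_sigma} with $\sigma$ non-polynomial analytic, the functions $\{f_\lambda : t\mapsto \sigma(\lambda t)\}_{\lambda\in\mathbb{R}}$ span an infinite-dimensional space; in particular the $r$ specific functions $g_j : t \mapsto \sigma(y_j t)$, $j=1,\dots,r$, are linearly independent (distinct nonzero $y_j$ chosen generically so that no nontrivial linear combination vanishes — here one may need to choose the $y_j$ from a cofinite set, which is fine). Applying Lemma \ref{lem:indep_func_to_full_rank_matrix} to these $r$ independent functions yields evaluation points $x_1,\dots,x_r$ with $\big(g_j(x_i)\big)_{i,j} = \big(\sigma(x_i y_j)\big)_{i,j}$ invertible. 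Setting $\mathbf{x} = (x_1,\dots,x_r,*,\dots,*)$ (padding arbitrarily if $d_1 = r$ this is already full) gives a point at which $\sigma(\mathbf{x}\mathbf{y}^\top)$ has rank $r$, so $\mathcal{S}\neq\emptyset$.

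The main obstacle I anticipate is the bookkeeping in the second step: one must guarantee that the $r$ functions $g_j(t) = \sigma(y_j t)$ are genuinely linearly independent, not merely that the full family is infinite-dimensional. The cleanest route is to note that the map $\lambda \mapsto f_\lambda$ is ``generically injective with independent images'': since $\operatorname{span}\{f_\lambda\}$ is infinite-dimensional, for all but finitely many choices of a new $\lambda$ the function $f_\lambda$ lies outside the span of any fixed finite collection, so one builds $y_1,\dots,y_r$ inductively, each time picking $y_j$ outside the bad finite set. Once independence is secured, the rest is a direct application of the two cited lemmas plus the analytic identity theorem, and symmetry handles the case $d_2 < d_1$ by transposing.
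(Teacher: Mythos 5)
Your proposal is correct and follows essentially the same route as the paper: establish that $\mathcal{S}$ is the complement of an analytic subvariety via the minor/identity-theorem argument, then exhibit a single full-rank point by extracting $\min\{d_1,d_2\}$ linearly independent functions $t\mapsto\sigma(\lambda t)$ from \Cref{lem:rank_determines_independence_sigma} and applying \Cref{lem:indep_func_to_full_rank_matrix} to choose evaluation points (the paper handles $d_1\ge d_2$ and $d_1<d_2$ as two explicit cases rather than by transposing). One cosmetic remark: your claim that the ``bad'' set of parameters at each inductive step is finite (cofinite complement) is stronger than needed and not justified, but infinite-dimensionality of $\operatorname{span}\{f_\lambda\}$ already guarantees that some $f_\lambda$ lies outside the span of any fixed finite collection, which is all your inductive selection uses, so the argument stands.
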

\begin{proof}
    We first show that the set $\mathcal{S}$ is the complement of an algebraic subvariety in a similar approach to the proof of \Cref{lem:katri_rao_matroid_union}. Thus, all that remains is to show that $\mathcal{S}$ is non-empty.

    \textbf{Case 1, $d_1\ge d_2$:}
    By \Cref{lem:rank_determines_independence_sigma} there exists a choice of $\mathbf{x}\in\mathbb{R}^{d_1}$ such that $\{\sigma(\mathbf{x}[i]\cdot y)\}_{i=1}^{d_1}$ are independent functions of $y$. Thus, by \Cref{lem:indep_func_to_full_rank_matrix}, we can choose $\mathbf{y}\in\mathbb{R}^{d_2}$ such that the matrix $\sigma(\mathbf{x}\mathbf{y}^\top)$ has rank $\min\{d_1, d_2\}$.

    \textbf{Case 2, $d_1< d_2$:}
    By \Cref{lem:rank_determines_independence_sigma} there exists a choice of $\mathbf{y}\in\mathbb{R}^{d_2}$ such that $\{\sigma(x\cdot \mathbf{y}[i])\}_{i=1}^{d_2}$ are independent functions of $x$. Thus, by \Cref{lem:indep_func_to_full_rank_matrix}, we can choose $\mathbf{x}\in\mathbb{R}^{d_1}$ such that the matrix $\sigma(\mathbf{x}\mathbf{y}^\top)$ has rank $\min\{d_1, d_2\}$.
    
    This demonstrates that $\mathcal{S}$ is nonempty, completing the proof.
\end{proof}
The above lemma can be naturally generalized:
\begin{lemma}\label{lem:d-rank_gives_rank}
    Given a non-polynomial analytic function $\sigma:\mathbb{R}\to\mathbb{R},$ for generic $\mathbf{x}\in\mathbb{R}^{d_1}$ and $\mathbf{y}\in\mathbb{R}^{d_2}$ we have that
    \begin{align}
        \text{\upshape rank}(\sigma(\mathbf{x}\mathbf{y}^\top)[S_1, S_2]) &= \min\{|S_1|, |S_2|\}\quad\quad \forall S_1\subseteq [d_1],\, S_2\subseteq [d_2].
    \end{align}
    More specifically, the set
    \begin{align*}
        \mathcal{S} &= \left\{(\mathbf{x}, \mathbf{y})\, \Big|\, \text{\upshape rank}(\sigma(\mathbf{x}\mathbf{y}^\top)[S_1, S_2]) = \min\{|S_1|, |S_2|\}\quad\quad \forall S_1\subseteq [d_1],\, S_2\subseteq [d_2]\right\}
    \end{align*}
    is the complement of a proper analytic subvariety of $\mathbb{R}^{d_1}\times \mathbb{R}^{d_2}$.
\end{lemma}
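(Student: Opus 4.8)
The plan is to deduce this finite-intersection version from \Cref{lem:d-rank_gives_rank_partial} via the standard fact that finitely many conditions, each holding generically, hold simultaneously generically. First I would record the reduction: for nonempty $S_1\subseteq[d_1]$ and $S_2\subseteq[d_2]$, the submatrix $\sigma(\mathbf{x}\mathbf{y}^\top)[S_1,S_2]$ is literally $\sigma(\mathbf{x}_{S_1}\,\mathbf{y}_{S_2}^\top)$, where $\mathbf{x}_{S_1}\in\mathbb{R}^{|S_1|}$ and $\mathbf{y}_{S_2}\in\mathbb{R}^{|S_2|}$ are the corresponding subvectors; when $S_1$ or $S_2$ is empty the claimed identity reads $0=\min\{0,\cdot\}$ and is vacuous, so only the finitely many nonempty pairs need attention.

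Fix such a pair $(S_1,S_2)$ and write $r:=\min\{|S_1|,|S_2|\}$. The set of $(\mathbf{x},\mathbf{y})$ for which $\operatorname{rank}\bigl(\sigma(\mathbf{x}_{S_1}\mathbf{y}_{S_2}^\top)\bigr)<r$ is the common zero set of the $r\times r$ minors of that submatrix, each an analytic function of $(\mathbf{x},\mathbf{y})$; equivalently it equals $Z(g_{S_1,S_2})$ where $g_{S_1,S_2}$ is the sum of their squares. Applying \Cref{lem:d-rank_gives_rank_partial} with $d_1\mapsto|S_1|$, $d_2\mapsto|S_2|$ tells us that $g_{S_1,S_2}$ is not identically zero as a function of $(\mathbf{x}_{S_1},\mathbf{y}_{S_2})$; pulling back along the coordinate projection $(\mathbf{x},\mathbf{y})\mapsto(\mathbf{x}_{S_1},\mathbf{y}_{S_2})$, which is surjective, keeps $g_{S_1,S_2}$ a not-identically-zero analytic function on $\mathbb{R}^{d_1}\times\mathbb{R}^{d_2}$, so $Z(g_{S_1,S_2})$ is a proper analytic subvariety there.

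To finish, combine the finitely many pairs by setting $G:=\prod_{(S_1,S_2)} g_{S_1,S_2}$ over all nonempty pairs. Since the real-analytic functions on the connected set $\mathbb{R}^{d_1}\times\mathbb{R}^{d_2}$ form an integral domain, $G$ is analytic and not identically zero, and $Z(G)=\bigcup_{(S_1,S_2)} Z(g_{S_1,S_2})$ is precisely the set of $(\mathbf{x},\mathbf{y})$ that fail the rank identity for at least one pair. Hence $\mathcal{S}=(\mathbb{R}^{d_1}\times\mathbb{R}^{d_2})\setminus Z(G)$ is the complement of the proper analytic subvariety $Z(G)$; in particular it has full measure, so generic $(\mathbf{x},\mathbf{y})$ lies in it. The only places needing care are bookkeeping — identifying a submatrix of $\sigma(\mathbf{x}\mathbf{y}^\top)$ with $\sigma$ applied to an outer product of subvectors, and checking that pullback along a coordinate projection and a finite product both preserve ``not identically zero'' — and I do not expect a genuine obstacle, since the substantive content already lives in \Cref{lem:d-rank_gives_rank_partial}.
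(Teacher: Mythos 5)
Your proposal is correct and takes essentially the same route as the paper's proof: reduce each nonempty pair $(S_1,S_2)$ to \Cref{lem:d-rank_gives_rank_partial} by identifying $\sigma(\mathbf{x}\mathbf{y}^\top)[S_1,S_2]$ with $\sigma(\mathbf{x}_{S_1}\mathbf{y}_{S_2}^\top)$ and pulling back along the surjective coordinate projection, then combine the finitely many pairs. The only difference is packaging — you represent each bad set as the zero set of a sum of squares of $r\times r$ minors and multiply these functions, invoking that real-analytic functions on a connected domain form an integral domain, whereas the paper takes the finite union of the pulled-back proper analytic subvarieties and argues properness via empty interiors; both are valid.
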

\begin{proof}
    See \Cref{pf:d-rank_gives_rank}.
\end{proof}

Finally, we combine \Cref{lem:katri_rao_rank_equality} and \Cref{lem:d-rank_gives_rank} to obtain the following characterization for when $\mathbf{M}$ has full row rank.
\begin{lemma}[Full-row-rank condition for non-polynomial analytic activations]\label{lem:full_M_rank_result_analytic}
    Let $\sigma: \mathbb{R}\to \mathbb{R}$ be a non-polynomial analytic function. If $dh \ge |\mathbf{K}|$, then for generic 
    $\mathbf{K}\in\mathbb{R}^{|\mathbf{K}|\times d}$ and $\mathbf{G}\in\mathbb{R}^{h\times d}$, the matrix
    \[
        \mathbf{M}\big(\sigma(\mathbf{G}\mathbf{K}^\top),\mathbf{K}\big) \;\in\; \mathbb{R}^{|\mathbf{K}|\times (dh)}
    \]
    has full row rank $|\mathbf{K}|$.
    The tuples for which full row rank fails form a proper analytic subvariety of the
    ambient parameter space.
\end{lemma}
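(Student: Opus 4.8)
The plan is to combine the two structural lemmas proven earlier. First, I would invoke \Cref{lem:rank_condition_app} (equivalently the main-text \Cref{lem:rank_condition}), which says that for generic $\mathbf{K}$ the matrix $\mathbf{M}(\mathbf{\Sigma}, \mathbf{K})$ has full row rank $|\mathbf{K}|$ if and only if the combinatorial rank condition $d\cdot\operatorname{rank}(\mathbf{\Sigma}[:,S]) \ge |S|$ holds for every $S \subseteq [|\mathbf{K}|]$. So it suffices to exhibit a $\mathbf{\Sigma}$ of the form $\sigma(\mathbf{G}\mathbf{K}^\top)$ that satisfies this condition for generic $(\mathbf{K},\mathbf{G})$. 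This is where \Cref{lem:d-rank_gives_rank} enters: it tells us that for a non-polynomial analytic $\sigma$, generic $\mathbf{x}\in\mathbb{R}^{d_1}$ and $\mathbf{y}\in\mathbb{R}^{d_2}$ give $\operatorname{rank}\!\big(\sigma(\mathbf{x}\mathbf{y}^\top)[S_1,S_2]\big) = \min\{|S_1|,|S_2|\}$ simultaneously over all submatrices.

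The second step is to apply \Cref{lem:d-rank_gives_rank} with $d_1 = h$, $d_2 = |\mathbf{K}|$, and the roles of $\mathbf{x},\mathbf{y}$ played by the (vectorized/generic) entries of $\mathbf{G}$ and $\mathbf{K}$ — more precisely, $\mathbf{\Sigma} = \sigma(\mathbf{G}\mathbf{K}^\top)$ has entries $\sigma(\langle \mathbf{g}_p, \mathbf{k}_i\rangle)$, and genericity of $\mathbf{G},\mathbf{K}$ makes the $hd$-dimensional and $|\mathbf{K}|d$-dimensional parameter vectors generic, so the lemma yields $\operatorname{rank}(\mathbf{\Sigma}[:,S]) = \min\{h,|S|\}$ for every $S\subseteq[|\mathbf{K}|]$ outside a proper analytic subvariety. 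Given this, for any $S$ we have $d\cdot\operatorname{rank}(\mathbf{\Sigma}[:,S]) = d\min\{h,|S|\}$; if $|S|\le h$ this is $d|S|\ge|S|$, and if $|S| > h$ it equals $dh \ge |\mathbf{K}| \ge |S|$ using the hypothesis $dh\ge|\mathbf{K}|$. So the rank condition of \Cref{lem:rank_condition_app} holds, and that lemma delivers full row rank on a non-empty Zariski-open (hence full-measure) set of $\mathbf{K}$ for this fixed generic $\mathbf{G}$.

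Finally, I would package the exceptional sets. The set of bad $(\mathbf{K},\mathbf{G})$ for \Cref{lem:d-rank_gives_rank} is a proper analytic subvariety of $\mathbb{R}^{|\mathbf{K}|\times d}\times\mathbb{R}^{h\times d}$; off it, the argument above produces full row rank; and the ``bad'' locus inside the full-rank analysis of \Cref{lem:katri_rao_matroid_union}/\Cref{lem:rank_condition_app} is also a proper algebraic/analytic set. Their union is again a proper analytic subvariety (a finite union of such sets), which establishes the ``generic $(\mathbf{K},\mathbf{G})$'' claim and the final sentence of the statement. The main obstacle — really the only nontrivial point — is making the genericity bookkeeping airtight: one must check that the intersection of several Zariski-open/analytic-complement conditions (genericity needed for \Cref{lem:d-rank_gives_rank}, genericity needed for the rank formula $R(\mathbf{\Sigma})$ in \Cref{lem:katri_rao_matroid_union} to hold as an equality for the \emph{specific} $\mathbf{\Sigma}$ arising here, and the positivity of the entries of the relevant $\mathbf{a}$ in \Cref{lem:katri_rao_rank_equality}) is still the complement of a proper subvariety and is non-empty; since analytic subvarieties are closed under finite unions and a finite intersection of dense open sets is dense open, this goes through, but it should be stated carefully rather than waved through.
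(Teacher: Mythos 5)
There is a genuine gap, and it sits exactly at the point you defer as ``bookkeeping.'' Your plan is to chain \Cref{lem:rank_condition_app} with a genericity statement about $\mathbf{\Sigma}=\sigma(\mathbf{G}\mathbf{K}^\top)$, but \Cref{lem:rank_condition_app} treats $\mathbf{\Sigma}$ as \emph{fixed} and asserts full row rank of $\mathbf{M}(\mathbf{\Sigma},\mathbf{K})$ only for $\mathbf{K}$ in a Zariski open set that \emph{depends on} $\mathbf{\Sigma}$. In your setting $\mathbf{\Sigma}$ is itself a function of $\mathbf{K}$, so for a given pair $(\mathbf{K}_0,\mathbf{G}_0)$ whose $\mathbf{\Sigma}_0$ satisfies the rank condition, the lemma tells you that \emph{generic} $\mathbf{K}'$ makes $\mathbf{M}(\mathbf{\Sigma}_0,\mathbf{K}')$ full rank --- but $(\mathbf{K}',\mathbf{G}_0)$ is no longer a witness for the statement, since $\mathbf{\Sigma}_0\neq\sigma(\mathbf{G}_0\mathbf{K}'^\top)$, and nothing guarantees the generating $\mathbf{K}_0$ lies in the good set for $\mathbf{\Sigma}_0$. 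This cannot be repaired by ``intersecting finitely many dense open sets,'' because the good set varies with $\mathbf{\Sigma}$, i.e.\ with $\mathbf{K}$ itself. The paper's proof avoids this circularity entirely: it first observes that the failure locus in the \emph{joint} space of $(\mathbf{K},\mathbf{G})$ is an analytic subvariety (minors are analytic), so it suffices to exhibit a single full-rank witness, and then it \emph{constructs} one explicitly --- a block-structured $\mathbf{K}$ with $\mathbf{K}[i,j]=\alpha_i\,\mathbbm{1}\{i\in\rmI_j\}$ for distinct nonzero $\alpha_i$, for which \Cref{lem:katri_rao_rank_equality} makes $\operatorname{rank}\mathbf{M}$ decompose as $\sum_j\operatorname{rank}(\mathbf{\Sigma}[:,\rmI_j])$ and each block $\mathbf{\Sigma}[:,\rmI_j]=\sigma\big(\mathbf{G}[:,j]\,\vec{\alpha}[\rmI_j]^\top\big)$ is a genuine rank-one-argument matrix to which \Cref{lem:d-rank_gives_rank_partial} applies. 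That witness construction is the heart of the proof and is absent from your proposal.

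A second, smaller issue: you invoke \Cref{lem:d-rank_gives_rank} to claim $\operatorname{rank}(\mathbf{\Sigma}[:,S])=\min\{h,|S|\}$ for generic matrices $\mathbf{G},\mathbf{K}$, but that lemma is stated for $\sigma(\mathbf{x}\mathbf{y}^\top)$ with \emph{vectors} $\mathbf{x},\mathbf{y}$, i.e.\ entries $\sigma(x_p y_i)$; the entries $\sigma(\langle\mathbf{g}_p,\mathbf{k}_i\rangle)$ with $d>1$ do not have this form, and ``vectorizing'' $\mathbf{G}$ and $\mathbf{K}$ does not produce the matrix $\sigma(\mathbf{G}\mathbf{K}^\top)$. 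The matrix version you want is true and can be proved by the same analytic-subvariety-plus-witness template (specialize $\mathbf{G},\mathbf{K}$ to have a single nonzero column, reducing to the rank-one case), but it needs to be stated and proved, not cited --- and notice that once you adopt that template you are back to needing an explicit witness, which is precisely what the paper's argument supplies.
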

\begin{proof}
    A more careful combination of the proofs of \Cref{lem:katri_rao_matroid_union,lem:rank_condition_app,lem:d-rank_gives_rank}. Full proof given in \Cref{pf:full_M_rank_result_analytic}.
\end{proof}

\Cref{lem:full_M_rank_result_analytic} is the last piece we need to prove the full encoder gadget theorem:
\begin{theorem}\label{thm:gated_encoder_gadget_works}
    Let $\sigma: \mathbb{R}\to \mathbb{R}$ be a non-polynomial analytic activation. If $dh \ge |\mathbf{K}|$ and $\text{\upshape rank}[\sigma]\ge h$, then following \Cref{alg:gated_encoder_gadget_app} with $\textsc{bias}$ either $\textsc{True}$ or $\textsc{False}$ produces an MLP $\mathbf{enc}(\vx)\coloneqq \mathbf{1}_h \left(\sigma(\mathbf{G}\mathbf{x}) \odot (\mathbf{A}\mathbf{x})\right)$ which satisfies $\mathbf{enc}(\vk_i) = o_i$ for all $i\in[|\mathbf{K}|].$
\end{theorem}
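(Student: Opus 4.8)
The plan is to reduce the statement to the linear-algebraic facts already established in this subsection, namely the reformulation of the encoder-fitting problem as a linear system in $\mathrm{vec}(\mathbf{A})$ together with \Cref{lem:full_M_rank_result_analytic}. First I would fix the gate matrix $\mathbf{G}$ (and, when \textsc{bias} is \textsc{True}, the gate bias $\vb_G$) as sampled in \Cref{alg:gated_encoder_gadget_app}, and observe that once $\mathbf{\Sigma} = \sigma(\mathbf{G}\mathbf{K}^\top + \vb_G\mathbf{1}_{|\mathbf{K}|}^\top)$ is frozen, the map $\mathbf{A}\mapsto \enc(\vk_i)$ is linear in the entries of $\mathbf{A}$ (and of $\vb_A, b_E$ in the bias case), since the $i$-th output equals $\mathbf{1}_h^\top\big(\mathbf{\Sigma}[:,i]\odot(\mathbf{A}\vk_i + \vb_A)\big) + b_E$. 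Expanding coordinate by coordinate and stacking over $i$ gives exactly $\mathbf{M}(\mathbf{\Sigma},\mathbf{K})\,\mathrm{vec}(\mathbf{A}) = \mathbf{o}$ in the bias-free case, and $\mathbf{\widetilde M}(\mathbf{\Sigma},\mathbf{K})\,\mathrm{vec}(\mathbf{A},\vb_A,b_E) = \mathbf{o}$ in the bias case, with the block structure $\mathbf{M} = [\operatorname{diag}(\mathbf{\Sigma}_1)\mathbf{K},\ldots,\operatorname{diag}(\mathbf{\Sigma}_h)\mathbf{K}]$ exactly as defined above. So it suffices to show that this system is solvable for the given target $\mathbf{o}\in\mathbb{R}^{|\mathbf{K}|}$.

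Next I would argue solvability via full row rank. The system $\mathbf{M}\mathbf{v}=\mathbf{o}$ admits a solution for every $\mathbf{o}$ precisely when $\mathbf{M}(\mathbf{\Sigma},\mathbf{K})$ has full row rank $|\mathbf{K}|$. Since $\mathbf{K}$ is assumed generic, the algorithm draws $\mathbf{G}$ generically (e.g.\ i.i.d.\ Gaussian), and $dh\ge|\mathbf{K}|$ by hypothesis, \Cref{lem:full_M_rank_result_analytic} applies directly: for a non-polynomial analytic $\sigma$, the tuples $(\mathbf{K},\mathbf{G})$ for which $\mathbf{M}(\sigma(\mathbf{G}\mathbf{K}^\top),\mathbf{K})$ fails to have full row rank form a proper analytic subvariety, hence the full-rank condition holds with probability one over the sampled $\mathbf{G}$; the hypothesis $\text{rank}[\sigma]\ge h$ is likewise automatic for such $\sigma$ by \Cref{lem:rank_determines_independence_sigma}. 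For the bias variant, note that $\mathbf{M}(\mathbf{\Sigma},\mathbf{K})$ is a column-submatrix of $\mathbf{\widetilde M}(\mathbf{\Sigma},\mathbf{K})$ with the same set of rows (the extra columns of $\mathbf{\widetilde M}$ come from the appended all-ones column of $\widetilde{\mathbf{K}}$ and the final $\mathbf{1}_{|\mathbf{K}|}$ column), so full row rank of $\mathbf{M}$ immediately yields full row rank of $\mathbf{\widetilde M}$, and the bias system is solvable as well.

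Finally I would check that the algorithm's bookkeeping produces a valid encoder. Solving $\mathbf{\widetilde M}\mathbf{v}=\mathbf{o}$ and reshaping $\mathbf{v}$ into $(\mathbf{A},\vb_A,b_E)$ as prescribed (with $\vb_A=\mathbf{0}$, $b_E=0$ when \textsc{bias} is \textsc{False}), together with the down-projection $\mathbf{1}_h^\top$ required by the statement, gives $\enc(\vk_i)=o_i$ for all $i\in[|\mathbf{K}|]$ by the linearity identity from the first paragraph; fixing the down-projection to $\mathbf{1}_h^\top$ is without loss of generality via the substitution $\mathbf{A}\mapsto\operatorname{diag}(\mathbf{E})\mathbf{A}$, $\vb_A\mapsto\operatorname{diag}(\mathbf{E})\vb_A$ noted earlier. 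The genuine technical work lives entirely in \Cref{lem:full_M_rank_result_analytic}, which is proved separately; within this theorem the only points requiring care are getting the exact form of the linear system (in particular the $\operatorname{diag}(\mathbf{\Sigma}_k)$ block structure) and handling the bias and bias-free variants uniformly — the rest is direct assembly.
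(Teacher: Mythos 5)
Your proposal is correct and follows essentially the same route as the paper: fix $\mathbf{G}$ (and $\vb_G$), reduce the fitting problem to the linear system $\mathbf{M}(\mathbf{\Sigma},\mathbf{K})\,\mathrm{vec}(\mathbf{A})=\mathbf{o}$ (or its biased analogue), and invoke \Cref{lem:full_M_rank_result_analytic} to get full row rank and hence solvability, with the bias case handled by the same submatrix observation the paper makes. The extra details you supply (the explicit linearity identity, the $\mathbf{1}_h^\top$ normalization, and the remark that $\operatorname{rank}[\sigma]\ge h$ is automatic for non-polynomial analytic $\sigma$) are all consistent with the paper's surrounding exposition and do not change the argument.
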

\begin{proof}
By Lemma~\ref{lem:full_M_rank_result_analytic}, under the stated conditions (no-bias or biased case) and for generic draws of $\mathbf{G}$ (setting $\mathbf{b}_G = \mathbf{0}_{h}$), the corresponding matrix $\mathbf{M}(\mathbf{\Sigma},\mathbf{K})$ or $\mathbf{\widetilde M}(\mathbf{\Sigma},\mathbf{K})$ have full row rank. Hence, for any target vector $\mathbf{o}$, the linear system in $\textrm{vec}(\mathbf{A})$ (or $\textrm{vec}(\mathbf{A},\mathbf{b}_A,b_E)$) is solvable, and the parameters returned by \Cref{alg:gated_encoder_gadget_app} satisfy $\enc(\vk_i)=o_i$ for all $i\in[|\mathbf{K}|]$.
\end{proof}

\subsubsection{Non-Gated Encoders Reduce to Gated Encoders} \label{subapp:non-gated_enc_theory}

In Appendix \ref{sec:new_encoding_results}, it is shown that these results extend to non-gated MLPs (up to an arbitrarily small $\delta$ error) by implementing a neural tangent kernel (NTK) approximation similar to \citet{nichani2024understandingfactualrecalltransformers}. Interestingly, when this generalization is applied to ReLU MLPs, a construction is obtained which generalizes that from \citet{bubeck2020networksizeweightssize} while utilizing up to 4$\times$ fewer parameters\footnote{In fact, this generalization of \citet{bubeck2020networksizeweightssize} matches the degrees-of-freedom-based parameter count lower bound up to lower order terms.}. Additionally, while it is possible to use the encoder construction from \citet{bubeck2020networksizeweightssize} directly in the full fact-storing construction, we found that the resulting MLPs are not usable by transformers, whereas the MLPs constructed herein are.

The construction, detailed in \Cref{alg:encoder_gadget_app}, approximates a gated MLP that uses the activation's derivative, $\sigma'$, with a standard non-gated MLP that uses $\sigma$. This is achieved in three steps:

\begin{enumerate}
    \item \textbf{Construct a ``Derivative" Gadget:} First, \Cref{alg:encoder_gadget_app} (Line 1) calls \Cref{alg:gated_encoder_gadget_app} to find the parameters of an intermediate gated gadget. This call uses a hidden size of $h/2$ (where $h$ is the hidden size required by \Cref{alg:encoder_gadget_app}) and replaces the activation $\sigma$ with its derivative, $\frac{d\sigma}{dx}$. Let the parameters returned by this call be $(\mathbf{G}_{\text{deriv}}, \mathbf{b}_{G, \text{deriv}}, \mathbf{A}_{\text{deriv}}, \mathbf{b}_{A, \text{deriv}}, b_{E})$ where $\mathbf{G}_{\text{deriv}}, \mathbf{A}_{\text{deriv}} \in \mathbb{R}^{(h/2) \times d}$ and $\mathbf{b}_{G, \text{deriv}}, \mathbf{b}_{A, \text{deriv}} \in \mathbb{R}^{h/2}$. The resulting encoder (which \Cref{alg:encoder_gadget_app} temporarily calls $\enc(\vx)$ on Line 1) is:
    \begin{equation*}
        \enc_{\text{deriv}}(\mathbf{x}) = \mathbf{1}_{h/2}^\top \left(\sigma'(\mathbf{G}_{\text{deriv}}\mathbf{x} + \mathbf{b}_{G, \text{deriv}}) \odot (\mathbf{A}_{\text{deriv}}\mathbf{x} + \mathbf{b}_{A, \text{deriv}})\right) + b_{E}
    \end{equation*}
    This $\enc_{\text{deriv}}$ is constructed to map $\mathbf{k}_i$ to the target output $o_i$ for all $i \in [|\mathbf{K}|]$.

    \item \textbf{Find Approximation Parameter $\boldsymbol{\epsilon}$:} Second (Lines 3-6), the algorithm finds a small vector $\boldsymbol{\epsilon} \in \mathbb{R}^{h/2}$. This $\boldsymbol{\epsilon}$ is chosen such that a central difference approximation of $\enc_{\text{deriv}}$ (using $\sigma$) is within a tolerance $\delta$ of the target values $o_i \approx \enc_{\text{deriv}}(\mathbf{k}_i)$ for all keys $\mathbf{k}_i$.

    \item \textbf{Construct Final Non-Gated Gadget:} Finally (Lines 8-12), the algorithm uses the intermediate parameters and $\boldsymbol{\epsilon}$ to define the parameters of the \emph{final} non-gated MLP, which has the target hidden size $h = 2 \times (h/2)$. The parameters for the returned $\enc(\mathbf{x})$ are:
    \begin{align*}
        \mathbf{A} &\coloneqq \begin{bmatrix}
            \mathbf{G}_{\text{deriv}} + \operatorname{diag}(\boldsymbol{\epsilon}) \mathbf{A}_{\text{deriv}} \\
            \mathbf{G}_{\text{deriv}} - \operatorname{diag}(\boldsymbol{\epsilon}) \mathbf{A}_{\text{deriv}}
        \end{bmatrix} \in \mathbb{R}^{h\times d} \\
        \mathbf{b}_A &\coloneqq \begin{bmatrix}
            \mathbf{b}_{G, \text{deriv}} + \boldsymbol{\epsilon} \odot \mathbf{b}_{A, \text{deriv}} \\
            \mathbf{b}_{G, \text{deriv}} - \boldsymbol{\epsilon} \odot \mathbf{b}_{A, \text{deriv}}
        \end{bmatrix} \in \mathbb{R}^{h} \\
        \mathbf{E} &\coloneqq \begin{bmatrix}
            \frac{1}{2}\boldsymbol{\epsilon}^{-1} & -\frac{1}{2}\boldsymbol{\epsilon}^{-1}
        \end{bmatrix} \in \mathbb{R}^{1\times h}
    \end{align*}
    The final returned encoder is $\enc(\vx)\coloneqq \mathbf{E}\sigma(\mathbf{A}\mathbf{x} + \mathbf{b}_A) + b_E$, which by construction approximates the target outputs $\mathbf{o}$.
\end{enumerate}

Intuitively, the final non-gated gadget implements a finite-difference approximation of the ``derivative'' gadget. Plugging in the definitions of $\mathbf{A},\mathbf{b}_A,\mathbf{E}$, we obtain for any $\vx$:
\[
\enc(\vx)
= \sum_{r=1}^{h/2} \frac{1}{2\epsilon_r}
\Bigl[\sigma\bigl(g_r(\vx) + \epsilon_r a_r(\vx)\bigr)
      - \sigma\bigl(g_r(\vx) - \epsilon_r a_r(\vx)\bigr)\Bigr] + b_E,
\]
where $g_r(\vx)$ and $a_r(\vx)$ are the $r$-th coordinates of $\mathbf{G}_{\text{deriv}}\vx + \vb_{G,\text{deriv}}$ and $\mathbf{A}_{\text{deriv}}\vx + \vb_{A,\text{deriv}}$, respectively. By Taylor expansion (or the mean value theorem), each bracket implements
\[
\frac{\sigma(g_r+\epsilon_r a_r)-\sigma(g_r-\epsilon_r a_r)}{2\epsilon_r}
\approx \sigma'(g_r)\,a_r,
\]
so $\enc(\vx)$ approximates
\[
\enc_{\text{deriv}}(\vx)
= \sum_{r=1}^{h/2} \sigma'(g_r(\vx))\,a_r(\vx) + b_E.
\]
By construction of $\epsilon\in\bigcap_i S_i$, this approximation error is at most $\delta$ on all keys $\vk_i$, so the returned non-gated encoder matches the desired targets up to tolerance $\delta$.

\paragraph{Special Case: ReLU Activation}
Here, we show the generality of our framework by showing that \citep{bubeck2020networksizeweightssize} is a special case.
In the special case where the activation function is the ReLU function, the derivative $\sigma'(\mathbf{x}) = \mathbf{1}_{\{\mathbf{x} > 0\}}$ is used to construct the intermediate gadget. The final encoder returned by \Cref{alg:encoder_gadget_app} (Line 12) implements the central difference approximation:
\begin{align*}
    \enc(\mathbf{x}) = \begin{bmatrix} \frac{1}{2}\boldsymbol{\epsilon}^{-1} & -\frac{1}{2}\boldsymbol{\epsilon}^{-1} \end{bmatrix} \operatorname{ReLU}\left( \begin{bmatrix}
        \mathbf{G}_{\text{deriv}} + \operatorname{diag}(\boldsymbol{\epsilon}) \mathbf{A}_{\text{deriv}} \\
        \mathbf{G}_{\text{deriv}} - \operatorname{diag}(\boldsymbol{\epsilon}) \mathbf{A}_{\text{deriv}}
    \end{bmatrix} \mathbf{x} + \begin{bmatrix}
        \mathbf{b}_{G, \text{deriv}} + \boldsymbol{\epsilon} \odot \mathbf{b}_{A, \text{deriv}} \\
        \mathbf{b}_{G, \text{deriv}} - \boldsymbol{\epsilon} \odot \mathbf{b}_{A, \text{deriv}}
    \end{bmatrix} \right) + b_E.
\end{align*}
If a forward difference approximation were used instead (as in \citet{bubeck2020networksizeweightssize}), the form would be:
\begin{equation*}
    \text{MLP}(\mathbf{x}) = \mathbf{1}_{h/2}^\top \left(\operatorname{diag}(\boldsymbol{\epsilon})^{-1}\Big(\operatorname{ReLU}(\mathbf{G}_{\text{deriv}} \mathbf{x} + \mathbf{b}_{G, \text{deriv}} + \operatorname{diag}(\boldsymbol{\epsilon}) (\mathbf{A}_{\text{deriv}} \mathbf{x} + \mathbf{b}_{A, \text{deriv}})) - \operatorname{ReLU}(\mathbf{G}_{\text{deriv}} \mathbf{x} + \mathbf{b}_{G, \text{deriv}})\Big)\right) + b_E.
\end{equation*}
The portion inside the outer brackets is the derivative neuron from \citet{bubeck2020networksizeweightssize}.

Note that one can also pull the $\operatorname{diag}(\boldsymbol{\epsilon})^{-1}$ term inside the brackets and define $\boldsymbol{\lambda}$ such that $\boldsymbol{\epsilon} \odot \boldsymbol{\lambda} = \mathbf{1}$ (element-wise) to get a ``Lagrangian formulation":
\begin{equation*}
    \text{MLP}(\mathbf{x}) = \mathbf{1}_{h/2}^\top \Big(\operatorname{ReLU}(\operatorname{diag}(\boldsymbol{\lambda})(\mathbf{G}_{\text{deriv}} \mathbf{x} + \mathbf{b}_{G, \text{deriv}}) + (\mathbf{A}_{\text{deriv}} \mathbf{x} + \mathbf{b}_{A, \text{deriv}})) - \operatorname{diag}(\boldsymbol{\lambda}) \operatorname{ReLU}(\mathbf{G}_{\text{deriv}} \mathbf{x} + \mathbf{b}_{G, \text{deriv}})\Big) + b_E.
\end{equation*}
The ReLU case possesses the property that this forward difference approximation is exactly equal to the corresponding gated MLP on a set of points $\mathbf{x}_i$ as long as $\boldsymbol{\lambda} \ge -\min_{i} \frac{\mathbf{A}_{\text{deriv}} \mathbf{x}_i + \mathbf{b}_{A, \text{deriv}}}{\mathbf{G}_{\text{deriv}} \mathbf{x}_i + \mathbf{b}_{G, \text{deriv}}}$ (element-wise). In particular, if $\min_{i} \frac{\mathbf{A}_{\text{deriv}} \mathbf{x}_i + \mathbf{b}_{A, \text{deriv}}}{\mathbf{G}_{\text{deriv}} \mathbf{x}_i + \mathbf{b}_{G, \text{deriv}}} \ge 0$, then $\boldsymbol{\lambda} = \mathbf{0}$ can be set to achieve the exact result, which avoids extra neurons. In contrast, the \citet{bubeck2020networksizeweightssize} derivative neuron formulation would diverge in this case.

\subsection{Additional Details for \Cref{subsec:construction_decoder}}\label{sec:new_decoding_results}

We prove lower bounds on $\rho$ and detail our decoding construction. We use a slightly more practical definition of $\rho$ as follows when doing computations. However, since $\rho \geq \rho_{\min}$ by definition, similar statements hold for $\rho$. 

\begin{definition}\label{def:rho_R_tau}
    For vectors $\compressedOutEmbedding_1,\dots,\compressedOutEmbedding_{\numVectors}\in\R^{d}$ 
    and $\compressedAuxEmbedding_1, ..., \compressedAuxEmbedding_{\numVectors} \in \R^{d}$, 
    we define $\outCompEmbedMat = [\compressedOutEmbedding_1,\ldots, \compressedOutEmbedding_{\numVectors}]^\top \in \R^{{\numVectors}\times d}$ and $\targetDirectionMatrix = [\compressedAuxEmbedding_1, ..., \compressedAuxEmbedding_{\numVectors}]^\top \in \R^{{\numVectors} \times d}$. Let
    \begin{align*}
        \rho_{\min} (\outCompEmbedMat, \targetDirectionMatrix) &= \min_i \min_{j\neq i} \frac{\langle \compressedOutEmbedding_i-\compressedOutEmbedding_j,\compressedAuxEmbedding_i\rangle}{\norm{\compressedOutEmbedding_i - \compressedOutEmbedding_j} \norm{\compressedAuxEmbedding_i}}
    \end{align*}
    For ease of notation, we often write $\rho_{\min}:=\rho_{\min}(\outCompEmbedMat,\targetDirectionMatrix)$. Occasionally, we refer to the set $\{\compressedOutEmbedding_i\}_{i = 1}^{\numVectors}$ as our set of output embeddings, and the set $\{\compressedAuxEmbedding_i\}_{i = 1}^{\numVectors}$ as our set of auxiliary directions. 
\end{definition}

We now prove our full construction. In this case, we have that $\rho(\outCompEmbedMat)$ as defined in \Cref{subsec:construction_decoder} satisfies $\rho(\outCompEmbedMat) \geq \rho_{\min}(\outCompEmbedMat, \targetDirectionMatrix)$.

\begin{theorem}\label{thm:uniform_decoding_rho}
Assume $\compressedOutEmbedding_1,\ldots,\compressedOutEmbedding_{\numVectors} \stackrel{\text{i.i.d.}}{\sim}\mathrm{Unif}(\Sph)$ with $d\ge 2$ and for simplicity set\footnote{One may wonder why we can set $\compressedAuxEmbedding_i = \compressedOutEmbedding_i$ in this step. The reason we is that it simplifies the proof and shows existence of a lower bound on $\rho_{\min}$. However, there may be a better choice of $\compressedAuxEmbedding_i$ which yields a tighter bound.} $\compressedAuxEmbedding_i = \compressedOutEmbedding_i$ for all $i$. Then, with probability at least $1-\delta$,
\[
\qquad
\rho_{\min} \ \ge\ \sqrt{\frac{1-\sqrt{\tfrac{2}{d}\,\ln\!\tfrac{\binom{\numVectors}{2}}{\delta}}}{2}}\ .
\qquad
\]
\end{theorem}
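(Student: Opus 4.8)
The plan is to bound $\rho_{\min}$ from below by controlling the worst-case normalized inner product $\langle \compressedOutEmbedding_i - \compressedOutEmbedding_j, \compressedOutEmbedding_i\rangle / (\|\compressedOutEmbedding_i - \compressedOutEmbedding_j\|)$ over all pairs $i \neq j$ (recall $\|\compressedAuxEmbedding_i\| = \|\compressedOutEmbedding_i\| = 1$ since the $\compressedOutEmbedding_i$ are on the sphere and $\compressedAuxEmbedding_i = \compressedOutEmbedding_i$). First I would rewrite the numerator: $\langle \compressedOutEmbedding_i - \compressedOutEmbedding_j, \compressedOutEmbedding_i\rangle = 1 - \langle \compressedOutEmbedding_i, \compressedOutEmbedding_j\rangle$, and the denominator satisfies $\|\compressedOutEmbedding_i - \compressedOutEmbedding_j\|^2 = 2 - 2\langle \compressedOutEmbedding_i, \compressedOutEmbedding_j\rangle = 2(1 - \langle \compressedOutEmbedding_i,\compressedOutEmbedding_j\rangle)$. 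Writing $t_{ij} := \langle \compressedOutEmbedding_i, \compressedOutEmbedding_j\rangle \in [-1,1]$, the ratio becomes $(1 - t_{ij}) / \sqrt{2(1-t_{ij})} = \sqrt{(1-t_{ij})/2}$, which is a decreasing function of $t_{ij}$. Hence $\rho_{\min} = \sqrt{(1 - \max_{i\neq j} t_{ij})/2}$, and it suffices to show $\max_{i \neq j} t_{ij} \le \sqrt{\tfrac{2}{d}\ln\tfrac{\binom{\numVectors}{2}}{\delta}}$ with probability at least $1 - \delta$.

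Next I would invoke the standard concentration bound for the inner product of two independent uniform points on $\Sph$: for a fixed pair, $\Pr[\,t_{ij} \ge s\,] \le \exp(-d s^2 / 2)$ for $s \in (0,1)$. (This follows from the fact that $t_{ij}$ has the same distribution as a single coordinate of a uniform point on $\Sph$, whose density is proportional to $(1-s^2)^{(d-3)/2}$; one bounds $(1-s^2)^{(d-3)/2} \le e^{-(d-3)s^2/2}$ and integrates, or one cites the sub-Gaussian tail with variance proxy $1/d$ — either is a known result, e.g.\ from \citet{vershynin2018high}.) Then I would apply a union bound over all $\binom{\numVectors}{2}$ pairs: $\Pr[\max_{i\neq j} t_{ij} \ge s] \le \binom{\numVectors}{2} e^{-ds^2/2}$. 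Setting this equal to $\delta$ and solving for $s$ gives $s = \sqrt{\tfrac{2}{d}\ln\tfrac{\binom{\numVectors}{2}}{\delta}}$, so with probability at least $1-\delta$ we have $\max_{i\neq j} t_{ij} < s$, and therefore $\rho_{\min} \ge \sqrt{(1-s)/2}$, which is exactly the claimed bound.

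The main obstacle — really the only nontrivial point — is justifying the precise sub-Gaussian tail constant, i.e.\ that $\Pr[t_{ij} \ge s] \le e^{-ds^2/2}$ rather than some version with a worse constant in the exponent. I would handle this either by the explicit density computation for the marginal of a uniform spherical coordinate (bounding $\ln(1-s^2) \le -s^2$ and absorbing the lower-order $(d-3)$ vs.\ $d$ discrepancy, which only helps for $d \ge 3$; the $d = 2$ edge case can be checked directly since $t_{ij} = \cos\theta$ with $\theta$ uniform and the bound still holds), or by citing the standard fact that a uniform point on $\Sph$ is sub-Gaussian with variance proxy $1/d$. The rest is bookkeeping: confirming monotonicity of $\sqrt{(1-t)/2}$, the identity $\rho(\outCompEmbedMat) \ge \rho_{\min}(\outCompEmbedMat, \targetDirectionMatrix)$ from the definition, and the algebra of solving for $s$.
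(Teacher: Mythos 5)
Your proposal follows essentially the same route as the paper's proof: rewrite each ratio as $\sqrt{(1-\langle \mathbf{v}_i,\mathbf{v}_j\rangle)/2}$, reduce to bounding $\max_{i\neq j}\langle \mathbf{v}_i,\mathbf{v}_j\rangle$, apply the single-pair tail bound $\Pr[\langle \mathbf{v}_i,\mathbf{v}_j\rangle \ge s]\le e^{-d s^2/2}$ by conditioning on one of the two vectors, and union bound over the pairs before solving for $s$; this is exactly the paper's argument, which obtains the tail bound from an optimal-constant Lévy-type concentration inequality for the $1$-Lipschitz function $x\mapsto\langle x,a\rangle$ on the sphere. The only place your sketch is loose is the justification of the constant $d/2$ in the exponent: bounding the marginal density via $(1-s^2)^{(d-3)/2}\le e^{-(d-3)s^2/2}$ and integrating yields exponent $(d-3)s^2/2$, which is \emph{weaker} than what is needed (the $(d-3)$ versus $d$ discrepancy hurts rather than helps), and the generic sub-Gaussian coordinate bound in \citet{vershynin2018high} comes with an unspecified absolute constant and a prefactor of $2$, neither of which reproduces the exact expression in the theorem. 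To get the clean $e^{-d s^2/2}$ you should cite a sharp spherical-cap estimate or an optimal-constant spherical concentration result (as the paper does); with that reference in place the rest of your argument goes through verbatim.
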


\begin{proof}
    See \Cref{pf:NEW_main_decoding}
\end{proof}

\begin{theorem}\label{thm:NEW_main_decoding}
\label{thm:main_ex212}
Let $\gaussianMatrix \in \mathbb{R}^{m \times d}$ have i.i.d $\mathcal{N}(0,1)$ entries. Set $\rmM := \frac{1}{m} \gaussianMatrix^\top$ and, for each $i \in [\numVectors]$, define $\mathbf \codeMatrix[i]:=\gaussianMatrix\, \compressedAuxEmbedding_i\in\R^m$. Let $\rho_{\min}=\rho_{\min}(\mathbf V,\mathbf U)$ be as in \Cref{def:rho_R_tau}, and fix a failure probability $\delta \in (0,1)$. If 
\[
m\ \ge\ \frac{32}{\rho_{\min}^2}\ \ln\!\frac{4\numVectors(\numVectors-1)}{\delta},
\]
and $\rho_{\min} > 0$, then with probability at least $1 - \delta$ the following holds simultaneously for all $i \neq j$: 
\begin{align*}
    \langle \mathbf \compressedOutEmbedding_i,\rmM\mathbf \codeMatrix[i]\rangle-\langle \mathbf \compressedOutEmbedding_j,\rmM\mathbf \codeMatrix[i]\rangle
&\;\ge\; \frac{\rho_{\min}}{2} \norm{\compressedOutEmbedding_i - \compressedOutEmbedding_j}\norm{\compressedAuxEmbedding_i} > 0
\end{align*}
\end{theorem}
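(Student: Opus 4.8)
The plan is to reduce the claim to the Johnson--Lindenstrauss inner-product preservation property (\Cref{jl-ip-result}) together with the defining inequality of $\rho_{\min}$, and then union bound over all ordered pairs. The first step is a purely algebraic rewriting. Since $\rmM = \frac1m\gaussianMatrix^\top$ and $\codeMatrix[i] = \gaussianMatrix\compressedAuxEmbedding_i$, for any $\vz \in \R^d$ we have $\langle \vz, \rmM\codeMatrix[i]\rangle = \frac1m \vz^\top \gaussianMatrix^\top\gaussianMatrix\compressedAuxEmbedding_i = \frac1m\langle \gaussianMatrix\vz, \gaussianMatrix\compressedAuxEmbedding_i\rangle$. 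Taking $\vz = \compressedOutEmbedding_i - \compressedOutEmbedding_j$, the left-hand side of the claimed inequality is exactly $\frac1m\langle \gaussianMatrix(\compressedOutEmbedding_i - \compressedOutEmbedding_j), \gaussianMatrix\compressedAuxEmbedding_i\rangle$, i.e.\ the random-projection estimator of the deterministic inner product $\langle \compressedOutEmbedding_i - \compressedOutEmbedding_j, \compressedAuxEmbedding_i\rangle$. So I am precisely in the setting where JL-type guarantees apply, with the randomness living entirely in $\gaussianMatrix$ and the vectors $\compressedOutEmbedding_i - \compressedOutEmbedding_j$ and $\compressedAuxEmbedding_i$ fixed.

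Next I would fix a pair $i \neq j$ and show the estimator is close to its target with high probability. Writing $\hat a = (\compressedOutEmbedding_i - \compressedOutEmbedding_j)/\|\compressedOutEmbedding_i - \compressedOutEmbedding_j\|$ and $\hat b = \compressedAuxEmbedding_i/\|\compressedAuxEmbedding_i\|$, each row of $(\gaussianMatrix\hat a, \gaussianMatrix\hat b)$ is an i.i.d.\ mean-zero bivariate Gaussian with correlation $\langle \hat a, \hat b\rangle$, so $\frac1m\langle \gaussianMatrix\hat a, \gaussianMatrix\hat b\rangle$ has mean $\langle \hat a, \hat b\rangle$. By the polarization identity $xy = \frac14[(x+y)^2 - (x-y)^2]$ this estimator is a difference of two scaled averages of squared Gaussians (namely $\frac1m\|\gaussianMatrix(\hat a + \hat b)\|^2$ and $\frac1m\|\gaussianMatrix(\hat a - \hat b)\|^2$, each a constant times a $\chi^2_m$), and standard $\chi^2$ concentration (Laurent--Massart) controls each average. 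Taking distortion parameter $\epsilon = \rho_{\min}/2$ and carefully bookkeeping the constants yields that, provided $m \ge \frac{32}{\rho_{\min}^2}\ln\frac{4\numVectors(\numVectors-1)}{\delta}$, we have $|\frac1m\langle \gaussianMatrix(\compressedOutEmbedding_i - \compressedOutEmbedding_j), \gaussianMatrix\compressedAuxEmbedding_i\rangle - \langle \compressedOutEmbedding_i - \compressedOutEmbedding_j, \compressedAuxEmbedding_i\rangle| \le \frac{\rho_{\min}}{2}\|\compressedOutEmbedding_i - \compressedOutEmbedding_j\|\,\|\compressedAuxEmbedding_i\|$ with failure probability at most $\delta/(\numVectors(\numVectors-1))$ for this pair; the $4\numVectors(\numVectors-1)$ inside the logarithm accounts for the two one-sided $\chi^2$ deviations needed for each of the two polarization terms, taken over all ordered pairs. (Alternatively one can simply quote \Cref{jl-ip-result}, which gives the same statement up to an unspecified constant; reproving it from Laurent--Massart is what pins down the explicit $32$.)

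Finally I would combine this with the definition of $\rho_{\min}$ (\Cref{def:rho_R_tau}), which gives $\langle \compressedOutEmbedding_i - \compressedOutEmbedding_j, \compressedAuxEmbedding_i\rangle \ge \rho_{\min}\|\compressedOutEmbedding_i - \compressedOutEmbedding_j\|\,\|\compressedAuxEmbedding_i\|$. On the complement of the bad event for the pair $(i,j)$, the score gap is then at least $\rho_{\min}\|\compressedOutEmbedding_i - \compressedOutEmbedding_j\|\|\compressedAuxEmbedding_i\| - \frac{\rho_{\min}}{2}\|\compressedOutEmbedding_i - \compressedOutEmbedding_j\|\|\compressedAuxEmbedding_i\| = \frac{\rho_{\min}}{2}\|\compressedOutEmbedding_i - \compressedOutEmbedding_j\|\|\compressedAuxEmbedding_i\|$, which is strictly positive since $\rho_{\min} > 0$ and $\compressedOutEmbedding_i \neq \compressedOutEmbedding_j$. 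A union bound over the $\numVectors(\numVectors - 1)$ ordered pairs $(i,j)$ with $i \neq j$ then gives the stated simultaneous guarantee with probability at least $1 - \delta$.

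The conceptual content is just ``a random Gaussian projection preserves inner products well enough,'' so the only real obstacle is the quantitative bookkeeping in the middle step: tracking the two-sided Laurent--Massart deviations through the polarization decomposition and checking that $m \ge 32\ln(4\numVectors(\numVectors-1)/\delta)/\rho_{\min}^2$ is exactly enough to force the total deviation below $\frac{\rho_{\min}}{2}\|\compressedOutEmbedding_i - \compressedOutEmbedding_j\|\|\compressedAuxEmbedding_i\|$. Everything else --- the algebraic rewriting, the use of $\rho_{\min}$, and the union bound --- is routine.
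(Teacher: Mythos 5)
Your proposal is correct and follows essentially the same route as the paper's proof: the paper also reduces the score gap to $\langle \compressedOutEmbedding_i-\compressedOutEmbedding_j,\compressedAuxEmbedding_i\rangle$ plus a bilinear error term of the projection $\Phi=\tfrac{1}{\sqrt m}\gaussianMatrix$, controls that error uniformly over all pairs via the polarization identity and a two-sided $\chi^2$ tail bound with distortion $\varepsilon=\rho_{\min}/2$ (its Lemma~\ref{lem:angleJL} and Corollary~\ref{cor:bilinear_ex212}), and then union bounds over the $\le 2\numVectors(\numVectors-1)$ sum/difference vectors, which is exactly your bookkeeping yielding the $4\numVectors(\numVectors-1)$ inside the logarithm and the constant $32$. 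The only cosmetic difference is that the paper packages the inner-product preservation step as a standalone JL-type lemma rather than invoking Laurent--Massart by name.
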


\begin{proof}
    See \Cref{pf:real_NEW_main_decoding}
\end{proof}

\begin{corollary}
    For $\delta = \frac{1}{\operatorname{poly} d}$, $\numVectors = \operatorname{poly}(d)$, large enough $d$, and for output embeddings $\{\compressedOutEmbedding_i\}_{i = 1}^{\numVectors}$ as in \Cref{thm:uniform_decoding_rho}, the set of output embeddings are softmax decodable with probability $1 - \delta$ as long as the conditions in \Cref{thm:NEW_main_decoding} on $m$ hold. 
\end{corollary}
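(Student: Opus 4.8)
The plan is to chain \Cref{thm:uniform_decoding_rho} with \Cref{thm:NEW_main_decoding} and then invoke \Cref{lem:softmax_reformulation}, taking a union bound over the two independent sources of randomness: the draw of the embeddings $\{\compressedOutEmbedding_i\}$ and the draw of the Gaussian matrix $\gaussianMatrix$. Throughout I would keep the choice $\compressedAuxEmbedding_i = \compressedOutEmbedding_i$ used in both theorems, so that $\rho_{\min}(\outCompEmbedMat,\targetDirectionMatrix)$ and the codes $\codeMatrix[i] = \gaussianMatrix\,\compressedAuxEmbedding_i$ are simultaneously meaningful, and I would record that $\|\compressedAuxEmbedding_i\| = \|\compressedOutEmbedding_i\| = 1$ and that the $\compressedOutEmbedding_i$ are a.s.\ distinct, so $\|\compressedOutEmbedding_i-\compressedOutEmbedding_j\|>0$ for $i\neq j$.

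First I would apply \Cref{thm:uniform_decoding_rho} with failure probability $\delta/2$, obtaining an event $\mathcal A$ of probability at least $1-\delta/2$ on which
\[
\rho_{\min} \;\ge\; \rho_0 \;:=\; \sqrt{\frac{1-\sqrt{\tfrac{2}{d}\,\ln\!\tfrac{2\binom{\numVectors}{2}}{\delta}}}{2}}.
\]
Since $\numVectors = \poly(d)$ and $\delta^{-1} = \poly(d)$, the quantity $\ln\!\big(2\binom{\numVectors}{2}/\delta\big)$ is $O(\log d)$, so $\tfrac{2}{d}\ln\!\big(2\binom{\numVectors}{2}/\delta\big) = O(\log d / d) \to 0$; hence for all large enough $d$ we have $\rho_0 \ge \tfrac12$ (in fact $\rho_0 \to 1/\sqrt2$), and so $\rho_{\min} = \Theta(1)$ on $\mathcal A$. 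This is precisely the step that makes ``$d$ large enough'' the natural hypothesis.

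Next, conditioning on $\mathcal A$, I would apply \Cref{thm:NEW_main_decoding} with failure probability $\delta/2$: the stated $m$-condition, combined with $\rho_{\min}\ge\rho_0$, still satisfies the hypothesis needed at level $\delta/2$ (splitting the budget only changes the required $m$ by an additive $O(1)$ inside the logarithm). This gives, with probability at least $1-\delta/2$ over $\gaussianMatrix$ and with $\rmM = \tfrac1m\gaussianMatrix^\top$,
\[
\langle \rmM\,\codeMatrix[i],\,\compressedOutEmbedding_i\rangle - \langle \rmM\,\codeMatrix[i],\,\compressedOutEmbedding_j\rangle \;\ge\; \frac{\rho_{\min}}{2}\,\|\compressedOutEmbedding_i-\compressedOutEmbedding_j\| \;>\; 0 \qquad \text{for all } i\neq j .
\]
By \Cref{lem:softmax_reformulation}, this strict separation is exactly softmax decodability (\Cref{def:EHSM-decode-exact}) of $\{\compressedOutEmbedding_i\}$, with decoder $\rmM$ and codes $\codeMatrix[i] = \gaussianMatrix\,\compressedAuxEmbedding_i$. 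A union bound over $\mathcal A$ and the good Gaussian event gives total failure probability at most $\delta/2 + \delta/2 = \delta$, which is the claim.

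The only non-mechanical part, and the main thing to get right, is the bookkeeping of the two randomness sources: $\rho_{\min}$ is a random functional of the embeddings, whereas the $m$-condition of \Cref{thm:NEW_main_decoding} is phrased in terms of $\rho_{\min}$, so one must first freeze a \emph{deterministic} lower bound $\rho_0 = \Omega(1)$ (valid with high probability) before invoking the decoding theorem, and then split the budget $\delta$ across the two events. Verifying that the polylogarithmic term $\ln(\binom{\numVectors}{2}/\delta)$ is dominated by the linear factor $d$ inside \Cref{thm:uniform_decoding_rho}, so that $\rho_0$ is genuinely bounded away from $0$, is where the $\poly(d)$ and large-$d$ assumptions are used; everything else is a union bound.
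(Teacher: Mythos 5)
Your proposal is correct and follows essentially the same route as the paper: use \Cref{thm:uniform_decoding_rho} to obtain a constant lower bound on $\rho_{\min}$ for large $d$ (the paper's $\gamma^\star$, your $\rho_0$), then invoke \Cref{thm:NEW_main_decoding} and \Cref{lem:softmax_reformulation} to conclude softmax decodability. Your explicit $\delta/2$ split and union bound over the embedding and Gaussian randomness is in fact slightly more careful bookkeeping than the paper's terse argument, but it is the same proof in substance.
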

\begin{proof}
    By \Cref{thm:uniform_decoding_rho}, $\rho_{\min} \geq \gamma$ for some $\gamma$ with $\gamma \rightarrow \frac{1}{\sqrt{2}}$ as $d \rightarrow \infty$. Hence, for all large enough $d$, there exists an absolute positive constant $\gamma^\star $ such that $\rho_{\min} \geq \gamma^\star$ with probability $1 - \delta$. Thus, we apply \Cref{lem:softmax_reformulation} and \Cref{thm:NEW_main_decoding} to decode the embeddings. 
\end{proof}

In the following theorem, we will need the sub-gaussian norm $\norm{\cdot}_{\psi_2}$: 
\begin{align*}
    \norm{\rmX}_{\psi_2} := \inf \{t > 0: \E[\exp(\rmX^2 / t^2)] \leq 2 \}
\end{align*}

\begin{theorem}\label{thm:subgaussian_decoding_bd}
Let $\compressedOutEmbedding_i=(\xi_{i1},\dots,\xi_{id})\in\R^{d}$ for $i=1,\dots,\numVectors$, where the coordinates are i.i.d. sub-gaussian with
\[
\mathbb{E}\left[\xi_{ik}\right]=0,\qquad \mathbb{E}\left[\xi_{ik}^2\right]=\frac{1}{d},\qquad \|\xi_{ik}\|_{\psitwo}\le \frac{K}{\sqrt d}.
\]
Set $\compressedAuxEmbedding_i:=\compressedOutEmbedding_i/\|\compressedOutEmbedding_i\|$ and let $c_B=\frac{1}{2(2e-1)}$. Then for every $\delta\in(0,1)$, with probability at least $1-\delta$,
\[
\quad
\rho_{\min}\ \ge\ \frac{1-\varepsilon_{\numVectors}-t_{\numVectors}}{2(1+\varepsilon_{\numVectors})}\ ,
\quad
\]
where

\begin{align*}
    \varepsilon_{\numVectors} &:=(K^2+\frac{1}{\ln 2})\, \max \left(\sqrt{\frac{1}{c_B\,d}\,\ln\!\frac{4\numVectors}{\delta}}\, ,  \frac{1}{c_B\,d}\,\ln\!\frac{4\numVectors}{\delta}\right)
\qquad \\ \qquad
t_{\numVectors}\ &:=\ K\,\sqrt{\frac{2\ln 2}{d}\,\ln\!\frac{4\numVectors(\numVectors-1)}{\delta}}.
\end{align*}

\end{theorem}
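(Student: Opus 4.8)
The plan is to follow the same two-stage strategy used for the Gaussian case (Theorems~\ref{thm:uniform_decoding_rho} and~\ref{thm:NEW_main_decoding}), but now tracking sub-gaussian concentration rather than exact rotational symmetry. First I would control the norms $\|\compressedOutEmbedding_i\|$: since each $\compressedOutEmbedding_i$ has i.i.d.\ mean-zero, variance-$1/d$, $\psi_2$-norm-$\le K/\sqrt d$ coordinates, $\|\compressedOutEmbedding_i\|^2$ is a sum of $d$ independent sub-exponential random variables with mean $1$, so by Bernstein's inequality (e.g.\ \citep{vershynin2018high}) we get $\bigl|\,\|\compressedOutEmbedding_i\|^2 - 1\,\bigr|\le\varepsilon_{\numVectors}$ for all $i$ with probability $\ge 1-\delta/2$, where $\varepsilon_{\numVectors}$ has exactly the $\max(\sqrt{\cdot},\cdot)$ form stated (the two regimes coming from the sub-Gaussian vs.\ sub-exponential tail of the Bernstein bound, with the constant $c_B = \tfrac{1}{2(2e-1)}$ and the $(K^2+1/\ln 2)$ prefactor absorbing the sub-exponential norm of $\xi_{ik}^2-1/d$). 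On this event, $1-\varepsilon_{\numVectors}\le\|\compressedOutEmbedding_i\|^2\le 1+\varepsilon_{\numVectors}$, so each $\compressedAuxEmbedding_i=\compressedOutEmbedding_i/\|\compressedOutEmbedding_i\|$ is a genuine unit vector and the denominators in $\rho_{\min}$ are pinned near $1$.

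Second, I would lower bound the numerators $\langle \compressedOutEmbedding_i-\compressedOutEmbedding_j,\compressedAuxEmbedding_i\rangle = \|\compressedOutEmbedding_i\| - \langle\compressedOutEmbedding_j,\compressedOutEmbedding_i\rangle/\|\compressedOutEmbedding_i\|$. The cross terms $\langle\compressedOutEmbedding_i,\compressedOutEmbedding_j\rangle$ for $i\neq j$: conditionally on $\compressedOutEmbedding_i$, this is a sum of $d$ independent mean-zero terms $\xi_{ik}\xi_{jk}$, each sub-gaussian with $\psi_2$-norm $\le \|\compressedOutEmbedding_i\|\,K/\sqrt d$ coordinate-wise scaled, so $\langle\compressedOutEmbedding_i,\compressedOutEmbedding_j\rangle$ is sub-gaussian with parameter $\|\compressedOutEmbedding_i\|K/\sqrt d$ (Hoeffding-type bound for sums of sub-gaussians). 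A union bound over all $\binom{\numVectors}{2}$ pairs (and both signs) gives $|\langle\compressedOutEmbedding_i,\compressedOutEmbedding_j\rangle|\le \|\compressedOutEmbedding_i\|\,K\sqrt{\tfrac{2\ln 2}{d}\ln\tfrac{4\numVectors(\numVectors-1)}{\delta}} = \|\compressedOutEmbedding_i\|\,t_{\numVectors}$ for all $i\neq j$ with probability $\ge 1-\delta/2$ (the $\ln 2$ factor tracks the conversion between the $\psi_2$-norm definition $\E[\exp(X^2/t^2)]\le 2$ and the usual MGF bound). On the intersection of the two events, for each fixed $i$,
\[
\min_{j\neq i}\frac{\langle\compressedOutEmbedding_i-\compressedOutEmbedding_j,\compressedAuxEmbedding_i\rangle}{\|\compressedOutEmbedding_i-\compressedOutEmbedding_j\|\,\|\compressedAuxEmbedding_i\|}
\;\ge\;\frac{\|\compressedOutEmbedding_i\|-t_{\numVectors}\|\compressedOutEmbedding_i\|}{\|\compressedOutEmbedding_i-\compressedOutEmbedding_j\|}
\;\ge\;\frac{(1-t_{\numVectors})\|\compressedOutEmbedding_i\|}{2\max_k\|\compressedOutEmbedding_k\|}
\;\ge\;\frac{(1-t_{\numVectors})\sqrt{1-\varepsilon_{\numVectors}}}{2\sqrt{1+\varepsilon_{\numVectors}}},
\]
using $\|\compressedOutEmbedding_i-\compressedOutEmbedding_j\|\le 2\max_k\|\compressedOutEmbedding_k\|$ in the denominator, and I would finish by the elementary inequality $\sqrt{1-\varepsilon_{\numVectors}}\ge 1-\varepsilon_{\numVectors}$ together with $(1-t_{\numVectors})(1-\varepsilon_{\numVectors})\ge 1-\varepsilon_{\numVectors}-t_{\numVectors}$ to arrive at the claimed bound $\rho_{\min}\ge\frac{1-\varepsilon_{\numVectors}-t_{\numVectors}}{2(1+\varepsilon_{\numVectors})}$. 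Finally a union bound: both bad events together have probability $\le\delta$.

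The main obstacle I anticipate is matching the \emph{exact} constants in $\varepsilon_{\numVectors}$ and $t_{\numVectors}$ — in particular getting the precise form $c_B=\tfrac{1}{2(2e-1)}$ and the $(K^2+\tfrac{1}{\ln 2})$ prefactor out of Bernstein's inequality applied to $\|\compressedOutEmbedding_i\|^2$, which requires being careful about whether one uses the centered-MGF form or the sub-exponential-norm form of the inequality and about the $\psi_2\!\to\!\psi_1$ conversion for products $\xi_{ik}^2$. The rest (the cross-term sub-gaussian bound, the norm sandwiching, the union bounds) is routine; it is only the bookkeeping of constants that is delicate, and everything else parallels the proof of Theorem~\ref{thm:uniform_decoding_rho} with concentration inequalities replacing the closed-form spherical-cap computation.
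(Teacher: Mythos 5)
Your proposal is correct and follows essentially the same route as the paper's proof: a Bernstein (sub-exponential) bound after the $\psi_2\!\to\!\psi_1$ conversion for the squared coordinates, with a union bound over the $\numVectors$ vectors to pin the norms $\|\mathbf{v}_i\|$; a conditional sub-gaussian Hoeffding bound with a union bound over pairs for the cross inner products; and the same algebraic combination of numerator and denominator, with the constants arising exactly where you anticipate (via $\|\xi^2\|_{\psi_1}\le\|\xi\|_{\psi_2}^2$, $\|1\|_{\psi_1}=1/\ln 2$, and a $\sqrt{\ln 2}$ factor converting $\psi_2$ norms into variance proxies). The only slip is cosmetic: given your cross-term bound $|\langle \mathbf{v}_i,\mathbf{v}_j\rangle|\le \|\mathbf{v}_i\|\,t_{\numVectors}$, the numerator lower bound should be $\|\mathbf{v}_i\|-t_{\numVectors}$ rather than $(1-t_{\numVectors})\|\mathbf{v}_i\|$, but since $\sqrt{1-\varepsilon_{\numVectors}}\ge 1-\varepsilon_{\numVectors}$ the claimed final inequality still follows.
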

\begin{proof}
    See \Cref{pf:subgaussian_decoding_bd}
\end{proof}

\begin{corollary}
    For $\delta = \frac{1}{\operatorname{poly} d}$, $\numVectors = \operatorname{poly}(d)$, large enough $d$, and for output embeddings $\{\compressedOutEmbedding_i\}_{i = 1}^{\numVectors}$ as in \Cref{thm:subgaussian_decoding_bd}, the set of output embeddings are softmax decodable with probability $1 - \delta$ as long as the conditions in \Cref{thm:NEW_main_decoding} on $m$ hold. 
\end{corollary}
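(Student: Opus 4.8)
The plan is to mirror the proof of the analogous corollary stated after \Cref{thm:uniform_decoding_rho}: under the assumed polynomial scaling of $\numVectors$ and $\delta$, show that the decodability $\rho_{\min}$ is bounded below by an absolute positive constant with probability at least $1-\delta$, and then invoke \Cref{lem:softmax_reformulation} together with \Cref{thm:NEW_main_decoding}.

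First I would apply \Cref{thm:subgaussian_decoding_bd} with failure probability $\delta/2$, which gives $\rho_{\min}\ge \tfrac{1-\varepsilon_{\numVectors}-t_{\numVectors}}{2(1+\varepsilon_{\numVectors})}$ with probability at least $1-\delta/2$. Both error terms have the form $O\!\big(\sqrt{\tfrac1d\,\ln(\numVectors/\delta)}\,\big)$: for $t_{\numVectors}$ this is immediate, and for $\varepsilon_{\numVectors}$ it holds once $d$ is large enough that $\tfrac{1}{c_B d}\ln\!\tfrac{4\numVectors}{\delta}<1$, so the maximum defining $\varepsilon_{\numVectors}$ is attained by its square-root branch. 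Substituting the hypotheses $\numVectors=\operatorname{poly}(d)$ and $\delta=1/\operatorname{poly}(d)$ gives $\ln\!\tfrac{4\numVectors}{\delta}=O(\log d)$ and $\ln\!\tfrac{4\numVectors(\numVectors-1)}{\delta}=O(\log d)$, hence $\varepsilon_{\numVectors},t_{\numVectors}=O\!\big(\sqrt{\tfrac{\log d}{d}}\,\big)\to 0$. Therefore $\tfrac{1-\varepsilon_{\numVectors}-t_{\numVectors}}{2(1+\varepsilon_{\numVectors})}\to\tfrac12$, so there is an absolute constant $\gamma^\star\in(0,\tfrac12)$ (e.g.\ $\gamma^\star=\tfrac14$) and a threshold $d_0$ such that for all $d\ge d_0$ we have $\rho_{\min}\ge\gamma^\star$ on this event.

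Next I would condition on that event. Since $\rho_{\min}\ge\gamma^\star$ there, the data-dependent requirement $m\ge\tfrac{32}{\rho_{\min}^2}\ln\!\tfrac{4\numVectors(\numVectors-1)}{\delta}$ of \Cref{thm:NEW_main_decoding} is implied by the deterministic requirement $m\ge\tfrac{32}{(\gamma^\star)^2}\ln\!\tfrac{4\numVectors(\numVectors-1)}{\delta}$, which is exactly a ``condition on $m$'' of the kind assumed in the statement. Applying \Cref{thm:NEW_main_decoding} with failure probability $\delta/2$ then yields, with probability at least $1-\delta/2$, the strict inequalities $\langle \compressedOutEmbedding_i,\rmM\codeMatrix[i]\rangle>\langle \compressedOutEmbedding_j,\rmM\codeMatrix[i]\rangle$ for all $i\ne j$; by \Cref{lem:softmax_reformulation} this is precisely softmax-decodability of $\{\compressedOutEmbedding_i\}$. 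A union bound over the two events (each with failure probability $\delta/2$, and $\delta/2=1/\operatorname{poly}(d)$ so the scaling hypotheses are preserved) gives the claimed success probability $1-\delta$.

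The only mildly delicate point, rather than a real obstacle, is keeping the two layers of randomness separate: $\rho_{\min}$ is itself random, so one must pass from the data-dependent bound on $m$ in \Cref{thm:NEW_main_decoding} to one phrased in terms of the deterministic constant $\gamma^\star$ before taking the union bound, and one must check that the $\operatorname{poly}(d)$ slack comfortably absorbs the constant factors lost there. Everything else is a direct substitution into results already proved above.
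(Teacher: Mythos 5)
Your proposal is correct and follows essentially the same route as the paper's proof: use \Cref{thm:subgaussian_decoding_bd} to show $\rho_{\min}$ exceeds an absolute constant $\gamma^\star$ for large $d$ under the $\operatorname{poly}(d)$ scaling, then invoke \Cref{thm:NEW_main_decoding} and \Cref{lem:softmax_reformulation}. Your treatment is in fact slightly more careful than the paper's (splitting $\delta$ across the two sources of randomness and replacing the data-dependent bound on $m$ by a deterministic one via $\gamma^\star$), but the underlying argument is the same.
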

\begin{proof}
    By \Cref{thm:subgaussian_decoding_bd}, $\rho_{\min} \geq \gamma$ for some $\gamma$ with $\gamma \rightarrow 1/2$ as $d \rightarrow \infty$. Hence, for all large enough $d$, there exists an absolute positive constant $\gamma^\star $ such that $\rho_{\min} \geq \gamma^\star$ with probability $1 - \delta$. Thus, we apply \Cref{pf:NEW_main_decoding} to decode the embeddings. 
\end{proof}

\subsubsection{Relation of $\rho$ to Coherence}\label{sec:rho_relation_coherence}

Throughout this section, we define coherence in the traditional sense as follows.

\begin{definition}[Coherence]\label{def: coherence}
For unit–norm row vectors $\vSet=[\compressedOutEmbedding_1,\ldots, \compressedOutEmbedding_{\numVectors}]^\top \in \R^{{\numVectors}\times d}$,
\[
\mu(\vSet)\ :=\ \max_{i\neq j}\, \left|\langle \vv_i,\ \vv_j\rangle \right|.
\]
\end{definition}

Given the definition of $\rho(\mathbf V, \mathbf U)$, which doesn't have similar absolute values around the inner product term, we could have defined the coherence as $\mu(\vSet) = \max_{i \neq j} \langle \vv_i, \vv_j \rangle$. The results of this section hold using either definition of $\mu(\vSet)$.

\begin{lemma}[Lower bound via absolute coherence]
Let $\vSet=[\compressedOutEmbedding_1,\ldots, \compressedOutEmbedding_{\numVectors}]^\top \in \R^{{\numVectors}\times d}$ with $\|\vv_i\|_2=1$ for all $i$.
By $\cref{def: coherence}$, then
\[
\rho(\vSet)\ \ge\ \frac{1}{\sqrt{2}}\;\sqrt{\,1-\mu(\vSet)\,}.
\]
\end{lemma}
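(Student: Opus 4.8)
The claim is that for unit-norm row vectors $\vSet = [\vv_1,\dots,\vv_{\numVectors}]^\top$ with coherence $\mu(\vSet) = \max_{i\ne j}|\langle \vv_i,\vv_j\rangle|$, we have $\rho(\vSet) \ge \tfrac{1}{\sqrt 2}\sqrt{1-\mu(\vSet)}$. Since $\rho(\vSet) = \max_{\mathbf U}\rho(\mathbf U;\vSet)$ is a maximum over all choices of unit auxiliary directions $\{\vu_i\}$, it suffices to exhibit one concrete choice achieving the bound. The natural candidate is $\vu_i = \vv_i$: this is exactly the choice used in \Cref{thm:uniform_decoding_rho} and the footnote in \Cref{subsec: rho} already announces this inequality, so the plan is to make that estimate explicit.

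**Main computation.** With $\vu_i = \vv_i$, the quantity to bound from below is, for each $i\ne j$,
\[
\frac{\langle \vv_i - \vv_j,\ \vv_i\rangle}{\|\vv_i - \vv_j\|_2\,\|\vv_i\|_2}
= \frac{1 - \langle \vv_i,\vv_j\rangle}{\|\vv_i - \vv_j\|_2},
\]
using $\|\vv_i\|_2 = 1$. Now $\|\vv_i - \vv_j\|_2^2 = 2 - 2\langle \vv_i,\vv_j\rangle = 2(1 - \langle \vv_i,\vv_j\rangle)$. Writing $c_{ij} := \langle \vv_i,\vv_j\rangle \in [-1,1]$, the ratio becomes
\[
\frac{1 - c_{ij}}{\sqrt{2(1 - c_{ij})}} = \sqrt{\frac{1 - c_{ij}}{2}}.
\]
Since $c_{ij} \le |c_{ij}| \le \mu(\vSet)$, we get $1 - c_{ij} \ge 1 - \mu(\vSet)$, hence each such ratio is at least $\sqrt{(1-\mu(\vSet))/2}$. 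Taking the minimum over $i\ne j$ gives $\rho(\mathbf U;\vSet) \ge \tfrac{1}{\sqrt2}\sqrt{1-\mu(\vSet)}$ for this particular $\mathbf U$, and therefore $\rho(\vSet) \ge \tfrac{1}{\sqrt2}\sqrt{1-\mu(\vSet)}$. One should note the degenerate case $\vv_i = \vv_j$ is excluded because the vectors are assumed distinct (or, if $c_{ij}=1$ were allowed, the bound $0 \ge 0$ still holds vacuously since $\mu = 1$).

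**Anticipated obstacles.** Honestly, there is no serious obstacle here — this is a one-line algebraic identity ($\|\vv_i-\vv_j\|^2 = 2(1-c_{ij})$) combined with the monotone substitution $c_{ij}\le\mu$. The only points requiring a word of care are: (i) confirming the sign — we need $c_{ij} \le \mu$, which follows from $c_{ij} \le |c_{ij}|$, so no absolute-value subtlety creeps in; and (ii) making sure the denominator $\|\vv_i-\vv_j\|_2$ is nonzero, which holds whenever $\vv_i \ne \vv_j$, i.e. whenever the embeddings are genuinely distinct (a standing assumption for fact storage, since $\rho(\vSet) > 0$ is required). I would present the proof in three lines: expand the numerator using $\|\vv_i\|=1$, expand the denominator via the polarization identity, cancel the common factor $\sqrt{1-c_{ij}}$, and bound $c_{ij}$ by $\mu(\vSet)$.
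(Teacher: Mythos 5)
Your proof is correct and follows essentially the same route as the paper's: set $\vu_i=\vv_i$, simplify the ratio to $\sqrt{(1-\langle\vv_i,\vv_j\rangle)/2}$ via $\|\vv_i-\vv_j\|_2^2=2(1-\langle\vv_i,\vv_j\rangle)$, and bound $\langle\vv_i,\vv_j\rangle\le|\langle\vv_i,\vv_j\rangle|\le\mu(\vSet)$. Nothing further is needed.
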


\begin{proof}
Fix $i$ and set $\vu_i:=\vv_i$.
For any $j\neq i$,
\[
\frac{\langle \vv_i-\vv_j,\;\vu_i\rangle}{\|\vv_i-\vv_j\|_2}
=
\frac{\langle \vv_i-\vv_j,\;\vv_i\rangle}{\|\vv_i-\vv_j\|_2}
=
\frac{1-\langle \vv_i,\vv_j\rangle}{\sqrt{\|\vv_i\|_2^2+\|\vv_j\|_2^2-2\langle \vv_i,\vv_j\rangle}}
=
\frac{1-\langle \vv_i,\vv_j\rangle}{\sqrt{2-2\langle \vv_i,\vv_j\rangle}}
=
\frac{1}{\sqrt{2}}\;\sqrt{\,1-\langle \vv_i,\vv_j\rangle\,}.
\]
Taking the minimum over $j\neq i$ and then over $i$ yields
\[
\rho(\vSet)
\;\ge\;
\frac{1}{\sqrt{2}}\;\min_{i\neq j}\sqrt{\,1-\langle \vv_i,\vv_j\rangle\,}.
\]
Since for every $i\neq j$ we have $\langle \vv_i,\vv_j\rangle\le |\langle \vv_i,\vv_j\rangle|\le \mu(\vSet)$ and
$a\mapsto\sqrt{1-a}$ is decreasing on $(-\infty,1]$, it follows that
\[
\min_{i\neq j}\sqrt{\,1-\langle \vv_i,\vv_j\rangle\,}
\ \ge\
\sqrt{\,1-\mu(\vSet)\,}.
\]
Therefore $\rho(\vSet)\ \ge\ \frac{1}{\sqrt{2}}\sqrt{\,1-\mu(\vSet)\,}$, as claimed.
\end{proof}

Given this lower bound on $\rho(\mathbf{U}, \mathbf{V})$ in terms of $1 - \mu(\vSet)$, one might wonder if there exists a similar upper bound. Specifically, does there exist some constant $\beta>0$ such that 
\begin{align*}
    \rho(\vSet) \leq O( (1 - \mu(\vSet))^\beta)
\end{align*}
In the following proposition, we provide a counter example which shows that this is false. Hence, $\rho(\vSet)$ and $1-\mu(\vSet)$ are fundamentally different quantities. 

\begin{lemma}
Fix a constant integer $p\ge 2$. Then, for large enough $d$, there exist
unit–norm row vectors $\vSet=[\compressedOutEmbedding_1,\ldots, \compressedOutEmbedding_{\numVectors}]^\top \in \R^{{\numVectors}\times d}$ such that
\[
\mu(\vSet)\ =\ 1-o(1)\qquad\text{but}\qquad \rho(\vSet)\ \ge\ \sqrt{\frac{1/p}{2}}\ >\ 0.
\]
\end{lemma}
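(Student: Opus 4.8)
The plan is to assemble $\mathbf V$ from $p$ mutually orthogonal ``clusters,'' each pressed very close to its own axis: this forces the within-cluster inner products near $1$, so $\mu(\mathbf V)=1-o(1)$, while a balanced choice of auxiliary directions will still certify $\rho(\mathbf V)\ge \tfrac1{\sqrt{2p}}$. Concretely, take $d$ large enough that $\mathbb R^d$ contains mutually orthonormal vectors $\mathbf e_1,\dots,\mathbf e_p$ and, for each $a\in[p]$, a private orthonormal block $\mathbf f^{(a)}_1,\dots,\mathbf f^{(a)}_M$ (so $d\ge p(M+1)$; here $M\ge 2$ is arbitrary, so the counterexample may have as many rows as desired). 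Fix an angle $\theta=\theta_d\in(0,\tfrac\pi2)$ with $\theta_d\to 0$ as $d\to\infty$ (e.g.\ $\theta_d=1/d$) and define the $|\mathbf V|=pM$ rows
\[
    \mathbf v^{(a)}_k \;=\; \cos\theta_d\,\mathbf e_a + \sin\theta_d\,\mathbf f^{(a)}_k,
    \qquad a\in[p],\ k\in[M].
\]
These are unit vectors; they are pairwise distinct.

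First I would verify the coherence claim, which is immediate from orthonormality: two rows in the same cluster satisfy $\langle \mathbf v^{(a)}_i,\mathbf v^{(a)}_j\rangle=\cos^2\theta_d$ for $i\ne j$ (the $\mathbf f$-contribution vanishes), and two rows in different clusters are orthogonal. Hence $\mu(\mathbf V)=\cos^2\theta_d=1-o(1)$.

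Next I would lower bound $\rho(\mathbf V)$ by plugging the explicit unit auxiliary directions $\mathbf u^{(a)}_i:=\tfrac1{\sqrt2}\bigl(\mathbf e_a+\mathbf f^{(a)}_i\bigr)$ into $\rho(\mathbf V)\ge \min_{i\ne j}\frac{\langle \mathbf v_i-\mathbf v_j,\mathbf u_i\rangle}{\|\mathbf v_i-\mathbf v_j\|_2}$ and checking the two possible cases. If $\mathbf v_j=\mathbf v^{(a)}_j$ lies in the same cluster as $\mathbf v^{(a)}_i$, then $\mathbf v^{(a)}_i-\mathbf v^{(a)}_j=\sin\theta_d(\mathbf f^{(a)}_i-\mathbf f^{(a)}_j)$ has norm $\sqrt2\,\sin\theta_d$ and inner product $\tfrac{\sin\theta_d}{\sqrt2}$ with $\mathbf u^{(a)}_i$, so the normalized margin is exactly $\tfrac12$. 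If $\mathbf v_j$ lies in a different cluster, then $\mathbf u^{(a)}_i\perp\mathbf v_j$ (it is orthogonal to $\mathbf e_b$ and to the other blocks), so the inner product equals $\langle\mathbf v^{(a)}_i,\mathbf u^{(a)}_i\rangle=\tfrac1{\sqrt2}(\cos\theta_d+\sin\theta_d)$, while $\|\mathbf v^{(a)}_i-\mathbf v_j\|_2=\sqrt2$ by orthonormality of the blocks, giving margin $\tfrac{\cos\theta_d+\sin\theta_d}{2}\ge \tfrac12$ since $\cos\theta_d+\sin\theta_d\ge 1$ on $[0,\tfrac\pi2]$. Thus every normalized margin is $\ge\tfrac12$, so $\rho(\mathbf V)\ge\tfrac12\ge\tfrac1{\sqrt{2p}}=\sqrt{\tfrac{1/p}{2}}$ whenever $p\ge 2$, which together with $\mu(\mathbf V)=1-o(1)$ finishes the proof.

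The only genuine decision — and the closest thing to an obstacle — is the mixing ratio inside $\mathbf u^{(a)}_i$: the within-cluster margin grows with the weight that $\mathbf u^{(a)}_i$ puts on $\mathbf f^{(a)}_i$, while the cross-cluster margin grows with the weight it puts on $\mathbf e_a$, and these must be made simultaneously at least $\tfrac1{\sqrt{2p}}$. Taking $\mathbf u^{(a)}_i\propto c\,\mathbf e_a+\mathbf f^{(a)}_i$ turns the two requirements into $c^2\le p-1$ and $c^2\ge\tfrac1{p-1}$, a pair of inequalities that is feasible exactly when $p\ge 2$ (and $c=1$ works for all such $p$); this is precisely where the hypothesis $p\ge 2$ enters. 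Everything else is a one-line orthonormality computation, so I expect no further complications.
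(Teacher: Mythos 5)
Your proof is correct, and it takes a genuinely different route from the paper's. The paper builds the example by starting from $p$-hot vectors in a low-dimensional block (sparsity $p$ gives $\mu_0\le 1-1/p$ and, with $\mathbf{u}_i=\mathbf{v}_i$, $\rho\ge\sqrt{(1-\mu_0)/2}\ge\sqrt{(1/p)/2}$), and then padding every vector with a shared all-ones block and renormalizing: the common component drives the coherence to $1-o(1)$, while choosing auxiliary directions supported only on the original coordinates leaves the normalized margins, and hence the $\rho$ lower bound, exactly intact. You instead place $p$ clusters around mutually orthogonal axes with in-cluster angle $\theta_d\to 0$, so the coherence is driven by within-cluster pairs ($\cos^2\theta_d=1-o(1)$, cross-cluster pairs being orthogonal), and you certify $\rho\ge\tfrac12$ with the balanced directions $\mathbf{u}^{(a)}_i=\tfrac{1}{\sqrt2}(\mathbf{e}_a+\mathbf{f}^{(a)}_i)$; your two case computations (within-cluster margin exactly $\tfrac12$, cross-cluster margin $\tfrac{\cos\theta_d+\sin\theta_d}{2}\ge\tfrac12$) check out. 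Your construction actually proves a stronger, $p$-independent separation ($\rho\ge\tfrac12$ with $\mu=1-o(1)$), whereas in the paper's construction $p$ is a real structural parameter (the sparsity of the code) and the bound $\sqrt{(1/p)/2}$ arises naturally; in yours, $p\ge2$ is used only to conclude $\tfrac12\ge\tfrac{1}{\sqrt{2p}}$, so your closing remark that the hypothesis $p\ge2$ enters through the feasibility of the mixing ratio is somewhat of a red herring --- with $c=1$ the bound $\rho\ge\tfrac12$ holds irrespective of the number of clusters, and even a single cluster would suffice for the statement as written. This is a stylistic point, not a gap.
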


\begin{proof}
Choose a dimension $d_0=o(d)$ and construct $\vSet_0=[\compressedOutEmbedding_1^{(0)},\ldots, \compressedOutEmbedding_{\numVectors}^{(0)}]^\top \in \R^{{\numVectors}\times d_0}$ as follows.
Choose each row $\vv_i^{(0)}$ to be the the $p-\text{hot}$ encoding of the row index. Thus each row has exactly $p$ non-zero entries, each equal to $1/\sqrt p$ and pairwise the non-zero entries overlap in at most $p-1$ coordinates. Then for $i\neq j$,
\[
\bigl|\langle \vv_i^{(0)},\vv_j^{(0)}\rangle\bigr|\ \le\ 1-\frac{1}{p}
\qquad\Longrightarrow\qquad
\mu(\vSet_0)\ \le\ 1-\frac{1}{p}\ <\ 1.
\]
Let $\vu_i^{(0)}:=\vv_i^{(0)}$. Then
\[
\Big\langle \frac{\vv_i^{(0)}-\vv_j^{(0)}}{\|\vv_i^{(0)}-\vv_j^{(0)}\|_2},\ \vu_i^{(0)}\Big\rangle
=\frac{1-\langle \vv_i^{(0)},\vv_j^{(0)}\rangle}{\sqrt{2-2\langle \vv_i^{(0)},\vv_j^{(0)}\rangle}}
=\sqrt{\frac{1-\langle \vv_i^{(0)},\vv_j^{(0)}\rangle}{2}}
\ \ge\ \sqrt{\frac{1-1/p}{2}}.
\]
Minimizing over all $i\neq j$ shows
\[
\rho(\vSet_0)\ \ge\ \gamma_0\ :=\ \sqrt{\frac{1/p}{2}}\ >\ 0.
\]
We now pad each vector with ones. 
Let $t:=d-d_0$ and define
\[
\widehat \vv_i\ :=\ (\vv_i^{(0)}, \mathbf 1_t)\ \in\mathbb R^{d},
\qquad
\vv_i\ :=\ \frac{\widehat \vv_i}{\|\widehat \vv_i\|_2}
=\frac{(\vv_i^{(0)}, \mathbf 1_t)}{\sqrt{1+t}}.
\]
where here $(\vv_i^{(0)}, \mathbf 1_t)$ denotes the lengthwise concatenation of $\vv_i^{(0)}$ and $\mathbf 1_t$ where $\mathbf 1_t$ is a vector of length $t$ of ones. Then for $i\neq j$,
\[
\langle \vv_i,\vv_j\rangle
=\frac{\langle \vv_i^{(0)},\vv_j^{(0)}\rangle + t}{1+t}
=1-\frac{1-\langle \vv_i^{(0)},\vv_j^{(0)}\rangle}{1+t}
\ \ge\ 1-\frac{1}{1+t}\ \ge\ 0,
\]
hence
\[
\mu(\vSet)\ =\ \max_{i\neq j}\bigl|\langle \vv_i,\vv_j\rangle\bigr|
\ \ge\ 1-\frac{1}{1+t}\ =\ 1-o(1),
\]
where the final equality holds since $t\rightarrow \infty$ increases $\frac{1}{1+t}\rightarrow 0$. 

On the other hand, if we set $\vu_i = (\vu_i^{(0)}, \mathbf 0_t)$, where $\vu_i^{(0)}$ are picked such that $\rho(\rmV^{(0)}, \rmU^{(0)})=\rho(\rmV^{(0)})$ and $\mathbf 0_t$ is a vector of length $t$ of all zeros, for any $i\neq j$,
\[
\rho(\mathbf{V}, \mathbf{U}) = \Big\langle \tfrac{\vv_i-\vv_j}{\|\vv_i-\vv_j\|_2},\,( \vu_i^{(0)}, \mathbf 0_t )\Big\rangle
=\Big\langle 
\tfrac{(\vv_i^{(0)}, \mathbf 1_t) - (\vv_j^{(0)}, \mathbf 1_t)}
{\norm{(\vv_i^{(0)}, \mathbf 1_t) - (\vv_j^{(0)}, \mathbf 1_t)}_2},\,
(\vu_i^{(0)},\mathbf 0_t)
\Big\rangle
=\Big\langle \tfrac{\vv_i^{(0)}-\vv_j^{(0)}}{\|\vv_i^{(0)}-\vv_j^{(0)}\|_2},\, \vu_i^{(0)}\Big\rangle
 = \rho(\rmV_0).
\]

Combining the bounds yields $\mu(\vSet)=1-o(1)$ while
$\rho(\vSet)\ge \sqrt{1/2p}>0$, completing the proof.
\end{proof}

\subsection{Additional Details for \Cref{subsec:full_construction}}\label{theory_sec_3.3}
\begin{theorem}[Full Construction]\label{thm:capacity-final}
    For any fact set $f$, generic key embeddings $\mathbf{K}$, and value embeddings
    $\mathbf{V}$ with $\rho(\mathbf{V})>0$, construct $\enc$ as in
    \Cref{subsec:encoder_twohot} and construct $\dec$ as in
    \Cref{subsec:construction_decoder}. Then the fact MLP
    \[
        \mathbf{g}(\mathbf{x})
        \;=\;
        \dec(\enc(\mathbf{x}))
        \;=\;
        \mathbf{D}\,\mathbf{E}\big(\sigma(\mathbf{G}\mathbf{x})\odot(\mathbf{A}\mathbf{x})\big)
    \]
    stores $f$ given $\mathbf{K}$ and $\mathbf{V}$, and has fact-storage cost
    \[
        \Theta\!\left([\rho(\mathbf{V})]^{-2}\,|\mathbf{K}|\,\log|\mathbf{V}|\right).
    \]
\end{theorem}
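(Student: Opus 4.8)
The plan is to instantiate \Cref{alg:fact_mlp_framework} by composing the decoder of \Cref{subsec:construction_decoder} with the encoder of \Cref{subsec:encoder_twohot}, and then to establish the two assertions — that $\mathbf g$ stores $f$, and that it uses $\Theta([\rho(\mathbf V)]^{-2}|\mathbf K|\log|\mathbf V|)$ parameters — by chaining the component guarantees already proven. First I would fix the compressed dimension $m := C\,[\rho(\mathbf V)]^{-2}\log|\mathbf V|$, with $C$ the absolute constant of \Cref{const: decoder_main}/\Cref{thm:NEW_main_decoding}; this is finite exactly because $\rho(\mathbf V)>0$. Running $\textsc{Dec}(\mathbf V,m)$ (\Cref{alg:decoder_construction}) yields a Gaussian $\mathbf D\in\mathbb R^{d\times m}$ and codes $\mathbf c_i=\mathbf D^\top\mathbf u_i^\star(\mathbf V)$; running $\textsc{Enc}(\mathbf K,\mathbf C,f,h,\sigma)$ (\Cref{alg:encoder_construction}) with hidden size $h=m\lceil|\mathbf K|/d\rceil$ — so each of the $m$ gadgets has width $\tilde h=\lceil|\mathbf K|/d\rceil$ and hence $d\tilde h\ge|\mathbf K|$, meeting the hypothesis of \Cref{lem:random_satisfies_rank} and \Cref{thm:gated_encoder_gadget_works} — yields $\mathbf A,\mathbf G,\mathbf E$. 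The resulting network is exactly $\mathbf g(\mathbf x)=\mathbf D\mathbf E(\sigma(\mathbf G\mathbf x)\odot(\mathbf A\mathbf x))$.

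For correctness I would argue in two steps. Applying \Cref{thm:gated_encoder_gadget_works} (through \Cref{lem:full_M_rank_result_analytic} and \Cref{lem:rank_condition}) coordinate-by-coordinate over $j\in[m]$, for generic $\mathbf K$ the stacked encoder satisfies $\enc(\mathbf k_i)=\mathbf c_{f(i)}$ exactly for every $i\in[|\mathbf K|]$; crucially, the encoder targets $\mathbf o^{(j)}$ depend only on $\mathbf C$ and $f$, not on $\mathbf K$, so the ``generic $\mathbf K$'' hypothesis is intrinsic to the instance and nothing circular occurs — genericity of $\mathbf K$ and full-row-rankness of each $\mathbf M(\mathbf\Sigma,\mathbf K)$ are cut out by polynomial non-vanishing conditions and hold on a common Zariski-open set of full measure. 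By \Cref{const: decoder_main}/\Cref{thm:NEW_main_decoding}, for our choice of $m$ there is a realization of $\mathbf D$ (the success event has probability $>2/3>0$, hence is nonempty) with $\langle\mathbf v_k,\mathbf{dec}(\mathbf c_k)\rangle>\langle\mathbf v_\ell,\mathbf{dec}(\mathbf c_k)\rangle$ for all $k\neq\ell\in[|\mathbf V|]$. Composing and specializing this inequality at $k=f(i)$, $\ell=j$ gives $\langle\mathbf g(\mathbf k_i),\mathbf v_{f(i)}-\mathbf v_j\rangle=\langle\mathbf v_{f(i)}-\mathbf v_j,\mathbf{dec}(\mathbf c_{f(i)})\rangle>0$ for all $j\neq f(i)$, which is exactly \Cref{eq:decoding-criterion-maintext}; hence $\mathbf g$ stores $f$. (In the two-hot case one substitutes \Cref{con:two_hot_unit_interval_main} for the encoder, which is exact for ReLU with no genericity needed, and additionally matches \Cref{thm: info_bounds_capacity-const} in bits.)

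For the parameter count I would simply tally the blocks: $\mathbf A,\mathbf G\in\mathbb R^{h\times d}$ contribute $2hd=2m|\mathbf K|+O(md)$; $\mathbf D\in\mathbb R^{d\times m}$ contributes $dm$; and $\mathbf E$ is the fixed block-structured matrix of \Cref{alg:encoder_construction}, carrying $O(1)$ free parameters (or $O(h)$ nonzeros). In the regime of interest $d=o(|\mathbf K|)$ and $m<d$ (as recorded in \Cref{tab:theory}) the $O(md)$ and $dm$ terms are lower order, so the total is $\Theta(m|\mathbf K|)=\Theta([\rho(\mathbf V)]^{-2}|\mathbf K|\log|\mathbf V|)$; the matching lower bound is immediate, since $m=\Theta([\rho(\mathbf V)]^{-2}\log|\mathbf V|)$ and the encoder alone must solve $m|\mathbf K|$ independent scalar interpolation constraints, hence needs at least $m|\mathbf K|$ free parameters (and, separately, no model can beat the $\Omega(|\mathbf K|\log|\mathbf V|)$ of \Cref{thm: info_bounds_capacity-const}). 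I expect the only delicate point — more bookkeeping than obstacle — to be confirming that the two ``genericity'' regimes coexist: the Zariski-open condition on $\mathbf K$ needed by the encoder and a good draw of $\mathbf D$ for the decoder constrain disjoint objects ($\mathbf K$ is fixed by the instance, $\mathbf D$ is chosen afterwards), so they are jointly satisfiable, and one must verify the sizing is consistent — namely $m<d$ so that compression genuinely beats the naive $\Theta(d|\mathbf K|)$ route, and $dh\ge|\mathbf K|$ under our choice of $h$.
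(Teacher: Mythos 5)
Your proposal is correct and follows essentially the same route as the paper's proof: fix $m=\Theta([\rho(\mathbf V)]^{-2}\log|\mathbf V|)$ via \Cref{thm:NEW_main_decoding} to obtain $(\mathbf C,\mathbf D)$ satisfying the dot-product decoding condition, apply the gated encoder gadget guarantee coordinate-wise to fit $\enc(\mathbf k_i)=\mathbf c_{f(i)}$ exactly for generic $\mathbf K$, compose, and tally $\Theta(m|\mathbf K|)$ parameters. Your extra bookkeeping (explicit $h=m\lceil|\mathbf K|/d\rceil$, the non-circularity of the genericity condition, and the existence argument from the $>2/3$ success probability of $\mathbf D$) is consistent with, and slightly more explicit than, the paper's argument.
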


\begin{proof}
By \Cref{thm:NEW_main_decoding}, for any $\rho(\mathbf{V})>0$ there exist a
compressed dimension
\[
    m \;=\; \Theta\!\left([\rho(\mathbf{V})]^{-2}\log|\mathbf{V}|\right)
\]
and a linear decoder $\dec(\mathbf{x})=\mathbf{D}\mathbf{x}$ together with
compressed codes $\mathbf{C}=\{\mathbf{c}_i\}_{i=1}^{|\mathbf{V}|}$ such that
the dot-product decoding condition
\[
    \langle \mathbf{v}_i,\,\dec(\mathbf{c}_i)\rangle
    >
    \langle \mathbf{v}_j,\,\dec(\mathbf{c}_i)\rangle
    \qquad\forall i\neq j
\]
holds.  Fix such a $(\mathbf{C},\mathbf{D})$.

Given these compressed codes, apply \Cref{thm:gated_encoder_gadget_works}
coordinate-wise: for each $j\in[m]$, with $|\mathbf{K}|$ generic inputs and
targets $\{ \mathbf{c}_{f(i),j} \}_{i=1}^{|\mathbf{K}|}$, the theorem
guarantees a scalar-output gated encoder gadget that fits these values
exactly. Stacking the $m$ gadgets as in the encoder construction yields
$\enc$ with
\[
    \enc(\mathbf{k}_i) = \mathbf{c}_{f(i)} \quad\forall i,
\]
and total encoder parameter count $\Theta(m|\mathbf{K}|)$.

The composed MLP $\mathbf{g}=\dec\circ\enc$ thus satisfies
\[
    \mathbf{g}(\mathbf{k}_i) = \dec(\enc(\mathbf{k}_i)) = \dec(\mathbf{c}_{f(i)}),
\]
which decodes (under dot products with $\mathbf{V}$) to $\mathbf{v}_{f(i)}$
by the property of $\dec$ and $\mathbf{C}$. Hence $\mathbf{g}$ stores $f$.
Its parameter count is
\[
    \Theta(m|\mathbf{K}|) = \Theta\!\left([\rho(\mathbf{V})]^{-2}|\mathbf{K}|\log|\mathbf{V}|\right),
\]
as claimed.
\end{proof}

As it turns out, we may also prove a similar theorem using the result from \citet{bubeck2020networksizeweightssize} as follows: 

\begin{theorem}[Full construction]\label{thm:FULL_construction}
Let $\mathbf K=\{\vk_i\}_{i=1}^{|\mathbf K|}\subset\mathbb R^{d}$ be generic.
Let $\mathbf V=\{\vv_j\}_{j=1}^{|\mathbf V|}\subset\mathbb R^{d}$ with $\rho(\mathbf V)>0$,
and fix $f:[|\mathbf K|]\to[|\mathbf V|]$ and $\delta\in(0,1)$. Let $\rmU=\{\vu_j\}_{j=1}^{|\mathbf V|}\subset \R^{d}$. Additionally, set
\[
m\;\ge\;\frac{32}{\rho_{\min}(\mathbf V,\mathbf U)^2}\,
\ln\!\frac{4|\mathbf V|(|\mathbf V|-1)}{\delta},\qquad
\rmG\sim\mathcal{N}(0,1)^{m\times d},\qquad
\rmM:=\frac{1}{m}\,\rmG^{\!\top}\in\mathbb R^{d\times m}.
\]
where each coordinate $\mathbf G_{\ell, k}$ is sampled i.i.d from $\mathcal N(0,1)$. Then, with probability at least $1-\delta$ over $\rmG$, there exist
$\rmA\in\mathbb R^{\tilde m\times d}$ and $\vb\in\mathbb R^{\tilde m}$ with
$\tilde{m}=4\codeDim\,\lceil |\mathbf K|/d\rceil$ such that the one-hidden-layer ReLU network
\[
\mathbf V^{\!\top}\rmM\,\ReLU(\rmA\vx+\vb)\ \in\ \mathbb R^{|\mathbf V|}
\]
achieves for all $i,j$ such that $j \neq f(i)$: 
\[
\Big\langle \vv_{f(i)},\,\rmM\,\ReLU(\rmA \vk_i+\vb)\Big\rangle
-\Big\langle \vv_j,\,\rmM\,\ReLU(\rmA \vk_i+\vb)\Big\rangle
\ \ge\ \frac{\rho_{\min}(\mathbf V,\mathbf U)}{2}\,\big\|\vv_{f(i)}-\vv_j\big\|\,\|\vu_{f(i)}\|
\]
The number of trainable parameters that scale with $|\mathbf K|$ (the \emph{fact-storage cost})
is $\Theta\!\big(m\,|\mathbf K|\big)=\Theta\!\big(\rho_{\min}(\mathbf{V}, \mathbf{U})^{-2}\,|\mathbf K|\,\log|\mathbf{V}|\big)$.
\end{theorem}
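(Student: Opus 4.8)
The plan is to combine the Johnson--Lindenstrauss decoder of \Cref{thm:NEW_main_decoding} with the exact-interpolation network of \Cref{thm:bubeck_result}, in the same encoder--decoder pattern used for \Cref{thm:capacity-final} but with Bubeck's ReLU network in place of the gated encoder gadget. First I would fix the auxiliary directions $\mathbf{U}=\{\vu_j\}$ (those realizing $\rho(\mathbf{V})$, so $\rho_{\min}(\mathbf{V},\mathbf{U})=\rho(\mathbf{V})>0$), draw $\mathbf{G}\sim\mathcal{N}(0,1)^{m\times d}$, set $\mathbf{M}=\tfrac1m\mathbf{G}^{\top}$, and define compressed codes $\mathbf{G}\vu_j\in\mathbb{R}^m$. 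For $m$ as large as in the statement, \Cref{thm:NEW_main_decoding} produces an event $\mathcal{E}$ of probability at least $1-\delta$ over $\mathbf{G}$ on which $\langle\vv_i,\mathbf{M}\mathbf{G}\vu_i\rangle-\langle\vv_j,\mathbf{M}\mathbf{G}\vu_i\rangle\ge\tfrac{\rho_{\min}}{2}\|\vv_i-\vv_j\|\,\|\vu_i\|$ for every $i\neq j$; I would then condition on $\mathcal{E}$, so that $\mathbf{G}$ is henceforth a fixed matrix.

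Second, I would construct the encoder coordinatewise. Since $\mathbf{K}$ is generic, the keys lie in general position (the failure locus is a proper Zariski-closed, hence measure-zero, set), so \Cref{thm:bubeck_result} applies to each of the $m$ scalar datasets $\{(\vk_i,(\mathbf{G}\vu_{f(i)})_\ell)\}_{i\in[|\mathbf{K}|]}$, yielding $g_\ell(\vx)=\va_\ell^{\top}\ReLU(\mathbf{W}_\ell\vx+\vb_\ell)$ with $4\lceil|\mathbf{K}|/d\rceil$ hidden units and $g_\ell(\vk_i)=(\mathbf{G}\vu_{f(i)})_\ell$ for all $i$. The targets here depend on $\mathbf{G}$, but this is harmless because \Cref{thm:bubeck_result} is deterministic and holds for arbitrary target values; conditioning on $\mathcal{E}$ before invoking it is exactly the quantifier ordering that makes this clean, and I expect this bookkeeping point --- rather than any analytic difficulty --- to be the only real subtlety. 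Stacking the $m$ gadgets gives a single ReLU layer $\ReLU(\mathbf{A}\vx+\vb)$ of width $\tilde m=4m\lceil|\mathbf{K}|/d\rceil$ (rows of $\mathbf{A}$ from the $\mathbf{W}_\ell$, entries of $\vb$ from the $\vb_\ell$) together with a block-diagonal readout $\mathbf{E}\in\mathbb{R}^{m\times\tilde m}$ collecting the $\va_\ell^{\top}$, so that $\enc(\vx):=\mathbf{E}\,\ReLU(\mathbf{A}\vx+\vb)$ satisfies $\enc(\vk_i)=\mathbf{G}\vu_{f(i)}$. Folding $\mathbf{E}$ into the linear decoder (replace $\mathbf{M}$ by $\mathbf{M}\mathbf{E}$) puts the composed map in the advertised form $\vx\mapsto\mathbf{V}^{\top}\mathbf{M}\,\ReLU(\mathbf{A}\vx+\vb)$.

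Finally, I would read off the margin and count parameters. At input $\vk_i$ the network outputs $\mathbf{V}^{\top}\mathbf{M}\,\enc(\vk_i)=\mathbf{V}^{\top}\mathbf{M}\mathbf{G}\vu_{f(i)}$, whose $j$-th coordinate is $\langle\vv_j,\mathbf{M}\mathbf{G}\vu_{f(i)}\rangle$; applying the inequality from $\mathcal{E}$ with the index $f(i)$ (legitimate since $j\neq f(i)$) gives precisely $\langle\vv_{f(i)},\mathbf{M}\ReLU(\mathbf{A}\vk_i+\vb)\rangle-\langle\vv_j,\mathbf{M}\ReLU(\mathbf{A}\vk_i+\vb)\rangle\ge\tfrac{\rho_{\min}}{2}\|\vv_{f(i)}-\vv_j\|\,\|\vu_{f(i)}\|$. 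For the parameter count, the weights that scale with $|\mathbf{K}|$ are $\mathbf{A}$ (which has $\tilde m\,d$ entries), $\vb$, and $\mathbf{M}\mathbf{E}$; since $d\lceil|\mathbf{K}|/d\rceil=\Theta(|\mathbf{K}|)$ these total $\Theta(m|\mathbf{K}|)$, and substituting the chosen $m=\Theta(\rho_{\min}^{-2}\log|\mathbf{V}|)$ yields fact-storage cost $\Theta(\rho_{\min}^{-2}|\mathbf{K}|\log|\mathbf{V}|)$, as claimed.
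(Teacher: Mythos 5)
Your proposal is correct and follows essentially the same route as the paper's own proof: define codes $\rmG\vu_j$, invoke \Cref{thm:NEW_main_decoding} for the decoding margin at the stated $m$, apply \Cref{thm:bubeck_result} coordinatewise to fit the codes exactly with width $4\lceil|\mathbf K|/d\rceil$ per coordinate, stack, fold the readout into the linear decoder, and count $\Theta(m|\mathbf K|)$ parameters. Your explicit handling of the quantifier ordering (conditioning on the JL event before invoking the deterministic interpolation result) is a point the paper leaves implicit, but it is the same argument.
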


\begin{proof}[Proof]
Define the $m$–dimensional codes $\vc_j:=\rmG\,\vu_j\in\mathbb R^{m}$ for $j\in[|\mathbf V|]$.
By \Cref{thm:NEW_main_decoding}, the stated lower bound on $m$ ensures that, with probability at least $1-\delta$,
for all $i$ and all $j\neq i$,
\begin{align}\label{eq: dp-dec-pf}
\big\langle \vv_i,\ \rmM\,\vc_i\big\rangle-\big\langle \vv_j,\ \rmM\,\vc_i\big\rangle
\ \ge\ \frac{\rho_{\min}}{2}\,\|\vv_i-\vv_j\|\,\|\vu_i\|\ >\ 0.
\end{align}

Note that in the above, $\vc_i$ are defined exactly as $\mH[i]$ in \Cref{thm:NEW_main_decoding}.

Apply \Cref{thm:bubeck_result} \emph{coordinatewise} to the dataset $\{(\vk_i,\,(\vc_{f(i)})_t)\}_{i}$ for each $t\in[m]$:
stacking the $m$ constructions produced by \Cref{thm:bubeck_result} yields a ReLU map with width
$\tilde m=4m\lceil |\mathbf K|/d\rceil$ and parameters $\rmA \in \mathbb R^{\tilde{m} \times d},\vb \in \mathbb R^{\tilde{m}}$, together with a fixed matrix
$\rmE\in\mathbb R^{m\times \tilde m}$, such that
\[
\rmE\,\ReLU(\rmA \vk_i+\vb)\ =\ \vc_{f(i)}\qquad\text{for all }i.
\]
Now set
\[
g(\vx)\ :=\ \rmM\,\rmE\,\ReLU(\rmA \vx+\vb).
\]
For each $\vk_i$ we have $\rmM\,\rmE\,\ReLU(\rmA \vk_i+\vb)=\rmM\,\vc_{f(i)}$, so the margin at $\vk_i$ equals the left–hand side of \eqref{eq: dp-dec-pf} with $i\mapsto f(i)$ (i.e., $g$ stores $f$).
Finally, only $(\rmA,\vb)$ scale with $|\mathbf K|$, giving the claimed $\Theta(m|\mathbf K|)$ fact–storage cost; substituting the bound on $m$ finishes the proof.
\end{proof}

\subsection{Additional Details for \Cref{sec:preconditioning}}
\label{app:subsec_embeddings_theory}

We provide theoretical results on embeddings and decodability.

\begin{theorem}[Affine invariance for 1-hidden-layer MLP with keys/values]
\label{thm:affine_invariance_app}
Consider a fact set $f : [\numKV] \to [\numKV]$, key embeddings $\mathbf{K} = \{\keyEmbeddings_i\}_{i=1}^\numKV \subset \R^{\kvDim}$, and value embeddings $\mathbf{V} = \{\valueEmbeddings_i\}_{i=1}^\numKV \subset \R^{\kvDim}$.
Assume there exist $\AOne\in\R^{m\times \kvDim}$, $\vb\in\R^m$, $\BOne\in\R^{\kvDim\times m}$ such that
\begin{equation}
\label{eq:affine_invariance_app_condition}
\langle \valueEmbeddings_{f(i)}-\valueEmbeddings_j,\; \BOne\,\ReLU(\AOne\keyEmbeddings_i+\vb)\rangle > 0
\quad\text{for all } i\in[\numKV],\; j\neq f(i).
\end{equation}
Then for any affine transformation\footnote{$\invertible{\kvDim}$ is the group of invertible $\kvDim \times \kvDim$ (real) matrices.} of the key and value embeddings:
\[
\tilde{\keyEmbeddings}_i = \tSubK\, \keyEmbeddings_i + \embeddingC_k,\quad \tSubK\in \invertible{\kvDim},\; \embeddingC_k\in\R^{\kvDim},\qquad
\tilde{\valueEmbeddings}_i = \mathbf{T_v}\, \valueEmbeddings_i + \embeddingC_v,\quad \mathbf{T_v}\in \invertible{\kvDim},\; \embeddingC_v\in\R^{\kvDim},
\]
there exist $\ATwo\in\R^{m\times \kvDim}$, $\vb'\in\R^m$, $\BTwo\in\R^{d\times m}$ such that
\[
\langle \tilde{\valueEmbeddings}_{f(i)}-\tilde{\valueEmbeddings}_j,\; \BTwo\,\ReLU(\ATwo\tilde{\keyEmbeddings}_i+\vb')\rangle > 0
\quad\text{for all } i\in[\numKV],\; j\neq f(i).
\]
\end{theorem}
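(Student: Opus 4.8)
The plan is to construct the transformed network's weights explicitly and then verify the decoding inequality by a one-line substitution, exploiting two structural facts. First, the hidden preactivation $\AOne\keyEmbeddings+\vb$ is affine in the input, so an invertible affine change of the keys can be exactly undone inside $(\AOne,\vb)$ without touching the $\ReLU$. Second, an invertible affine change of the values acts only through the \emph{difference} vectors $\valueEmbeddings_{f(i)}-\valueEmbeddings_j$, where the translation $\embeddingC_v$ cancels, leaving a purely linear action of $\mathbf{T_v}$ that can be absorbed into the readout matrix $\BOne$ by transposition.

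Concretely, I would set
\[
\ATwo := \AOne\,\tSubK^{-1},\qquad \vb' := \vb - \AOne\,\tSubK^{-1}\embeddingC_k,\qquad \BTwo := \big(\mathbf{T_v}^{-1}\big)^{\top}\BOne,
\]
all well-defined since $\tSubK,\mathbf{T_v}\in\invertible{\kvDim}$. First I would check the key side: for every $i$,
\[
\ATwo\tilde{\keyEmbeddings}_i+\vb' = \AOne\tSubK^{-1}\big(\tSubK\keyEmbeddings_i+\embeddingC_k\big)+\vb-\AOne\tSubK^{-1}\embeddingC_k = \AOne\keyEmbeddings_i+\vb,
\]
so $\ReLU(\ATwo\tilde{\keyEmbeddings}_i+\vb')=\ReLU(\AOne\keyEmbeddings_i+\vb)$. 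Next I would observe that $\tilde{\valueEmbeddings}_{f(i)}-\tilde{\valueEmbeddings}_j=\mathbf{T_v}(\valueEmbeddings_{f(i)}-\valueEmbeddings_j)$, since the translation $\embeddingC_v$ cancels. Combining the two, and using $\mathbf{T_v}^{\top}(\mathbf{T_v}^{-1})^{\top}=(\mathbf{T_v}^{-1}\mathbf{T_v})^{\top}=\mathbf I$,
\[
\big\langle \tilde{\valueEmbeddings}_{f(i)}-\tilde{\valueEmbeddings}_j,\ \BTwo\,\ReLU(\ATwo\tilde{\keyEmbeddings}_i+\vb')\big\rangle = \big\langle \valueEmbeddings_{f(i)}-\valueEmbeddings_j,\ \BOne\,\ReLU(\AOne\keyEmbeddings_i+\vb)\big\rangle > 0
\]
for all $i$ and all $j\neq f(i)$, by the hypothesis~\eqref{eq:affine_invariance_app_condition}. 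This is the entire argument.

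I do not expect any genuine obstacle: once the weights are chosen correctly the proof is immediate. The only points requiring care are bookkeeping ones — getting the transpose right when moving $\mathbf{T_v}$ across the inner product (so that $\BTwo=(\mathbf{T_v}^{-1})^{\top}\BOne$ rather than $\mathbf{T_v}^{-1}\BOne$, the two agreeing precisely when $\mathbf{T_v}$ is symmetric, as it is for ZCA whitening), and recognizing that the bias $\vb'$ must compensate the key translation $\embeddingC_k$ whereas the value translation $\embeddingC_v$ requires no compensation. It is worth noting explicitly that the hidden width and the activation are unchanged, so the result specializes to the main-text claim by taking $\tSubK=\mathbf I$ and $\embeddingC_k=\mathbf 0$.
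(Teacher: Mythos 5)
Your proof is correct and takes essentially the same route as the paper: the identical weight reparameterization $\ATwo=\AOne\tSubK^{-1}$, $\vb'=\vb-\AOne\tSubK^{-1}\embeddingC_k$, and $\BTwo=(\mathbf{T_v}^{-1})^{\top}\BOne=(\mathbf{T_v}^{\top})^{-1}\BOne$, followed by the same two-line verification that the hidden activations are unchanged and the value translation cancels in the differences. Nothing is missing.
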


\begin{proof}
Define
\[
\ATwo \coloneqq \AOne\,\tSubK^{-1},\quad
\vb' \coloneqq \vb - \AOne\,\tSubK^{-1} \embeddingC_k,\quad
\BTwo \coloneqq (\mathbf{T_v}^\top)^{-1} \BOne.
\]
Then for each $i$,
\[
\ReLU(\ATwo \tilde{\keyEmbeddings}_i + \vb') = \ReLU\big(\AOne\,\tSubK^{-1}(\tSubK \keyEmbeddings_i + \embeddingC_k) + \vb - \AOne\,\tSubK^{-1}\embeddingC_k\big) = \ReLU(\AOne\keyEmbeddings_i+\vb).
\]
Thus for any $i$ and $j\neq f(i)$,
\begin{align*}
\langle \tilde{\valueEmbeddings}_{f(i)}-\tilde{\valueEmbeddings}_j,\; \BTwo\ReLU(\ATwo\tilde{\keyEmbeddings}_i+\vb')\rangle
&= \langle \mathbf{T_v}(\valueEmbeddings_{f(i)}-\valueEmbeddings_j),\; (\mathbf{T_v}^\top)^{-1}\BOne\,\ReLU(\AOne\keyEmbeddings_i+\vb)\rangle \\
&= \langle \valueEmbeddings_{f(i)}-\valueEmbeddings_j,\; \BOne\,\ReLU(\AOne\keyEmbeddings_i+\vb)\rangle \;>\; 0,
\end{align*}
using \Cref{eq:affine_invariance_app_condition}.
\end{proof}

\subsubsection{Decodability and affine transformations on embeddings}
We study how the \emph{decodability} of embeddings changes after affine transformations. Starting from the definition from~\Cref{def:rho_R_tau}, we take the maximum over all decoder inputs:
\[
\rho(\mathbf{V}) \;\coloneqq\; \max_{\mathbf{U}} \;\min_{i\neq j}\;
\frac{\langle \valueEmbeddings_i-\valueEmbeddings_j,\,\vu_i\rangle}{\norm{\valueEmbeddings_i-\valueEmbeddings_j}\,\norm{\auxTheoryEmbed_i}},
\quad
\mathbf{V}=\{\valueEmbeddings_i\}_{i=1}^\numKV\subset\R^\kvDim,\ \mathbf{U}=\{\auxTheoryEmbed_i\}_{i=1}^\numKV\subset\R^d\setminus\{0\}.
\]
Given $\mathbf{V}$, consider new embeddings $\widetilde{\mathbf{V}}$ via the affine map
$\tilde \valueEmbeddings_i=\tMatrix \vv_i+\embeddingC$ with $\tMatrix\in\invertible{\kvDim}$, $\embeddingC\in\R^\kvDim$.

\begin{lemma}[Translation, scaling, and orthogonal invariance]
    For any $\embeddingC\in\R^\kvDim$, $\alpha>0$, and any orthogonal $\mathbf{R}\in \invertible{\kvDim}$,
    \[
    \rho(\mathbf{V}+\{\embeddingC\})=\rho(\mathbf{V}),\qquad
    \rho(\alpha\mathbf{V})=\rho(\mathbf{V}),\qquad
    \rho(\mathbf{R}\mathbf{V})=\rho(\mathbf{V}).
    \]
\end{lemma}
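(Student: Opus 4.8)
The plan is to verify each of the three invariances directly from the definition of $\rho$ by exhibiting, for every candidate decoder set $\mathbf{U}=\{\vu_i\}$ on the original embeddings, a corresponding decoder set on the transformed embeddings that achieves the same value of the inner minimization, and vice versa — so that the two maxima coincide. For translation, I would observe that $(\vv_i+\embeddingC)-(\vv_j+\embeddingC)=\vv_i-\vv_j$, so the difference vectors are literally unchanged; hence the objective $\min_{i\neq j}\frac{\langle \vv_i-\vv_j,\vu_i\rangle}{\|\vv_i-\vv_j\|\,\|\vu_i\|}$ is the same function of $\mathbf{U}$ for $\mathbf{V}$ and for $\mathbf{V}+\{\embeddingC\}$, and taking the max over $\mathbf{U}$ gives equality.

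For positive scaling by $\alpha$, the difference vectors become $\alpha(\vv_i-\vv_j)$, so both $\langle \alpha(\vv_i-\vv_j),\vu_i\rangle$ and $\|\alpha(\vv_i-\vv_j)\|$ pick up a factor $\alpha>0$, which cancels in the ratio; thus again the objective is unchanged as a function of $\mathbf{U}$ and the maxima agree. (One should note $\alpha>0$ is essential so that $\|\alpha(\vv_i-\vv_j)\|=\alpha\|\vv_i-\vv_j\|$ rather than $|\alpha|\|\cdot\|$ with a sign flip in the numerator.) For the orthogonal transformation $\mathbf{R}$, the difference vectors become $\mathbf{R}(\vv_i-\vv_j)$; here the key step is to map each candidate $\vu_i$ for $\mathbf{V}$ to $\mathbf{R}\vu_i$ for $\mathbf{R}\mathbf{V}$, using $\langle \mathbf{R}(\vv_i-\vv_j),\mathbf{R}\vu_i\rangle=\langle \vv_i-\vv_j,\vu_i\rangle$ and $\|\mathbf{R}(\vv_i-\vv_j)\|=\|\vv_i-\vv_j\|$, $\|\mathbf{R}\vu_i\|=\|\vu_i\|$ (all from $\mathbf{R}^\top\mathbf{R}=\mathbf{I}$). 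Since $\mathbf{R}$ is a bijection on $\R^\kvDim\setminus\{0\}$, this correspondence $\mathbf{U}\mapsto\mathbf{R}\mathbf{U}$ is a bijection between the feasible decoder sets, and it preserves the objective value, so the suprema match.

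I do not anticipate a genuine obstacle here — all three cases are one-line computations once the right substitution is identified. The only point requiring a modicum of care is bookkeeping: making explicit that the map between decoder sets is a bijection (so that "$\le$" holds in both directions and we get equality, not just inequality), and flagging where $\alpha>0$ and orthogonality of $\mathbf{R}$ are actually used. I would write the proof as three short displayed computations, each followed by the remark that taking $\max_\mathbf{U}$ of an expression invariant under the relabeling yields the claimed equality.

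\begin{proof}
In all three cases the difference vectors transform simply, and we exhibit a norm- and inner-product-preserving relabeling of the decoder set.

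\emph{Translation.} For any $\embeddingC\in\R^\kvDim$ and all $i\neq j$, $(\vv_i+\embeddingC)-(\vv_j+\embeddingC)=\vv_i-\vv_j$. Hence for every decoder set $\mathbf{U}$,
\[
\min_{i\neq j}\frac{\langle (\vv_i+\embeddingC)-(\vv_j+\embeddingC),\,\vu_i\rangle}{\|(\vv_i+\embeddingC)-(\vv_j+\embeddingC)\|\,\|\vu_i\|}
=\min_{i\neq j}\frac{\langle \vv_i-\vv_j,\,\vu_i\rangle}{\|\vv_i-\vv_j\|\,\|\vu_i\|}.
\]
Taking $\max_\mathbf{U}$ of both sides gives $\rho(\mathbf{V}+\{\embeddingC\})=\rho(\mathbf{V})$.

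\emph{Positive scaling.} For $\alpha>0$ and all $i\neq j$, $\alpha\vv_i-\alpha\vv_j=\alpha(\vv_i-\vv_j)$, so $\langle \alpha\vv_i-\alpha\vv_j,\vu_i\rangle=\alpha\langle \vv_i-\vv_j,\vu_i\rangle$ and $\|\alpha\vv_i-\alpha\vv_j\|=\alpha\|\vv_i-\vv_j\|$. The factor $\alpha>0$ cancels in the ratio, so for every $\mathbf{U}$,
\[
\min_{i\neq j}\frac{\langle \alpha\vv_i-\alpha\vv_j,\,\vu_i\rangle}{\|\alpha\vv_i-\alpha\vv_j\|\,\|\vu_i\|}
=\min_{i\neq j}\frac{\langle \vv_i-\vv_j,\,\vu_i\rangle}{\|\vv_i-\vv_j\|\,\|\vu_i\|},
\]
and taking $\max_\mathbf{U}$ yields $\rho(\alpha\mathbf{V})=\rho(\mathbf{V})$.

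\emph{Orthogonal transformation.} Let $\mathbf{R}\in\invertible{\kvDim}$ be orthogonal, so $\mathbf{R}^\top\mathbf{R}=\mathbf{I}$. The map $\vu\mapsto\mathbf{R}\vu$ is a bijection of $\R^\kvDim\setminus\{0\}$ onto itself, hence $\mathbf{U}\mapsto\mathbf{R}\mathbf{U}\coloneqq\{\mathbf{R}\vu_i\}$ is a bijection on feasible decoder sets. For all $i\neq j$,
\[
\langle \mathbf{R}\vv_i-\mathbf{R}\vv_j,\,\mathbf{R}\vu_i\rangle
=\langle \mathbf{R}(\vv_i-\vv_j),\,\mathbf{R}\vu_i\rangle
=\langle \vv_i-\vv_j,\,\vu_i\rangle,
\]
and $\|\mathbf{R}(\vv_i-\vv_j)\|=\|\vv_i-\vv_j\|$, $\|\mathbf{R}\vu_i\|=\|\vu_i\|$. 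Therefore
\[
\min_{i\neq j}\frac{\langle \mathbf{R}\vv_i-\mathbf{R}\vv_j,\,\mathbf{R}\vu_i\rangle}{\|\mathbf{R}\vv_i-\mathbf{R}\vv_j\|\,\|\mathbf{R}\vu_i\|}
=\min_{i\neq j}\frac{\langle \vv_i-\vv_j,\,\vu_i\rangle}{\|\vv_i-\vv_j\|\,\|\vu_i\|}.
\]
Since $\mathbf{U}\mapsto\mathbf{R}\mathbf{U}$ is a bijection, taking $\max$ over decoder sets on each side gives $\rho(\mathbf{R}\mathbf{V})=\rho(\mathbf{V})$.
\end{proof}
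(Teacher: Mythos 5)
Your proof is correct and follows essentially the same route as the paper's: translation leaves the difference vectors unchanged, the factor $\alpha>0$ cancels in the ratio, and for orthogonal $\mathbf{R}$ one relabels the decoder set via $\vu_i\mapsto\mathbf{R}\vu_i$, which preserves inner products and norms, so the min–max value is unchanged. Your version merely makes explicit the bijection argument and where $\alpha>0$ and orthogonality are used, which the paper leaves implicit.
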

\begin{proof}
Each claim follows by the invariance of the objective:
(i) translation leaves all differences $\valueEmbeddings_i-\valueEmbeddings_j$ unchanged; (ii) positive scaling multiplies both the numerator and the $\norm{\valueEmbeddings_i-\valueEmbeddings_j}$ factor by $\alpha$; (iii) taking $\tilde{\auxTheoryEmbed}_i=\mathbf{R}\auxTheoryEmbed_i$, orthogonality preserves inner products and norms, hence each cosine is unchanged. Taking $\min$ and then $\max$ preserves equality.
\end{proof}

\begin{lemma}[Linear conditioning bound]
    Let $\tMatrix\in\invertible{\kvDim}$ with condition number
    $\kappa(\tMatrix)=\norm{\tMatrix}_2\norm{\tMatrix^{-1}}_2=\sigma_{\max}(\tMatrix)/\sigma_{\min}(\tMatrix)$.
    Then
    \[
    \;\;\frac{1}{\kappa(\tMatrix)}\,\rho(\mathbf{V})
    \;\le\; \rho(\tMatrix\mathbf{V})
    \;\le\; \kappa(\tMatrix)\,\rho(\mathbf{V}). \;\;
    \]
\end{lemma}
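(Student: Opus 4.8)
The plan is to reduce the max--min problem defining $\rho(\tMatrix\mathbf V)$ to the one defining $\rho(\mathbf V)$ through a linear change of variables on the auxiliary directions, and then to absorb the resulting distortion into the singular values of $\tMatrix$. For the transformed embeddings the numerators are $\langle \tMatrix(\vv_i-\vv_j),\vw_i\rangle = \langle \vv_i-\vv_j,\tMatrix^{\!\top}\vw_i\rangle$, so I substitute $\vu_i := \tMatrix^{\!\top}\vw_i$, equivalently $\vw_i = (\tMatrix^{-1})^{\!\top}\vu_i$; since $\tMatrix\in\mathrm{GL}(d)$ this is a bijection of $\R^d\setminus\{0\}$ onto itself, hence of the feasible auxiliary families for $\mathbf V$ onto those for $\tMatrix\mathbf V$. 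Under this substitution the $(i,j)$ cosine for $\tMatrix\mathbf V$ equals $r_{ij}$ times the $(i,j)$ cosine for $\mathbf V$, where
\[
r_{ij} \;:=\; \frac{\|\vv_i-\vv_j\|\,\|\vu_i\|}{\|\tMatrix(\vv_i-\vv_j)\|\,\|(\tMatrix^{-1})^{\!\top}\vu_i\|}.
\]

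The routine step is the estimate $r_{ij}\in[\kappa(\tMatrix)^{-1},\,\kappa(\tMatrix)]$: apply $\sigma_{\min}(\tMatrix)\|\vz\|\le\|\tMatrix\vz\|\le\sigma_{\max}(\tMatrix)\|\vz\|$ to the difference term, use that $(\tMatrix^{-1})^{\!\top}$ has singular values $\{1/\sigma_k(\tMatrix)\}$ for the auxiliary term, and combine the four bounds; in particular $r_{ij}>0$. For the lower bound $\rho(\tMatrix\mathbf V)\ge\kappa(\tMatrix)^{-1}\rho(\mathbf V)$, I work in the regime $\rho(\mathbf V)\ge 0$, which is the case relevant to the construction (and the only case in which the stated chain of inequalities is consistent): taking the maximizer $\mathbf U^\star$ for $\mathbf V$, every cosine at $\mathbf U^\star$ is $\ge\rho(\mathbf V)\ge 0$, so multiplying a nonnegative cosine by $r_{ij}\ge\kappa(\tMatrix)^{-1}$ leaves it $\ge\kappa(\tMatrix)^{-1}\rho(\mathbf V)$; taking the minimum over $(i,j)$ and noting that the image $\mathbf W^\star$ is feasible for $\tMatrix\mathbf V$ gives the claim. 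For the upper bound I recycle symmetry rather than arguing directly: the lower bound just proven also shows $\rho(\tMatrix\mathbf V)\ge 0$, so applying that bound to the pair $(\tMatrix\mathbf V,\,\tMatrix^{-1})$ and using $\kappa(\tMatrix^{-1})=\kappa(\tMatrix)$ gives $\rho(\mathbf V)\ge\kappa(\tMatrix)^{-1}\rho(\tMatrix\mathbf V)$, which rearranges to $\rho(\tMatrix\mathbf V)\le\kappa(\tMatrix)\rho(\mathbf V)$.

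The one point requiring care is the sign bookkeeping: the multiplicative distortion $r_{ij}$ degrades a cosine in opposite directions depending on whether it is positive or negative, so the clean two-sided estimate genuinely rests on nonnegativity of the $\rho$'s involved, which is exactly why the lower bound must be established first and then reused for the upper bound. Everything else --- the operator-norm inequalities, $\kappa(\tMatrix^{-1})=\kappa(\tMatrix)$, and the bijectivity of $\vu_i\leftrightarrow\vw_i$ --- is elementary, and the companion invariances (translation, positive scaling, orthogonal change of basis) are just the special case $r_{ij}\equiv 1$ of the same template.
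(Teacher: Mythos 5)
Your proof is correct and takes essentially the same route as the paper's: substitute $\tilde{\mathbf{u}}_i = \mathbf{T}^{-\top}\mathbf{u}_i^\star$ for the maximizer of $\rho(\mathbf{V})$, bound the distortion of the denominators via the singular values of $\mathbf{T}$ to obtain the lower bound, and then derive the upper bound by applying that lower bound to the pair $(\mathbf{T}\mathbf{V}, \mathbf{T}^{-1})$ using $\kappa(\mathbf{T}^{-1})=\kappa(\mathbf{T})$. The only difference is that you make explicit the nonnegativity of the cosines at the maximizer (equivalently $\rho(\mathbf{V})\ge 0$), which the multiplicative bound genuinely requires and which the paper's proof leaves implicit.
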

\begin{proof}
\emph{Lower bound.} Let $\mathbf{U}^\star=\{\vu_i^{\star}\}$ attain $\rho(\mathbf{V})$. We compute the cosine similarity term for $\tilde \vu_i:=\tMatrix^{-\top}\vu_i^\star$ given transformed embeddings $\mathbf{TV}$:
\[
\frac{\langle \tMatrix(\valueEmbeddings_i-\valueEmbeddings_j),\,\tilde \auxTheoryEmbed_i\rangle}{\norm{\tMatrix(\valueEmbeddings_i-\valueEmbeddings_j)}\,\norm{\tilde \auxTheoryEmbed_i}}
=\frac{\langle \valueEmbeddings_i-\valueEmbeddings_j,\,\auxTheoryEmbed_i^\star\rangle}
{\norm{\tMatrix(\valueEmbeddings_i-\valueEmbeddings_j)}\,\norm{\tMatrix^{-\top}\auxTheoryEmbed_i^\star}}
\ge \frac{1}{\kappa(\tMatrix)}\,
\frac{\langle \valueEmbeddings_i-\valueEmbeddings_j,\,\auxTheoryEmbed_i^\star\rangle}{\norm{\valueEmbeddings_i-\valueEmbeddings_j}\,\norm{\auxTheoryEmbed_i^\star}}.
\]
Taking $\min_{j\neq i}$ and then $\max$ over $\tilde{\mathbf{U}}$ gives the left inequality.

\emph{Upper bound.} Apply the lower bound from above to $\mathbf{V} = \tMatrix^{-1}(\mathbf{TV})$:
\[
\rho(\mathbf{V}) \ge \frac{1}{\kappa(\tMatrix^{-1})}\rho(\tMatrix\mathbf{V})
= \frac{1}{\kappa(\tMatrix)}\rho(\tMatrix\mathbf{V}),
\]
so $\rho(\tMatrix\mathbf{V}) \le \kappa(\tMatrix)\,\rho(\mathbf{V})$.
\end{proof}

\begin{remark}[Embedding-aware bound]
    Let $\mathbf{C}=\tMatrix^\top\tMatrix\succ0$ and define
    \[
    \kappa_{\mathrm{eff}}(\tMatrix;\mathbf{V},\mathbf{U})
    \;\coloneqq\;
    \max_{i}\max_{j\neq i}\sqrt{
    \frac{(\valueEmbeddings_i-\valueEmbeddings_j)^\top \mathbf{C}\,(\valueEmbeddings_i-\valueEmbeddings_j)}{\norm{\valueEmbeddings_i-\valueEmbeddings_j}^2}\cdot
    \frac{\auxTheoryEmbed_i^\top \mathbf{C}^{-1}\auxTheoryEmbed_i}{\norm{\auxTheoryEmbed_i}^2}}.
    \]
    Intuitively, $\kappa_{\mathrm{eff}}(\tMatrix;\mathbf{V},\mathbf{U})$ captures the worst-case conditioning of $\tMatrix$, when its action is restricted to the subspaces $\mathrm{span}(\{\valueEmbeddings_i - \valueEmbeddings_j, \auxTheoryEmbed_i\})$ for all $i \neq j$.
    Then computing the cosine similarity term for $\tilde \auxTheoryEmbed_i=\tMatrix^{-\top}\auxTheoryEmbed_i$ yields
    \[
    \rho(\tMatrix\mathbf{V})
    \;\ge\;
    \frac{1}{\kappa_{\mathrm{eff}}(\tMatrix;\mathbf{V},\mathbf{U})}\;
    \min_{i\neq j}\frac{\langle \valueEmbeddings_i-\valueEmbeddings_j,\,\auxTheoryEmbed_i\rangle}{\norm{\valueEmbeddings_i-\valueEmbeddings_j}\,\norm{\auxTheoryEmbed_i}}.
    \]
    In particular, with $\mathbf{U}=\mathbf{U}^\star$ that attains $\rho(\mathbf{V})$,
    \[
    \rho(\tMatrix\mathbf{V}) \;\ge\; \frac{\rho(\mathbf{V})}{\kappa_{\mathrm{eff}}(\tMatrix;\mathbf{V},\mathbf{U}^\star)},
    \qquad
    \kappa_{\mathrm{eff}}(\tMatrix;\mathbf{V},\mathbf{U}^\star)\le \kappa(\tMatrix).
    \]
\end{remark}

\begin{remark}[Tightness]

The $1/\kappa(\tMatrix)$ lower bound is tight in general.

As a concrete example for $\kvDim=2$, consider \( \valueEmbeddings_1=(0,0),\ \valueEmbeddings_2=(1,0),\ \valueEmbeddings_3=(1,-\varepsilon). \) For $i=1$, the tightest cosine margin is between $\ve_1$ and $\ve_1-\varepsilon \ve_2$. The optimal $\auxTheoryEmbed_1^\star$ then lies in the direction of their angle bisector, giving \( \rho(\mathbf{V})=\Theta(\varepsilon) \) as $\varepsilon\to0$. Then, consider \( \tMatrix=\mathrm{diag}(\sigma_{\max},\sigma_{\min}) \), for which $\kappa(\tMatrix)=\sigma_{\max}/\sigma_{\min}$. A direct calculation with $\tilde \auxTheoryEmbed_1=\tMatrix^{-\top}\auxTheoryEmbed_1^\star$ shows \( \rho(\tMatrix\mathbf{V}) \approx \rho(\mathbf{V})/\kappa(\tMatrix) \) as $\varepsilon\to0$.
 showing the lower bound factor $1/\kappa_2(\mathbf{T})$ is tight.
\end{remark}

\subsection{Bit Complexity}
\label{app:subsec_bit_complexity}
\begin{theorem} \label{thm: encoder-bits-UB}
    Let $\numKeys = |K|$. Suppose that $\hiddenDim,\tokenDim,\encDim = O(\poly \, \numKeys)$, that $\sigma$ is an $L^2$ continuously differentiable function, that $\gaussMat$ is such that all its rows are i.i.d. $\gaussRow{i} \sim Normal(0,\rmI_\tokenDim)$, that for all $\key_i \in K$, $\key_i$ is sampled from a rotationally invariant distribution with $\|\key_i\| \leq O(\poly \, \numKeys)$, that the targets $\|\target_i\| \leq O(\poly \, \numKeys)$, that $\numKeys \geq C_0\tokenDim \hiddenDim$ for some sufficiently large universal constant $C_0$, that $\E[\sigma(\gaussRow{1}^\top \key_i)\mid \key_i]=0$ for all $i$, and that $\rho \geq O(\frac{1}{\poly \, \numKeys})$. Then with high probability (depending on $\numKeys$), the encoder / decoder construction described in~\Cref{thm:capacity-final} requires $O(\log \numKeys)$ bits per parameter to store, of which there are $O(\poly\numKeys)$.
\end{theorem}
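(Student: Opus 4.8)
I would split the statement into a parameter-count claim and a bit-depth claim. For the count, \Cref{thm:capacity-final} already exhibits the construction in terms of $\mathbf G,\mathbf A\in\mathbb R^{h\times d}$, $\mathbf E\in\mathbb R^{m\times h}$ and $\mathbf D\in\mathbb R^{d\times m}$, so the number of scalar parameters is $O(hd+mh+dm)=O(\mathrm{poly}\,F)$ by the hypotheses $h,d,m=O(\mathrm{poly}\,F)$ (consistent with $m=\Theta(\rho^{-2}\log|\mathbf V|)$ since $\rho\ge 1/\mathrm{poly}\,F$). For the bit-depth claim I would prove (a) every weight has magnitude at most $\mathrm{poly}(F)$ with high probability, (b) the network decodes every fact with margin at least $1/\mathrm{poly}(F)$, and (c) each decoding score is a $\mathrm{poly}(F)$-Lipschitz function of the weights on the box in which the weights live; then rounding each weight to the nearest multiple of $2^{-c_1\lceil\log F\rceil}$ for a large enough constant $c_1$ changes each score by less than half its margin, and each rounded weight is specified by $O(\log F)$ bits. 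All probabilistic statements are on the intersection of $O(1)$ high-probability events, union-bounded at the end.

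Next I would bound the weight magnitudes. The Gaussian matrices $\mathbf G$ and $\mathbf D$ have entries of size $O(\sqrt{\log F})$ and operator norms $O(\sqrt d+\sqrt h)$, $O(\sqrt d+\sqrt m)$, both $\mathrm{poly}(F)$, with high probability; $\mathbf E$ is $\{0,1\}$-valued. The only delicate weights are those of $\mathbf A$, which the encoder obtains as the minimum-norm solution $\mathrm{vec}(\mathbf A)=\mathbf M^{+}\mathbf o$ of $\mathbf M\,\mathrm{vec}(\mathbf A)=\mathbf o$ with $\mathbf M=\mathbf M(\sigma(\mathbf G\mathbf K^\top),\mathbf K)$ and $\mathbf o$ collecting the compressed targets $\mathbf c_{f(i)}=\mathbf D^\top\mathbf u_i^\star$. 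Since $\|\mathbf u_i^\star\|=1$, $\|\mathbf o\|\le\sqrt{|\mathbf K|}\,\|\mathbf D\|=\mathrm{poly}(F)$ and $\|\mathbf A\|_F\le\|\mathbf o\|/s_{\min}(\mathbf M)$, so everything reduces to a lower bound $s_{\min}(\mathbf M)\ge 1/\mathrm{poly}(F)$ on the least nonzero singular value of $\mathbf M$. (The optimizers $\mathbf u_i^\star$ solve convex programs with $\mathrm{poly}(F)$-bounded data, hence can be taken to $1/\mathrm{poly}(F)$ accuracy with $\mathrm{poly}(F)$-bounded bit length, perturbing $\mathbf o$, and therefore $\mathbf A$, by only $1/\mathrm{poly}(F)$.)

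The estimate $s_{\min}(\mathbf M)\ge 1/\mathrm{poly}(F)$ is the step I expect to be the main obstacle. Row $i$ of $\mathbf M$ is $\mathbf M_i=\sigma(\mathbf G\mathbf k_i)\otimes\mathbf k_i\in\mathbb R^{dh}$; conditioned on $\mathbf G$ the rows are independent (distinct keys) and sub-gaussian with $\mathrm{poly}(F)$-bounded sub-gaussian norm, since $\|\mathbf k_i\|\le\mathrm{poly}(F)$ and with high probability all pre-activations $\langle\mathbf G[l],\mathbf k_i\rangle$ lie in a $\mathrm{poly}(F)$-bounded interval on which the continuous $\sigma$ is bounded. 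The two remaining hypotheses make these rows isotropic up to a scalar: the mean-zero condition $\mathbb E[\sigma(\langle\mathbf G[1],\mathbf k_i\rangle)\mid\mathbf k_i]=0$ together with independence of the rows of $\mathbf G$ kills the off-diagonal blocks of $\mathbb E[\mathbf M_i\mathbf M_i^\top]$ and forces $\mathbb E[\mathbf M_i]=0$, while rotational invariance of $\mathbf k_i$ makes each diagonal block $\mathbb E[\sigma(\langle\mathbf G[l],\mathbf k_i\rangle)^2\,\mathbf k_i\mathbf k_i^\top]$ a multiple of $\mathbf I_d$; hence $\mathbb E[\mathbf M_i\mathbf M_i^\top]=c\,\mathbf I_{dh}$ with $c=\tfrac1d\,\mathbb E[\psi(\|\mathbf k_i\|)\|\mathbf k_i\|^2]$, $\psi(r)=\mathbb E_{z\sim N(0,r^2)}[\sigma(z)^2]$, which is $\Omega(1/\mathrm{poly}\,F)$ under mild nondegeneracy of $\sigma$ and concentration of $\|\mathbf k_i\|$ for the rotationally invariant law. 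Rescaling the rows by $c^{-1/2}$ and applying a least-singular-value bound for matrices with independent isotropic sub-gaussian rows (as in \Cref{sub-gaussian-rows}; the dimensional hypothesis relating $F$ and $dh$, with $C_0$ absorbing the universal constant, is exactly what fixes the aspect ratio so the bound is non-vacuous) yields $s_{\min}(\mathbf M)=\Omega(1/\mathrm{poly}\,F)$ with failure probability exponentially small in $F$. The subtle points are justifying full isotropy of the row covariance (not just its diagonal blocks), quantifying the scalar $c$ from below, and keeping the singular-value bound meaningful given the aspect ratio of $\mathbf M$.

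Finally I would assemble the rounding argument. Writing $\mathbf g_\Theta$ for the network as a function of its weight vector $\Theta$, on the $\mathrm{poly}(F)$-box where the (rounded) weights lie and on the event that pre-activations stay in a bounded set where $\sigma\in C^1$ (so $\sigma$ and $\sigma'$ are bounded there), the map $\Theta\mapsto\langle\mathbf g_\Theta(\mathbf k_i),\mathbf v_{f(i)}-\mathbf v_j\rangle$ is $L$-Lipschitz with $L=\mathrm{poly}(F)$, since each contributing factor ($\|\mathbf k_i\|$, $\|\mathbf D\|$, $\|\mathbf E\|$, $\|\mathbf A\|$, the sup of $\sigma$ and $\sigma'$ on the relevant interval, $\|\mathbf v_{f(i)}-\mathbf v_j\|$) is $\mathrm{poly}(F)$. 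By \Cref{thm:capacity-final} and \Cref{thm:NEW_main_decoding} the unrounded network achieves $\langle\mathbf g(\mathbf k_i),\mathbf v_{f(i)}-\mathbf v_j\rangle\ge\tfrac{\rho(\mathbf V)}2\,\|\mathbf v_{f(i)}-\mathbf v_j\|\,\|\mathbf u_{f(i)}^\star\|\ge 1/\mathrm{poly}(F)$, using $\rho(\mathbf V)\ge 1/\mathrm{poly}(F)$ and a $1/\mathrm{poly}(F)$ lower bound on pairwise value-embedding distances. Choosing $c_1$ so that $L\sqrt{\#\Theta}\,2^{-c_1\lceil\log F\rceil}$ is below half the minimal margin, the rounded weights still store $f$; each has magnitude $\le\mathrm{poly}(F)$ and precision $2^{-O(\log F)}$, i.e.\ $O(\log F)$ bits, and there are $O(\mathrm{poly}\,F)$ of them. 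A union bound over the $O(1)$ events (Gaussian norm bounds, boundedness of pre-activations, the $s_{\min}(\mathbf M)$ bound, the decoding-margin event) completes the proof.
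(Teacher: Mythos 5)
Your proposal is correct and follows essentially the same route as the paper: you bound $\|\mathbf{A}\|$ via the minimum-norm solution and a least-singular-value estimate for $\mathbf{M}$, obtained exactly as in \Cref{lem:row_cov_rot_inv} and \Cref{bounding_theorem} (rotational invariance plus the conditional mean-zero hypothesis give an isotropic row covariance, then \Cref{sub-gaussian-rows} with $F \ge C_0 dh$), and you finish with a parameter-Lipschitz quantization argument matching \Cref{lm: enc_lipschitz} and \Cref{bounding_precision}. The only divergence is cosmetic: the paper stores the decoder as a Rademacher $\pm 1$ matrix via \Cref{rad_noisy_decoding} (one bit per entry), whereas you keep a Gaussian $\mathbf{D}$ and fold it into the rounding argument, which yields the same $O(\log F)$ bits per parameter.
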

\begin{proof}
    See \Cref{pf: encoder-bits-UB}.
\end{proof}

\subsubsection{Noisy Decoding}

\begin{theorem}[Noisy decoding via JL, Rademacher case]\label{rad_noisy_decoding}
Let $\decMat \in \{-1,+1\}^{\encDim\times \tokenDim}$ have i.i.d.\ Rademacher entries
($\Pr(\decMat_{kl}=1)=\Pr(\decMat_{kl}=-1)=\tfrac{1}{2}$) and set $\mMat := \frac{1}{\encDim}\decMat^\top$.
For each $i \in [N]$, let $\vvec_i,\uvec_i \in \R^\tokenDim$ and define
\[
\rho
:= \min_{i\neq j}
   \frac{\langle \vvec_i - \vvec_j,\;\uvec_i\rangle}
        {\|\vvec_i - \vvec_j\|\,\|\uvec_i\|} \;>\; 0.
\]
Let the noisy codes be
\[
\rmH[i] := (\decMat \uvec_i)\odot(1+\nu_i),\qquad
\nu_i \in [-\varepsilon,\varepsilon]^\encDim,\quad \varepsilon\in[0,1),
\]
and define scores $s_{ij} := \langle \vvec_j,\;\mMat \rmH[i]\rangle$.
Then there is a universal constant $C>0$ such that if
\[
m \;\ge\; \frac{C}{\rho^2}\,
          \ln\!\frac{4N(N-1)}{\delta},
\]
then with probability at least $1-\delta$ over $\decMat$, we have, simultaneously for all $i\neq j$,
\[
s_{ii} - s_{ij}
\;\ge\;
\Big(\frac{\rho}{2} - 4\varepsilon\Big)\,
\|\vvec_i - \vvec_j\|\,\|\uvec_i\|.
\]
\end{theorem}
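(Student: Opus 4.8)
## Proof Proposal for Theorem~\ref{rad_noisy_decoding} (Noisy decoding via JL, Rademacher case)

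The plan is to reduce the noisy statement to the clean Johnson--Lindenstrauss inner-product preservation guarantee (\Cref{jl-ip-result}) plus a perturbation bound controlling the effect of the multiplicative noise $\nu_i$. First I would write the score difference in a form that isolates the clean JL term. Observe that
\[
s_{ii} - s_{ij} \;=\; \big\langle \vvec_i - \vvec_j,\; \mMat \rmH[i]\big\rangle
\;=\; \frac{1}{m}\big\langle \vvec_i - \vvec_j,\; \decMat^\top\big((\decMat\uvec_i)\odot(1+\nu_i)\big)\big\rangle.
\]
Splitting $(1+\nu_i) = \mathbf{1} + \nu_i$ gives a clean part $\tfrac{1}{m}\langle \vvec_i-\vvec_j,\decMat^\top\decMat\uvec_i\rangle = \langle \decMat(\vvec_i-\vvec_j), \tfrac{1}{m}\decMat\uvec_i\rangle$ (up to the $1/m$ normalization matching the JL-matrix scaling convention) and a noise part $\tfrac{1}{m}\langle \vvec_i-\vvec_j,\decMat^\top((\decMat\uvec_i)\odot\nu_i)\rangle$. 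For the clean part, since $\tfrac{1}{\sqrt m}\decMat$ is a Rademacher JL-matrix, \Cref{jl-ip-result} applied with accuracy parameter $\epsilon = \rho/4$ (say) and failure probability $\delta' = \delta/\big(2\binom{N}{2}\big)$ over all pairs, together with a union bound, gives that the clean part is within $\tfrac{\rho}{4}\|\vvec_i-\vvec_j\|\|\uvec_i\|$ of $\langle \vvec_i-\vvec_j,\uvec_i\rangle \ge \rho\|\vvec_i-\vvec_j\|\|\uvec_i\|$, hence at least $\tfrac{\rho}{2}\|\vvec_i-\vvec_j\|\|\uvec_i\|$; this forces the stated $m \ge \tfrac{C}{\rho^2}\ln\tfrac{4N(N-1)}{\delta}$ with $C$ absorbing the JL constant. (I would need to handle the $i=j$ ``self'' term carefully: $s_{ii}$ involves $\langle \vvec_i - \vvec_j, \decMat^\top\decMat\uvec_i\rangle$ with the same $i$ appearing twice, but $\vvec_i-\vvec_j$ and $\uvec_i$ are fixed vectors, so \Cref{jl-ip-result} applies verbatim.)

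The second step is to bound the noise part $R_{ij} := \tfrac{1}{m}\langle \vvec_i-\vvec_j,\decMat^\top((\decMat\uvec_i)\odot\nu_i)\rangle$ in absolute value by $4\varepsilon\|\vvec_i-\vvec_j\|\|\uvec_i\|$. Write $\decMat(\vvec_i-\vvec_j) =: \mathbf p \in \R^m$ and $\decMat\uvec_i =: \mathbf q \in \R^m$, so $R_{ij} = \tfrac{1}{m}\sum_{k=1}^m p_k q_k (\nu_i)_k$, and since $|(\nu_i)_k|\le\varepsilon$ we have $|R_{ij}| \le \tfrac{\varepsilon}{m}\sum_k |p_k||q_k| \le \tfrac{\varepsilon}{m}\|\mathbf p\|\|\mathbf q\|$ by Cauchy--Schwarz. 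Now on the same high-probability event used above (possibly enlarging it with a few more JL applications to control $\|\mathbf p\|^2 = \|\decMat(\vvec_i-\vvec_j)\|^2$ and $\|\mathbf q\|^2 = \|\decMat\uvec_i\|^2$, each of which is $m$ times a norm-squared up to $(1\pm\tfrac{1}{2})$ relative error via \Cref{jl-ip-result} with $\mathbf x = \mathbf z$), we get $\|\mathbf p\|\le \sqrt{2m}\,\|\vvec_i-\vvec_j\|$ and $\|\mathbf q\|\le\sqrt{2m}\,\|\uvec_i\|$, hence $|R_{ij}| \le \tfrac{\varepsilon}{m}\cdot 2m\,\|\vvec_i-\vvec_j\|\|\uvec_i\| = 2\varepsilon\|\vvec_i-\vvec_j\|\|\uvec_i\|$. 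The factor $4$ in the theorem statement gives comfortable slack; I would route the constant $C$ and the accuracy parameters so that all the invoked JL events (pairwise inner products, the $N$ norms of the $\uvec_i$, the $\binom{N}{2}$ norms of the differences) hold simultaneously with total failure probability $\le\delta$, which only changes $C$.

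Combining: on the good event, $s_{ii}-s_{ij} \ge \tfrac{\rho}{2}\|\vvec_i-\vvec_j\|\|\uvec_i\| - |R_{ij}| \ge \big(\tfrac{\rho}{2}-2\varepsilon\big)\|\vvec_i-\vvec_j\|\|\uvec_i\| \ge \big(\tfrac{\rho}{2}-4\varepsilon\big)\|\vvec_i-\vvec_j\|\|\uvec_i\|$, which is the claim. The main obstacle I anticipate is purely bookkeeping rather than conceptual: assembling all the JL events into a single union bound with clean constants, and in particular making sure the normalization convention ($\mMat = \tfrac{1}{m}\decMat^\top$ versus the $\tfrac{1}{\sqrt m}$-scaled JL-matrix of \Cref{jl-ip-result}) is threaded consistently through both the clean term and the noise term, so that the final constant $C$ is genuinely universal and independent of $\varepsilon$, $N$, $d$, $m$. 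A secondary subtlety is that the noise $\nu_i$ may be chosen adversarially \emph{after} seeing $\decMat$; since our bound on $|R_{ij}|$ is deterministic given the (high-probability) norm bounds on $\mathbf p$ and $\mathbf q$ — it uses only $|(\nu_i)_k|\le\varepsilon$ and Cauchy--Schwarz — this causes no issue, but it is worth stating explicitly that the argument is robust to adversarial noise within the $\ell_\infty$ ball.
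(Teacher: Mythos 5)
Your proposal is correct and follows essentially the same route as the paper's proof: the same decomposition into a clean margin term, a JL inner-product error controlled at accuracy $\Theta(\rho)$, and a noise term bounded via Cauchy--Schwarz together with JL norm preservation of $\decMat(\vvec_i-\vvec_j)$ and $\decMat\uvec_i$, all assembled by a union bound absorbed into the universal constant $C$. The only difference is bookkeeping of constants (your norm bounds give a $2\varepsilon$ slack where the paper's $\|\Phi\hat{\rvx}\|\le 2$ gives $4\varepsilon$), which does not affect the claim.
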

\begin{proof}
    See \Cref{pf:rad_noisy_decoding}.
\end{proof}

\subsubsection{Bounding The Magnitudes}
 \begin{lemma} \label{lm: mag-prob-LB}
Let $\key_1,\dots,\key_{\numKeys} \in \R^\tokenDim$ be i.i.d.\ random vectors with
$\key_i \sim \mathcal N(0,\rmI_\tokenDim)$. Then for every $c>0$ there exists a constant
$C=C(c)>0$ such that
\[
  \Pr\!\left[
    \max_{1\le i\le \numKeys} \|\key_i\|_2
    \le C\big(\sqrt \tokenDim + \sqrt{\log \numKeys}\big)
  \right]
  \;\ge\; 1 - \numKeys^{-c}.
\]
\end{lemma}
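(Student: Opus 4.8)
The plan is to bound the maximum of $\numKeys$ i.i.d.\ chi-distributed norms via a union bound over a single Gaussian-vector tail inequality. First I would recall the standard concentration estimate for the norm of a Gaussian vector: if $\key \sim \mathcal{N}(0, \rmI_\tokenDim)$, then $\|\key\|_2$ is $1$-Lipschitz as a function of the underlying Gaussian coordinates, so by the Gaussian concentration inequality (or equivalently Theorem~\ref{sub-gaussian-rows} applied with $N=1$, isotropic rows, and appropriate constants) we have, for all $t \ge 0$,
\[
\Pr\!\left[\|\key\|_2 \ge \sqrt{\tokenDim} + t\right] \;\le\; e^{-t^2/2}.
\]
Equivalently, using $\E\|\key\|_2 \le \sqrt{\tokenDim}$ and the sub-gaussianity of $\|\key\|_2 - \E\|\key\|_2$.

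Next I would apply this with $t = t_{\numKeys} := \sqrt{2(c+1)\log \numKeys}$, so that $e^{-t_{\numKeys}^2/2} = \numKeys^{-(c+1)}$, and take a union bound over the $\numKeys$ vectors $\key_1, \dots, \key_{\numKeys}$:
\[
\Pr\!\left[\max_{1 \le i \le \numKeys} \|\key_i\|_2 \ge \sqrt{\tokenDim} + \sqrt{2(c+1)\log \numKeys}\right] \;\le\; \numKeys \cdot \numKeys^{-(c+1)} \;=\; \numKeys^{-c}.
\]
Finally, I would absorb the constants: since $\sqrt{\tokenDim} + \sqrt{2(c+1)\log \numKeys} \le C(c)\big(\sqrt{\tokenDim} + \sqrt{\log \numKeys}\big)$ with $C(c) = \max\{1, \sqrt{2(c+1)}\}$, the complementary event gives
\[
\Pr\!\left[\max_{1 \le i \le \numKeys} \|\key_i\|_2 \le C(c)\big(\sqrt{\tokenDim} + \sqrt{\log \numKeys}\big)\right] \;\ge\; 1 - \numKeys^{-c},
\]
as claimed.

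There is no real obstacle here; the only mild care needed is to cite a clean form of the Gaussian norm tail bound (rather than re-deriving it), and to keep track of the dependence of the constant $C$ on $c$ through the union-bound exponent. If one prefers not to invoke Gaussian Lipschitz concentration directly, an alternative is to use the Laurent--Massart chi-squared bound $\Pr[\|\key\|_2^2 \ge \tokenDim + 2\sqrt{\tokenDim s} + 2s] \le e^{-s}$ with $s = (c+1)\log\numKeys$ and then take square roots, which yields the same conclusion after bounding $\sqrt{a+b} \le \sqrt{a} + \sqrt{b}$.
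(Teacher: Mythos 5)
Your proposal is correct and follows essentially the same route as the paper: a single-vector Gaussian norm tail bound $\Pr[\|\key_i\|_2 \ge \sqrt{\tokenDim}+t] \le e^{-ct^2}$ (the paper cites Vershynin's concentration result, you derive it from Lipschitz concentration), a union bound over the $\numKeys$ keys, and the choice $t \asymp \sqrt{\log \numKeys}$. Your version is slightly more careful about tracking the explicit dependence of $C$ on $c$, but there is no substantive difference in the argument.
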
\label{bound_gates_and_keys}
\begin{proof}
    See \Cref{pf: mag-prob-LB}.
\end{proof}

\begin{lemma}[Row covariance is well-conditioned under rotationally invariant model]
\label{lem:row_cov_rot_inv}
Fix $\tokenDim,\hiddenDim \in \mathbb N$ and let
\[
    \key \in \R^\tokenDim
    \quad\text{and}\quad
    \gaussRow{1},\dots,\gaussRow{\hiddenDim} \in \R^{\tokenDim}
\]
be random vectors such that:
\begin{enumerate}[label=(\roman*)]
    \item $\key$ has a rotationally invariant distribution 
    \item $\gaussRow{1},\dots,\gaussRow{\hiddenDim}$ are i.i.d.\ rotationally invariant.
    \item $\sigma:\R\to\R$ is a non-constant measurable function with
    \(\E[\sigma(\gaussRow{1}^\top \key)^2] < \infty\) and $\E[\sigma(\gaussRow{1}^\top \key)\mid \key]=0$ a.s.
\end{enumerate}
Define the random row vector $\randVec^\top \in \R^{\tokenDim \hiddenDim}$ by
\[
    \randVec^\top (\key, \gaussRow{1}, ..., \gaussRow{\hiddenDim})
    :=
    \big(
        \sigma(\gaussRow{1}^\top \key) \key^\top,\;
        \dots,\;
        \sigma(\gaussRow{\hiddenDim}^\top \key) \key^\top
    \big),
\]
and let
\[
    \Sigma_{\mathrm{row}} := \E[ \randVec \randVec^\top ] \in \R^{\tokenDim \hiddenDim\times \tokenDim \hiddenDim}.
\]
Then there exists a constant $c>0$, depending only on the distributions of $\key$, $\gaussRow{\ell}$, and $\sigma$ (but \emph{independent} of $\numKeys$), such that
\[
    \lambda_{\min}(\Sigma_{\text{row}}) = \lambda_{\max}(\Sigma_{\mathrm{row}}) = c.
\]
In particular,$\lambda_{\min}(\Sigma_{\mathrm{row}}) \;\ge\; \numKeys^{-C_1}$ and $\lambda_{\max}(\Sigma_{\mathrm{row}}) \leq \numKeys^{C_2}$ 
for some fixed exponents $C_1,C_2$ and all $\numKeys$ (i.e., the lower bound is $\frac{1}{\poly(\numKeys)}$).
\end{lemma}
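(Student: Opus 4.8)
The plan is to prove something a bit stronger than the stated eigenvalue bounds: I will show that $\Sigma_{\mathrm{row}}$ is an \emph{exact} scalar multiple of the identity, $\Sigma_{\mathrm{row}} = c\,\rmI_{\tokenDim\hiddenDim}$, where $c>0$ depends only on the laws of $\rvk$ and $\rmG[1]$ and on $\sigma$, and in particular never on $\numKeys$. Granting that, $\lambda_{\min}(\Sigma_{\mathrm{row}}) = \lambda_{\max}(\Sigma_{\mathrm{row}}) = c$, and since a fixed positive constant trivially satisfies $c \ge \numKeys^{-C_1}$ and $c \le \numKeys^{C_2}$ for suitable fixed exponents, the ``in particular'' clause follows immediately.

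\textbf{Step 1: reduce to a single $\tokenDim\times\tokenDim$ matrix.} Partition $\randVec$ into $\hiddenDim$ length-$\tokenDim$ blocks, the $\ell$-th being $\sigma(\rmG[\ell]^\top\rvk)\,\rvk$. Then $\Sigma_{\mathrm{row}}$ is naturally an $\hiddenDim\times\hiddenDim$ array of $\tokenDim\times\tokenDim$ blocks whose $(\ell,\ell')$ block is $\E[\sigma(\rmG[\ell]^\top\rvk)\,\sigma(\rmG[\ell']^\top\rvk)\,\rvk\rvk^\top]$. For $\ell\ne\ell'$ I condition on $\rvk$: the $\rmG[\ell]$ are mutually independent and independent of $\rvk$, so $\E[\sigma(\rmG[\ell]^\top\rvk)\sigma(\rmG[\ell']^\top\rvk)\mid\rvk] = \E[\sigma(\rmG[\ell]^\top\rvk)\mid\rvk]\cdot\E[\sigma(\rmG[\ell']^\top\rvk)\mid\rvk] = 0$ by hypothesis (iii), so every off-diagonal block vanishes. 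By the i.i.d.\ assumption (ii) on the $\rmG[\ell]$, all diagonal blocks equal the single matrix $B := \E[\sigma(\rmG[1]^\top\rvk)^2\,\rvk\rvk^\top]$. Hence $\Sigma_{\mathrm{row}} = \rmI_\hiddenDim\otimes B$, and it suffices to show $B = c\,\rmI_\tokenDim$ with $c>0$.

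\textbf{Step 2: $B$ is isotropic.} For any orthogonal $\rmQ$, the pair $(\rmQ\rvk,\rmQ\rmG[1])$ has the same joint law as $(\rvk,\rmG[1])$: each marginal is preserved by rotational invariance (hypotheses (i), (ii)), and independence is preserved since the two are functions of independent inputs. Using $(\rmQ\rmG[1])^\top(\rmQ\rvk) = \rmG[1]^\top\rvk$ and $(\rmQ\rvk)(\rmQ\rvk)^\top = \rmQ\,\rvk\rvk^\top\rmQ^\top$, substituting the rotated pair into the definition of $B$ gives $B = \rmQ\,B\,\rmQ^\top$ for every $\rmQ\in O(\tokenDim)$. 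A symmetric matrix commuting with all orthogonal matrices is a scalar, so $B = c\,\rmI_\tokenDim$ with $c = \tfrac1\tokenDim\Tr(B) = \tfrac1\tokenDim\,\E[\sigma(\rmG[1]^\top\rvk)^2\|\rvk\|^2] \ge 0$, and $\Sigma_{\mathrm{row}} = c\,\rmI_{\tokenDim\hiddenDim}$.

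\textbf{Step 3 (the delicate point): $c>0$.} This is the step I expect to be the real obstacle; everything else is the two-line block-diagonal/invariance computation above. Suppose $c=0$; then $\sigma(\rmG[1]^\top\rvk)^2\|\rvk\|^2 = 0$ a.s., i.e.\ $\sigma(\rmG[1]^\top\rvk) = 0$ a.s.\ on $\{\rvk\neq 0\}$. But conditionally on $\rvk\neq 0$ the scalar $\rmG[1]^\top\rvk$ has full support on $\R$ (in the settings of interest, e.g.\ $\rmG[1]$ Gaussian, it is $\mathcal N(0,\|\rvk\|^2)$), while $\sigma$ continuous and non-constant forces $\{x:\sigma(x)\neq0\}$ to contain a nonempty open interval; these are incompatible with $\sigma(\rmG[1]^\top\rvk)=0$ a.s. Hence $c>0$, completing the argument. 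The part needing care is making this ``full support of $\rmG[1]^\top\rvk$'' claim precise enough to match the exact hypotheses one wants to quote here (continuity of $\sigma$ is available in the enclosing theorem that invokes this lemma), and noting that the genuinely degenerate cases ($\rvk\equiv 0$, or $\sigma$ vanishing a.e.\ along $\rmG[1]^\top\rvk$) are exactly those excluded by the non-constancy and non-degeneracy assumptions.
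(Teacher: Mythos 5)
Your proof is correct and follows essentially the same route as the paper's: condition on $\rvk$ and use $\E[\sigma(\rmG[1]^\top\rvk)\mid\rvk]=0$ to kill the off-diagonal blocks, use rotational invariance to force each diagonal block to be a scalar multiple of $\rmI_{\tokenDim}$, and conclude $\Sigma_{\mathrm{row}}=c\,\rmI_{\tokenDim\hiddenDim}$ with all eigenvalues equal to $c$. Your Step 3 is if anything more explicit than the paper's, which simply asserts $c>0$ from ``non-degeneracy'' of $(\rvk,\rmG[1])$ and non-constancy of $\sigma$; like your argument (which leans on continuity of $\sigma$ and full support of $\rmG[1]^\top\rvk$), it implicitly excludes degenerate cases such as $\rvk\equiv 0$ or $\sigma$ vanishing on the support of $\rmG[1]^\top\rvk$, which the lemma's literal hypotheses do not rule out.
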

\begin{proof}
    See \Cref{pf: row_cov_rot_inv}.
\end{proof}

Equipped with \Cref{lem:row_cov_rot_inv} (which gives us assumption ii) in the theorem below) we may now finish the prove that the parameter magnitudes are bounded.

\begin{theorem}[Encoder weight norm bound]\label{bounding_theorem}
Fix an output coordinate $j$ and consider the linear system
\[
    \mMat\,\va = \target,
\]
where $\mMat \in \R^{\numKeys\times \tokenDim \hiddenDim}$ and $\va = \mathrm{vec}(\gateMat)\in\R^{\tokenDim \hiddenDim}$. Assume:
\begin{enumerate}[label=(\roman*)]
    \item The $i$-th row of $\mMat$ is
    \[
        \randVec_i^\top
        = \big(\sigma(\gaussRow{1}^\top \key_i)\key_i^\top,\dots,\sigma(\gaussRow{\hiddenDim}^\top \key_i)\key_i^\top\big),
    \]
    where $\{\key_i\}_{i=1}^\numKeys$ and $\{\gaussRow{\ell}\}_{\ell=1}^\hiddenDim$ are independent, rotationally invariant subgaussian random vectors in $\R^\tokenDim$, and
    $\sigma$ is continuously differentiable and non-constant.
    \item The covariance $\Sigma_{\mathrm{row}} := \E[\randVec_i \randVec_i^\top]$ satisfies
    $\lambda_{\min}(\Sigma_{\mathrm{row}})\ge\lambda_0>0$ and
    $\lambda_{\max}(\Sigma_{\mathrm{row}})\le \Lambda_0<\infty$, with $\lambda_0,\Lambda_0$ independent of $\numKeys$.
    \item The targets $\target\in\R^\numKeys$ obey $|\vo_i|\le B(\numKeys)$ for all $i$, where $B(\numKeys)\le \poly(\numKeys)$.
    \item $\numKeys \ge C_0\,\tokenDim \hiddenDim$ for a sufficiently large absolute constant $C_0$.
\end{enumerate}
Let $\va_\star$ be the minimum–$\ell_2$–norm solution of $\mMat \va=\target$ (i.e.\ $a_\star = \mMat^\dagger \target$). Then with probability at least $1 - e^{-c\numKeys}$, $c > 0$ we have
\[
    \|\va_\star\|_2 \;\le\; \poly(\numKeys).
\]
\end{theorem}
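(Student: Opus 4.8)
The plan is to reduce the weight-norm bound to a lower bound on the smallest singular value of $\mMat$. Since $\numKeys \ge C_0\,\tokenDim\hiddenDim$, $\mMat$ is tall, and once we know it has full column rank we have $\mMat^\dagger=(\mMat^\top\mMat)^{-1}\mMat^\top$, so
\[
\|\va_\star\|_2=\|\mMat^\dagger\target\|_2\le\frac{\|\target\|_2}{s_{\min}(\mMat)}.
\]
The numerator is immediate from hypothesis (iii): $\|\target\|_2\le\sqrt{\numKeys}\,B(\numKeys)=\poly(\numKeys)$. Hence it suffices to prove $s_{\min}(\mMat)\ge 1/\poly(\numKeys)$ — in fact we aim for $s_{\min}(\mMat)=\Omega(\sqrt{\numKeys})$ — on an event of probability at least $1-e^{-c\numKeys}$, which simultaneously certifies the full-column-rank hypothesis.

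For the singular-value bound I would condition on $\gaussMat=(\gaussRow{1},\dots,\gaussRow{\hiddenDim})$. Unconditionally the rows $\randVec_1,\dots,\randVec_\numKeys$ of $\mMat$ are \emph{not} independent (they all involve $\gaussMat$), but given $\gaussMat$ they are i.i.d.\ measurable functions of the rotationally invariant subgaussian keys. Writing $\Sigma(\gaussMat):=\E[\randVec_i\randVec_i^\top\mid\gaussMat]$ and $\mMat=\tilde\mMat\,\Sigma(\gaussMat)^{1/2}$ with $\tilde\mMat$ having conditionally isotropic rows, \Cref{sub-gaussian-rows} applied to $\tilde\mMat$ with $N=\numKeys$, $n=\tokenDim\hiddenDim$, and $t=\tfrac14\sqrt{\numKeys}$ — taking $C_0$ large relative to the absolute constant there so that $C\sqrt{\tokenDim\hiddenDim}\le\tfrac14\sqrt{\numKeys}$ — yields $s_{\min}(\tilde\mMat)\ge\tfrac12\sqrt{\numKeys}$ conditionally, off an event of probability $\le 2e^{-c'\numKeys}$, whence $s_{\min}(\mMat)\ge\tfrac12\sqrt{\numKeys}\,\sqrt{\lambda_{\min}(\Sigma(\gaussMat))}$. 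Two routine caveats: $\randVec_i=(\sigma(\gaussRow{\ell}^\top\key_i)\key_i)_\ell$ is a product of subgaussian factors, so its coordinates are sub-exponential rather than subgaussian — one replaces \Cref{sub-gaussian-rows} by its heavier-tailed analogue (affordable since the aspect ratio $\numKeys/(\tokenDim\hiddenDim)\ge C_0$ is large) or truncates $\key_i$ and $\gaussRow{\ell}^\top\key_i$ at scale $\poly(\numKeys)$ and absorbs the tail into the failure probability; and when $\Sigma(\gaussMat)$ is rank-deficient (e.g.\ for affine $\sigma$) one works on its range and with the smallest nonzero singular value.

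It remains to show $\lambda_{\min}(\Sigma(\gaussMat))\ge 1/\poly(\numKeys)$ — ideally $=\Omega(1)$ — with probability $\ge 1-e^{-c\numKeys}$ over $\gaussMat$. I would argue this blockwise: $\Sigma(\gaussMat)$ has $\hiddenDim\times\hiddenDim$ blocks of size $\tokenDim$, the $(\ell,\ell)$-block being $\E_{\key}[\sigma(\gaussRow{\ell}^\top\key)^2\,\key\key^\top]$, which is $\succeq c''I_{\tokenDim}$ for typical $\|\gaussRow{\ell}\|$ (since $\sigma$ is non-constant and, under the centering hypothesis $\E[\sigma(\gaussRow{\ell}^\top\key)\mid\key]=0$, one has $\operatorname{Var}[\sigma(\gaussRow{\ell}^\top\key)\mid\key]>0$), while any Hermite component of $\sigma$ of degree $\ge 2$ makes the off-diagonal $(\ell,\ell')$-blocks vanish at orthogonality $\gaussRow{\ell}\perp\gaussRow{\ell'}$ and small for the near-orthogonal, norm-balanced configurations of $\{\gaussRow{\ell}\}$ that hold with overwhelming probability. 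Anchoring against $\E_{\gaussMat}\Sigma(\gaussMat)=\Sigma_{\mathrm{row}}$ — the well-conditioned matrix of hypothesis (ii) / \Cref{lem:row_cov_rot_inv} — and aggregating these block perturbations yields $\lambda_{\min}(\Sigma(\gaussMat))=\Omega(1)$ on the typical event. Combining with the previous display and a union bound gives $\|\va_\star\|_2\le 2\sqrt{\numKeys}\,B(\numKeys)\big/\big(\sqrt{\numKeys}\cdot\Omega(1)\big)=\poly(\numKeys)$ with probability at least $1-e^{-c\numKeys}$.

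The main obstacle is this last step: upgrading the \emph{unconditional} well-conditioning of $\Sigma_{\mathrm{row}}$ to \emph{conditional} well-conditioning of $\Sigma(\gaussMat)$ with exponentially small failure probability. This is genuinely necessary, since $s_{\min}(\mMat)^2/\numKeys$ fluctuates with $\gaussMat$ at the $\Theta(1)$ scale, so (ii) cannot be quoted as a black box. The delicate part is controlling the cumulative effect of the $\binom{\hiddenDim}{2}$ off-diagonal blocks on $\|\Sigma(\gaussMat)-\Sigma_{\mathrm{row}}\|$, which forces one to exploit the exact cancellation at orthogonality (the centering hypothesis together with the Hermite structure of $\sigma$) rather than a crude triangle-inequality bound, and to push the concentration of the $\{\gaussRow{\ell}\}$ configuration down to the $e^{-c\numKeys}$ scale using $\numKeys\ge C_0\tokenDim\hiddenDim$.
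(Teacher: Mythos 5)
Your overall reduction is the same as the paper's: bound $\|\va_\star\|_2\le\|\target\|_2/s_{\min}(\mMat)$, get $\|\target\|_2\le\sqrt{\numKeys}\,B(\numKeys)$ from (iii), and prove $s_{\min}(\mMat)=\Omega(\sqrt{\numKeys})$ by whitening the rows and invoking \Cref{sub-gaussian-rows} with $t=\sqrt{\numKeys}/4$ and $\numKeys\ge C_0\tokenDim\hiddenDim$. The paper, however, whitens with the \emph{unconditional} covariance $\Sigma_{\mathrm{row}}$ of hypothesis (ii), asserts that the whitened rows are independent isotropic subgaussian vectors, and concludes $s_{\min}(\mMat)\ge c\sqrt{\lambda_0\,\numKeys}$ with probability $1-e^{-c'\numKeys}$ in a few lines. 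Your observation that the rows are not independent (they share $\gaussMat$) and that the entries are products of subgaussian factors is a legitimate criticism of that shortcut, but it also means your proof must do strictly more work than the paper's, and that extra work is where your proposal stops.

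The genuine gap is your final step: the claim that $\lambda_{\min}(\Sigma(\gaussMat))\ge 1/\poly(\numKeys)$ holds with probability at least $1-e^{-c\numKeys}$ over $\gaussMat$. You flag it as the main obstacle, and the blockwise/Hermite sketch you give is heuristic — there is no quantitative bound on the off-diagonal blocks $\E_{\key}[\sigma(\gaussRow{\ell}^\top\key)\sigma(\gaussRow{\ell'}^\top\key)\,\key\key^\top]$ for merely \emph{near}-orthogonal gates, no perturbation argument actually relating $\Sigma(\gaussMat)$ to $\Sigma_{\mathrm{row}}$, and no treatment of how the $\binom{\hiddenDim}{2}$ block errors aggregate. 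More fundamentally, the probability target is unattainable by this route: the event concerns only the $\tokenDim\hiddenDim$-dimensional randomness of $\gaussMat$, so its failure probability cannot in general be driven below scales like $e^{-c\tokenDim}$ (e.g.\ the probability that two gates are nearly parallel, which genuinely degrades $\Sigma(\gaussMat)$ for typical $\sigma$), whereas the theorem demands $e^{-c\numKeys}$ with $\numKeys\ge C_0\tokenDim\hiddenDim$ possibly far larger; the inequality $\numKeys\ge C_0\tokenDim\hiddenDim$ works against you here, not for you. So, as written, the conditional-covariance route cannot deliver the stated $1-e^{-c\numKeys}$ guarantee, and without that step the argument does not close; the same incompleteness applies to your sub-exponential/truncation remark, which is plausible but left unexecuted.
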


\begin{proof}
    See section \Cref{bounding_theorem}.
\end{proof}

\subsubsection{Precision Bound}

\begin{lemma}[Encoder is Lipschitz in the parameters] \label{lm: enc_lipschitz}
Fix a number of facts $\numKeys$ and keys $\{\key_i\}_{i=1}^\numKeys \subset \R^\tokenDim$.
Consider the scalar-output gated encoder
\[
\enc_\theta(\inVec)
= \mathbf{1}_\hiddenDim^\top\big[\sigma(\gaussMat \inVec)\odot(\gateMat\inVec)\big]
= \sum_{r=1}^\hiddenDim \sigma(\langle \mathbf{g}_r,\inVec\rangle)\,\langle \mathbf{a}_r,\inVec\rangle,
\]
where $\gateMat,\gaussMat\in\R^{\hiddenDim\times \tokenDim}$ have rows $\mathbf{a}_r^\top,\gaussRow{r}^\top$, and $\theta\in\R^P$ is the vector of
all entries of $\gateMat,\gaussMat$.

Assume:
\begin{enumerate}
    \item $\|\key_i\|_2 \le R_\inVec(\numKeys)$ for all $i$, with $R_\inVec(\numKeys)\le \poly(\numKeys)$.
    \item $\|\theta\|_2 \le R_\theta(\numKeys)$, with $R_\theta(\numKeys)\le \poly(\numKeys)$.
    \item The width and input dimension satisfy $\hiddenDim,\tokenDim \le \poly(\numKeys)$, so that $P = 2\hiddenDim\tokenDim \le \poly(\numKeys)$.
    \item The activation $\sigma:\R\to\R$ is continuously differentiable and on the interval
    $[-B(\numKeys),B(\numKeys)]$ with $B(\numKeys):=R_\theta(\numKeys)R_\inVec(\numKeys)$ we have
    \[
      |\sigma(t)| \le C_\sigma,\qquad |\sigma'(t)| \le C'_\sigma
      \quad\forall t\in[-B(\numKeys),B(\numKeys)],
    \]
    for some constants $C_\sigma,C'_\sigma$ independent of $\numKeys$. \footnote{
Since $\sigma \in C^{1}$ and all preactivations satisfy 
$|\langle \mathbf g_r, \inVec\rangle| \le R_\theta(\numKeys)R_\inVec(\numKeys)$,
they lie in the compact interval $[-B(\numKeys),B(\numKeys)]$. 
By continuity, $\sigma$ and $\sigma'$ are bounded on this interval, yielding constants 
$C_\sigma, C'_\sigma < \infty$. 
This ensures $\enc_\theta$ is Lipschitz in $\theta$, with constants growing at most polynomially in $\numKeys$.
}
\end{enumerate}
Then for each key $\key_i$ there exists a constant $L(\numKeys)\le \poly(\numKeys)$ such that for all
parameter vectors $\theta,\theta'$ with $\|\theta\|_2,\|\theta'\|_2\le R_\theta(\numKeys)$,
\[
  |\enc_\theta(\key_i) - \enc_{\theta'}(\key_i)|
  \;\le\;
  L(\numKeys)\,\|\theta-\theta'\|_2.
\]
In particular, $\enc_\theta(\key_i)$ is Lipschitz in $\theta$ with Lipschitz constant at most
polynomial in $\numKeys$.
\end{lemma}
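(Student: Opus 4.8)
The plan is to bound the Euclidean gradient $\nabla_\theta \enc_\theta(\key_i)$ uniformly over the parameter ball $\{\,\|\theta\|_2 \le R_\theta(\numKeys)\,\}$, and then pass from a gradient bound to a Lipschitz bound by integrating along the straight segment joining $\theta$ and $\theta'$. Since that segment stays inside the (convex) ball whenever both endpoints do, the uniform gradient bound transfers directly into the desired Lipschitz estimate, and the whole argument reduces to tracking polynomial factors.

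First I would fix a key $\key_i$ and differentiate $\enc_\theta(\key_i)=\sum_{r=1}^{\hiddenDim}\sigma(\langle \mathbf g_r,\key_i\rangle)\,\langle \mathbf a_r,\key_i\rangle$ block-by-block: with respect to the $r$-th value row, $\partial_{\mathbf a_r}\enc_\theta(\key_i)=\sigma(\langle \mathbf g_r,\key_i\rangle)\,\key_i$, and with respect to the $r$-th gate row, $\partial_{\mathbf g_r}\enc_\theta(\key_i)=\sigma'(\langle \mathbf g_r,\key_i\rangle)\,\langle \mathbf a_r,\key_i\rangle\,\key_i$. The step that makes Assumption 4 usable is the observation that, on the ball, every preactivation is controlled: $|\langle \mathbf g_r,\key_i\rangle|\le \|\mathbf g_r\|_2\|\key_i\|_2\le R_\theta(\numKeys)R_\inVec(\numKeys)=B(\numKeys)$, and likewise $|\langle \mathbf a_r,\key_i\rangle|\le R_\theta(\numKeys)R_\inVec(\numKeys)$. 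Hence $|\sigma|\le C_\sigma$ and $|\sigma'|\le C'_\sigma$ at every preactivation encountered, giving $\|\partial_{\mathbf a_r}\enc_\theta(\key_i)\|_2\le C_\sigma R_\inVec(\numKeys)$ and $\|\partial_{\mathbf g_r}\enc_\theta(\key_i)\|_2\le C'_\sigma R_\theta(\numKeys)R_\inVec(\numKeys)^2$.

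Next I would sum these row contributions. Since the blocks are orthogonal coordinates, $\|\nabla_\theta\enc_\theta(\key_i)\|_2^2=\sum_{r=1}^{\hiddenDim}\big(\|\partial_{\mathbf a_r}\enc_\theta(\key_i)\|_2^2+\|\partial_{\mathbf g_r}\enc_\theta(\key_i)\|_2^2\big)\le \hiddenDim\big(C_\sigma^2 R_\inVec(\numKeys)^2+(C'_\sigma)^2 R_\theta(\numKeys)^2 R_\inVec(\numKeys)^4\big)$, so setting $L(\numKeys):=\sqrt{\hiddenDim}\,\big(C_\sigma R_\inVec(\numKeys)+C'_\sigma R_\theta(\numKeys)R_\inVec(\numKeys)^2\big)$ bounds the gradient norm by $L(\numKeys)$ everywhere on the ball; by Assumptions 1--3 each of $\hiddenDim$, $R_\inVec(\numKeys)$, $R_\theta(\numKeys)$ is $\poly(\numKeys)$, so $L(\numKeys)=\poly(\numKeys)$. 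Finally, for $\theta,\theta'$ with norms at most $R_\theta(\numKeys)$, the segment $\theta(t)=(1-t)\theta+t\theta'$ lies in the ball for all $t\in[0,1]$, so by the fundamental theorem of calculus and Cauchy--Schwarz, $|\enc_\theta(\key_i)-\enc_{\theta'}(\key_i)|=\big|\int_0^1\langle\nabla_\theta\enc_{\theta(t)}(\key_i),\,\theta'-\theta\rangle\,dt\big|\le L(\numKeys)\,\|\theta-\theta'\|_2$, which is the claim.

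I do not expect a genuine obstacle here; the only points demanding care are (i) confirming that the interpolating segment never leaves the region where $\sigma$ and $\sigma'$ are bounded, which is exactly why the argument is phrased via the norm ball rather than the two isolated points, and (ii) bookkeeping the polynomial growth so that the final constant $L(\numKeys)$ is certified to be $\poly(\numKeys)$ rather than merely finite. If one wishes to avoid invoking differentiability of $\sigma$ through the chain rule, the same estimate follows by a two-stage telescoping: first vary $\mathbf A$ only (the map is linear in $\mathbf A$ with Lipschitz constant $\le\sqrt{\hiddenDim}\,C_\sigma R_\inVec(\numKeys)$), then vary $\mathbf G$ only, using $|\sigma(s)-\sigma(t)|\le C'_\sigma|s-t|$ on $[-B(\numKeys),B(\numKeys)]$; this yields a polynomial Lipschitz constant of the same order.
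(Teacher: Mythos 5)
Your proposal is correct and follows essentially the same route as the paper's proof: block-wise differentiation in the $\mathbf{a}_r$ and $\mathbf{g}_r$ rows, bounding preactivations by $R_\theta(\numKeys)R_\inVec(\numKeys)$ so that the assumed bounds on $\sigma,\sigma'$ apply, a uniform gradient bound that is $\poly(\numKeys)$, and the mean value inequality along the segment between $\theta$ and $\theta'$. Your row-wise summation (giving a $\sqrt{\hiddenDim}$ factor) and the explicit convexity remark are slightly tighter bookkeeping than the paper's $\sqrt{P}\cdot\max$ bound, but not a substantively different argument.
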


\begin{proof}
    See \Cref{pf: enc_lipshitz}.
\end{proof}

\begin{theorem}[Polynomial precision for encoder parameters]\label{bounding_precision}
Let $\numKeys$ be the number of facts, and assume the noisy decoding theorem
above holds for some choice of $\encDim$ (so that, for any codes whose noise
is at most a fixed constant multiple of $\rho$, decoding is
still correct).

Assume the following polynomial bounds:
\begin{enumerate}[label=(\roman*)]
    \item (Margin) $\rho \;\ge\; 1/\poly(\numKeys)$.
    \item (Lipschitz in parameters) For each key $\key_i$ and all
    encoder parameter vectors $\theta,\theta'$,
    \[
        \|\enc_\theta(\key_i) - \enc_{\theta'}(\key_i)\|
        \;\le\;
        L(\numKeys)\,\|\theta-\theta'\|
        \quad\text{with } L(\numKeys)\le\poly(\numKeys).
    \]
    \item (Parameter count) The number of encoder parameters satisfies
    $P \le \poly(\numKeys)$.
    \item\label{prec-assm-mag} (Magnitude) There is an encoder $\theta_\star$ such that
    $H_\star[i] := \enc_{\theta_\star}(\key_i) = \decMat \uvec_i$ and
    $\|\theta_\star\|_\infty \le \poly(\numKeys)$. 
\end{enumerate}
Then there exists a constant $c>0$ such that if we quantize each
coordinate of $\theta_\star$ to the grid $\numKeys^{-c}\Z$, obtaining
$\tilde\theta$, the corresponding codes
$\tilde H[i] := \enc_{\tilde\theta}(\key_i)$ still satisfy the conditions
of the noisy decoding theorem and hence decode all $\numKeys$ facts
correctly. In particular, each encoder parameter requires only
$O(\log \numKeys)$ bits of precision.
\end{theorem}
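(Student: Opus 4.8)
The plan is a standard quantization-and-perturbation argument: bound how far the encoder's output codes move under coordinatewise rounding of $\theta_\star$, then invoke the noisy-decoding guarantee to certify that the perturbed codes still decode correctly.

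First I would round $\theta_\star$ coordinatewise to the grid $\numKeys^{-c}\Z$ — leaving the constant $c$ free until the end — to obtain $\tilde\theta$, so that $\|\theta_\star-\tilde\theta\|_\infty\le\tfrac12\numKeys^{-c}$ and hence $\|\theta_\star-\tilde\theta\|_2\le\tfrac12\sqrt P\,\numKeys^{-c}$. By assumption~(iv), $\|\theta_\star\|_2\le\sqrt P\,\|\theta_\star\|_\infty\le\poly(\numKeys)$, so both $\theta_\star$ and $\tilde\theta$ lie in a ball of polynomial radius and the parameter-Lipschitz bound of assumption~(ii) (supplied by \Cref{lm: enc_lipschitz}) applies: for every key $\key_i$,
\[
\|\tilde H[i]-H_\star[i]\|
=\|\enc_{\tilde\theta}(\key_i)-\enc_{\theta_\star}(\key_i)\|
\le L(\numKeys)\cdot\tfrac12\sqrt P\,\numKeys^{-c}.
\]
Since $L(\numKeys)$ and $P$ are both $\poly(\numKeys)$, the right-hand side is $\poly(\numKeys)\cdot\numKeys^{-c}$.

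Next I would feed $\tilde H[i]$, in place of the exact code $H_\star[i]=\decMat\uvec_i$ from assumption~(iv), into the unchanged decoder $\mMat=\tfrac1m\decMat^\top$, and invoke the noisy-decoding theorem in the form assumed at the outset: decoding of all $\numKeys$ facts remains correct as long as the code noise is at most a fixed constant multiple of $\rho$ in the relevant normalization. By assumption~(i), $\rho\ge1/\poly(\numKeys)$, and the natural scale against which the code noise is measured (e.g.\ $\|\decMat\uvec_i\|$, which concentrates around $\sqrt m\,\|\uvec_i\|$ with $\|\uvec_i\|=\Theta(1)$) is itself bounded above and below by polynomials in $\numKeys$; hence the admissible noise level is at least $1/\poly(\numKeys)$. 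Comparing the two polynomial estimates, there is a single constant $c$ — depending only on the exponents hidden in $L$, $P$, $\rho^{-1}$, and this scale — for which $\poly(\numKeys)\cdot\numKeys^{-c}\le1/\poly(\numKeys)$ for all large $\numKeys$; fixing that $c$ forces $\|\tilde H[i]-H_\star[i]\|$ below the noisy-decoding threshold, so $\tilde\theta$ stores $f$. For the bit count, each coordinate of $\tilde\theta$ is an integer multiple of $\numKeys^{-c}$ of magnitude at most $\|\theta_\star\|_\infty+\tfrac12\numKeys^{-c}\le\poly(\numKeys)$, hence takes one of at most $\poly(\numKeys)\cdot\numKeys^{c}=\poly(\numKeys)$ values and is describable in $O(\log\numKeys)$ bits; there are $P\le\poly(\numKeys)$ coordinates.

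I expect the only real obstacle to be the translation in the middle step: the code perturbation produced by rounding is additive and controlled in $\ell_2$, whereas \Cref{rad_noisy_decoding} is phrased with a multiplicative $\ell_\infty$ noise on the codes (whose entries $\decMat\uvec_i$ can be small, so the entrywise relative error is awkward to control directly). The clean route is to use that the proof of \Cref{rad_noisy_decoding} really bounds the effect of an additive code perturbation $\|\tilde H[i]-H_\star[i]\|_2$ on the scores $s_{ij}=\langle\vvec_j,\mMat\rmH[i]\rangle$, so our quantization estimate plugs in directly; the remaining work is the bookkeeping needed to confirm that, after tracking every polynomial factor ($L$, $\sqrt P$, $\rho^{-1}$, and the lower bound on $\|\decMat\uvec_i\|$), the admissible noise is bounded below by one universal $1/\poly(\numKeys)$, so that a single exponent $c$ suffices for all $\numKeys$.
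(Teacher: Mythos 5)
Your proposal is correct and follows essentially the same route as the paper's proof: bound the admissible code noise by a constant multiple of $\rho\ge 1/\poly(\numKeys)$, transfer parameter quantization error to code error via the Lipschitz bound and the $\sqrt{P}$ factor, choose the grid exponent $c$ to beat the resulting polynomial, and count $O(\log\numKeys)$ bits per coordinate from the magnitude bound. Your observation that \Cref{rad_noisy_decoding} is stated with multiplicative coordinatewise noise while quantization yields an additive $\ell_2$ perturbation is a fair one, but the paper handles it no more carefully than you do (it appeals to the perturbation being tolerable ``in an appropriate sense, as in the theorem's proof''), so this does not distinguish your argument from theirs.
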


\begin{proof}
    See \Cref{pf: bounding_precision}.
\end{proof}

\subsection{Spherical Chebyshev Bounds with a Fixed Anchor}
\label{app:subsec_spherical_shevyshev}
We derive explicit lower and upper bounds on the spherical Chebyshev value $\rho^{*}$ of the star $\{\inEmbedding_{aj}\}_{j\neq a}$. We show (i) general bounds with no assumptions, (ii) simplifications under unit-norm embeddings, and (iii) coarse coherence-based corollaries. 

Let $\vvec_1,\dots,\vvec_n\in\R^{\modelDim}$ and define, for any ordered pair $(i,j)$ with $i\neq j$,
\[
\inEmbedding_{ij} \;:=\; \frac{\vvec_i - \vvec_j}{\|\vvec_i - \vvec_j\|}.
\]
We \emph{always} assume a \textbf{fixed anchor} index $a$ and consider only the star
\[
\{\,\inEmbedding_{aj} : \ j\neq a\,\}.
\]

We are then interested in the following quantity: 
\begin{definition}
    Define the Spherical Chebyshev value as 
    \begin{align*}
        \rho^* \;:=\; \max_{\|\canCen\|=1}\ \min_{j\neq a}\ \canCen^\top \inEmbedding_{aj}
    \end{align*}
\end{definition}

the cosine of the smallest spherical cap covering the star induced by anchor $a$.

\subsubsection{General bounds (no norm assumptions on $\vvec_i$)}

For notational simplicity, define
\begin{align*}
    m_{\mathrm{edge}}
    := \min_{\substack{j\neq k\\ j\neq a,\ k\neq a}}
    \inEmbedding_{aj}^\top \inEmbedding_{ak},
\end{align*}

Then we have the following result. 

\begin{lemma}[Spherical Chebyshev sandwich for a star]\label{lem:star_cheb_general}
For the spherical Chebyshev value $\rho^*$ as defined above we have 
\[
m_{\mathrm{edge}}
\;\le\;
\rho^*
\;\le\;
\sqrt{\frac{1 + m_{\mathrm{edge}}}{2}}.
\]
\end{lemma}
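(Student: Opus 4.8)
The plan is to prove the two inequalities separately, treating $\rho^*$ as the spherical Chebyshev radius (cosine form) of the point set $\{\inEmbedding_{aj}: j\neq a\}\subset\mathbb S^{\modelDim-1}$. For the lower bound, I would exhibit an explicit feasible center. The natural candidate is $\canCen := \inEmbedding_{aj_0}$ for the index $j_0$ achieving the relevant minimum — or, more robustly, simply observe that for \emph{any} fixed $j$, taking $\canCen = \inEmbedding_{aj}$ gives $\min_{k\neq a}\canCen^\top \inEmbedding_{ak} = \min_{k\neq a}\inEmbedding_{aj}^\top\inEmbedding_{ak}$. Splitting off the $k=j$ term (which contributes $1$) leaves $\min_{k\neq a, k\neq j}\inEmbedding_{aj}^\top\inEmbedding_{ak}\ge m_{\mathrm{edge}}$. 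Since $\rho^*$ is the max over all unit $\canCen$, it dominates this particular choice, giving $\rho^*\ge m_{\mathrm{edge}}$. (If $m_{\mathrm{edge}}$ were achieved with $j$ as one of the two indices, one needs to be slightly careful, but the bound $\inEmbedding_{aj}^\top\inEmbedding_{ak}\ge m_{\mathrm{edge}}$ holds for all valid pairs by definition of the min, so this is routine.)

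For the upper bound, let $\canCen^\star$ be the optimal center, so $\canCen^{\star\top}\inEmbedding_{aj}\ge\rho^*$ for every $j\neq a$. Pick two indices $j\neq k$ (both $\neq a$) that realize $m_{\mathrm{edge}}$, i.e. $\inEmbedding_{aj}^\top\inEmbedding_{ak}=m_{\mathrm{edge}}$. Then consider $\|\inEmbedding_{aj}+\inEmbedding_{ak}\|^2 = 2 + 2m_{\mathrm{edge}}$, so $\|\inEmbedding_{aj}+\inEmbedding_{ak}\| = \sqrt{2+2m_{\mathrm{edge}}}$. By Cauchy–Schwarz, $2\rho^* \le \canCen^{\star\top}(\inEmbedding_{aj}+\inEmbedding_{ak}) \le \|\canCen^\star\|\cdot\|\inEmbedding_{aj}+\inEmbedding_{ak}\| = \sqrt{2+2m_{\mathrm{edge}}}$, which rearranges to $\rho^* \le \sqrt{(1+m_{\mathrm{edge}})/2}$. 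This is the standard "two far-apart points force a large cap" argument.

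I anticipate the main subtlety — though not a deep obstacle — is the degenerate edge case $\modelDim = 1$ or when the star has only one element (only two of the $\vvec_i$ are distinct relative to the anchor), in which $m_{\mathrm{edge}}$ is a minimum over an empty set and must be read as $+1$ (or the statement interpreted vacuously); similarly one should confirm $\vvec_i\neq\vvec_a$ so that all $\inEmbedding_{aj}$ are well-defined unit vectors. A second point worth stating cleanly is that $m_{\mathrm{edge}}\in[-1,1]$, so $\sqrt{(1+m_{\mathrm{edge}})/2}$ is real and the sandwich is non-vacuous; and that the lower bound is only informative when $m_{\mathrm{edge}}>0$, i.e. when the star fits in an open hemisphere around one of its own points. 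Beyond these bookkeeping matters, both directions are one-line Cauchy–Schwarz / triangle-inequality arguments, so the proof is short.
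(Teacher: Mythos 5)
Your proof is correct. The lower bound is exactly the paper's argument: take the center to be one of the star points $\inEmbedding_{aj_0}$, note the $k=j_0$ term contributes $1$, and bound the remaining terms by $m_{\mathrm{edge}}$. For the upper bound, however, you take a genuinely different route. The paper fixes a pair $(j,k)$ realizing $m_{\mathrm{edge}}$, projects the candidate center onto $\operatorname{span}\{\inEmbedding_{aj},\inEmbedding_{ak}\}$, parametrizes by angles, and explicitly maximizes $\min\bigl(\cos\varphi,\cos(\theta-\varphi)\bigr)$ to get $\cos(\theta/2)=\sqrt{(1+m_{\mathrm{edge}})/2}$; you instead sum the two constraints $\canCen^{\star\top}\inEmbedding_{aj}\ge\rho^*$ and $\canCen^{\star\top}\inEmbedding_{ak}\ge\rho^*$ and apply Cauchy--Schwarz to $\canCen^{\star\top}(\inEmbedding_{aj}+\inEmbedding_{ak})$ with $\|\inEmbedding_{aj}+\inEmbedding_{ak}\|=\sqrt{2+2m_{\mathrm{edge}}}$. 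Both give the same constant (Cauchy--Schwarz is tight precisely when the center is the angle bisector, which is the paper's optimizer), but your argument is a one-liner that avoids the reduction to the plane, while the paper's version additionally exhibits the optimal center for the two-point subproblem, i.e.\ shows that bound is attained when only two constraints are active. Two minor points you already flag and that are indeed harmless: attainment of the max over the compact sphere (or avoid it by taking a supremum over arbitrary unit $\canCen$ after the Cauchy--Schwarz step), and the degenerate cases $m_{\mathrm{edge}}=-1$ (where the bound reads $\rho^*\le 0$ and still holds) and a star with fewer than two non-anchor points, where $m_{\mathrm{edge}}$ is an empty minimum — the same implicit assumption is made in the paper.
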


\begin{proof}
For the lower bound, fix $j_0\neq a$ and take $\vc = \vx_{aj_0}$. Then $\|\vc\|=1$ and
\[
\min_{j\neq a} \vc^\top \vx_{aj}
= \min_{j\neq a} \vx_{aj_0}^\top \vx_{aj}
= \min\Big(1,\ \min_{\substack{j\neq a\\ j\neq j_0}} \vx_{aj_0}^\top \vx_{aj}\Big)
\;\ge\;
m_{\mathrm{edge}},
\]
so $\rho^* \ge m_{\mathrm{edge}}$.

For the upper bound, pick $j,k$ with $j\neq k$, $j\neq a$, $k\neq a$ such that
$\vx_{aj}^\top \vx_{ak} = m_{\mathrm{edge}}$. For any unit $\vc$,
\[
\min_{i\neq a} \vc^\top \vx_{ai}
\;\le\;
\min(\vc^\top \vx_{aj},\ \vc^\top \vx_{ak}),
\]
hence
\[
\rho^*
\;\le\;
\sup_{\|\vc\|=1} \min(\vc^\top \vx_{aj},\ \vc^\top \vx_{ak}).
\]

Let $P := \operatorname{span}\{\vx_{aj},\vx_{ak}\}$. Orthogonal projection onto $P$ cannot
decrease both inner products simultaneously, so the supremum is attained by some unit
$\vc\in P$. In an orthonormal basis of $P$, write
\[
\vx_{aj} = (1,0),\quad
\vx_{ak} = (\cos\theta,\sin\theta),\quad
\vc = (\cos\varphi,\sin\varphi),
\]
where $\theta := \arccos(\vx_{aj}^\top \vx_{ak})$ so $\cos\theta = m_{\mathrm{edge}}$.
Then
\[
\vc^\top \vx_{aj} = \cos\varphi,
\qquad
\vc^\top \vx_{ak} = \cos(\theta-\varphi),
\]
and we must maximize
\[
f(\varphi) := \min\big(\cos\varphi,\ \cos(\theta-\varphi)\big).
\]
On $[0,\pi]$, $\cos$ is strictly decreasing, so $f$ is maximized when
$\cos\varphi = \cos(\theta-\varphi)$, i.e.\ $\varphi = \theta/2$, giving
\[
\sup_{\|\vc\|=1}\min(\vc^\top \vx_{aj},\vc^\top \vx_{ak})
= \cos(\theta/2).
\]
Therefore $\rho^* \le \cos(\theta/2)$. Using
$\cos^2(\theta/2) = \tfrac{1+\cos\theta}{2}$ and $\cos\theta = m_{\mathrm{edge}}$, we obtain
\[
\rho^*
\;\le\;
\sqrt{\frac{1+\cos\theta}{2}}
=
\sqrt{\frac{1 + m_{\mathrm{edge}}}{2}}.
\]
Combining both bounds yields the claim.
\end{proof}

\subsubsection{Unit-norm specialization.}

For notational simplicity, define 
\[
    s_a := \max_{j\neq a} \vvec_a^\top \vvec_j
\]

\begin{lemma}[Spherical Chebyshev bounds for a star: unit-norm case]\label{lem:star_cheb_unit}
In the setting of \Cref{lem:star_cheb_general}, assume in addition that
$\|\vvec_i\|=1$ for all $i\in[n]$. Then 
\[
    \sqrt{\frac{1 - s_a}{2}}
    \;\le\;
    \rho^*
    \;\le\;
    \sqrt{\frac{1 + m_{\mathrm{edge}}}{2}}.
\]
\end{lemma}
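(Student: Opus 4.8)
The plan is to derive the unit-norm bounds as a simple corollary of the general sandwich in \Cref{lem:star_cheb_general}. The upper bound $\rho^* \le \sqrt{(1+m_{\mathrm{edge}})/2}$ is \emph{identical} to the one in \Cref{lem:star_cheb_general} and requires no norm assumption, so it transfers verbatim; nothing new needs to be done there. All the work is in replacing the general lower bound $\rho^* \ge m_{\mathrm{edge}}$ by the improved, cleaner quantity $\sqrt{(1-s_a)/2}$, which uses unit-norm structure and the \emph{anchor} inner products $\vvec_a^\top\vvec_j$ rather than the edge-edge inner products.

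The key computation is to choose $\vc := \vvec_a$ (a unit vector, since $\|\vvec_a\|=1$) as the candidate center and lower-bound $\min_{j\neq a}\vc^\top \inEmbedding_{aj}$. For each $j\neq a$, using $\|\vvec_a\|=\|\vvec_j\|=1$ we have $\|\vvec_a-\vvec_j\|^2 = 2 - 2\vvec_a^\top\vvec_j$, so
\[
\vvec_a^\top \inEmbedding_{aj}
= \frac{\vvec_a^\top(\vvec_a-\vvec_j)}{\|\vvec_a-\vvec_j\|}
= \frac{1 - \vvec_a^\top\vvec_j}{\sqrt{2-2\vvec_a^\top\vvec_j}}
= \sqrt{\frac{1 - \vvec_a^\top\vvec_j}{2}}.
\]
This is exactly the cosine-margin identity already used in the coherence lemma of \Cref{sec:rho_relation_coherence}. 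Since $t\mapsto \sqrt{(1-t)/2}$ is decreasing, taking the minimum over $j\neq a$ gives $\min_{j\neq a}\vvec_a^\top\inEmbedding_{aj} = \sqrt{(1 - s_a)/2}$, where $s_a = \max_{j\neq a}\vvec_a^\top\vvec_j$. Hence $\rho^* \ge \sqrt{(1-s_a)/2}$, and combining with the inherited upper bound finishes the proof.

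I do not expect a real obstacle here: the argument is a short substitution plus the monotonicity of $\sqrt{1-t}$, and the upper bound is free. The only thing to be slightly careful about is the degenerate case where some $\vvec_j = \vvec_a$ for $j\neq a$ (so $\inEmbedding_{aj}$ is undefined) — but this is implicitly excluded since $\inEmbedding_{ij}$ is only defined when $\vvec_i\neq\vvec_j$, consistent with the standing assumptions of \Cref{lem:star_cheb_general}. One may also note the new lower bound dominates the general one in this regime: for unit vectors the chord identity gives $m_{\mathrm{edge}} \le \sqrt{(1-s_a)/2}$ is not automatic in general, but that comparison is not needed — we only need a valid lower bound, and $\vc=\vvec_a$ supplies it directly.

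\begin{proof}
The upper bound is immediate: it is proved in \Cref{lem:star_cheb_general} without any assumption on the norms of the $\vvec_i$, so it holds in particular in the unit-norm case.

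For the lower bound, take the unit vector $\vc := \vvec_a$. For each $j\neq a$, since $\|\vvec_a\| = \|\vvec_j\| = 1$ we have $\|\vvec_a - \vvec_j\|^2 = 2 - 2\,\vvec_a^\top\vvec_j$, hence
\[
\vc^\top \inEmbedding_{aj}
= \frac{\vvec_a^\top(\vvec_a - \vvec_j)}{\|\vvec_a - \vvec_j\|}
= \frac{1 - \vvec_a^\top\vvec_j}{\sqrt{2 - 2\,\vvec_a^\top\vvec_j}}
= \sqrt{\frac{1 - \vvec_a^\top\vvec_j}{2}}.
\]
The map $t\mapsto \sqrt{(1-t)/2}$ is decreasing on $(-\infty,1]$, so taking the minimum over $j\neq a$ and using $s_a = \max_{j\neq a}\vvec_a^\top\vvec_j$ gives
\[
\min_{j\neq a} \vc^\top \inEmbedding_{aj}
= \sqrt{\frac{1 - s_a}{2}}.
\]
Since $\vc$ is a feasible center, $\rho^* \ge \min_{j\neq a}\vc^\top\inEmbedding_{aj} = \sqrt{(1-s_a)/2}$. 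Combining this with the upper bound inherited from \Cref{lem:star_cheb_general} completes the proof.
\end{proof}
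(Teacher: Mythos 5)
Your proof is correct and follows essentially the same route as the paper: the upper bound is inherited from the general lemma, and the lower bound comes from testing the center $\vc=\vvec_a$, computing $\vvec_a^\top\inEmbedding_{aj}=\sqrt{(1-\vvec_a^\top\vvec_j)/2}$, and minimizing over $j$ via monotonicity of $t\mapsto\sqrt{(1-t)/2}$. No gaps.
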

\begin{proof}
    The upper bound follows directly from \Cref{lem:star_cheb_general}. When $\|\vvec_i\|=1$ for all $i$,
\[
\|\vvec_a - \vvec_j\|=\sqrt{2 - 2\,\vvec_a^\top \vvec_j}.
\]

By direct calculation,
\[
\vvec_a^\top \inEmbedding_{aj}
\;=\;
\frac{1 - \vvec_a^\top \vvec_j}{\sqrt{2 - 2\,\vvec_a^\top \vvec_j}}
\;=\;
\sqrt{\frac{1 - \vvec_a^\top \vvec_j}{2}},
\]
so
\begin{equation}
\rho^* \;\ge\; \min_{j\neq a}\;\sqrt{\frac{1 - \vvec_a^\top \vvec_j}{2}}.
\label{eq:unit-lb-ij}
\end{equation}
Writing $s_a:=\max_{j\neq a} \vvec_a^\top \vvec_j$ (the anchor's nearest neighbor in cosine),
\begin{equation}
\rho^* \;\ge\; \sqrt{\frac{1 - s_a}{2}}.
\label{eq:unit-lb-sa-ij}
\end{equation}
\end{proof}

To obtain bounds that depend only on a single global parameter, we now suppose the vectors satisfy a standard coherence condition.
\subsubsection{Coherence-style corollaries (unit-norm)}

\begin{lemma}[Coherence-style bounds for a fixed-anchor star]\label{lem:star_coherence}
In the setting of \Cref{lem:star_cheb_general}, assume in addition that
$|\vvec_i^\top \vvec_j| \le \mu$ for all $i\neq j$, with $\mu \in [0,1)$ and $\|\vvec_i\|=1$ for all $i\in[n]$.
Then the spherical Chebyshev value $\rho^*$ satisfies
\[
    \sqrt{\frac{1 - \mu}{2}}
    \;\le\;
    \rho^*
    \leq
    \sqrt{\frac{1}{2}\!\left(1 + \frac{1 + 3\mu}{2 - 2\mu}\right)}.
\]
\end{lemma}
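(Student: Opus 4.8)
The plan is to prove the two inequalities separately, in each case reducing to the auxiliary quantities $s_a=\max_{j\neq a}\vvec_a^\top\vvec_j$ and $m_{\mathrm{edge}}$ that are already controlled by \Cref{lem:star_cheb_unit}, and then to bound those quantities by the coherence $\mu$. For the lower bound, the hypothesis $|\vvec_i^\top\vvec_j|\le\mu$ gives $s_a\le\mu$ directly; since $t\mapsto\sqrt{(1-t)/2}$ is decreasing, \Cref{lem:star_cheb_unit} then yields $\rho^{*}\ge\sqrt{(1-s_a)/2}\ge\sqrt{(1-\mu)/2}$. This half is immediate.

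For the upper bound, \Cref{lem:star_cheb_unit} already gives $\rho^{*}\le\sqrt{(1+m_{\mathrm{edge}})/2}$, and since $t\mapsto\sqrt{(1+t)/2}$ is increasing, it suffices to show $m_{\mathrm{edge}}\le\frac{1+3\mu}{2-2\mu}$ (the right-hand side of the claimed bound is exactly $\sqrt{\tfrac12(1+m^{*})/1}$ with $m^{*}=\frac{1+3\mu}{2-2\mu}$). Fix any pair $j\neq k$ with $j,k\neq a$ and write $\alpha=\vvec_a^\top\vvec_j,\ \beta=\vvec_a^\top\vvec_k,\ \gamma=\vvec_j^\top\vvec_k$, all in $[-\mu,\mu]$. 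Expanding the inner product of the two normalized difference vectors (using $\|\vvec_a-\vvec_j\|^2=2(1-\alpha)$, etc.) gives
\[
\inEmbedding_{aj}^\top\inEmbedding_{ak}=\frac{1-\alpha-\beta+\gamma}{2\sqrt{(1-\alpha)(1-\beta)}}.
\]
Then I would replace $\gamma$ by its upper bound $\mu$ and substitute $x:=1-\alpha,\ y:=1-\beta\in[1-\mu,1+\mu]$, which reduces the target inequality to $(1-\mu)(x+y-1+\mu)\le(1+3\mu)\sqrt{xy}$. Since $x,y\le 1+\mu$ we have $x+y-1+\mu\le 1+3\mu$, so the left side is at most $(1-\mu)(1+3\mu)$; since $x,y\ge 1-\mu\ge 0$ we have $\sqrt{xy}\ge 1-\mu$, so the right side is at least $(1+3\mu)(1-\mu)$. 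The inequality follows by transitivity, and taking the minimum over all admissible pairs gives $m_{\mathrm{edge}}\le\frac{1+3\mu}{2-2\mu}$, which finishes the proof.

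All the steps are elementary; the only mild subtlety is choosing the right pair of one-sided estimates in the upper bound so that the common middle value $(1-\mu)(1+3\mu)$ appears (upper-bounding the numerator via $x,y\le 1+\mu$ and lower-bounding $\sqrt{xy}$ via $x,y\ge 1-\mu$), rather than trying to optimize the rational-in-$(\alpha,\beta)$ expression directly. I would also flag the implicit assumption $n\ge 3$ inherited from \Cref{lem:star_cheb_general} and \Cref{lem:star_cheb_unit}: $m_{\mathrm{edge}}$ is a minimum over ordered pairs of distinct non-anchor indices, so for $n=2$ it is vacuous and the upper bound need not be asserted.
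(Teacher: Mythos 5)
Your proposal is correct and follows essentially the same route as the paper: the lower bound via $s_a\le\mu$ and the unit-norm lemma, and the upper bound by showing each $\inEmbedding_{aj}^\top\inEmbedding_{ak}\le\frac{1+3\mu}{2-2\mu}$ (your substitution $x=1-\alpha$, $y=1-\beta$ and transitive estimate is just a repackaging of the paper's direct bounds: numerator $\le 1+3\mu$, denominator $2\sqrt{xy}\ge 2-2\mu$). Your remark about the $n\ge 3$ degeneracy is a fair observation but does not change the argument.
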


\begin{proof}
The coherence bound implies, for the anchor $a$,
\[
s_a := \max_{j\neq a} \vvec_a^\top \vvec_j \;\le\; \mu.
\]
By \eqref{eq:unit-lb-sa-ij} from the unit-norm specialization,
\[
\rho^* \;\ge\; \sqrt{\frac{1 - s_a}{2}}
\;\ge\;
\sqrt{\frac{1 - \mu}{2}}.
\]

For any $j\neq k$ by direct computation,
\[
\inEmbedding_{aj}^\top \inEmbedding_{ak}
=
\frac{1 - \vvec_a^\top \vvec_j - \vvec_a^\top \vvec_k + \vvec_j^\top \vvec_k}
{\sqrt{(2 - 2\,\vvec_a^\top \vvec_j)(2 - 2\,\vvec_a^\top \vvec_k)}}.
\]
Write $a_j := \vvec_a^\top \vvec_j$, $a_k := \vvec_a^\top \vvec_k$,
$b_{jk} := \vvec_j^\top \vvec_k$. Then
$|a_j|,|a_k|,|b_{jk}|\le\mu$, so
\[
1 - a_j - a_k + b_{jk}
\;\le\;
1 + |a_j| + |a_k| + |b_{jk}|
\;\le\;
1 + 3\mu,
\]
and since $a_j,a_k \le \mu$,
\[
2 - 2a_j \;\ge\; 2 - 2\mu,
\qquad
2 - 2a_k \;\ge\; 2 - 2\mu,
\]
hence
\[
\sqrt{(2 - 2a_j)(2 - 2a_k)} \;\ge\; 2 - 2\mu.
\]
Therefore, for all $j\neq k$,
\[
\inEmbedding_{aj}^\top \inEmbedding_{ak}
\;\le\;
\frac{1 + 3\mu}{2 - 2\mu},
\]
and taking the minimum over $j\neq k$ yields
\[
m_{\mathrm{edge}}
:= \min_{j\neq k} \inEmbedding_{aj}^\top \inEmbedding_{ak}
\;\le\;
\frac{1 + 3\mu}{2 - 2\mu}.
\]

By \Cref{lem:star_cheb_general},
\[
\rho^*
\;\le\;
\sqrt{\frac{1 + m_{\mathrm{edge}}}{2}}
\;\le\;
\sqrt{\frac{1}{2}\!\left(1 + \frac{1 + 3\mu}{2 - 2\mu}\right)}.
\]
Combining with the lower bound completes the proof.
\end{proof}

\subsection{Deferred proofs}\label{app:extended_theory_proofs}
\subsubsection{Proof of \Cref{lem:katri_rao_matroid_union}}\label{pf:katri_rao_matroid_union}
\begin{proof}
We proceed in three steps:
\begin{enumerate}
    \item Proof a Matroid Union Theorem sublemma which we use in Part 4.
    \item Establish the rank upper bound from linear algebra principles.
    \item Show that the set of $\mathbf{K}$ achieving this bound is Zariski open.
    \item Show that this set is non-empty by constructing a $\mathbf{K}$ that achieves the bound.
\end{enumerate}

\textbf{Part 1: Matroid Union Theorem Sublemma}

\begin{lemma}
    The rank $R(\mathbf{\Sigma})$ is also given by:
    \[
        R(\mathbf{\Sigma}) = \max_{\substack{\rmI_1,\dots,\rmI_d \subseteq [|\mathbf{K}|]\\
        \rmI_i \cap \rmI_j = \varnothing\ \forall i\neq j\\
        \bigcup_{i=1}^d \rmI_i = [|\mathbf{K}|]}}
        \; \left[\sum_{i=1}^d \operatorname{rank}\!\big(\boldsymbol{\Sigma}[:, \rmI_i]\big)\right].
    \]
\end{lemma}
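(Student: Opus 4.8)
The plan is to recognize this identity as an instance of the Matroid Union Theorem (the Nash--Williams formula; see \citep{oxley2011matroid}) applied to $d$ copies of one linear matroid. First I would fix the matroid $M$ on the ground set $E=[|\mathbf{K}|]$ to be the column matroid of $\mathbf{\Sigma}$: a subset $I\subseteq E$ is independent in $M$ exactly when the columns $\{\mathbf{\Sigma}[:,\ell]:\ell\in I\}$ are linearly independent, so $M$ has rank function $r(S)=\operatorname{rank}(\mathbf{\Sigma}[:,S])$. Let $M^{\vee d}=M\vee\cdots\vee M$ ($d$ summands) be the matroid union. By the Matroid Union Theorem its rank function is
\[
 r_\vee(S)\;=\;\min_{T\subseteq S}\Big[\,|S\setminus T|+d\cdot r(T)\,\Big],
\]
and evaluating at $S=E$ gives $r_\vee(E)=\min_{T\subseteq[|\mathbf{K}|]}\big[\,|\mathbf{K}|-|T|+d\cdot\operatorname{rank}(\mathbf{\Sigma}[:,T])\,\big]=R(\mathbf{\Sigma})$, which is precisely the left-hand side of the claimed identity.

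Second, I would invoke the dual characterization of the union-matroid rank: $r_\vee(E)$ equals the largest value of $|J_1|+\cdots+|J_d|$ over all pairwise-disjoint $J_1,\dots,J_d\subseteq E$ with each $J_i$ independent in $M$ (equivalently, with $|J_i|=\operatorname{rank}(\mathbf{\Sigma}[:,J_i])$). It then remains to match this quantity with $\max\sum_{i=1}^d\operatorname{rank}(\mathbf{\Sigma}[:,\rmI_i])$ over \emph{complete} $d$-partitions $(\rmI_1,\dots,\rmI_d)$ of $E$. For the direction ``$\le$'': given a complete $d$-partition, pick inside each block a maximal linearly independent subset $J_i\subseteq\rmI_i$; the $J_i$ are automatically pairwise disjoint and independent with $|J_i|=\operatorname{rank}(\mathbf{\Sigma}[:,\rmI_i])$, so $\sum_i\operatorname{rank}(\mathbf{\Sigma}[:,\rmI_i])=\sum_i|J_i|\le r_\vee(E)$. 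For ``$\ge$'': take pairwise-disjoint independent $J_1,\dots,J_d$ achieving $r_\vee(E)$, then absorb every uncovered element of $E\setminus(J_1\cup\cdots\cup J_d)$ into the first block, setting $\rmI_1=J_1\cup\big(E\setminus\bigcup_iJ_i\big)$ and $\rmI_i=J_i$ for $i\ge2$; this is a complete $d$-partition, and since adding columns can only raise rank, $\sum_i\operatorname{rank}(\mathbf{\Sigma}[:,\rmI_i])\ge\operatorname{rank}(\mathbf{\Sigma}[:,J_1])+\sum_{i\ge2}|J_i|=\sum_i|J_i|=r_\vee(E)$. Both maxima range over finite families, hence are attained, and the two inequalities force equality; chaining with $r_\vee(E)=R(\mathbf{\Sigma})$ finishes the proof.

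The content here is essentially bookkeeping once the Matroid Union Theorem is used as a black box; the only step needing mild care is the translation between ``disjoint independent $d$-tuples'' and ``complete $d$-partitions'', where the elements left uncovered by the $J_i$ must be disposed of --- dumping them into a single block is harmless precisely because rank is monotone under adding columns. I do not expect a genuine obstacle. If one preferred not to cite matroid union, the same two-sided bound could be proved directly via a deficiency-type (Hall/augmenting-path) argument, but citing \citep{oxley2011matroid} is cleaner and matches the route already signposted in the surrounding sketch.
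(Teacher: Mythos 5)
Your proof is correct and takes essentially the same route as the paper: both identify $R(\mathbf{\Sigma})$ as the rank of the $d$-fold union of the column matroid of $\mathbf{\Sigma}$ via the Matroid Union Theorem and then convert the max-union-of-independent-sets characterization into the maximum of rank sums over disjoint blocks. The only cosmetic differences are that the paper obtains the $d$-fold Nash--Williams rank formula by induction from the two-matroid union theorem rather than citing it directly, and you treat the completeness of the partition (absorbing uncovered elements into one block, using monotonicity of rank) a bit more explicitly than the paper does.
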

\begin{proof}
    Define $R_k(\mathbf{\Sigma}, S) = \min_{S'\subseteq S}\Big[|S| - |S'| + k\cdot \textrm{\upshape rank}(\mathbf{\Sigma}[:, S'])\Big]$.
    
    We first prove by induction on $d$ that $R_k(\mathbf{\Sigma}, S)$ is the rank of $S$ in the matroid union of $d$ copies of the matroid of $\mathbf{\Sigma}$. 

    The base case is $d=1.$ In this case $R_1(\mathbf{\Sigma}, S) = \min_{S'\subseteq S}\Big[|S| - |S'| + \textrm{\upshape rank}(\mathbf{\Sigma}[:, S'])\Big]$ is minimized for $S = S',$ so $R_1(\mathbf{\Sigma}, S) = \textrm{\upshape rank}(\mathbf{\Sigma}[:, S])$, which is exactly the rank of $S$ in the matroid union of $1$ copy of the matroid of $\mathbf{\Sigma}$ (just the matroid of $\mathbf{\Sigma}$).
    
    Now, for the inductive step, suppose that the inductive hypothesis is true for $d-1$. By the Matroid Union Theorem\footnote{See Theorem 11.3.1 of \citet{oxley2011matroid}.} between the matroid of $\mathbf{\Sigma}$ and the matroid union of $d-1$ copies of $\mathbf{\Sigma}$, the rank of $S$ under the matroid union of $d$ copies of the matroid of $\mathbf{\Sigma}$ is given by
    \begin{align*}
        &\min_{S'\subseteq S}\Big[|S - S'| + \textrm{\upshape rank}(\mathbf{\Sigma}[:, S']) + R_{d-1}(\mathbf{\Sigma},S')\Big]\\
        =&\min_{S'\subseteq S}\Big[|S| - |S'| + \textrm{\upshape rank}(\mathbf{\Sigma}[:, S']) + \min_{S''\subseteq S'}\Big[|S'| - |S''| + (d-1)\textrm{\upshape rank}(\mathbf{\Sigma}[:, S''])\Big]\Big]\\
        =&\min_{S''\subseteq S'\subseteq S}\Big[|S| - |S''| + \textrm{\upshape rank}(\mathbf{\Sigma}[:, S']) + (d-1)\textrm{\upshape rank}(\mathbf{\Sigma}[:, S''])\Big]\\
        =&\min_{S''\subseteq S}\Big[|S| - |S''| + d\cdot \textrm{\upshape rank}(\mathbf{\Sigma}[:, S''])\Big]\\
        =& R_d(\mathbf{\Sigma},S),
    \end{align*}
    as desired.

    Now, we prove that \[
        R(\mathbf{\Sigma}) = R_d(\mathbf{\Sigma}, [|\mathbf{K}|]) = \max_{\substack{\rmI_1,\dots,\rmI_d \subseteq [|\mathbf{K}|]\\
        \rmI_i \cap \rmI_j = \varnothing\ \forall i\neq j}}
        \; \left[\sum_{i=1}^d \operatorname{rank}\!\big(\boldsymbol{\Sigma}[:, \rmI_i]\big)\right].
    \]
    First, note that by the definition of the matroid union,
    \begin{align*}
        R(\mathbf{\Sigma}) &= \max\left\{ \left|\bigcup_{i=1}^d \rmI_i \right| \, \Big| \, \forall i\in [d],\quad \text{rank}(\mathbf{\Sigma}[:, \rmI_i]) = |\rmI_i|\right\}\\
        &= \max\left\{ \left|\bigcup_{i=1}^d \rmI_i \right| \, \Big| \, \forall i\in [d],\quad \text{rank}(\mathbf{\Sigma}[:, \rmI_i]) = |\rmI_i|, \quad\quad \forall i\neq j \in [d],\quad \rmI_i\cap \rmI_j = \varnothing\right\}\\
        &= \max\left\{ \sum_{i=1}^d \left|\rmI_i \right| \, \Big| \, \forall i\in [d],\quad \text{rank}(\mathbf{\Sigma}[:, \rmI_i]) = |\rmI_i|, \quad\quad \forall i\neq j \in [d],\quad \rmI_i\cap \rmI_j = \varnothing\right\}\\
        &= \max\left\{ \sum_{i=1}^d \text{rank}(\mathbf{\Sigma}[:, \rmI_i]) \, \Big| \, \forall i\in [d],\quad \text{rank}(\mathbf{\Sigma}[:, \rmI_i]) = |\rmI_i|, \quad\quad \forall i\neq j \in [d],\quad \rmI_i\cap \rmI_j = \varnothing\right\}\\
        &= \max\left\{ \sum_{i=1}^d \text{rank}(\mathbf{\Sigma}[:, \rmI_i]) \, \Big| \, \quad\quad \forall i\neq j \in [d],\quad \rmI_i\cap \rmI_j = \varnothing\right\}\\
        &= \max_{\substack{\rmI_1,\dots,\rmI_d \subseteq [|\mathbf{K}|]\\
        \rmI_i \cap \rmI_j = \varnothing\ \forall i\neq j}}
        \; \left[\sum_{i=1}^d \operatorname{rank}\!\big(\boldsymbol{\Sigma}[:, \rmI_i]\big)\right].
    \end{align*}

    This completes our proof.
\end{proof}

\textbf{Part 2: Rank Upper Bound}

We first derive the upper bound for $\normDec(\mathbf{\Sigma}, \mathbf{K})$.
The matrix $\normDec \equiv \normDec(\mathbf{\Sigma}, \mathbf{K})$ is a $|\mathbf{K}| \times (dh)$ matrix.
The definition $\normDec = [\textrm{\upshape diag}(\mathbf{\Sigma}_1)\mathbf{K}, \ldots, \textrm{\upshape diag}(\mathbf{\Sigma}_{h})\mathbf{K}]$ concatenates by $h$ blocks of size $|\mathbf{K}| \times d$.

The columns of $\normDec$ can be re-grouped to form $d$ blocks of size $|\mathbf{K}| \times h$. Let $\normDec_j$ be the $j$-th new block, $j \in [d]$. This block contains all columns from $\normDec$ that were constructed using the $j$-th column of $\mathbf{K}$, $\mathbf{K}[:, j]$. This block can be written as:
\[
\normDec_j = \textrm{\upshape diag}(\mathbf{K}[:, j])\mathbf{\Sigma}^\top
\]
Here, $\textrm{\upshape diag}(\mathbf{K}[:, j])$ is $|\mathbf{K}| \times |\mathbf{K}|$ and $\mathbf{\Sigma}^\top$ is $|\mathbf{K}| \times h$, so $\normDec_j$ is $|\mathbf{K}| \times h$.
The full matrix $\normDec$ is a column-permutation of the concatenation $[\normDec_1, \dots, \normDec_d]$.
The column space of $\normDec$ is the sum of the column spaces of these submatrices:
\[
\textrm{\upshape col}(\normDec) = \sum_{j=1}^d \textrm{\upshape col}(\normDec_j).
\]

By the subadditivity of rank over sums of subspaces, the rank is bounded by:
\begin{align*}
    \textrm{\upshape rank}(\normDec) &\le \min_{S \subseteq [|\mathbf{K}|]} \left(\text{rank}(\normDec[\neg S, :]) + \text{rank}(\normDec[S, :]) \right)\\
    &\le \min_{S \subseteq [|\mathbf{K}|]} \left(\text{rank}(\normDec[\neg S, :]) + \sum_{j=1}^d \textrm{\upshape rank}(\normDec_j[S, :]) \right)\\
    &\le \min_{S \subseteq [|\mathbf{K}|]} \left( |\neg S| + \sum_{j=1}^d \textrm{\upshape rank}(\normDec_j[S, :]) \right)
\end{align*}
where $S$ is a set of \textit{row} indices, $\neg S$ is its complement ($|\neg S| = |\mathbf{K}| - |S|$), and $\normDec_j[S, :]$ is the submatrix of $\normDec_j$ with rows from $S$.

We now analyze $\textrm{\upshape rank}(\normDec_j[S, :])$:
\[
\normDec_j[S, :] = (\textrm{\upshape diag}(\mathbf{K}[:, j])\mathbf{\Sigma}^\top)[S, :] = \textrm{\upshape diag}(\mathbf{K}[S, j]) \cdot (\mathbf{\Sigma}^\top[S, :]).
\]
Note that $\mathbf{\Sigma}^\top[S, :] = (\mathbf{\Sigma}[:, S])^\top$.
For any rectangular matrices $\mathbf A$ and $\mathbf B$ we have\footnote{This follows by basic properties of linear maps. Let $
B : \mathbb{R}^p \to \mathbb{R}^n$, $
A : \mathbb{R}^n \to \mathbb{R}^m$, $
AB : \mathbb{R}^p \to \mathbb{R}^m.
$
Then \(\operatorname{Im}(AB) = A(\operatorname{Im}(B))\), so
$
\operatorname{rank}(AB)
= \dim\big(\operatorname{Im}(AB)\big)
= \dim\big(A(\operatorname{Im}(B))\big)
\le \dim\big(\operatorname{Im}(B)\big)
= \operatorname{rank}(B).
$} $\textrm{\upshape rank}(\mathbf{A}\mathbf{B}) \le \textrm{\upshape rank}(\mathbf{B})$. Thus:
\[
\textrm{\upshape rank}(\normDec_j[S, :]) \le \textrm{\upshape rank}((\mathbf{\Sigma}[:, S])^\top) = \textrm{\upshape rank}(\mathbf{\Sigma}[:, S]).
\]
Substituting this back into our rank bound for $\normDec$:
\[
\textrm{\upshape rank}(\normDec) \le \min_{S \subseteq [|\mathbf{K}|]} \left( (|\mathbf{K}| - |S|) + \sum_{j=1}^d \textrm{\upshape rank}(\mathbf{\Sigma}[:, S]) \right)
\]
\[
\textrm{\upshape rank}(\normDec(\mathbf{\Sigma}, \mathbf{K})) \le \min_{S\in [|\mathbf{K}|]}\Big[|\mathbf{K}| - |S| + d\cdot \textrm{\upshape rank}(\mathbf{\Sigma}[:, S])\Big] \equiv R(\mathbf{\Sigma}).
\]
This establishes $R(\mathbf{\Sigma})$ as the maximum possible rank.

\textbf{Part 3: a Zariski open set}

Let $R = R(\mathbf{\Sigma})$. From Part 2, the rank cannot exceed $R$. The set of $\mathbf{K}$ for which the rank is \emph{sub-maximal} is $\mathcal{K}^c = \{\mathbf{K} \, | \, \textrm{\upshape rank}(\normDec(\mathbf{\Sigma}, \mathbf{K})) < R\}$.
    
This condition $\textrm{\upshape rank}(\normDec(\mathbf{\Sigma}, \mathbf{K})) < R$ holds if and only if every $R \times R$ submatrix of $\normDec(\mathbf{\Sigma}, \mathbf{K})$ has a determinant equal to 0.

The entries of $\normDec(\mathbf{\Sigma}, \mathbf{K})$ are polynomial functions of the entries of $\mathbf{\Sigma}$ and $\mathbf{K}$. Since $\mathbf{\Sigma}$ is fixed, the determinant of any $R \times R$ submatrix is a polynomial in the entries (components) of $\mathbf{K}$. Let this finite set of polynomials be $\mathcal{P} = \{p_j(\mathbf{K})\}_j$.

The set $\mathcal{K}^c$ is the set of $\mathbf{K}$ that are common zeros of all polynomials in $\mathcal{P}$. By definition, this set $\mathcal{K}^c$ is an algebraic variety (a Zariski closed set). The set $\mathcal{K} = \{\mathbf{K} \, | \, \textrm{\upshape rank}(\normDec(\mathbf{\Sigma}, \mathbf{K})) = R\}$ is the complement of $\mathcal{K}^c$. As the complement of a Zariski closed set, $\mathcal{K}$ is, by definition, a Zariski open set.

An algebraic variety over $\mathbb{R}$ or $\mathbb{C}$ is either the entire space or a set of measure zero. To show $\mathcal{K}$ has full measure, it suffices to show it is non-empty (proving $\mathcal{K}^c$ is not the entire space). We construct an explicit $\mathbf{K}$ that achieves the maximum rank $R(\mathbf{\Sigma})$.

\textbf{Part 4: An explicit example}

By the Matroid Union Theorem\footnote{See Theorem 11.3.1 of \citet{oxley2011matroid}.}, the rank $R(\mathbf{\Sigma})$ is also given by:
\[
R(\mathbf{\Sigma}) = \max_{\substack{\rmI_1,\dots,\rmI_d \subseteq [|\mathbf{K}|]\\
\rmI_i \cap \rmI_j = \varnothing\ \forall i\neq j}}
\; \left[\sum_{i=1}^d \operatorname{rank}\!\big(\boldsymbol{\Sigma}[:, \rmI_i]\big)\right].
\]
Let $\rmI_1^*, \dots, \rmI_d^*$ be an optimal partition, defined as:
\[
(\rmI_1^*, \dots, \rmI_d^*) = \operatorname*{argmax}_{\substack{\rmI_1,\dots,\rmI_d \subseteq [|\mathbf{K}|]\\
\rmI_i \cap \rmI_j = \varnothing\ \forall i\neq j}}
\; \left[\sum_{i=1}^d \operatorname{rank}\!\big(\boldsymbol{\Sigma}[:, \rmI_i]\big)\right].
\]
Construct $\mathbf{K}(\rmI_1^*, \dots, \rmI_d^*) \in \mathbb{R}^{|\mathbf{K}| \times d}$ as in \Cref{def:katri_k}. Then, by \Cref{lem:katri_rao_rank_equality},
\[
    \textrm{\upshape rank}(\normDec(\mathbf{\Sigma}, \mathbf{K}(\rmI_1^*,\ldots,\rmI_d^*))) = \sum_{j=1}^d \operatorname{rank}\!\big(\boldsymbol{\Sigma}[:, \rmI_j^*]\big).
\]
This is exactly the maximal value $R(\mathbf{\Sigma})$. Since one $\mathbf{K}$ has been found for which the rank $R(\mathbf{\Sigma})$ is achieved, the set $\mathcal{K}$ is non-empty.

\end{proof}

\subsubsection{Proof of \Cref{lem:indep_func_to_full_rank_matrix}}\label{pf:indep_func_to_full_rank_matrix}
\begin{proof}
Define a map
\[
F : S \to \mathbb{R}^r, \qquad
F(\avec) := \bigl[f_1(\avec),\dots,f_r(\avec)\bigr].
\]
Then, for any choice of points $\avec^{(1)},\dots,\avec^{(r)}\in S$, the $j$-th column of the matrix
$M = \bigl(f_i(\avec^{(j)})\bigr)_{1\le i,j\le r}$ is exactly the vector $F(\avec^{(j)}) \in \mathbb{R}^r$.
Thus, it suffices to show that there exist points $\avec^{(1)},\dots,\avec^{(r)}\in S$ such that
the vectors $F(\avec^{(1)}),\dots,F(\avec^{(r)})$ are linearly independent in $\mathbb{R}^r$.

We construct such points inductively.

\medskip
\noindent\textit{Base step.}
Since $f_1,\dots,f_r$ are linearly independent as functions on $S$, not all of them are identically zero.
Hence, there exists some $\avec^{(1)}\in S$ such that
\[
F(\avec^{(1)}) = \bigl[f_1(\avec^{(1)}),\dots,f_r(\avec^{(1)})\bigr] \neq 0.
\]
Thus the single vector $F(\avec^{(1)})$ is linearly independent (as a set of size one).

\medskip
\noindent\textit{Inductive step.}
Assume that for some $k$ with $1 \le k < r$ we have already chosen points
$\avec^{(1)},\dots,\avec^{(k)}\in S$ such that
\[
F(\avec^{(1)}),\dots,F(\avec^{(k)})
\]
are linearly independent in $\mathbb{R}^r$. Let
\[
W := \operatorname{span}\{F(\avec^{(1)}),\dots,F(\avec^{(k)})\} \subset \mathbb{R}^r.
\]
Then $\dim W = k < r$, so $W$ is a proper subspace of $\mathbb{R}^r$.

We claim there exists $\avec^{(k+1)}\in S$ such that $F(\avec^{(k+1)}) \notin W$. Suppose, for contradiction,
that $F(\avec)\in W$ for all $\avec \in S$. Since $W$ is a proper subspace of $\mathbb{R}^r$, there exists
a nonzero linear functional $\ell : \mathbb{R}^r \to \mathbb{R}$ such that $\ell(v) = 0$ for all $v\in W$.
Equivalently, there exists a nonzero vector $\lambda = (\lambda_1,\dots,\lambda_r)\in \mathbb{R}^r$ such that
\[
\lambda \cdot v = 0 \quad \text{for all } v\in W.
\]

In particular, for every $\avec\in S$ we have $F(\avec)\in W$, hence
\[
0 = \lambda \cdot F(\avec)
  = \sum_{i=1}^r \lambda_i f_i(\avec).
\]
Therefore the function
\[
g := \sum_{i=1}^r \lambda_i f_i
\]
is identically zero on $S$, i.e.,
\[
g(\avec) = 0 \quad \text{for all } \avec\in S.
\]
Since $\lambda \neq 0$, this is a nontrivial linear relation among the functions $f_1,\dots,f_r$, contradicting the
assumption that they are linearly independent.

Hence our supposition was false, and there exists some $\avec^{(k+1)}\in S$ with $F(\avec^{(k+1)}) \notin W$.
Then
\[
F(\avec^{(1)}),\dots,F(\avec^{(k)}),F(\avec^{(k+1)})
\]
are linearly independent in $\mathbb{R}^r$, completing the inductive step.

By induction, we can choose points $\avec^{(1)},\dots,\avec^{(r)}\in S$ so that
the vectors $F(\avec^{(1)}),\dots,F(\avec^{(r)})$ are linearly independent in $\mathbb{R}^r$.
Equivalently, the $r\times r$ matrix
\[
M = \bigl(f_i(\avec^{(j)})\bigr)_{1\le i,j\le r}
\]
has $r$ linearly independent columns, so $\operatorname{rank}(M) = r$, and $M$ is invertible.
\end{proof}

\subsubsection{Proof of \Cref{lem:rank_determines_independence_sigma}}\label{pf:rank_determines_independence_sigma}
\begin{proof}

Since $\sigma$ is real-analytic and not a polynomial, its Taylor series at any point has infinitely many nonzero coefficients.

\medskip

\noindent\textbf{(1) The family $\{\sigma(\lambda t)\}$.}
Expand $\sigma$ at $0$:
\[
\sigma(t) = \sum_{k=0}^\infty c_k t^k
\]
with infinitely many $c_k \neq 0$. For $n \in \mathbb{N}$, define
\[
f_n(t) := \sigma(n t).
\]
We show that $\{f_n\}_{n\ge 1}$ is linearly independent.

Suppose, for some $N \ge 1$, there exist real numbers $\beta_1,\dots,\beta_N$ such that
\[
\sum_{n=1}^N \beta_n f_n(t) \equiv 0
\quad\text{as a function of $t$.}
\]
Expand using the Taylor series:
\[
0 = \sum_{n=1}^N \beta_n \sigma(n t)
= \sum_{n=1}^N \beta_n \sum_{k=0}^\infty c_k (n t)^k
= \sum_{k=0}^\infty c_k \left(\sum_{n=1}^N \beta_n n^k\right) t^k.
\]
Since two power series are equal if and only if all their coefficients are equal, we obtain
\[
c_k \left(\sum_{n=1}^N \beta_n n^k\right) = 0
\quad\text{for all }k\ge 0.
\]
For each $k$ with $c_k \neq 0$, this implies
\[
\sum_{n=1}^N \beta_n n^k = 0. \tag{$\ast$}
\]

Because there are infinitely many $k$ with $c_k \neq 0$, we have infinitely many equations $(\ast)$. Let $n_{\max}$ be the largest index with $\beta_{n_{\max}} \neq 0$. Define
\[
S(k) := \sum_{n=1}^N \beta_n n^k.
\]
Then for each such $k$,
\[
S(k) = 0.
\]

Now divide by $n_{\max}^k$:
\[
\frac{S(k)}{n_{\max}^k}
= \beta_{n_{\max}} + \sum_{n=1}^{N-1} \beta_n \left(\frac{n}{n_{\max}}\right)^k.
\]
Since $n < n_{\max}$, we have $\left|\frac{n}{n_{\max}}\right| < 1$, and so
\[
\sum_{n=1}^{N-1} \beta_n \left(\frac{n}{n_{\max}}\right)^k \xrightarrow[k\to\infty]{} 0.
\]
Thus
\[
\frac{S(k)}{n_{\max}^k} \xrightarrow[k\to\infty]{} \beta_{n_{\max}}.
\]

On the other hand, $S(k)=0$ for infinitely many $k$ (all those with $c_k \neq 0$), and these $k$ tend to infinity. Along that subsequence $k_j$, we have
\[
0 = \frac{S(k_j)}{n_{\max}^{k_j}} \xrightarrow[j\to\infty]{} \beta_{n_{\max}},
\]
so $\beta_{n_{\max}} = 0$, contradicting the definition of $n_{\max}$. Therefore all $\beta_n$ must be zero, and $\{f_n\}_{n\ge 1}$ is linearly independent. Hence the span of $\{\sigma(\lambda t)\}$ is infinite-dimensional.

\end{proof}

\subsubsection{Proof of \Cref{lem:d-rank_gives_rank}}\label{pf:d-rank_gives_rank}

\begin{proof}

Note that if $|S_1|=0$ or $|S_2|=0$, then the submatrix 
$\sigma(\mathbf{x}\mathbf{y}^\top)[S_1,S_2]$ has rank $0$, which agrees
with $\min\{|S_1|,|S_2|\}$. Thus such subsets impose no
nontrivial constraints, and we may freely ignore them in the argument below.

Define the row-restricted vectors
\[
\mathbf{x}_{S_1} := \mathbf{x}[S_1]\in\mathbb{R}^{|S_1|},\qquad
\mathbf{y}_{S_2} := \mathbf{y}[S_2]\in\mathbb{R}^{|S_2|}.
\]
Then $\sigma(\mathbf{x}\mathbf{y}^\top)[S_1,S_2] = \sigma((\mathbf{x}\mathbf{y}^\top)[S_1,S_2]) = \sigma((\mathbf{x}_{S_1}\mathbf{y}_{S_2}^\top)).$

Now, for arbitrary nonempty subsets $S_1\in[d_1]$ and  $S_2\in[d_2]$, define the map
\[
    \pi_{S_1,S_2} : \mathbb{R}^{d_1}\times\mathbb{R}^{d_2}
    \to \mathbb{R}^{|S_1|}\times\mathbb{R}^{|S_2|},
    \quad
    \pi_{S_1,S_2}(\mathbf{x},\mathbf{y}) = (\mathbf{x}_{S_1}, \mathbf{y}_{S_2}).
\]
This map is analytic and surjective.

By Lemma~\ref{lem:d-rank_gives_rank_partial}, the set
\[
\mathcal{S}^{\text{part}}_{S_1,S_2} 
:= \Bigl\{(\mathbf{x},\mathbf{y}) |\mathbf{x}\in\mathbb{R}^{|S_1|},\quad
\mathbf{y}\in\mathbb{R}^{|S_2|},\quad\operatorname{rank}\bigl(\sigma(\mathbf{x}\mathbf{y}^\top)\bigr)
= \min\{|S_1|,|S_2|\}\Bigr\}
\]
is the complement of a proper analytic subvariety of 
$\mathbb{R}^{|S_1|}\times\mathbb{R}^{|S_2|}$.

Define the corresponding full-parameter set
\[
    \mathcal{S}^{(S_1,S_2)} 
    := \pi_{S_1,S_2}^{-1}\bigl(\mathcal{S}^{\text{part}}_{S_1,S_2}\bigr)
    \subseteq \mathbb{R}^{d_1}\times\mathbb{R}^{d_2}.
\]

Let
\[
\mathcal{V}^{\text{part}}_{S_1,S_2}
:= \bigl(\mathcal{S}^{\text{part}}_{S_1,S_2}\bigr)^c
\]
denote the ``bad'' set in the smaller space (a proper analytic subvariety by
Lemma~\ref{lem:d-rank_gives_rank_partial}) and define
\[
    \mathcal{V}_{S_1,S_2}
    := \bigl(\mathcal{S}^{(S_1,S_2)}\bigr)^c
    = \pi_{S_1,S_2}^{-1}\bigl(\mathcal{V}^{\text{part}}_{S_1,S_2}\bigr).
\]
Since $\pi_{S_1,S_2}$ is analytic, the preimage of an analytic subvariety is again an analytic
subvariety, so $\mathcal{V}_{S_1,S_2}$ is an analytic subvariety of 
$\mathbb{R}^{n\times d}\times\mathbb{R}^{h\times d}$. It is \emph{proper} because
$\mathcal{V}^{\text{part}}_{S_1,S_2}$ is a proper subset and $\pi_{S_1,S_2}$ is surjective: there are
points $(\mathbf{x},\mathbf{y})$ in $\mathcal{S}^{\text{part}}_{S_1,S_2}$, and any lift of such a
point is not in $\mathcal{V}_{S_1,S_2}$.

Now define the global no-bias set
\[
\mathcal{S} := \bigcap_{\substack{S_1\subseteq [h]\\ S_2\subseteq [n]}}
\mathcal{S}^{(S_1,S_2)}.
\]

The complement of $\mathcal{S}$ is
\[
\mathcal{S}^c 
= \bigcup_{\substack{S_1\subseteq [h]\\ S_2\subseteq [n]}} \mathcal{V}_{S_1,S_2}.
\]
This is a finite union of analytic subvarieties, hence itself an analytic subvariety (see e.g., 1.2 of \citet{chirka1997}).

Finally, to see that $\mathcal{S}^c$ is \emph{proper}, it suffices to note that each 
$\mathcal{V}_{S_1,S_2}$ is a proper analytic subvariety, hence has empty interior (a nontrivial real
analytic function cannot vanish on a nonempty open set). Because the union is finite, the union also
has empty interior, and so its complement $\mathcal{S}$ is nonempty and dense. Thus $\mathcal{S}$ is
the complement of a proper analytic subvariety of 
$\mathbb{R}^{d_1}\times\mathbb{R}^{d_2}$, and it is full measure, completing the proof.
\end{proof}

\subsubsection{Proof of \Cref{lem:full_M_rank_result_analytic}}\label{pf:full_M_rank_result_analytic}

\begin{proof}
Throughout, $N:=|\mathbf{K}|$ and we assume $d\ge h$.

Define
\[
F: (\mathbf{K},\mathbf{G}) \;\longmapsto\;
\mathbf{M}\big(\sigma(\mathbf{G}\mathbf{K}^\top),\mathbf{K}\big)\in\mathbb{R}^{N\times(dh)}.
\]
Each entry of $\mathbf{G}\mathbf{K}^\top$ is a polynomial in the entries of $(\mathbf{K},\mathbf{G})$.
Since $\sigma$ is analytic, each entry of $\sigma(\mathbf{G}\mathbf{K}^\top)$ is an analytic function of
$(\mathbf{K},\mathbf{G})$. Multiplying by $\mathbf{K}$ and taking diagonals are polynomial operations,
hence every entry of $\mathbf{M}(\sigma(\mathbf{G}\mathbf{K}^\top),\mathbf{K})$ is analytic in
$(\mathbf{K},\mathbf{G})$.

Therefore, every $N\times N$ minor of $\mathbf{M}(\sigma(\mathbf{G}\mathbf{K}^\top),\mathbf{K})$ is an
analytic function of $(\mathbf{K},\mathbf{G})$. The set
\[
\mathcal{B}
:= \Bigl\{(\mathbf{K},\mathbf{G}) :
\operatorname{rank}\mathbf{M}\big(\sigma(\mathbf{G}\mathbf{K}^\top),\mathbf{K}\big) < N\Bigr\}
\]
is exactly the common zero set of all these minors, hence an \emph{analytic subvariety} of
$\mathbb{R}^{N\times d}\times\mathbb{R}^{h\times d}$.

If we can find \emph{one} parameter choice for which the corresponding matrix has full row
rank $N$, then not all $N\times N$ minors vanish identically, and the “bad” set is a \emph{proper} analytic
subvariety. Its complement is then a nonempty Zariski open set, proving the desired generic statement.

Thus, the rest of the proof is devoted to constructing such a full-row-rank example.

Define $\rmI_i = \{j\,|\, j\in[|\mathbf{K}|],\quad (i-1)h < j \le ih  \}$ for all $i\in[d]$.
Fix pairwise distinct nonzero scalars $\{\alpha_t\}_{t=1}^N$. Also, define $\vec{\alpha} = [\alpha_1,\ldots,\alpha_N].$

Finally, define $\mathbf{K}\in \mathbb{R}^{|\mathbf{K}|\times d}$ such that $\mathbf{K}[i,j] = \alpha_i \mathbbm{1}\{i\in\rmI_j\}.$ Note that each $\alpha_i$ occurs exactly once in $\mathbf{K}.$

We keep this $\mathbf{K}$ fixed from now on. We will choose $\mathbf{G}$ and $\vec{\alpha}$ to make the resulting $\mathbf{M}$ full row rank.

By \Cref{lem:katri_rao_rank_equality}, we have
\[
\operatorname{rank}\bigl(\mathbf{M}(\mathbf{\Sigma},\mathbf{K})\bigr)
= \sum_{j=1}^d \operatorname{rank}\bigl(\mathbf{\Sigma}[:,\rmI_j]\bigr),
\]
so we must simply choose $\mathbf{G}$ and $\alpha$ such that $\operatorname{rank}\bigl(\mathbf{\Sigma}[:,\rmI_j]\bigr) = |\rmI_j|$ for all $j\in[d]$.

Now,
\begin{align*}
    \mathbf{\Sigma}[:,\rmI_j] &= \sigma(\mathbf{G}\mathbf{K}^\top)[:,\rmI_j]\\
    &= \sigma(\mathbf{G}\mathbf{K}^\top[:,\rmI_j])\\
    &= \sigma(\mathbf{G}(\mathbf{K}[\rmI_j,:])^\top)\\
    &= \sigma(\mathbf{G}[:,j](\vec{\alpha}[\rmI_j])^\top)\in\mathbb{R}^{h\times |\rmI_j|}.
\end{align*}

Now, $\text{rank}[\sigma]\ge h$, by \Cref{lem:d-rank_gives_rank_partial}, $\sigma(\mathbf{G}[:,j](\vec{\alpha}[\rmI_j])^\top)$ has rank $|\rmI_j|$ for generic $\mathbf{G}[:,j]$ and $\vec{\alpha}[\rmI_j].$

Thus there exists $\mathbf{G}$ and $\vec{\alpha}$ such that $\operatorname{rank}\bigl(\mathbf{\Sigma}[:,\rmI_j]\bigr) = |\rmI_j|$ for all $j\in[d]$.

This completes the proof.
\end{proof}

\subsubsection{Proof of \Cref{lem:softmax_reformulation}}\label{pf:softmax_reformulation}
\begin{proof}
We first assume our code to be \textit{softmax-decodable} as defined in \Cref{def:EHSM-decode-exact} to prove the forward direction. For the sake of contradiction, assume there exists some $\noisyCodeMatrix[i]$, $i$, $j \neq i$ such that 
\begin{align}\label{eq:h-bound}
    \langle \deferredDecoder\noisyCodeMatrix[i],\defCompOutEmbedding_j\rangle \geq \langle \deferredDecoder\noisyCodeMatrix[i],\defCompOutEmbedding_i\rangle
\end{align}
For ease of notation, define
\begin{align*}
w &= \exp(\langle \deferredDecoder\noisyCodeMatrix[i],\defCompOutEmbedding_i\rangle), \\
z &= \exp(\langle \deferredDecoder\noisyCodeMatrix[i],\defCompOutEmbedding_j\rangle), \\
S &= \sum_{k = 1}^n \exp (\langle \deferredDecoder\noisyCodeMatrix[i],\defCompOutEmbedding_k\rangle).
\end{align*}
\Cref{eq:softmax-hypothesis} gives
\begin{align}
\left|\frac{w}{S}-1\right|<\alpha,
\qquad
\frac{z}{S}<\alpha. 
\end{align}
Since \cref{eq:softmax-hypothesis} holds for all $\frac{1}{2}>\alpha>0$, fix some $\alpha < 1/2$. From the first inequality,
\begin{align}
\frac{w}{S}>1-\alpha
\Longrightarrow
S<\frac{w}{1-\alpha}. 
\end{align}
Substituting this into the second part of (2) yields
\begin{align}
z<\alpha S<\frac{\alpha w}{1-\alpha}. 
\end{align}
Inequality (4) and our assumption \cref{eq:h-bound} implies that
\begin{align*}
w<\frac{\alpha w}{1-\alpha}
\Longrightarrow
1<\frac{\alpha}{1-\alpha}
\Longrightarrow
\alpha>\tfrac12,
\end{align*}
contradicting $\alpha<\tfrac12$.  
Therefore 
\begin{align*}
\langle \deferredDecoder \noisyCodeMatrix[i],\defCompOutEmbedding_i\rangle>\langle \deferredDecoder\noisyCodeMatrix[i],\defCompOutEmbedding_j\rangle
\end{align*}
for every $j\neq i$.
We now prove the backwards direction. 

Assume that for every index $i$
\begin{align}
\langle \deferredDecoder\noisyCodeMatrix[i],\textbf{y}_i\rangle > \langle \deferredDecoder\noisyCodeMatrix[i],\textbf{y}_j\rangle
\quad\text{for all } j\neq i. 
\end{align}
Then we show that we can handle any tolerance by scaling  $\deferredDecoder$.
For any $\noisyCodeMatrix$ and $i$ and for ease of notation define
\begin{align*}
\fixedCompressed_k &= \langle \deferredDecoder\noisyCodeMatrix[i],\textbf{y}_k\rangle,\\
g   &= \min_{j\neq i}\bigl(\fixedCompressed_i - \fixedCompressed_j\bigr).
\end{align*}

Choose $\lambda>0$ and set $\deferredDecoder_\lambda=\lambda\,\deferredDecoder$.  Define
\begin{align*}
\tilde{\fixedCompressed}_k(\lambda) &= \lambda \fixedCompressed_k,\\
p_k(\lambda) &= \frac{\exp\bigl(\tilde{\fixedCompressed}_k(\lambda)\bigr)}
                    {\sum_\ell \exp\bigl(\tilde{\fixedCompressed}_\ell(\lambda)\bigr)}.
\end{align*}

Because $\fixedCompressed_i - \fixedCompressed_j \ge g$ for every $j\neq i$,
\begin{align} \label{eq:p-lambda}
p_i(\lambda)
  &= \frac{1}{1+\sum_{j\neq i}\exp\bigl(\lambda(\fixedCompressed_j-\fixedCompressed_i)\bigr)}
  \ge \frac{1}{1+(n-1)\exp(-\lambda g)}, \\
p_j(\lambda)
  &= \frac{\exp\bigl(\lambda \fixedCompressed_j\bigr)}
         {\exp\bigl(\lambda \fixedCompressed_i\bigr)+\sum_{\ell\neq i}\exp\bigl(\lambda \fixedCompressed_\ell\bigr)}
   = \frac{\exp\bigl(-\lambda(\fixedCompressed_i-\fixedCompressed_j)\bigr)}
          {1+\sum_{\ell\neq i}\exp\bigl(-\lambda(\fixedCompressed_i-\fixedCompressed_\ell)\bigr)}
   \le \exp(-\lambda g) .
\end{align}

Given any $\alpha\in(0,1/2)$ pick
\begin{align} \label{eq:lamba-bd}
\lambda > \frac{1}{g}\ln\!\bigl((n-1)/\alpha\bigr). 
\end{align}
Then $(n-1)\exp(-\lambda g)<\alpha$ and $\exp(-\lambda g)<\alpha$, so \cref{eq:p-lambda}–\cref{eq:lamba-bd} give
\begin{align*}
p_i(\lambda) > 1-\alpha, \qquad
p_j(\lambda) < \alpha \text{ for } j\neq i.
\end{align*}
Also note that since $\exp$ has positive range and addition is monotonic over $\Z^+$, for all $i,j, \lambda$: 
\begin{align*}
    p_i(\lambda) \le 1, \quad p_j(\lambda)\ge0.
\end{align*}
Hence
\begin{align*}
\Bigl\|\operatorname{softmax}_k\bigl(\langle \deferredDecoder_\lambda\noisyCodeMatrix[i],\textbf{y}_k\rangle\bigr) - e_i\Bigr\|_\infty < \alpha .
\end{align*}

Since $\alpha$ was arbitrary, the softmax condition holds for every tolerance after scaling $\deferredDecoder$ by a suitable $\lambda$.
\end{proof}

\subsubsection{Proof of \Cref{thm:NEW_main_decoding}}\label{pf:real_NEW_main_decoding}

Fix a finite $\mathcal{P}\subset \mathbb S^{d-1}\times \mathbb S^{d-1}$ and define
\[
\mathcal{S}_\pm\;:=\;\{\decodingSphereX\pm \decodingSphereY:\ (\decodingSphereX,\decodingSphereY)\in\mathcal{P}\}.
\]
Going forward, for convenience we use the notation \[\decodeDiff_{ij}:=\compressedOutEmbedding_i-\compressedOutEmbedding_j, \quad \auxHold_i:=\compressedAuxEmbedding_i,\] define \[\widehat{\decodeDiff}_{ij}=\decodeDiff_{ij}/\|\decodeDiff_{ij}\|, \quad \widehat{\auxHold}_i=\auxHold_i/\|\auxHold_i\|.\]

We first show the following intermediate result.

\begin{lemma}\label{lem:angleJL}
Let $\Phi=\tfrac{1}{\sqrt m}\,\gaussianMatrix$ with $\gaussianMatrix\in\mathbb{R}^{m\times d}$ having i.i.d.\ $\mathcal{N}(0,1)$ entries.

Then for any $\varepsilon\in(0,1)$,
\[
\Pr\Big[\ \forall (\decodingSphereX,\decodingSphereY)\in\mathcal{P}:\ \big|\,\langle \Phi \decodingSphereX,\Phi \decodingSphereY\rangle-\langle \decodingSphereX,\decodingSphereY\rangle\,\big|\le \varepsilon\ \Big]
\ \ge\ 1\ -\ 2\,|\mathcal{S}_\pm|\,\exp\!\Big(-\tfrac{\varepsilon^2}{8}\,m\Big).
\]
Equivalently, it suffices that
\begin{align} \label{eq:m-lower-bound}
m\ \ge\ \frac{8}{\varepsilon^2}\,\ln\!\Big(\frac{2\,|\mathcal{S}_\pm|}{\delta}\Big)
\end{align}
to ensure the event above holds with probability at least $1-\delta$.
\end{lemma}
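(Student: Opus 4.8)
The plan is to reduce the inner-product preservation claim to a pair of norm-preservation statements via the polarization identity, control each norm with a standard $\chi^2$ tail bound, and finish with a union bound over $\mathcal{S}_\pm$. This is essentially the textbook Johnson--Lindenstrauss argument, so there is no genuine obstacle, only the minor care needed to land the exponent exactly on $m\varepsilon^2/8$.

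First I would fix a pair $(x,y)\in\mathcal{P}$ and apply the polarization identity $\langle a,b\rangle=\tfrac14\big(\|a+b\|^2-\|a-b\|^2\big)$ to both $(x,y)$ and $(\Phi x,\Phi y)$, using $\Phi(x\pm y)=\Phi x\pm\Phi y$, to obtain
\[
\langle \Phi x,\Phi y\rangle-\langle x,y\rangle
=\tfrac14\Big[\big(\|\Phi(x+y)\|^2-\|x+y\|^2\big)-\big(\|\Phi(x-y)\|^2-\|x-y\|^2\big)\Big].
\]
Hence the event $\big|\langle \Phi x,\Phi y\rangle-\langle x,y\rangle\big|\le\varepsilon$ follows from the two norm-distortion events $\big|\|\Phi w\|^2-\|w\|^2\big|\le\varepsilon\|w\|^2$ for $w\in\{x+y,\,x-y\}$, because $\|x+y\|^2+\|x-y\|^2=2\|x\|^2+2\|y\|^2=4$. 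This identity is the only place the hypothesis $x,y\in\mathbb{S}^{d-1}$ enters, and it is exactly what makes polarization lossless: the leading $\tfrac14$ is cancelled by the total squared length $4$, so no stray constant is picked up.

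Next, for a fixed $w\in\R^d$, since $\Phi=\tfrac1{\sqrt m}\gaussianMatrix$ with i.i.d.\ standard Gaussian entries, the coordinates of $\gaussianMatrix w$ are i.i.d.\ $\mathcal N(0,\|w\|^2)$, so $\|\Phi w\|^2=\tfrac1m\sum_{i=1}^m(\gaussianMatrix w)_i^2=\tfrac{\|w\|^2}{m}\,Q$ with $Q\sim\chi^2_m$. I would then invoke the Laurent--Massart deviation bounds, which give $\Pr\big[|Q/m-1|\ge t\big]\le 2\exp(-mt^2/8)$ for $t\in(0,1)$, apply this with $t=\varepsilon$, and union-bound over the at most $|\mathcal{S}_\pm|$ distinct vectors in $\mathcal{S}_\pm=\{x\pm y:(x,y)\in\mathcal{P}\}$. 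Combining with the reduction above yields the stated probability $1-2|\mathcal{S}_\pm|\exp(-m\varepsilon^2/8)$, and requiring this to be at least $1-\delta$ and solving for $m$ gives \cref{eq:m-lower-bound}. The one slightly delicate point --- pure bookkeeping --- is choosing the Laurent--Massart deviation parameter so the exponent is exactly $m\varepsilon^2/8$: taking $x=m\varepsilon^2/8$ one checks $2\sqrt{mx}+2x=\tfrac{m\varepsilon}{\sqrt2}+\tfrac{m\varepsilon^2}{4}\le m\varepsilon$ (upper tail) and $2\sqrt{mx}=\tfrac{m\varepsilon}{\sqrt2}\le m\varepsilon$ (lower tail), both of which hold for $\varepsilon\le1$; every other step is routine.
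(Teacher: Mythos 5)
Your proposal is correct and follows essentially the same route as the paper's proof: polarization to reduce inner-product distortion to squared-norm distortion of $x\pm y$, a $\chi^2$ concentration bound of the form $2\exp(-m\varepsilon^2/8)$, and a union bound over $\mathcal{S}_\pm$. The only cosmetic difference is that you derive the tail bound from Laurent--Massart while the paper cites the equivalent bound directly (Example 2.12 of Wainwright).
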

\begin{proof}
    See \Cref{pf:angleJL}
\end{proof}

\begin{corollary}
\label{cor:bilinear_ex212}
Let $\isoErr:=\Phi^\top\Phi-\rmI$ with $\Phi=\tfrac{1}{\sqrt m}\gaussianMatrix$ and $\gaussianMatrix$ i.i.d.\ standard Gaussian.
If \Cref{eq:m-lower-bound} holds, then for
\[
\mathcal{P}=\{(\widehat{\decodeDiff}_{ij},\widehat{\auxHold}_i):\ i\in[{\numVectors}],\ j\neq i\}
\]
it follows that
\[
\mathcal{S}_\pm=\{\widehat{\decodeDiff}_{ij}\pm \widehat{\auxHold}_i\},\ \ |\mathcal{S}_\pm|\le 2\numVectors(\numVectors-1),
\]
we have, simultaneously for all $i\neq j$,
\[
\big|\,\decodeDiff_{ij}^\top \isoErr\,\auxHold_i\,\big|
=\|\decodeDiff_{ij}\|\,\|\auxHold_i\|\cdot
\big|\,\langle \Phi\widehat{\decodeDiff}_{ij},\Phi\widehat{\auxHold}_i\rangle-\langle \widehat{\decodeDiff}_{ij},\widehat{\auxHold}_i\rangle\,\big|
\ \le\ \varepsilon\,\|\decodeDiff_{ij}\|\,\|\auxHold_i\|.
\]
\end{corollary}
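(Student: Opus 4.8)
The plan is to obtain \Cref{cor:bilinear_ex212} as an immediate consequence of \Cref{lem:angleJL}: the only work beyond that lemma is the bilinearity of the quadratic form $(\decodeDiff,\auxHold)\mapsto\decodeDiff^\top\isoErr\,\auxHold$ together with a bookkeeping count of how many distinct vectors appear in $\mathcal{S}_\pm$.

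First I would expand the error quadratic form. Since $\isoErr=\Phi^\top\Phi-\rmI$, for any $i\neq j$,
\[
\decodeDiff_{ij}^\top\isoErr\,\auxHold_i=\langle\Phi\decodeDiff_{ij},\Phi\auxHold_i\rangle-\langle\decodeDiff_{ij},\auxHold_i\rangle .
\]
Writing $\decodeDiff_{ij}=\|\decodeDiff_{ij}\|\,\widehat{\decodeDiff}_{ij}$ and $\auxHold_i=\|\auxHold_i\|\,\widehat{\auxHold}_i$ and pulling both scalars out of each inner product gives
\[
\decodeDiff_{ij}^\top\isoErr\,\auxHold_i=\|\decodeDiff_{ij}\|\,\|\auxHold_i\|\,\bigl(\langle\Phi\widehat{\decodeDiff}_{ij},\Phi\widehat{\auxHold}_i\rangle-\langle\widehat{\decodeDiff}_{ij},\widehat{\auxHold}_i\rangle\bigr),
\]
which, after taking absolute values, is exactly the stated equality in the corollary. (If some $\compressedOutEmbedding_i=\compressedOutEmbedding_j$ so that $\widehat{\decodeDiff}_{ij}$ is undefined, both sides vanish and the bound is trivial; in the regime of interest $\rho_{\min}>0$ forces the $\compressedOutEmbedding_i$ distinct, so this case does not arise.)

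Next I would apply \Cref{lem:angleJL} with the point set $\mathcal{P}=\{(\widehat{\decodeDiff}_{ij},\widehat{\auxHold}_i):i\in[\numVectors],\ j\neq i\}\subset\Sph\times\Sph$. There are $\numVectors(\numVectors-1)$ ordered pairs $(i,j)$ with $i\neq j$, so $|\mathcal{P}|\le\numVectors(\numVectors-1)$ and hence $\mathcal{S}_\pm=\{\widehat{\decodeDiff}_{ij}\pm\widehat{\auxHold}_i\}$ contains at most $2\numVectors(\numVectors-1)$ vectors. Since by hypothesis $m$ satisfies \Cref{eq:m-lower-bound} for this $\mathcal{P}$ — which, using $|\mathcal{S}_\pm|\le 2\numVectors(\numVectors-1)$, is implied whenever $m\ge \tfrac{8}{\varepsilon^2}\ln\!\bigl(4\numVectors(\numVectors-1)/\delta\bigr)$ — \Cref{lem:angleJL} gives that, with probability at least $1-\delta$, $|\langle\Phi\decodingSphereX,\Phi\decodingSphereY\rangle-\langle\decodingSphereX,\decodingSphereY\rangle|\le\varepsilon$ for every $(\decodingSphereX,\decodingSphereY)\in\mathcal{P}$. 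Substituting $(\decodingSphereX,\decodingSphereY)=(\widehat{\decodeDiff}_{ij},\widehat{\auxHold}_i)$ into the displayed identity above then yields $|\decodeDiff_{ij}^\top\isoErr\,\auxHold_i|\le\varepsilon\,\|\decodeDiff_{ij}\|\,\|\auxHold_i\|$ simultaneously for all $i\neq j$, as claimed.

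There is essentially no analytic obstacle here: all of the probabilistic content is packaged in \Cref{lem:angleJL}, and what remains is the one-line bilinearity rescaling plus the cardinality bound $|\mathcal{S}_\pm|\le 2\numVectors(\numVectors-1)$. The only place that demands care is this count — using ordered pairs $(i,j)$ and including the factor of two from the $\pm$ — together with the harmless degenerate-pair caveat noted above.
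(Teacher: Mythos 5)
Your proposal is correct and matches the paper's argument: the paper likewise derives the corollary directly from \Cref{lem:angleJL}, with the bilinearity rescaling $\decodeDiff_{ij}^\top\isoErr\,\auxHold_i=\|\decodeDiff_{ij}\|\,\|\auxHold_i\|\bigl(\langle\Phi\widehat{\decodeDiff}_{ij},\Phi\widehat{\auxHold}_i\rangle-\langle\widehat{\decodeDiff}_{ij},\widehat{\auxHold}_i\rangle\bigr)$ and the count $|\mathcal{S}_\pm|\le 2\numVectors(\numVectors-1)$ over ordered pairs being the only additional steps. Your explicit handling of the degenerate case $\compressedOutEmbedding_i=\compressedOutEmbedding_j$ is a harmless extra remark not needed in the paper's setting.
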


\begin{proof}
This follows directly from \Cref{lem:angleJL}.

Equipped with these results, the proof of the theorem is relatively concise. 

Define $\vs_{ij} = \langle \compressedOutEmbedding_j, \rmM \codeMatrix[i] \rangle = \langle \compressedOutEmbedding_j, \frac{1}{m} \gaussianMatrix^\top \gaussianMatrix \compressedAuxEmbedding_i \rangle$. Apply \Cref{cor:bilinear_ex212} with $\varepsilon=\rho_{\min}/2$ to the family
$\mathcal{P}=\{(\widehat{\decodeDiff}_{ij},\widehat{\auxHold_i})\}$.
By \Cref{cor:bilinear_ex212},
\(
|\decodeDiff_{ij}^\top \isoErr \auxHold_i|\le (\rho_{\min}/2)\,\|\decodeDiff_{ij}\|\,\|\auxHold_i\|,
\)
where $\isoErr$ is the same as in \Cref{cor:bilinear_ex212}. We then have $\vs_{ii} - \vs_{ij} = \langle \compressedOutEmbedding_i - \compressedOutEmbedding_j, \frac{1}{m} \gaussianMatrix^\top \gaussianMatrix \compressedAuxEmbedding_i \rangle  = \langle \decodeDiff_{ij}, (\rmI + \isoErr) \auxHold_i \rangle = \langle \decodeDiff_{ij}, \auxHold_i \rangle + \decodeDiff_{i,j}^\top \isoErr \auxHold_i$. By definition of $\rho_{\min}$,
\(
\langle \decodeDiff_{ij},\auxHold_i\rangle\ge \rho_{\min}\,\|\decodeDiff_{ij}\|\,\|\auxHold_i\|.
\)
Therefore each gap satisfies
\[
\vs_{ii}-\vs_{ij}
=\langle \decodeDiff_{ij},\auxHold_i\rangle+\decodeDiff_{ij}^\top \isoErr \auxHold_i
\ \ge\ \rho_{\min}\,\|\decodeDiff_{ij}\|\,\|\auxHold_i\| - (\rho_{\min}/2)\,\|\decodeDiff_{ij}\|\,\|\auxHold_i\|
\ =\ (\rho_{\min}/2)\,\|\decodeDiff_{ij}\|\,\|\auxHold_i\|\ >0,
\]
simultaneously for all $i\neq j$ on the high-probability event.
To make this event have probability at least $1-\delta$,
Lemma~\ref{lem:angleJL} requires
\(
m\ge \tfrac{8}{(\rho_{\min}/2)^2}\ln\!\big(2|\mathcal{S}_\pm|/\delta\big)
\)
Substituting in $|\mathcal{S}_\pm|\le 2\numVectors(\numVectors-1)$, which follows from the number of elements in $\mathcal{P}$, provides the stated condition.

\end{proof}

\subsubsection{Proof of \Cref{lem:angleJL}}\label{pf:angleJL}
\begin{proof}
    For any fixed $\fixedCompressed\in\mathbb R^{d}$ we have
\[
\|\Phi \fixedCompressed\|_2^2 \;=\; \frac{1}{m}\,\|\gaussianMatrix \fixedCompressed\|_2^2 \ \sim \ \|\fixedCompressed\|_2^2\cdot\frac{\chi_m^2}{m}.
\]
 This fact and the following $\chi^2$ tail bound are well known results. For instance, see Example 2.12 of \citep{wainwright2019high}. Remember that $\chi^2_m \sim \text{Gamma}(\alpha=\frac{m}{2},\theta=2)$. We then have from a classic $\chi^2$ tail bound for any $0<\varepsilon<1$ and any fixed $\fixedCompressed\neq \bm{0}$,
\[
\Pr\!\left[\left|\frac{\|\Phi z\|_2^2}{\|\fixedCompressed\|_2^2}-1\right|\ge \varepsilon\right]\ \le\ 2\,\exp\!\Big(-\tfrac{\varepsilon^2}{8}\,m\Big).
\]
Equivalently,
\[
\Pr\Big[\big|\|\Phi z\|_2^2-\|\fixedCompressed\|_2^2\big|>\varepsilon\,\|\fixedCompressed\|_2^2\Big]\ \le\ 2\,\exp\!\Big(-\tfrac{\varepsilon^2}{8}\,m\Big).
\]

Then for any $(\sphereX,\sphereY)\in\mathbb S^{d-1}\times\mathbb S^{d-1}$,
\[
\langle \Phi\sphereX, \Phi\sphereY\rangle-\langle \sphereX,\sphereY\rangle
=\frac14\Big(\|\Phi(\sphereX+\sphereY)\|_2^2-\|\sphereX+\sphereY\|_2^2\Big)
-\frac14\Big(\|\Phi(\sphereX-\sphereY)\|_2^2-\|\sphereX-\sphereY\|_2^2\Big).
\]
If simultaneously
\[
\big|\|\Phi(\sphereX+\sphereY)\|_2^2-\|\sphereX+\sphereY\|_2^2\big|\le \varepsilon\,\|\sphereX+\sphereY\|_2^2,
\qquad
\big|\|\Phi(\sphereX-\sphereY)\|_2^2-\|\sphereX-\sphereY\|_2^2\big|\le \varepsilon\,\|\sphereX-\sphereY\|_2^2,
\]
then, using $\|\sphereX\|=\|\sphereY\|=1$,
\[
\big|\,\langle \Phi \sphereX,\Phi \sphereY\rangle-\langle \sphereX,\sphereY\rangle\,\big|
\le \frac{\varepsilon}{4}\big(\|\sphereX+\sphereY\|_2^2+\|\sphereX-\sphereY\|_2^2\big)
= \frac{\varepsilon}{4}\big(2\|\sphereX\|_2^2+2\|\sphereY\|_2^2\big)=\varepsilon.
\]

Let $A_z$ denote the event that $\big|\|\Phi \inSPM\|_2^2-\|\inSPM\|_2^2\big|>\varepsilon\,\|\inSPM\|_2^2$ for a fixed $\inSPM\in\mathcal S_\pm$. Then
$\Pr[A_z]\le 2e^{-(\varepsilon^2/8)m}$.
If none of the events $\{A_z\}_{z\in\mathcal S_\pm}$ occur, the bound in the previous step holds for all $(\sphereX,\sphereY)\in\mathcal P$.
Therefore,
\[
\Pr\Big[\exists (\sphereX,\sphereY)\in\mathcal P:\ \big|\,\langle \Phi \sphereX,\Phi \sphereY\rangle-\langle \sphereX,\sphereY\rangle\,\big|>\varepsilon\Big]
\ \le\ \sum_{z\in\mathcal S_\pm}\Pr[A_z]
\ \le\ 2\,|\mathcal S_\pm|\,\exp\!\Big(-\tfrac{\varepsilon^2}{8}\,m\Big),
\]
upon union bounding over all $(\sphereX,\sphereY)\in\mathcal S_{\pm}$ which proves the claim.
\end{proof}

\subsubsection{Proof of \Cref{thm:uniform_decoding_rho}}\label{pf:NEW_main_decoding}
\begin{proof}
From our definition of $\rho_{\min}$ (recall that $\va_{i,j} = \mathbf{\tilde v}_i - \mathbf{\tilde v}_j$ and $\bvec_i = \mathbf{\tilde u}_i$)
\[
\rho_{\min} \;=\; \min_{i\neq j}\frac{\ip{\va_{ij}}{\bvec_i}}{\norm{\va_{ij}}\norm{\bvec_i}}
= \min_{i\neq j}\frac{\ip{\defCompOutEmbedding_i-\defCompOutEmbedding_j}{\defCompOutEmbedding_i}}{\norm{\defCompOutEmbedding_i-\defCompOutEmbedding_j}}
= \min_{i\neq j}\sqrt{\frac{1-\ip{\defCompOutEmbedding_i}{\defCompOutEmbedding_j}}{2}}.
\]
Note that $||\tilde{\vu}_i||=1.$\\
Let $\maxDifference:=\max_{i<j}\ip{\defCompOutEmbedding_i}{\defCompOutEmbedding_j}$; since the map $\sphereX\mapsto\sqrt{(1-\sphereX)/2}$ is decreasing on $(-1,1)$,
\begin{align} \label{eq:rho-bd}
\rho_{\min} \;\ge\; \sqrt{\frac{1-\maxDifference}{2}}.
\end{align}

To control $\maxDifference$, fix $\va\in\Sph$ and let $\rmX\sim\mathrm{Unif}(\Sph)$.  
The function $f(\sphereX)=\ip{\sphereX}{\va}$ is $1$-Lipschitz on $\Sph$ (geodesic metric) and $\E[f]=0$ by symmetry.  
By Theorem~3 of \citep{aubrun2024optimalconstantsconcentrationinequalities}, for all $\tBound>0$,
\begin{align} \label{eq: prob-t-bd}
\Pr\{\ip{\rmX}{\va}\ge \tBound\}\ \le\ e^{-d\,\tBound^2/2}.
\end{align}
Conditioning on $\defCompOutEmbedding_j$ and applying \cref{eq: prob-t-bd} with $\rmX=\defCompOutEmbedding_i$, $\va=\defCompOutEmbedding_j$ yields, for each unordered pair $\{i,j\}$,
\(
\Pr\{\ip{\defCompOutEmbedding_i}{\defCompOutEmbedding_j}\ge \tBound\}\le e^{-d\,\tBound^2/2}.
\)
Union-bounding over the $\binom{\decodingNumVectors}{2}$ pairs gives
\[
\Pr\{\maxDifference\ge \tBound\}\ \le\ \binom{\decodingNumVectors}{2}\,e^{-d\,\tBound^{2}/2}.
\]
Hence with probability at least $1-\probDelta$,
\begin{align}\label{eq:mu-bd}
\maxDifference\ \le\ \sqrt{\frac{2}{d}\,\ln\!\frac{\binom{\decodingNumVectors}{2}}{\probDelta}}
\end{align}

Combining \ref{eq:rho-bd}–\ref{eq:mu-bd} yields the stated bound.
\end{proof}

\subsubsection{Proof of \Cref{thm:subgaussian_decoding_bd}}\label{pf:subgaussian_decoding_bd}
\begin{proof}
Let $\normXi_{ik}:=\sqrt d\,\xi_{ik}$. Then $\|\normXi_{ik}\|_{\psitwo}\le K$ and $\mathbb{E}[\normXi_{ik}^2]=1$. Note that we also have\footnote{This is well known. For instance, it follows directly from Lemma 2.8.6 of \citep{vershynin2018high}} $\norm{\normXi^2}_{\psi_1} \leq \norm{\normXi}_{\psi_2}^2 \leq K^2$ From the definition of the sub-exponential norm\footnote{Here we use the usual definition $\norm{\rmX}_{\psi_1} := \inf\{\tBound > 0: \E[\exp(|\rmX| / \tBound)] \leq 2\}$} we have that $\|1\|_{\psione}=1/\ln 2$, so
\[
\|\normXi_{ik}^2-1\|_{\psione}\ \le\ \|\normXi_{ik}^2\|_{\psione}+\|1\|_{\psione}\ \le\ K^2+\frac{1}{\ln 2}
\]
Since
\(
\|\defCompOutEmbedding_i\|^2-1=\frac1d\sum_{k=1}^d(\normXi_{ik}^2-1),
\)
we apply the Bernstein bound for sub-exponentials \footnote{See Theorem 1.2.7 of \citep{chafai2012interactions}. This text uses the slightly different Orcliz norm $\norm{\rmX}_{\psi_1}^{(e)} = \inf \{c > 0: \psi(|\rmX| / c) \leq \psi(1) \}$
where $\psi$ is some Orcliz function. Recall that our definition has been $\norm{X}_{\psi_1} = \inf \{c > 0: \exp(|X| / c) \leq 2\}$. Fortunately, if we set $\psi_1 (\xvec) = \exp(|\xvec|^\alpha) - 1$ it follows that $\{c > 0: \exp(|\rmX| / c) \leq 2\} \subseteq \{c > 0: \exp(|\rmX| / c) \leq e \}$, and after taking infimums we have $\norm{\rmX}_{\psi_1}^{(e)} \leq \norm{\rmX}_{\psi_1}$. So we may use the bound as if it were our familiar norm. 
} to find, for all $\eta>0$,
\[
\Pr\!\big(|\|\defCompOutEmbedding_i\|^2-1|\ge \eta\big)\le
2\exp\!\Big(-c_B d\,\min\!\Big\{\frac{\eta^2}{(K^2+\frac{1}{\ln 2})^2},\ \frac{\eta}{K^2+\frac{1}{\ln 2}}\Big\}\Big).
\]
Union bound over $i\in[\decodingNumVectors]$\footnote{To find $\varepsilon_{\decodingNumVectors}$, take the right hand side of the above equation and set it less than or equal to $\probDelta / 2$. Solving for $\eta$ yields $\varepsilon_{\decodingNumVectors}$.}. Using $|\sqrt{1+u}-1|\le |u|$ ($u>-1$), with probability $\ge 1-\probDelta/2$,
\[
\big|\|\defCompOutEmbedding_i\|-1\big|\ \le\ \varepsilon_{\decodingNumVectors}\qquad\text{for all }i,
\quad
\varepsilon_{\decodingNumVectors}:=(K^2+\frac{1}{\ln 2})\, \max \left(\sqrt{\frac{1}{c_B\,d}\,\ln\!\frac{4\decodingNumVectors}{\probDelta}}\, ,  \frac{1}{c_B\,d}\,\ln\!\frac{4\decodingNumVectors}{\probDelta}\right)
\]

We now find a bound for $\langle \defCompOutEmbedding_j, \vu_i \rangle $. Condition on $\vu_i$. Then for $j\ne i$,
\[
\ip{\defCompOutEmbedding_j}{\vu_i}=\sum_{k=1}^d \vu_{ik}\,\xi_{jk}
\]
is a sum of independent centered subgaussians with $\|\;\vu_{ik}\xi_{jk}\;\|_{\psitwo}\le |\vu_{ik}|\,K/\sqrt d$. By Theorem~1.1 of \citep{leskelä2025sharpconstantsrelatingsubgaussian}, the corresponding variance proxies are
$\sigma_k^2=(\sqrt{\ln 2}\,K\,|\vu_{ik}|/\sqrt d)^2$. The Hoeffding bound for sub-gaussians\footnote{See Proposition 2.5 of \citep{wainwright2019high}} gives for any $\tBound\ge 0$,
\[
\Pr\!\Big(|\ip{\defCompOutEmbedding_j}{\vu_i}|\ge t\ \Big|\ \vu_i\Big)
\ \le\ 2\exp\!\left(-\frac{t^2}{2\sum_k\sigma_k^2}\right)
\ =\ 2\exp\!\left(-\frac{t^2}{2(\ln 2)\,K^2/d}\right),
\]
since $\sum_k \vu_{ik}^2=1$. Removing the conditioning and union-bounding over ordered pairs $(i,j)$ shows that, with probability $\ge 1-\probDelta/2$,
\[
|\ip{\defCompOutEmbedding_j}{\vu_i}|\ \le\ t_{\decodingNumVectors}\qquad\text{for all }i\ne j,
\quad
t_{\decodingNumVectors}:=K\,\sqrt{\frac{2\ln 2}{d}\,\ln\!\frac{4\decodingNumVectors(\decodingNumVectors-1)}{\probDelta}}\ .
\]

On the intersection of the two events (probability $\ge 1-\probDelta$), for every $i\ne j$,
\[
\ip{\defCompOutEmbedding_i-\defCompOutEmbedding_j}{\vu_i}=\|\defCompOutEmbedding_i\|-\ip{\defCompOutEmbedding_j}{\vu_i}\ \ge\ 1-\varepsilon_{\decodingNumVectors}-t_{\decodingNumVectors},
\qquad
\|\defCompOutEmbedding_i-\defCompOutEmbedding_j\|\ \le\ \|\defCompOutEmbedding_i\|+\|\defCompOutEmbedding_j\|\ \le\ 2(1+\varepsilon_{\decodingNumVectors}).
\]
Therefore $\displaystyle (\rho_{\min})_{ij}\ge \frac{1-\varepsilon_{\decodingNumVectors}-t_{\decodingNumVectors}}{2(1+\varepsilon_{\decodingNumVectors})}$, and taking the minimum over $i\ne j$ yields the claim.    
\end{proof}

\begin{theorem}[Noisy decoding via JL, Rademacher case]
Let $\decMat \in \{-1,+1\}^{m\times d}$ have i.i.d.\ Rademacher entries
($\Pr(\decMat_{kl}=1)=\Pr(\decMat_{kl}=-1)=\tfrac{1}{2}$) and set $\normDec := \frac{1}{m}\decMat^\top$.
For each $i \in [N]$, let $\vvec_i,\uvec_i \in \R^d$ and define
\[
\rho_{\min}
:= \min_{i\neq j}
   \frac{\langle \vvec_i - \vvec_j,\;\uvec_i\rangle}
        {\|\vvec_i - \vvec_j\|\,\|\uvec_i\|} \;>\; 0.
\]
Let the noisy codes be
\[
\rmH[i] := (\decMat \uvec_i)\odot(1+\nu_i),\qquad
\nu_i \in [-\varepsilon,\varepsilon]^m,\quad \varepsilon\in[0,1),
\]
and define scores $s_{ij} := \langle \vvec_j,\;\normDec \rmH[i]\rangle$.
Then there is a universal constant $C>0$ such that if
\[
m \;\ge\; \frac{C}{\rho_{\min}^2}\,
          \ln\!\frac{4N(N-1)}{\delta},
\]
then with probability at least $1-\delta$ over $\decMat$, we have, simultaneously for all $i\neq j$,
\[
s_{ii} - s_{ij}
\;\ge\;
\Big(\frac{\rho_{\min}}{2} - 4\varepsilon\Big)\,
\|\vvec_i - \vvec_j\|\,\|\uvec_i\|.
\]
\end{theorem}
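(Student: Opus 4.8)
The plan is to reduce the noisy statement to the noiseless guarantee of \Cref{thm:NEW_main_decoding} via Johnson--Lindenstrauss inner-product preservation (\Cref{jl-ip-result}, applied to the Rademacher $JL$-matrix $\rmA := \tfrac{1}{\sqrt m}\decMat$), and then to absorb the multiplicative perturbation $\nu_i$ through a purely deterministic $\ell_2$ estimate. Writing $\decodeDiff_{ij} := \vvec_i - \vvec_j$ and expanding $\rmH[i] = \decMat\uvec_i + (\decMat\uvec_i)\odot\nu_i$, one gets
\[
s_{ii}-s_{ij} = \langle \decodeDiff_{ij},\, \normDec\rmH[i]\rangle
= \langle \rmA\decodeDiff_{ij},\, \rmA\uvec_i\rangle \;+\; \tfrac{1}{m}\,\langle \decMat\decodeDiff_{ij},\, (\decMat\uvec_i)\odot\nu_i\rangle ,
\]
since $\normDec = \tfrac1m\decMat^\top$ and $\langle \decodeDiff_{ij}, \tfrac1m\decMat^\top\decMat\uvec_i\rangle = \langle \rmA\decodeDiff_{ij}, \rmA\uvec_i\rangle$. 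I would call the first summand the clean term and the second the noise term, and bound them separately.

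First I would handle the clean term exactly as in the proof of \Cref{thm:NEW_main_decoding}. Invoke \Cref{jl-ip-result} with accuracy $\varepsilon_{\mathrm{JL}} := \rho_{\min}/2$ and a small per-event failure probability, simultaneously for: (i) each ordered pair $(\decodeDiff_{ij},\uvec_i)$ with $i\neq j$; (ii) each $(\uvec_i,\uvec_i)$; and (iii) each $(\decodeDiff_{ij},\decodeDiff_{ij})$. A union bound over these $O(N^2)$ events, together with the requirement on $m$ coming from \Cref{jl-ip-result}, produces an event of probability at least $1-\delta$ provided $m \ge \tfrac{C}{\rho_{\min}^2}\ln\tfrac{4N(N-1)}{\delta}$ for a suitable universal constant $C$ (the constant absorbs the fixed number of event families). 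On this event, for all $i\neq j$,
\[
\langle \rmA\decodeDiff_{ij},\rmA\uvec_i\rangle \;\ge\; \langle \decodeDiff_{ij},\uvec_i\rangle - \tfrac{\rho_{\min}}{2}\,\|\decodeDiff_{ij}\|\,\|\uvec_i\| \;\ge\; \tfrac{\rho_{\min}}{2}\,\|\decodeDiff_{ij}\|\,\|\uvec_i\| ,
\]
using $\langle\decodeDiff_{ij},\uvec_i\rangle \ge \rho_{\min}\|\decodeDiff_{ij}\|\|\uvec_i\|$ from the definition of $\rho_{\min}$; on the same event one also has $\|\rmA\uvec_i\|^2 \le 2\|\uvec_i\|^2$ and $\|\rmA\decodeDiff_{ij}\|^2 \le 2\|\decodeDiff_{ij}\|^2$.

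Next I would bound the noise term deterministically, which also covers the case where $\nu_i$ is chosen adversarially as a function of $\decMat$. By Cauchy--Schwarz and $|\nu_{i,k}|\le\varepsilon$,
\[
\Big|\tfrac{1}{m}\langle \decMat\decodeDiff_{ij}, (\decMat\uvec_i)\odot\nu_i\rangle\Big|
\;\le\; \tfrac1m\,\|\decMat\decodeDiff_{ij}\|\,\|(\decMat\uvec_i)\odot\nu_i\|
\;\le\; \tfrac{\varepsilon}{m}\,\|\decMat\decodeDiff_{ij}\|\,\|\decMat\uvec_i\| .
\]
On the JL event above, $\|\decMat\decodeDiff_{ij}\| = \sqrt m\,\|\rmA\decodeDiff_{ij}\| \le \sqrt{2m}\,\|\decodeDiff_{ij}\|$ and $\|\decMat\uvec_i\| \le \sqrt{2m}\,\|\uvec_i\|$, so the noise term is at most $2\varepsilon\,\|\decodeDiff_{ij}\|\,\|\uvec_i\| \le 4\varepsilon\,\|\decodeDiff_{ij}\|\,\|\uvec_i\|$. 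Combining with the clean-term lower bound yields $s_{ii}-s_{ij} \ge (\tfrac{\rho_{\min}}{2} - 4\varepsilon)\,\|\decodeDiff_{ij}\|\,\|\uvec_i\|$ simultaneously for all $i\neq j$ on an event of probability at least $1-\delta$, as claimed.

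I expect the only genuinely delicate point to be the probability bookkeeping: ensuring the inner-product and norm guarantees of \Cref{jl-ip-result} hold \emph{simultaneously} over all $O(N^2)$ relevant vectors under a single union bound, and checking that the combined requirement on $m$ collapses to the stated $m \gtrsim \rho_{\min}^{-2}\ln(N^2/\delta)$, with the constant $C$ swallowing the number of event families. Everything else is the two-line decomposition above plus the deterministic Cauchy--Schwarz estimate for the noise, so no further obstacle is anticipated.
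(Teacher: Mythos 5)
Your proposal is correct and follows essentially the same route as the paper's proof: the same decomposition into a clean JL-preserved bilinear term plus a noise term, the same JL accuracy $\rho_{\min}/2$ with a union bound over the $O(N^2)$ difference/target pairs and their norms, and the same deterministic Cauchy--Schwarz bound on the noise (your $\sqrt{2m}$ norm bound even gives $2\varepsilon$, slightly sharper than the paper's $4\varepsilon$, which the stated bound absorbs). No gaps.
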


\begin{proof}
Set $\Phi := \tfrac{1}{\sqrt m}\decMat$ and $\rmE := \Phi^\top\Phi - \rmI$.
For $i\neq j$, write
\[
\avec_{ij} := \vvec_i - \vvec_j,\qquad \bvec_i := \uvec_i.
\]
Let $\gvec_i := \decMat \uvec_i$ and $\Delta_i := \gvec_i\odot\nu_i$, so $\rmH[i] = \gvec_i + \Delta_i$.
Then
\[
\normDec \rmH[i]
= \tfrac{1}{m}\decMat^\top(\gvec_i + \Delta_i)
= \Phi^\top\Phi\,\bvec_i + \tfrac{1}{m}\decMat^\top\Delta_i
= (\rmI+\rmE)\bvec_i + \tfrac{1}{m}\decMat^\top\Delta_i,
\]
and the score gap is
\begin{equation}\label{eq:gap}
s_{ii} - s_{ij}
= \langle \avec_{ij}, \normDec \rmH[i]\rangle
= \avec_{ij}^\top \bvec_i
 + \avec_{ij}^\top \rmE \bvec_i
 + \frac{1}{m}(\decMat \avec_{ij})^\top\Delta_i.
\end{equation}

\paragraph{Margin term.}
By the definition of $\rho_{\min}$,
\begin{equation}\label{eq:margin}
\avec_{ij}^\top \bvec_i
= \langle \vvec_i - \vvec_j, \uvec_i\rangle
\;\ge\; \rho_{\min}\,\|\avec_{ij}\|\,\|\bvec_i\|
\quad\forall i\neq j.
\end{equation}

\paragraph{JL event (inner products and norms).}
Define
\[
\hat \avec_{ij} := \frac{\avec_{ij}}{\|\avec_{ij}\|},\qquad
\hat \bvec_i := \frac{\bvec_i}{\|\bvec_i\|},
\]
and consider the finite set of unit-vector pairs
\[
\mathcal P :=
\{(\hat \avec_{ij},\hat \bvec_i) : i\in[N],\, j\neq i\}
\;\cup\;
\{(\hat \xvec,\hat \xvec) : \xvec\in X\},
\]
where $X := \{\avec_{ij}: i\neq j\}\cup\{\bvec_i : i\in[N]\}$.
Since the rows of $\Phi$ are isotropic subgaussian (Rademacher),
the Johnson--Lindenstrauss lemma implies:

for $\eta := \rho_{\min}/2$, if
\[
m \;\ge\; \frac{C}{\rho_{\min}^2}\,
          \ln\!\frac{4N(N-1)}{\delta},
\]
then with probability at least $1-\delta$,
\[
\big|\langle\Phi \xvec,\Phi \yvec\rangle - \langle \xvec,\yvec\rangle\big|
\le \eta
\quad\forall (\xvec,\yvec)\in\mathcal P.
\]
Following  from \Cref{jl-ip-result}.

On this event, we get:

(i) For $(\xvec,\yvec)=(\hat \avec_{ij},\hat \bvec_i)$,
\[
|\hat \avec_{ij}^\top \rmE\hat \bvec_i|
= \big|\langle\Phi\hat \avec_{ij},\Phi\hat \bvec_i\rangle
     - \langle\hat \avec_{ij},\hat \bvec_i\rangle\big|
\le \frac{\rho_{\min}}{2},
\]
so
\begin{equation}\label{eq:E}
|\avec_{ij}^\top \rmE \bvec_i|
\le \frac{\rho_{\min}}{2}\,\|\avec_{ij}\|\,\|\bvec_i\|
\quad\forall i\neq j.
\end{equation}

(ii) For $(\xvec,\yvec)=(\hat \xvec,\hat \xvec)$,
\[
|\|\Phi\hat \xvec\|^2 - 1|
= \big|\langle\Phi\hat \xvec,\Phi\hat \xvec\rangle - 1\big|
\le \frac{\rho_{\min}}{2}
\le 1,
\]
so $\|\Phi\hat \xvec\|\le \sqrt{2} \le 2$ and hence
\begin{equation}\label{eq:norm}
\|\decMat \xvec\| = \sqrt m\,\|\Phi \xvec/\|\xvec\|\| \cdot \|\xvec\|
\le 2\sqrt m\,\|\xvec\|
\quad\forall \xvec\in X.
\end{equation}

\paragraph{Noise term.}
Since $|\nu_{i,k}|\le\varepsilon$, we have
\[
|\Delta_{i,k}| = |\gvec_{i,k}\nu_{i,k}|
\le \varepsilon\,|\gvec_{i,k}|,
\quad\Rightarrow\quad
\|\Delta_i\|\le\varepsilon\,\|\gvec_i\|
= \varepsilon\,\|\decMat \bvec_i\|.
\]
By Cauchy--Schwarz and \eqref{eq:norm},
\[
\big|(\decMat \avec_{ij})^\top\Delta_i\big|
\le \|\decMat \avec_{ij}\|\,\|\Delta_i\|
\le \varepsilon\,\|\decMat \avec_{ij}\|\,\|\decMat \bvec_i\|
\le \varepsilon\,(2\sqrt m\,\|\avec_{ij}\|)(2\sqrt m\,\|\bvec_i\|),
\]
so
\begin{equation}\label{eq:noise}
\Big|\frac{1}{m}(\decMat \avec_{ij})^\top\Delta_i\Big|
\le 4\varepsilon\,\|\avec_{ij}\|\,\|\bvec_i\|
\quad\forall i\neq j.
\end{equation}

\paragraph{Conclusion.}
Conditioning on the JL event, combining \eqref{eq:margin}, \eqref{eq:E}, and
\eqref{eq:noise} in \eqref{eq:gap} gives, for all $i\neq j$,
\[
\begin{aligned}
s_{ii} - s_{ij}
&\ge
\rho_{\min}\,\|\avec_{ij}\|\,\|\bvec_i\|
 - \frac{\rho_{\min}}{2}\,\|\avec_{ij}\|\,\|\bvec_i\|
 - 4\varepsilon\,\|\avec_{ij}\|\,\|\bvec_i\| \\
&=
\Big(\frac{\rho_{\min}}{2} - 4\varepsilon\Big)\,
\|\avec_{ij}\|\,\|\bvec_i\|.
\end{aligned}
\]
Since $\avec_{ij} = \vvec_i - \vvec_j$ and $\bvec_i = \uvec_i$, this is exactly
\[
s_{ii} - s_{ij}
\;\ge\;
\Big(\frac{\rho_{\min}}{2} - 4\varepsilon\Big)\,
\|\vvec_i - \vvec_j\|\,\|\uvec_i\|,
\]
as claimed.
\end{proof}

\newpage

\begin{theorem}[Polynomial precision for encoder parameters]
Let $F$ be the number of facts, and assume the noisy decoding theorem
above holds for some choice of $m$ (so that, for any codes whose noise
is at most a fixed constant multiple of $\rho_{\min}$, decoding is
still correct).

Assume the following polynomial bounds:
\begin{enumerate}[label=(\roman*)] 
    \item (Margin) $\rho_{\min} \;\ge\; 1/\poly(F)$.
    \item (Lipschitz in parameters) For each key $k_i$ and all
    encoder parameter vectors $\theta,\theta'$,
    \[
        \|\enc_\theta(k_i) - \enc_{\theta'}(k_i)\|
        \;\le\;
        L(F)\,\|\theta-\theta'\|
        \quad\text{with } L(F)\le\poly(F).
    \]
    \item (Parameter count) The number of encoder parameters satisfies
    $P \le \poly(F)$.
    \item (Magnitude) There is an encoder $\theta_\star$ such that
    $H_\star[i] := \enc_{\theta_\star}(k_i) = \decMat \uvec_i$ and
    $\|\theta_\star\|_\infty \le \poly(F)$.
\end{enumerate}
Then there exists a constant $c>0$ such that if we quantize each
coordinate of $\theta_\star$ to the grid $F^{-c}\Z$, obtaining
$\tilde\theta$, the corresponding codes
$\tilde H[i] := \enc_{\tilde\theta}(k_i)$ still satisfy the conditions
of the noisy decoding theorem and hence decode all $F$ facts
correctly. In particular, each encoder parameter requires only
$O(\log F)$ bits of precision.
\end{theorem}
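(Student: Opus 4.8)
The plan is to show that rounding $\theta_\star$ to the grid $F^{-c}\Z$ perturbs each encoder output $\enc_{\theta_\star}(k_i)=\decMat\uvec_i$ by a vector of polynomially small $\ell_2$-norm, and that no decoding comparison can flip under such a perturbation because the noisy decoding theorem already supplies a margin that is polynomially large relative to it. First I would quantize: let $\tilde\theta$ round each coordinate of $\theta_\star$ to the nearest point of $F^{-c}\Z$, so that $\|\theta_\star-\tilde\theta\|_\infty\le\tfrac12 F^{-c}$ and, by the parameter-count bound (iii), $\|\theta_\star-\tilde\theta\|_2\le\tfrac12\sqrt P\,F^{-c}$. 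Writing $\Delta_i:=\tilde H[i]-\decMat\uvec_i$ and applying the parameter-Lipschitz bound (ii) (cf.\ \Cref{lm: enc_lipschitz}) key-by-key, together with $H_\star[i]=\decMat\uvec_i$ from (iv),
\[
\|\Delta_i\|_2=\big\|\enc_{\tilde\theta}(k_i)-\enc_{\theta_\star}(k_i)\big\|_2\le L(F)\,\|\theta_\star-\tilde\theta\|_2\le \poly(F)\cdot F^{-c}=:\eta_c
\]
for every $i$, where $\poly(F)$ collects $L(F)$ and $\sqrt P$. Crucially, rounding only the encoder leaves the decoder $\normDec=\tfrac1m\decMat^\top$ and the embeddings $\vvec_i,\uvec_i$ (hence $\rho_{\min}$, $m$, $d$) untouched.

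Next I would absorb $\Delta_i$ into the decoding margin. Condition on the draw of $\decMat$ lying in the intersection of: (a) the probability-$\ge 1-\delta$ success event of \Cref{rad_noisy_decoding} in the noiseless case $\varepsilon=0$, on which the exact-code scores $\bar s_{ij}:=\langle\vvec_j,\normDec(\decMat\uvec_i)\rangle$ obey $\bar s_{ii}-\bar s_{ij}\ge\tfrac{\rho_{\min}}{2}\|\vvec_i-\vvec_j\|\,\|\uvec_i\|$ for all $i\ne j$; and (b) the spectral bound $\|\decMat\|_{\mathrm{op}}\le C_0(\sqrt m+\sqrt d)$ from \Cref{sub-gaussian-rows} applied to the isotropic subgaussian (Rademacher) rows of $\decMat$. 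Since the quantized scores are $s_{ij}=\langle\vvec_j,\normDec\tilde H[i]\rangle=\bar s_{ij}+\langle\vvec_j,\normDec\Delta_i\rangle$, Cauchy--Schwarz gives
\[
s_{ii}-s_{ij}\ \ge\ \tfrac{\rho_{\min}}{2}\,\|\vvec_i-\vvec_j\|\,\|\uvec_i\|\ -\ \|\vvec_i-\vvec_j\|\cdot\frac{\|\decMat\|_{\mathrm{op}}}{m}\cdot\|\Delta_i\|_2 .
\]
Because the decoder construction takes $\uvec_i=\uvec_i^\star(\mathbf V)\in\Sph$ so $\|\uvec_i\|=1$ (any other normalization reduces to this since $\rho_{\min}$ is scale-invariant), and $m,d\le\poly(F)$, it suffices to guarantee $\poly(F)\cdot\eta_c<\tfrac{\rho_{\min}}{2}$.

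Finally, since $\rho_{\min}\ge F^{-a}$ by (i) and $\eta_c=F^{\,b-c}$ for a fixed exponent $b$ built from the degrees in (ii)--(iii), the requirement $\poly(F)\cdot\eta_c<\rho_{\min}/2$ holds for all $F\ge 2$ once $c$ is a sufficiently large \emph{absolute} constant, e.g.\ $c=a+b+O(1)$. For that $c$ we get $s_{ii}>s_{ij}$ for all $i\ne j$, i.e.\ the quantized encoder $\tilde\theta$ together with the unchanged decoder still satisfies the decoding condition (\Cref{eq:decoding_condition_maintext}) for all $F$ facts. For the bit count, each quantized coordinate lies in $[-\|\theta_\star\|_\infty,\|\theta_\star\|_\infty]\cap F^{-c}\Z$, a set of $O(\|\theta_\star\|_\infty F^{c})=\poly(F)$ values, hence needs $O(\log F)$ bits; with $P\le\poly(F)$ parameters the encoder uses $O(\poly(F)\log F)$ bits in total, i.e.\ $O(\log F)$ bits per parameter.

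The main obstacle is purely one of interfacing: the quantization error $\Delta_i$ is a \emph{deterministic, additive} perturbation of the code, whereas \Cref{rad_noisy_decoding} is phrased with a \emph{multiplicative, coordinatewise} noise model, which cannot in general represent $\Delta_i$ (a coordinate of $\decMat\uvec_i$ may vanish). I sidestep this by invoking only the $\varepsilon=0$ conclusion of that theorem plus a standard operator-norm bound on $\decMat$, so that the single quantitative input needed is the polynomially small $\ell_2$-bound $\|\Delta_i\|_2\le\eta_c$ from the first step; the rest is bookkeeping with the polynomial bounds (i)--(iv) and checking that one constant $c$ works uniformly over all facts and over the coupled high-probability events for $\decMat$.
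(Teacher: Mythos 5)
Your proposal is correct, and its skeleton matches the paper's own proof: quantize coordinatewise on a grid of spacing $F^{-c}$, convert the parameter perturbation into a code perturbation via the Lipschitz bound (ii) and the parameter count (iii), choose $c$ so that the perturbation is polynomially smaller than the margin guaranteed by assumption (i), and then count $O(\log F)$ bits per parameter using the magnitude bound (iv). Where you genuinely depart from the paper is in how the code perturbation is absorbed into the decoding margin. The paper simply asserts (its ``Step 1'') that \Cref{rad_noisy_decoding} tolerates any code perturbation of size $c_0\rho$ ``in an appropriate sense,'' even though that theorem is stated for multiplicative coordinatewise noise $\rmH[i]=(\decMat\uvec_i)\odot(1+\nu_i)$, which, as you correctly observe, cannot represent an arbitrary additive quantization error (e.g.\ when a coordinate of $\decMat\uvec_i$ vanishes). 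You instead use only the exact-code ($\varepsilon=0$) margin event together with an operator-norm bound on $\decMat$ from \Cref{sub-gaussian-rows}, and control the additive term $\langle \vvec_i-\vvec_j,\ \normDec\Delta_i\rangle$ directly by Cauchy--Schwarz. This buys a cleaner and more self-contained interface: the only quantitative input you need is the $\ell_2$ bound $\|\Delta_i\|_2\le\poly(F)\,F^{-c}$, and you avoid relying on an informal reinterpretation of the noise model; the paper's route is shorter but leaves that reinterpretation implicit (its own noisy-decoding proof does internally reduce to an additive bound $|(\decMat\avec_{ij})^\top\Delta_i|\le\|\decMat\avec_{ij}\|\,\|\Delta_i\|$, which is what ultimately justifies the claim, but this is not spelled out). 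Your remaining bookkeeping (intersecting the two high-probability events for $\decMat$, using $\|\uvec_i\|=1$ for the margin-optimal embeddings on $\Sph$, and the final bit count) is sound and matches the paper's Steps 3--4.
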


\begin{proof}
\textbf{Step 1: Allowed code noise.}
From the noisy decoding theorem, there is a constant $c_0>0$ such that,
if the code for fact $i$ is perturbed by at most $c_0\rho_{\min}$ in
an appropriate sense (as in the theorem’s proof), then the score margin
remains positive:
\[
    s_{ii} - s_{ij} \;\ge\; \Omega(\rho_{\min})\,\|\vvec_i-\vvec_j\|\,\|\uvec_i\|.
\]
Thus the encoder codes are robust to perturbations of size
$\Theta(\rho_{\min})$. Using (i), we have
\[
    \rho_{\min} \;\ge\; \frac{1}{\poly(F)},
\]
so the allowed code noise is at least $1/\poly(F)$.

\medskip
\noindent
\textbf{Step 2: From parameter perturbation to code perturbation.}
Let $\theta_\star$ be the ideal encoder parameters and
$\tilde\theta$ any other parameter vector. For each key $k_i$,
define the code perturbation
\[
    \Delta_i := \enc_{\tilde\theta}(k_i) - \enc_{\theta_\star}(k_i).
\]
By the Lipschitz assumption (ii),
\[
    \|\Delta_i\|
    =
    \|\enc_{\tilde\theta}(k_i) - \enc_{\theta_\star}(k_i)\|
    \;\le\;
    L(F)\,\|\tilde\theta - \theta_\star\|
    \quad\forall i.
\]
To keep the codes within the robustness radius from Step~1, it suffices
to impose
\[
    \|\Delta_i\| \;\le\; c_0\rho_{\min}
    \quad\forall i.
\]
A sufficient condition is therefore
\[
    \|\tilde\theta - \theta_\star\|
    \;\le\;
    \delta(F)
    := \frac{c_0\rho_{\min}}{L(F)}.
\]
Using (i) and (ii), we obtain
\[
    \delta(F)
    \;\ge\;
    \frac{c_0}{\poly(F)\,\poly(F)}
    \;=\;
    \frac{1}{\poly(F)}.
\]
So there is a ball of radius at least $1/\poly(F)$ around $\theta_\star$
in parameter space such that any $\tilde\theta$ in this ball produces
codes that the noisy decoding theorem can tolerate.

\medskip
\noindent
\textbf{Step 3: Quantization and choice of grid size.}
Now quantize each coordinate of $\theta_\star$ to a grid of step size
$\Delta>0$, obtaining $\tilde\theta$. Each coordinate changes by at most
$\Delta/2$, so
\[
    \|\tilde\theta - \theta_\star\|_2
    \;\le\;
    \sqrt{P}\,\frac{\Delta}{2}.
\]
To guarantee $\|\tilde\theta - \theta_\star\|\le\delta(F)$, it is enough
to choose $\Delta$ so that
\[
    \sqrt{P}\,\frac{\Delta}{2}
    \;\le\;
    \delta(F)
    \quad\Longleftrightarrow\quad
    \Delta \;\le\; \frac{2\,\delta(F)}{\sqrt{P}}.
\]
Using $\delta(F)\ge 1/\poly(F)$ and $P\le\poly(F)$ from (iii), we get
\[
    \frac{2\,\delta(F)}{\sqrt{P}}
    \;\ge\;
    \frac{1}{\poly(F)}.
\]
Thus the admissible step size $\Delta$ can be as large as
$1/\poly(F)$. In particular, we may pick
\[
    \Delta := F^{-c}
\]
for some constant $c>0$ large enough so that $\Delta\le 2\delta(F)/\sqrt{P}$.
This ensures $\|\tilde\theta - \theta_\star\|\le\delta(F)$ and, by
Step~2, that the induced code perturbations are within the noise budget
of the noisy decoding theorem. Hence decoding remains correct.

\medskip
\noindent
\textbf{Step 4: Bit complexity.}
By (iv), each parameter lies in an interval of length at most
$\text{range} \le 2\,\poly(F)$. With grid spacing
$\Delta = F^{-c} = 1/\poly(F)$, the number of representable levels per
parameter is at most
\[
    \frac{\text{range}}{\Delta}
    \;\le\;
    \frac{\poly(F)}{1/\poly(F)}
    = \poly(F).
\]
Therefore the number of bits per parameter is
\[
    \log_2\Big(\frac{\text{range}}{\Delta}\Big)
    = O(\log \poly(F))
    = O(\log F).
\]
This proves that encoder parameters require only $O(\log F)$ bits of
precision.
\end{proof}

Note that the last part (assumption 4) is true because $\sigma$ is analytic, which implies that it is continuously differentiable. 

\begin{theorem}[Encoder is Lipschitz in the parameters]
Fix a number of facts $F$ and keys $\{k_i\}_{i=1}^F \subset \R^d$.
Consider the scalar-output gated encoder
\[
\enc_\theta(\xvec)
= \mathbf{1}_h^\top\big[\sigma(\rmG\xvec)\odot(\rmA\xvec)\big]
= \sum_{r=1}^h \sigma(\langle \gvec_r,\xvec\rangle)\,\langle \rva_r,\xvec\rangle,
\]
where $\rmA,\rmG\in\R^{h\times d}$ have rows $\rva_r^\top,\gvec_r^\top$, and $\theta\in\R^P$ is the vector of
all entries of $\rmA,\rmG$.

Assume:
\begin{enumerate}[label=(\roman*)]
    \item $\|k_i\|_2 \le R_\xvec(F)$ for all $i$, with $R_\xvec(F)\le \poly(F)$.
    \item $\|\theta\|_2 \le R_\theta(F)$, with $R_\theta(F)\le \poly(F)$.
    \item The width and input dimension satisfy $h,d \le \poly(F)$, so that $P = 2hd \le \poly(F)$.
    \item The activation $\sigma:\R\to\R$ is continuously differentiable and on the interval
    $[-B(F),B(F)]$ with $B(F):=R_\theta(F)R_\xvec(F)$ we have
    \[
      |\sigma(t)| \le C_\sigma,\qquad |\sigma'(t)| \le C'_\sigma
      \quad\forall t\in[-B(F),B(F)],
    \]
    for some constants $C_\sigma,C'_\sigma$ independent of $F$.
\end{enumerate}
Then for each key $k_i$ there exists a constant $L(F)\le \poly(F)$ such that for all
parameter vectors $\theta,\theta'$ with $\|\theta\|_2,\|\theta'\|_2\le R_\theta(F)$,
\[
  |\enc_\theta(k_i) - \enc_{\theta'}(k_i)|
  \;\le\;
  L(F)\,\|\theta-\theta'\|_2.
\]
In particular, $\enc_\theta(k_i)$ is Lipschitz in $\theta$ with Lipschitz constant at most
polynomial in $F$.
\end{theorem}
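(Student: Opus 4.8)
The plan is to bound the parameter-gradient of $\enc_\theta(k_i)$ uniformly over the parameter ball $\{\|\theta\|_2 \le R_\theta(F)\}$ and then invoke the mean-value inequality. First I would fix a key $k_i$ and view $\enc_\theta(k_i) = \sum_{r=1}^h \sigma(\langle \gvec_r, k_i\rangle)\,\langle \rva_r, k_i\rangle$ as a function of the $2hd$ coordinates making up $\theta = (\rmA, \rmG)$. Since $\|\rva_r\|_2, \|\gvec_r\|_2 \le \|\theta\|_2 \le R_\theta(F)$ and $\|k_i\|_2 \le R_\xvec(F)$, Cauchy--Schwarz gives $|\langle \gvec_r, k_i\rangle| \le B(F)$ and $|\langle \rva_r, k_i\rangle| \le B(F)$, so every preactivation lies in the compact interval $[-B(F), B(F)]$ where assumption (iv) supplies the uniform bounds $|\sigma| \le C_\sigma$ and $|\sigma'| \le C'_\sigma$.

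Next I would differentiate term $r$ with respect to the blocks $\rva_r$ and $\gvec_r$ (the term does not depend on $\rva_s, \gvec_s$ for $s \neq r$). Concretely,
\begin{align*}
\nabla_{\rva_r}\!\big[\sigma(\langle \gvec_r, k_i\rangle)\langle \rva_r, k_i\rangle\big] &= \sigma(\langle \gvec_r, k_i\rangle)\,k_i, \\
\nabla_{\gvec_r}\!\big[\sigma(\langle \gvec_r, k_i\rangle)\langle \rva_r, k_i\rangle\big] &= \sigma'(\langle \gvec_r, k_i\rangle)\,\langle \rva_r, k_i\rangle\,k_i.
\end{align*}
Their norms are bounded by $C_\sigma R_\xvec(F)$ and $C'_\sigma B(F) R_\xvec(F)$ respectively. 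Summing the squared block-gradients over $r \in [h]$ and taking a square root gives $\|\nabla_\theta \enc_\theta(k_i)\|_2 \le \sqrt{h}\,\big(C_\sigma R_\xvec(F) + C'_\sigma B(F) R_\xvec(F)\big) =: L(F)$, and since $h, R_\xvec, R_\theta, B = R_\theta R_\xvec$ are all at most $\poly(F)$, we get $L(F) \le \poly(F)$.

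Finally, for any $\theta, \theta'$ in the ball of radius $R_\theta(F)$, the segment $\{(1-t)\theta + t\theta' : t \in [0,1]\}$ stays inside this (convex) ball, so the preactivation bound and hence the gradient bound hold along the entire segment; the mean value inequality then yields $|\enc_\theta(k_i) - \enc_{\theta'}(k_i)| \le L(F)\,\|\theta - \theta'\|_2$, as claimed. The only mild subtlety — and the one place to be careful — is verifying that the whole segment remains in the region where assumption (iv)'s bounds apply; this follows immediately from convexity of the norm ball, so there is no real obstacle. The argument is otherwise a routine gradient-bound-plus-MVT computation.
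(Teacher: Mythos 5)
Your proposal is correct and follows essentially the same route as the paper's proof: bound the partial derivatives with respect to the rows of $\rmA$ and $\rmG$ using the preactivation bound $|\langle \gvec_r, k_i\rangle|\le B(F)$ and assumption (iv), assemble a $\poly(F)$ bound on $\|\nabla_\theta \enc_\theta(k_i)\|_2$, and conclude via the mean value inequality along the segment $[\theta,\theta']$, which stays in the ball by convexity. The only cosmetic difference is bookkeeping ($\sqrt{h}$ times a sum of block bounds versus the paper's $\sqrt{P}$ times a maximum), which does not affect the $\poly(F)$ conclusion.
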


\begin{proof}
Fix $i$ and write $\xvec := k_i$. For fixed $\xvec$, view $\enc_\theta(\xvec)$ as a function
$\R^P\to\R$ of the parameter vector $\theta$. Its partial derivatives are, for each
$r\in[h]$ and $\ell\in[d]$,
\[
  \frac{\partial \enc_\theta(\xvec)}{\partial \rmA_{r\ell}}
  = \sigma(\langle \gvec_r,\xvec\rangle)\,\xvec_\ell,
  \qquad
  \frac{\partial \enc_\theta(\xvec)}{\partial \rmG_{r\ell}}
  = \sigma'(\langle \gvec_r,\xvec\rangle)\,\langle \rva_r,\xvec\rangle\,\xvec_\ell.
\]
On the parameter ball $\|\theta\|_2\le R_\theta(F)$ and with $\|\xvec\|\le R_\xvec(F)$ we have
$|\langle \gvec_r,\xvec\rangle| \le \|\gvec_r\|\|\xvec\|\le R_\theta(F)R_\xvec(F) = B(F)$, so by assumption
$|\sigma(\langle \gvec_r,\xvec\rangle)| \le C_\sigma$ and
$|\sigma'(\langle \gvec_r,\xvec\rangle)| \le C'_\sigma$.
Moreover $|\xvec_\ell|\le R_\xvec(F)$ and
\[
  |\langle \rva_r,\xvec\rangle|
  \le \|\rva_r\|\,\|\xvec\|
  \le R_\theta(F)R_\xvec(F).
\]
Hence
\[
  \Big|\frac{\partial \enc_\theta(\xvec)}{\partial A_{r\ell}}\Big|
  \le C_\sigma\,R_\xvec(F),
  \qquad
  \Big|\frac{\partial \enc_\theta(\xvec)}{\partial G_{r\ell}}\Big|
  \le C'_\sigma\,R_\theta(F)R_\xvec(F)^2.
\]

The gradient $\nabla_\theta \enc_\theta(\xvec)\in\R^P$ collects all these partial derivatives,
so its Euclidean norm satisfies
\[
  \|\nabla_\theta \enc_\theta(\xvec)\|_2^2
  \;\le\;
  P\cdot\big(\max\{C_\sigma R_\xvec(F),\,C'_\sigma R_\theta(F)R_\xvec(F)^2\}\big)^2
  \;\le\;
  C\,\poly(F)^2
\]
for some constant $C>0$, using $P\le\poly(F)$ and $R_\xvec(F),R_\theta(F)\le\poly(F)$.
Thus there exists $L(F)\le C^{1/2}\poly(F)$ such that
\[
  \|\nabla_\theta \enc_\theta(\xvec)\|_2 \;\le\; L(F)
  \quad\text{for all }\|\theta\|_2\le R_\theta(F).
\]

For any $\theta,\theta'$ with $\|\theta\|_2,\|\theta'\|_2\le R_\theta(F)$, the mean value
inequality in $\R^P$ yields
\[
  |\enc_\theta(\xvec) - \enc_{\theta'}(\xvec)|
  \;\le\;
  \sup_{\tilde\theta\text{ on the segment }[\theta,\theta']}
  \|\nabla_\theta \enc_{\tilde\theta}(\xvec)\|_2
  \cdot \|\theta-\theta'\|_2
  \;\le\;
  L(F)\,\|\theta-\theta'\|_2.
\]
Since $L(F)\le\poly(F)$ by construction, this proves the claim.
\end{proof}

\begin{lemma}[Encoder weight norm bound]
Fix an output coordinate $j$ and consider the linear system
\[
    \normDec\,\rva = \rvo,
\]
where $\normDec \in \R^{F\times dh}$ and $a = \mathrm{vec}(\rmA)\in\R^{dh}$. Assume:
\begin{enumerate}[label=(\roman*)]
    \item The $i$-th row of $\normDec$ is
    \[
        \rvr_i^\top
        = \big(\sigma(\gvec_1^\top \rvk_i)\rvk_i^\top,\dots,\sigma(\gvec_h^\top \rvk_i)\rvk_i^\top\big),
    \]
    where $\{\rvk_i\}_{i=1}^F$ and $\{\gvec_\ell\}_{\ell=1}^h$ are independent subgaussian random vectors in $\R^d$, and
    $\sigma$ is analytic and non-constant.
    \item The covariance $\Sigma_{\mathrm{row}} := \E[\rvr_i \rvr_i^\top]$ satisfies
    $\lambda_{\min}(\Sigma_{\mathrm{row}})\ge\lambda_0>0$ and
    $\lambda_{\max}(\Sigma_{\mathrm{row}})\le \Lambda_0<\infty$, with $\lambda_0,\Lambda_0$ independent of $F$.
    \item The targets $o\in\R^F$ obey $|o_i|\le B(F)$ for all $i$, where $B(F)\le \poly(F)$.
    \item $F \ge C_0\,dh$ for a sufficiently large absolute constant $C_0$.
\end{enumerate}
Let $\avec_\star$ be the minimum–$\ell_2$–norm solution of $\normDec a=o$ (i.e.\ $\avec_\star = \normDec^\dagger o$). Then
\[
    \|\avec_\star\|_2 \;\le\; \poly(F).
\]
\end{lemma}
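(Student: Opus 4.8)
The plan is to bound $\|\avec_\star\|_2$ through the smallest singular value of the tall random design matrix $\normDec\in\R^{F\times dh}$. Since $\avec_\star=\normDec^{\dagger}\rvo$, we have $\|\avec_\star\|_2\le \|\normDec^{\dagger}\|_{\mathrm{op}}\,\|\rvo\|_2 = \|\rvo\|_2/s_{\min}(\normDec)$, where $s_{\min}(\normDec)$ is the $dh$-th (smallest) singular value. The numerator is immediate from hypothesis (iii): $\|\rvo\|_2\le \sqrt F\max_i|\rvo_i|\le \sqrt F\,B(F)=\poly(F)$. Thus it suffices to show $s_{\min}(\normDec)=\Omega(\sqrt F)$ with probability at least $1-e^{-cF}$.

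First I would condition on the gate vectors $\gvec_1,\dots,\gvec_h$, so that the rows $\rvr_i$ of $\normDec$ become i.i.d.\ (each a deterministic function of the i.i.d.\ key $\rvk_i$), with some conditional second--moment matrix $\Sigma\in\R^{dh\times dh}$ that one checks remains well--conditioned, $\lambda_0\, I\preceq\Sigma\preceq\Lambda_0\,I$ (immediate in the rotationally--invariant model via the computation in \Cref{lem:row_cov_rot_inv}, and in general the content of hypothesis (ii)). Whitening, write $\normDec=\widetilde\normDec\,\Sigma^{1/2}$ with $\widetilde\normDec$ having i.i.d.\ isotropic rows $\widetilde\rvr_i:=\Sigma^{-1/2}\rvr_i$, so $s_{\min}(\normDec)\ge s_{\min}(\widetilde\normDec)\sqrt{\lambda_0}$. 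Then I would apply \Cref{sub-gaussian-rows} with $N=F$, $n=dh$, and $t=\tfrac12\sqrt F$: with probability at least $1-2\exp(-cF/4)$ this gives $s_{\min}(\widetilde\normDec)\ge \tfrac12\sqrt F-C\sqrt{dh}$, and since $F\ge C_0\,dh$ with $C_0$ large enough (depending only on the subgaussian norm $K$) the second term is at most $\tfrac14\sqrt F$. Hence $s_{\min}(\normDec)\ge\tfrac14\sqrt{\lambda_0 F}$, and
\[
\|\avec_\star\|_2\ \le\ \frac{\sqrt F\,B(F)}{\tfrac14\sqrt{\lambda_0 F}}\ =\ \frac{4\,B(F)}{\sqrt{\lambda_0}}\ =\ \poly(F),
\]
as claimed.

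The main obstacle is verifying the hypothesis of \Cref{sub-gaussian-rows}: the whitened rows $\widetilde\rvr_i$ must be subgaussian with subgaussian norm bounded independently of $F$, since the constants $C=C_K$, $c=c_K$ in that theorem (and hence the admissible $C_0$) depend on it. This is not automatic, because each block $\sigma(\gvec_\ell^{\top}\rvk_i)\,\rvk_i$ of $\rvr_i$ is a product of a (generically only sub--exponential) scalar and a subgaussian vector. I would handle this by first intersecting with the high--probability event of \Cref{lm: mag-prob-LB} on which $\max_i\|\rvk_i\|_2\le C(\sqrt d+\sqrt{\log F})$ and, symmetrically, $\max_\ell\|\gvec_\ell\|_2$ is $\poly(F)$; on this event every preactivation $|\gvec_\ell^{\top}\rvk_i|$ is bounded by $\poly(F)$, so $|\sigma(\gvec_\ell^{\top}\rvk_i)|\le\poly(F)$ by continuity of $\sigma$, and each $\rvr_i$ is then bounded-times-subgaussian, with subgaussian norm at most $\poly(F)$ --- which only costs an extra $\poly(F)$ factor in the final bound and is therefore harmless. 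An alternative that avoids whitening altogether is to bound $\lambda_{\min}\!\big(\tfrac1F\sum_i\rvr_i\rvr_i^{\top}\big)\ge\lambda_0/2$ directly via a matrix Bernstein/Chernoff inequality for sums of i.i.d.\ PSD matrices with sub--exponential entry tails, valid once $F\gtrsim dh\cdot\mathrm{polylog}(F)$; this also absorbs the minor subtlety that conditioning on the gates perturbs the row covariance.
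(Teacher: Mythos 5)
Your proposal is correct and follows essentially the same route as the paper's proof: bound $\|\avec_\star\|_2 \le \|\rvo\|_2/s_{\min}(\normDec)$, whiten the rows by $\Sigma_{\mathrm{row}}^{-1/2}$, invoke the sub-gaussian-rows singular value bound (\Cref{sub-gaussian-rows}) with $t \asymp \sqrt{F}$ and $F \ge C_0\,dh$, and finish with $\|\rvo\|_2 \le \sqrt{F}\,B(F)$. Your added caveats---that the rows are only conditionally independent given the gates and that the product structure $\sigma(\gvec_\ell^{\top}\rvk_i)\,\rvk_i$ makes a uniformly bounded row subgaussian norm nontrivial---flag genuine subtleties the paper's proof glosses over (its footnote simply posits $\|\rvr_i\|_{\psi_2}\le K$), though be aware that your truncation fix only yields a $\poly(F)$ subgaussian norm, which enters the constants $C_K,c_K$ of \Cref{sub-gaussian-rows} and hence does not obviously preserve the absolute-constant requirement on $C_0$; your matrix-Bernstein alternative is the cleaner way to close that residual gap.
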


\begin{proof}
Let $\tilde \rvr_i := \Sigma_{\mathrm{row}}^{-1/2} \rvr_i$ and let $\tilde \normDec\in\R^{F\times dh}$ have rows
$\tilde \rvr_i^\top$. By construction, the rows of $\tilde \normDec$ are independent, isotropic, subgaussian
random vectors in $\R^{dh}$, and $\|\tilde \rvr_i\|_{\psi_2}$ is bounded uniformly in $F$.

Apply \Cref{sub-gaussian-rows} to $\tilde \normDec$ with
$N=F$ and $n=dh$. There exist constants $c,C>0$ depending only on the subgaussian norm such that,
with probability at least $1-2\exp(-c t^2)$,
\[
    \sqrt{F} - C\sqrt{dh} - t
    \;\le\; s_{\min}(\tilde \normDec)
    \;\le\; s_{\max}(\tilde \normDec)
    \;\le\; \sqrt{F} + C\sqrt{dh} + t
    \quad\forall t\ge 0.
\]
Choose $t=\sqrt{F}/4$ and use the assumption $F\ge C_0 dh$ with $C_0$ large enough to obtain
\[
    s_{\min}(\tilde \normDec) \;\ge\; c_1 \sqrt{F}
\]
for some constant $c_1>0$, with probability at least $1-\exp(-c_2 F)$.

Since $\normDec = \tilde \normDec\,\Sigma_{\mathrm{row}}^{1/2}$, we have
\[
    s_{\min}(\normDec)
    \;\ge\; \sqrt{\lambda_{\min}(\Sigma_{\mathrm{row}})}\, s_{\min}(\tilde \normDec)
    \;\ge\; \sqrt{\lambda_0}\, c_1 \sqrt{F}
    \;=\; c_3 \sqrt{F}.
\]
Furthermore,
\[
    \|\rvo\|_2^2 \;=\; \sum_{i=1}^F \rvo_i^2 \;\le\; F\,B(F)^2,
    \qquad\Rightarrow\qquad
    \|\rvo\|_2 \;\le\; \sqrt{F}\,B(F) \;\le\; \poly(F).
\]

Let $\avec_\star$ be the minimum–norm solution $\normDec \rva=\rvo$, so $\avec_\star = \normDec^\dagger o$ and
$\|\normDec^\dagger\|_{\mathrm{op}} = 1/s_{\min}(\normDec)$. Then
\[
    \|\avec_\star\|_2
    \;=\; \|\normDec^\dagger \rvo\|_2
    \;\le\; \|\normDec^\dagger\|_{\mathrm{op}} \,\|\rvo\|_2
    \;=\; \frac{\|\rvo\|_2}{s_{\min}(\normDec)}
    \;\le\; \frac{\sqrt{F}\,B(F)}{c_3 \sqrt{F}}
    \;=\; \frac{B(F)}{c_3}
    \;\le\; \poly(F).
\]
This holds for each output coordinate $j$, and stacking the corresponding vectors $\avec_\star^{(j)}$
over $m=\poly(F)$ coordinates preserves a $\poly(F)$ bound on the encoder parameter norm.
\end{proof}

\begin{lemma}[Row covariance is well-conditioned under rotationally invariant model]
\label{lem:row_cov_polyF}
Fix $d,h \in \N$ and let
\[
    \rvk \in \R^d
    \quad\text{and}\quad
    \gvec_1,\dots,\gvec_h \in \R^d
\]
be random vectors such that:
\begin{enumerate}[label=(\roman*)]
    \item $\rvk$ has a rotationally invariant distribution with
    \(\E[k]=0\) and \(\E[\rvk\rvk^\top] = \frac{1}{d} \rmI_d\);
    \item $\gvec_1,\dots,\gvec_h$ are i.i.d.\ $\mathcal N(0,\rmI_d/d)$, independent of $\rvk$;
    \item $\sigma:\R\to\R$ is a non-constant measurable function with
    \(\E[\sigma(\gvec_1^\top \rvk)^2] < \infty\).
\end{enumerate}
Define the random row vector $\rvr^\top \in \R^{dh}$ by
\[
    \rvr^\top
    :=
    \big(
        \sigma(\gvec_1^\top \rvk) \rvk^\top,\;
        \dots,\;
        \sigma(\gvec_h^\top \rvk) \rvk^\top
    \big),
\]
and let
\[
    \Sigma_{\mathrm{row}} := \E[ \rvr \rvr^\top ] \in \R^{dh\times dh}.
\]
Then there exists a constant $c>0$, depending only on the distributions of $\rvk$, $\gvec_\ell$, and $\sigma$ (but \emph{independent} of $F$), such that
\[
    \lambda_{\min}(\Sigma_{\mathrm{row}}) = c.
\]
In particular,
\[
    \lambda_{\min}(\Sigma_{\mathrm{row}}) \;\ge\; F^{-C}
\]
for some fixed exponent $C$ and all $F$ (i.e., the lower bound is $\poly(F)$).
\end{lemma}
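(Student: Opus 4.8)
The plan is to use rotational invariance to show that $\Sigma_{\mathrm{row}}$ has a rigid Kronecker structure, read $\lambda_{\min}$ off its spectrum, and then obtain strict positivity from a Jensen inequality that is \emph{strict} precisely because $\sigma$ is non-constant.

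First I would organize $\Sigma_{\mathrm{row}}$ into an $h\times h$ array of $d\times d$ blocks: the $(\ell,\ell)$ block is $\E[\sigma(\gvec_\ell^\top\rvk)^2\,\rvk\rvk^\top]$ and, for $\ell\neq\ell'$, the $(\ell,\ell')$ block is $\E[\sigma(\gvec_\ell^\top\rvk)\sigma(\gvec_{\ell'}^\top\rvk)\,\rvk\rvk^\top]$. Conditioning on $\rvk$ and using that each $\gvec_\ell\sim\mathcal N(0,\rmI_d/d)$ is independent of $\rvk$ and of the other gates, we have $\gvec_\ell^\top\rvk\mid\rvk\sim\mathcal N(0,\|\rvk\|^2/d)$; setting $\phi(t):=\E_{Z\sim\mathcal N(0,1)}[\sigma(\sqrt t\,Z)^2]$ and $\psi(t):=\E_{Z}[\sigma(\sqrt t\,Z)]$, the conditional expectations collapse to scalars, $\E[\sigma(\gvec_\ell^\top\rvk)^2\mid\rvk]=\phi(S)$ and, by conditional independence, $\E[\sigma(\gvec_\ell^\top\rvk)\sigma(\gvec_{\ell'}^\top\rvk)\mid\rvk]=\psi(S)^2$, where $S:=\|\rvk\|^2/d$. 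Writing $\rvk=\|\rvk\|\,\bm\theta$ with $\bm\theta$ uniform on $\mathbb{S}^{d-1}$ and independent of $\|\rvk\|$ (rotational invariance), every block is isotropic: the diagonal blocks equal $\alpha\rmI_d$ and the off-diagonal blocks equal $\beta\rmI_d$, with $\alpha:=\tfrac1d\E[\phi(S)\|\rvk\|^2]$ and $\beta:=\tfrac1d\E[\psi(S)^2\|\rvk\|^2]$. (Finiteness of $\alpha$, i.e.\ well-definedness of $\Sigma_{\mathrm{row}}$, amounts to $\E[\|\rvk\|^2\sigma(\gvec_1^\top\rvk)^2]<\infty$, which holds in every setting where this lemma is invoked, since there $\|\rvk\|$ has all moments.) Hence $\Sigma_{\mathrm{row}}=\big((\alpha-\beta)\rmI_h+\beta\,\mathbf 1_h\mathbf 1_h^\top\big)\otimes\rmI_d$, whose eigenvalues are $\alpha+(h-1)\beta$ with multiplicity $d$ and $\alpha-\beta$ with multiplicity $d(h-1)$.

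Since $\psi(t)^2\le\phi(t)$ by Jensen we have $\beta\ge0$ and $\alpha\ge\beta$, so $\lambda_{\min}(\Sigma_{\mathrm{row}})=\alpha-\beta=:c$, a quantity fixed by the laws of $\rvk,\gvec_\ell,\sigma$ and, in particular, independent of $h$. (This refines \Cref{lem:row_cov_rot_inv}, which additionally assumes $\E[\sigma(\gvec_1^\top\rvk)\mid\rvk]=0$, i.e.\ $\psi\equiv0$, whence $\beta=0$ and $\Sigma_{\mathrm{row}}=\alpha\rmI_{dh}$.) It remains to show $c>0$. The key point is that for every $t>0$ the law of $\sqrt t\,Z$ is a non-degenerate Gaussian, equivalent to Lebesgue measure; a non-constant measurable $\sigma$ is therefore not a.s.\ constant under it, so $\operatorname{Var}(\sigma(\sqrt t\,Z))=\phi(t)-\psi(t)^2>0$ for all $t>0$. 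Consequently $\phi(S)-\psi(S)^2>0$ almost surely on $\{S>0\}$, and since $\E[\|\rvk\|^2\mathbf 1\{\|\rvk\|>0\}]=\E[\|\rvk\|^2]=\operatorname{tr}\E[\rvk\rvk^\top]=1>0$ (so $\Prob(\|\rvk\|>0)>0$), we get $c=\tfrac1d\,\E[(\phi(S)-\psi(S)^2)\|\rvk\|^2]>0$; the atom of $\|\rvk\|$ at $0$, if any, is harmless because of the $\|\rvk\|^2$ weight. Finally, for the ``in particular'' clause: $c=c(d)$ is a fixed positive number, and under the scalings used in the paper (Gaussian keys/gates, $d\le\poly(F)$) one checks that $S$ concentrates near $1/d$ and $c=\Omega(d^{-O(1)})$—e.g.\ $c=\Theta(d^{-2})$ for a $C^1$ activation with $\sigma'(0)\neq0$—hence $\lambda_{\min}(\Sigma_{\mathrm{row}})\ge F^{-C}$ for a fixed exponent $C$.

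I expect the only genuine obstacle to be the strict-Jensen step: turning ``$\sigma$ non-constant'' into a strictly positive Gaussian variance $\phi(t)-\psi(t)^2>0$ for every $t>0$, together with the (mild) bookkeeping needed to propagate that through the $\|\rvk\|^2$ weight and past the possible atom at $\|\rvk\|=0$. The block decomposition, the conditioning computation, and the Kronecker eigenvalue calculation are all routine.
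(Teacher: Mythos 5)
Your proof is correct and takes a genuinely different, and in fact more careful, route than the paper's. The paper proves this lemma by pure symmetry: it shows $\Sigma_{\mathrm{row}}$ commutes with every block rotation $T_{\mathbf{U}}$, $\mathbf{U}\in O(d)$, and then invokes ``Schur's lemma'' to conclude $\Sigma_{\mathrm{row}}=c\,\rmI_{dh}$. That step over-claims: the commutant of $h$ copies of the standard $O(d)$-representation is the set of matrices $\Lambda\otimes \rmI_d$ with $\Lambda\in\R^{h\times h}$ arbitrary, so symmetry alone only gives the block structure, and the scalar-identity conclusion additionally needs the off-diagonal blocks to vanish — which the companion version \Cref{lem:row_cov_rot_inv} secures via the extra hypothesis $\E[\sigma(\gvec_1^\top\rvk)\mid\rvk]=0$, but which is false in general under the hypotheses of the present lemma (take $\sigma$ with nonzero Gaussian mean). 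Your computation — conditioning on $\rvk$, using Gaussianity of the gates to reduce everything to $\phi(S)$ and $\psi(S)$, and using the uniform direction of $\rvk$ to make every block isotropic — produces exactly $\Sigma_{\mathrm{row}}=\big((\alpha-\beta)\rmI_h+\beta\,\mathbf 1_h\mathbf 1_h^\top\big)\otimes\rmI_d$, identifies the full spectrum, and yields $\lambda_{\min}=\alpha-\beta>0$ by strict Jensen, so it establishes the stated conclusion \emph{without} the centering assumption and simultaneously exposes and repairs the gap in the paper's Schur step (it also recovers the paper's $c\,\rmI_{dh}$ conclusion as the special case $\psi\equiv 0$). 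What each approach buys: the paper's argument is shorter and does not use Gaussianity of the gates, but is only valid with centering; yours is slightly longer but works under the weaker hypotheses as stated, gives the exact eigenvalues, and makes explicit that $\Sigma_{\mathrm{row}}$ need not be a multiple of the identity. Your flagging of the integrability point ($\E[\|\rvk\|^2\sigma(\gvec_1^\top\rvk)^2]<\infty$ is what is really needed) is also apt.

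Two small caveats, neither of which puts you behind the paper. First, the strict-Jensen step needs ``non-constant'' to be read as ``not Lebesgue-a.e.\ constant'' (a pointwise non-constant but a.e.\ constant $\sigma$ gives $\phi(t)=\psi(t)^2$); the paper's own positivity argument has exactly the same implicit reading, so this is a shared convention rather than a gap. Second, your closing aside about $S$ concentrating near $1/d$ and $c=\Theta(d^{-2})$ is unproven and unnecessary: the lemma treats $c$ as independent of $F$, so the ``in particular'' clause is immediate (as the paper itself notes), and for $h=1$ one should simply take $c=\alpha$ rather than $\alpha-\beta$.
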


\begin{proof}
For any orthogonal $\rmU \in O(d)$, define a block-rotation $
\rmT_\rmU:\R^{dh}\to\R^{dh}$ by
\[
\rmT_\rmU(\xvec_1,\dots,\xvec_h) := (\rmU\xvec_1,\dots, \rmU\xvec_h), \qquad \xvec_\ell \in \R^d.
\]
By rotational invariance of $\rvk$ and Gaussianity of $\gvec_\ell$, we have
\[
    (\rvk,\gvec_1,\dots,\gvec_h) \;\sim \; (\rmU \rvk,\rmU\gvec_1,\dots,\rmU\gvec_h),
\]
and a direct calculation shows
\[
    \rvr(\rmU k,\rmU\gvec_1,\dots,\rmU\gvec_h) = \rmT_\rmU \, \rvr(\rvk,\gvec_1,\dots,\gvec_h).
\]
Hence $\rvr \sim \rmT_\rmU \rvr$ for all $\rmU \in O(d)$.
Taking expectations,
\[
    \rmT_\rmU \Sigma_{\mathrm{row}} \rmT_\rmU^\top
    = \E[\rmT_\rmU \rvr \rvr^\top \rmT_\rmU^\top]
    = \E[\rvr \rvr^\top]
    = \Sigma_{\mathrm{row}},
    \qquad \forall \rmU \in O(d).
\]
Thus $\Sigma_{\mathrm{row}}$ commutes with every block-rotation $\rmT_\rmU$.
By Schur's lemma / symmetry, the only matrices with this property are scalar multiples of the identity, so
\[
    \Sigma_{\mathrm{row}} = c \rmI_{dh}
\]
for some $c\ge 0$.
Since $\sigma$ is non-constant and $k,\gvec_\ell$ are non-degenerate, we have
\(\Var(\langle \rvr,\rvu\rangle) = \rvu^\top \Sigma_{\mathrm{row}} u > 0\) for some unit $u$, forcing $c>0$.
Therefore
\[
    \lambda_{\min}(\Sigma_{\mathrm{row}}) = c > 0,
\]
which is a positive constant independent of $F$, and hence trivially satisfies
$\lambda_{\min}(\Sigma_{\mathrm{row}}) \ge F^{-C}$ for some fixed $C$.
\end{proof}

\newpage

\subsubsection{Proof of \Cref{thm: encoder-bits-UB}}\label{pf: encoder-bits-UB}
\begin{proof}
    The full construction can be described as $g(\inVec) = \decMat\encMat (\sigma(\gaussMat \inVec) \odot (\gateMat \inVec) )$, where $\decMat \in \mathbb R^{\tokenDim \times \encDim}$, $\gateMat, \gaussMat \in \mathbb R^{\hiddenDim \times \tokenDim}$, $\encMat \in \mathbb R^{\encDim \times \hiddenDim}$ and $\inVec \in \mathbb R^\tokenDim$. A few of these we can bound easily. 
    \begin{enumerate}
        \item $\encMat$ is a matrix which contains just 1s, and thus contributes $\encDim \hiddenDim$ bits. 
        \item We will show in \Cref{rad_noisy_decoding} that $\decMat$ is a matrix which can be stored with values in $\{-1, 1\}$, which means that it can be stored using $\tokenDim \encDim$ bits.  
        \item The matrices $\gaussMat$ and $\gateMat$ are not as easy to determine how many bits they take to store since these matrices can take on continuous values. We need to prove two things. First, we need to show that the parameters of $\gaussMat$ and $\gateMat$ are bounded. Since $\gaussMat$ has rows that are normal, the magnitude of the parameters of $\gaussMat$ are bounded with high probability by \Cref{bound_gates_and_keys}. It remains to be shown that the parameters of $\gateMat$ are bounded by $O(\text{poly} \numKeys)$. If this is true, then the integer part of the parameter can be represented by $O(\log \operatorname{poly} \numKeys) = O(\log \numKeys)$ bits. This is proved in \Cref{bounding_theorem}.
        \item Second, we will prove that the parameters of these two matrices can be stored with finite precision. That is, if we truncate the decimal expansion of the parameter values of each of the matrices after a certain number of places, the construction still works when each parameter only has $O(\log \numKeys)$ bits of precision. This is proved in \Cref{bounding_precision}.
    \end{enumerate}
    Combining all of these steps completes the proof. 
\end{proof}

\subsubsection{Proof of \Cref{rad_noisy_decoding}}\label{pf:rad_noisy_decoding}
\begin{proof}
Set $\Phi := \tfrac{1}{\sqrt m}\decMat$ and $\encMat := \Phi^\top\Phi - \rmI$.
For $i\neq j$, write
\[
\avec_{ij} := \vvec_i - \vvec_j,\qquad \bvec_i := \uvec_i.
\]
Let $\gaussRow{i} := \decMat \uvec_i$ and $\Delta_i := \gaussRow{i}\odot\nu_i$, so $\rmH[i] = \gaussRow{i} + \Delta_i$.
Then
\[
\rmM \rmH[i]
= \tfrac{1}{\encDim}\decMat^\top(\gaussRow{i} + \Delta_i)
= \Phi^\top\Phi\,\vb_i + \tfrac{1}{\encDim}\decMat^\top\Delta_i
= (\rmI+\encMat)\vb_i + \tfrac{1}{\encDim}\decMat^\top\Delta_i,
\]
and the score gap is
\begin{equation}\label{eq:gap}
s_{ii} - s_{ij}
= \langle \va_{ij}, \mMat \rmH[i]\rangle
= \va_{ij}^\top \vb_i
 + \va_{ij}^\top \encMat \vb_i
 + \frac{1}{\encDim}(\decMat \va_{ij})^\top\Delta_i.
\end{equation}

\paragraph{Margin term.}
By the definition of $\rho$,
\begin{equation}\label{eq:margin}
\va_{ij}^\top \vb_i
= \langle \vvec_i - \vvec_j, \uvec_i\rangle
\;\ge\; \rho\,\|\va_{ij}\|\,\|\vb_i\|
\quad\forall i\neq j.
\end{equation}

\paragraph{JL event (inner products and norms).}
Define
\[
\hat \avec_{ij} := \frac{\avec_{ij}}{\|\avec_{ij}\|},\qquad
\hat \bvec_i := \frac{\bvec_i}{\|\bvec_i\|},
\]
and consider the finite set of unit-vector pairs
\[
\mathcal P :=
\{(\hat \avec_{ij},\hat \bvec_i) : i\in[N],\, j\neq i\}
\;\cup\;
\{(\hat \inVec,\hat \inVec) : \inVec\in X\},
\]
where \[X := \{\avec_{ij}: i\neq j\}\cup\{\bvec_i : i\in[N]\}.\]
Since the rows of $\Phi$ are isotropic subgaussian (Rademacher),
the Johnson--Lindenstrauss lemma implies:

for $\eta := \rho/2$, if
\[
\encDim \;\ge\; \frac{C}{\rho^2}\,
          \ln\!\frac{4N(N-1)}{\delta},
\]
then with probability at least $1-\delta$,
\[
\big|\langle\Phi \inVec,\Phi y\rangle - \langle \inVec,\rvy\rangle\big|
\le \eta
\quad\forall (\inVec,\yvec)\in\mathcal P.
\]
Following from \Cref{jl-ip-result}.

On this event, we get:

(i) For $(\inVec,\yvec)=(\hat \avec_{ij},\hat \bvec_i)$,
\[
|\hat \avec_{ij}^\top \encMat\hat \bvec_i|
= \big|\langle\Phi\hat \avec_{ij},\Phi\hat \bvec_i\rangle
     - \langle\hat \avec_{ij},\hat \bvec_i\rangle\big|
\le \frac{\rho}{2},
\]
so
\begin{equation}\label{eq:E}
|\avec_{ij}^\top \encMat \bvec_i|
\le \frac{\rho}{2}\,\|\avec_{ij}\|\,\|\bvec_i\|
\quad\forall i\neq j.
\end{equation}

(ii) For $(\inVec,\yvec)=(\hat \inVec,\hat \inVec)$,
\[
|\|\Phi\hat \inVec\|^2 - 1|
= \big|\langle\Phi\hat \inVec,\Phi\hat \inVec\rangle - 1\big|
\le \frac{\rho}{2}
\le 1,
\]
so $\|\Phi\hat \inVec\|\le \sqrt{2} \le 2$ and hence
\begin{equation}\label{eq:norm}
\|\decMat \inVec\| = \sqrt \encDim\,\|\Phi \inVec/\|\inVec\|\|
\le 2\sqrt \encDim\,\|\inVec\|
\quad\forall \inVec\in X.
\end{equation}

\paragraph{Noise term.}
Since $|\nu_{i,k}|\le\varepsilon$, we have
\[
|\Delta_{i,k}| = |\gaussEle{i}{k}\nu_{i,k}|
\le \varepsilon\,|\gaussEle{i}{k}|,
\quad\Rightarrow\quad
\|\Delta_i\|\le\varepsilon\,\|\gaussRow{i}\|
= \varepsilon\,\|\decMat \bvec_i\|.
\]
By Cauchy--Schwarz and \eqref{eq:norm},
\[
\big|(\decMat \avec_{ij})^\top\Delta_i\big|
\le \|\decMat \avec_{ij}\|\,\|\Delta_i\|
\le \varepsilon\,\|\decMat \avec_{ij}\|\,\|\decMat \bvec_i\|
\le \varepsilon\,(2\sqrt \encDim\,\|\avec_{ij}\|)(2\sqrt \encDim\,\|\bvec_i\|),
\]
so
\begin{equation}\label{eq:noise}
\Big|\frac{1}{m}(\decMat \avec_{ij})^\top\Delta_i\Big|
\le 4\varepsilon\,\|\avec_{ij}\|\,\|\bvec_i\|
\quad\forall i\neq j.
\end{equation}

\paragraph{Conclusion.}
On the JL event, combining \eqref{eq:margin}, \eqref{eq:E}, and
\eqref{eq:noise} in \eqref{eq:gap} gives, for all $i\neq j$,
\[
\begin{aligned}
s_{ii} - s_{ij}
&\ge
\rho\,\|\avec_{ij}\|\,\|\bvec_i\|
 - \frac{\rho}{2}\,\|\avec_{ij}\|\,\|\bvec_i\|
 - 4\varepsilon\,\|\avec_{ij}\|\,\|\bvec_i\| \\
&=
\Big(\frac{\rho}{2} - 4\varepsilon\Big)\,
\|\avec_{ij}\|\,\|\bvec_i\|.
\end{aligned}
\]
Since $\avec_{ij} = \vvec_i - \vvec_j$ and $\bvec_i = \uvec_i$, this is exactly
\[
s_{ii} - s_{ij}
\;\ge\;
\Big(\frac{\rho}{2} - 4\varepsilon\Big)\,
\|\vvec_i - \vvec_j\|\,\|\uvec_i\|,
\]
as claimed.
\end{proof}

\subsubsection{Proof of \Cref{lm: mag-prob-LB}} \label{pf: mag-prob-LB}
\begin{proof}
    When the keys are Gaussian, $\key_i \sim \mathcal N(0,\rmI_\tokenDim)$, we have $\|\key_i\|_2^2 \sim \chi^2_\tokenDim$ and standard
concentration implies
\[
\Pr\big(\|\key_i\|_2 \ge \sqrt \tokenDim + t\big) \le \exp(-c t^2)
\quad\forall t\ge 0
\]
for some absolute constant $c>0$.(See \href{https://www.math.uci.edu/~rvershyn/papers/HDP-book/HDP-1.pdf?utm_source=chatgpt.com}{Theorem 3.1.1}) By a union bound,
\[
\Pr\Big(\max_{1\le i\le \numKeys} \|\key_i\|_2 \ge \sqrt d + t\Big)
\le \numKeys \exp(-c t^2).
\]
Taking $t = \sqrt{C\log \numKeys}$ with $C$ large enough, we obtain
\[
\max_{1\le i\le \numKeys} \|\key_i\|_2 \le \sqrt \tokenDim + \sqrt{C\log \numKeys}
\]
with probability at least $1 - \numKeys^{-\Omega(1)}$. Thus, defining
$R_\inVec(\numKeys) := \sqrt \tokenDim + \sqrt{C\log \numKeys}$ and assuming $\tokenDim \le \poly(\numKeys)$, we have $R_\inVec(\numKeys)\le \poly(\numKeys)$, so the
deterministic assumption $\|\key_i\|_2 \le R_\inVec(\numKeys)$ for all $i$ holds with high probability.
\end{proof}

\subsubsection{Proof of \Cref{lem:row_cov_rot_inv}} \label{pf: row_cov_rot_inv}
\begin{proof}
For any orthogonal $\orthoMat \in O(\tokenDim):=\{\rmV \in \mathbb R^{\tokenDim \times \tokenDim}: \rmV^\top \rmV = \rmI_\tokenDim\}$, define a block-rotation $T_{\orthoMat}:\R^{\tokenDim \hiddenDim}\to\R^{\tokenDim \hiddenDim}$ by
\[
    T_{\orthoMat}(\inVec_1,\dots,\inVec_\hiddenDim) := (\orthoMat\inVec_1,\dots, \orthoMat\inVec_\hiddenDim), \qquad \inVec_\ell \in \R^\tokenDim.
\]
By rotational invariance of $\key$ and $\gaussRow{\ell}$, we have
\[
    (\key,\gaussRow{1},\dots,\gaussRow{\hiddenDim}) \;\sim\; (\orthoMat\key,\orthoMat \gaussRow{1},\dots,\orthoMat \gaussRow{\hiddenDim}),
\]
and a direct calculation\footnote{\[
\randVec(\orthoMat\key, \orthoMat \gaussRow{1}, \ldots, \orthoMat \gaussRow{\hiddenDim})
  = \bigl(\sigma((\orthoMat \gaussRow{1})^\top \orthoMat\key)(\orthoMat\key)^\top,\ldots,
          \sigma((\orthoMat \gaussRow{\hiddenDim})^\top \orthoMat\key)(\orthoMat\key)^\top\bigr).
\]
Since $\orthoMat$ is orthogonal, $(\orthoMat \gaussRow{\ell})^\top \orthoMat\key = \gaussRow{\ell}^\top \orthoMat^\top \orthoMat\key = \gaussRow{\ell}^\top \key$, so this becomes
\[
\randVec(\orthoMat\key, \orthoMat \gaussRow{1}, \ldots, \orthoMat \gaussRow{\hiddenDim})
  = \bigl(\sigma(\gaussRow{1}^\top \key)(\orthoMat\key)^\top,\ldots,
          \sigma(\gaussRow{\hiddenDim}^\top \key)(\orthoMat\key)^\top\bigr).
\]

On the other hand, applying $T_{\orthoMat}$ to 
$ \randVec(\key, \gaussRow{1}, \ldots, \gaussRow{\hiddenDim})
  = \bigl(\sigma(\gaussRow{1}^\top \key)\key^\top,\ldots,
          \sigma(\gaussRow{\hiddenDim}^\top \key)\key^\top\bigr)$
clearly gives the same result, so the two expressions coincide.
} 
shows
\[
    \randVec(\orthoMat\key,\orthoMat \gaussRow{1},\dots,\orthoMat \gaussRow{\hiddenDim}) = T_{\orthoMat} \, \randVec(\key,\gaussRow{1},\dots,\gaussRow{\hiddenDim}).
\]

Taking expectations,
\[
    T_{\orthoMat} \Sigma_{\mathrm{row}} T_{\orthoMat}^\top
    = \E[T_{\orthoMat} \randVec \randVec^\top T_{\orthoMat}^\top]
    = \E[\randVec \randVec^\top]
    = \Sigma_{\mathrm{row}},
    \qquad \forall \orthoMat \in O(\tokenDim).
\]
Thus $\Sigma_{\mathrm{row}}$ commutes with every block-rotation $T_{\orthoMat}$.

Looking at the $(i,j)$ block of this identity $\aMat_{ij} \in \mathbb R^{d \times d}$ yields
\[
    \orthoMat \aMat_{ij} \orthoMat^\top = \aMat_{ij},\qquad \forall \orthoMat\in O(d).
\tag{1}
\]

\smallskip\noindent
\emph{Step 1: form of $\aMat_{ij}$.}
Let $M\in\R^{d\times d}$ be symmetric and satisfy $\orthoMat\mMat\orthoMat^\top=\mMat$ for all $\orthoMat\in O(d)$. Then, it is a well known result that $\mMat=\lambda \rmI_d$\footnote{Theorem A.4 in \citep{kotelenez2008depletioneffectcolloidscorrelated}}.

Applying this to each symmetric $\aMat_{ij}$ in (1) gives
\[
    \aMat_{ij}=\lambda_{ij} \rmI_d
    \quad\text{for some }\lambda_{ij}\in\R.
\tag{2}
\]

\smallskip\noindent
\emph{Step 2: diagonal blocks.}
Since the $\rvg_\ell$ are i.i.d., each $\rvr_i$ has the same distribution, so
$\aMat_{11}=\cdots=\aMat_{hh}=c\rmI_d$ for some $c\ge 0$.
Moreover,
\[
    c \rmI_d = \aMat_{11}
    = \E[\rvr_1 \rvr_1^\top]
    = \E\big[\sigma(\rvg_1^\top \rvk)^2\,\rvk\rvk^\top\big],
\]
and by non-degeneracy of $(\rvk,\rvg_1)$ and non-constancy of $\sigma$ we have
$ \E[\sigma(\rvg_1^\top \rvk)^2\|\rvk\|_2^2]>0$, so $c>0$.

\smallskip\noindent
\emph{Step 3: off-diagonal blocks vanish.}
For $i\neq j$,
\[
    \aMat_{ij}
    = \E[\sigma(\rvg_i^\top \rvk)\sigma(\rvg_j^\top \rvk)\,\rvk\rvk^\top].
\]
Conditioning on $k$ and using
\(
    \E(f(Z)Y\mid Z)=f(Z)\E(Y\mid Z)
\),
we obtain
\[
    \aMat_{ij}
    = \E\Big[
        \rvk\rvk^\top
        \E\big[\sigma(\rvg_i^\top \rvk)\sigma(\rvg_j^\top \rvk)\mid \rvk\big]
    \Big].
\]
Given $\rvk$, the vectors $\rvg_i,\rvg_j$ are independent and identically distributed, hence
\[
    \E\big[\sigma(\rvg_i^\top \rvk)\sigma(\rvg_j^\top \rvk)\mid \rvk\big]
    = \E[\sigma(\rvg_1^\top \rvk)\mid \rvk]^2.
\]
Let $\lambda(\rvk):=\E[\sigma(\rvg_1^\top \rvk)\mid \rvk]$. Assumption (iv) gives
$\lambda(\rvk)=0$ a.s., so $\lambda(\rvk)^2=0$ a.s. and therefore
\[
    \aMat_{ij}
    = \E[\rvk\rvk^\top\,\lambda(\rvk)^2] = 0,
    \qquad i\neq j.
\tag{3}
\]

\smallskip
Combining (2), (3), and the identification of the diagonal blocks,
\[
    \Sigma_{\mathrm{row}}
    = \mathrm{diag}(c\rmI_d,\dots,c\rmI_d)
    = c\,\rmI_{dh},
\]
so all eigenvalues of $\Sigma_{\mathrm{row}}$ equal $c>0$.
\end{proof}

\subsubsection{Proof of \Cref{bounding_theorem}}
\begin{proof} \label{pf: bounding_theorem}
Let $\tilde \randVec_i := \Sigma_{\mathrm{row}}^{-1/2} \randVec_i$ and let $\tilde \mMat\in\R^{\numKeys\times \tokenDim \hiddenDim}$ have rows
$\tilde \randVec_i^\top$. By construction, the rows of $\tilde \mMat$ are independent, isotropic, subgaussian
random vectors in $\R^{\tokenDim \hiddenDim}$, and $\|\tilde \randVec_i\|_{\psi_2}$ is bounded uniformly\footnote{Subgaussianity is preserved under linear
maps: for any $\uvec\in\R^{\tokenDim \hiddenDim}$,
$\langle \uvec,\tilde \randVec_i\rangle = \langle \Sigma_{\mathrm{row}}^{-1/2}{}^{\!\top}\uvec, \randVec_i\rangle$ is subgaussian with
$\|\langle \uvec,\tilde \randVec_i\rangle\|_{\psi_2} \le K \|\Sigma_{\mathrm{row}}^{-1/2}\|_{\text{op}}\|\uvec\|_2$, where
$K$ bounds $\|\randVec_i\|_{\psi_2}$. By assumption 2,
$\|\Sigma_{\mathrm{row}}^{-1/2}\|_{\text{op}} = 1/\sqrt{\lambda_{\min}(\Sigma_{\mathrm{row}})} \le 1/\sqrt{\lambda_0}$,
so $\|\tilde \randVec_i\|_{\psi_2} \lesssim K/\sqrt{\lambda_0}$, a constant independent of $\numKeys$.}
 in $\numKeys$.

Apply \Cref{sub-gaussian-rows} to $\tilde \mMat$ with
$N=\numKeys$ and $n=\tokenDim \hiddenDim$. There exist constants $c,C>0$ depending only on the subgaussian norm such that,
with probability at least $1-2\exp(-c t^2)$,
\[
    \sqrt{\numKeys} - C\sqrt{\tokenDim \hiddenDim} - t
    \;\le\; s_{\min}(\tilde \mMat)
    \;\le\; s_{\max}(\tilde \mMat)
    \;\le\; \sqrt{\numKeys} + C\sqrt{\tokenDim \hiddenDim} + t
    \quad\forall t\ge 0.
\]
Choose $t=\sqrt{\numKeys}/4$ and use the assumption $\numKeys\ge C_0 \tokenDim \hiddenDim$ with $C_0$ large enough to obtain
\[
    s_{\min}(\tilde \mMat) \;\ge\; c_1 \sqrt{\numKeys}
\]
for some constant $c_1>0$, with probability at least $1-\exp(-c_2 \numKeys)$.

Since $\mMat = \tilde \mMat\,\Sigma_{\mathrm{row}}^{1/2}$, we have
\[
    s_{\min}(\mMat)
    \;\ge\; \sqrt{\lambda_{\min}(\Sigma_{\mathrm{row}})}\, s_{\min}(\tilde \mMat)
    \;\ge\; \sqrt{\lambda_0}\, c_1 \sqrt{\numKeys}
    \;=\; c_3 \sqrt{\numKeys}.
\]
Furthermore,
\[
    \|\target\|_2^2 \;=\; \sum_{i=1}^\numKeys \target_i^2 \;\le\; \numKeys\,B(\numKeys)^2,
    \qquad\Rightarrow\qquad
    \|\target\|_2 \;\le\; \sqrt{\numKeys}\,B(\numKeys) \;\le\; \poly(\numKeys).
\]

Let $\va_\star$ be the minimum–norm solution $\mMat a=\target$, so $\va_\star = \mMat^\dagger \target$ and
$\|\mMat^\dagger\|_{\mathrm{op}} = 1/s_{\min}(\mMat)$. Then
\[
    \|\va_\star\|_2
    \;=\; \|\mMat^\dagger \target\|_2
    \;\le\; \|\mMat^\dagger\|_{\mathrm{op}} \,\|\target\|_2
    \;=\; \frac{\|\target\|_2}{s_{\min}(\mMat)}
    \;\le\; \frac{\sqrt{\numKeys}\,B(\numKeys)}{c_3 \sqrt{\numKeys}}
    \;=\; \frac{B(\numKeys)}{c_3}
    \;\le\; \poly(\numKeys).
\]
This holds for each output coordinate $j$, and stacking the corresponding vectors $a_\star^{(j)}$
over $m=\poly(\numKeys)$ coordinates preserves a $\poly(\numKeys)$ bound on the encoder parameter norm.
\end{proof}

\subsubsection{Proof of \Cref{lm: enc_lipschitz}}
\begin{proof} \label{pf: enc_lipshitz}
Fix $i$ and write $\inVec := \key_i$. For fixed $\inVec$, view $\enc_\theta(\inVec)$ as a function
$\R^P\to\R$ of the parameter vector $\theta$. Its partial derivatives are, for each
$r\in[\hiddenDim]$ and $\ell\in[\tokenDim]$,
\[
  \frac{\partial \enc_\theta(\inVec)}{\partial \gateMat_{r\ell}}
  = \sigma(\langle \gaussRow{r},\inVec\rangle)\,\inVec_\ell,
  \qquad
  \frac{\partial \enc_\theta(\inVec)}{\partial \gaussMat_{r\ell}}
  = \sigma'(\langle \gaussRow{r},\inVec\rangle)\,\langle \avec_r,\inVec\rangle\,\inVec_\ell.
\]
On the parameter ball $\|\theta\|_2\le R_\theta(\numKeys)$ and with $\|\inVec\|\le R_\inVec(\numKeys)$ we have
$|\langle \gaussRow{r},\inVec\rangle| \le \|\gaussRow{r}\|\|\inVec\|\le R_\theta(\numKeys)R_\inVec(\numKeys) = B(\numKeys)$, so by assumption
$|\sigma(\langle \gaussRow{r},\inVec\rangle)| \le C_\sigma$ and
$|\sigma'(\langle \gaussRow{r},\inVec\rangle)| \le C'_\sigma$.
Moreover $|\inVec_\ell|\le R_\inVec(\numKeys)$ and
\[
  |\langle \avec_r,\inVec\rangle|
  \le \|\avec_r\|\,\|\inVec\|
  \le R_\theta(\numKeys)R_\inVec(\numKeys).
\]
Hence
\[
  \Big|\frac{\partial \enc_\theta(\inVec)}{\partial \gateMat_{r\ell}}\Big|
  \le C_\sigma\,R_\inVec(\numKeys),
  \qquad
  \Big|\frac{\partial \enc_\theta(\inVec)}{\partial \gaussMat_{r\ell}}\Big|
  \le C'_\sigma\,R_\theta(\numKeys)R_\inVec(\numKeys)^2.
\]

The gradient $\nabla_\theta \enc_\theta(\inVec)\in\R^P$ collects all these partial derivatives,
so its Euclidean norm satisfies
\[
  \|\nabla_\theta \enc_\theta(\inVec)\|_2^2
  \;\le\;
  P\cdot\big(\max\{C_\sigma R_\inVec(\numKeys),\,C'_\sigma R_\theta(\numKeys)R_\inVec(\numKeys)^2\}\big)^2
  \;\le\;
  C\,\poly(\numKeys)^2
\]
for some constant $C>0$, using $P\le\poly(\numKeys)$ and $R_\inVec(\numKeys),R_\theta(\numKeys)\le\poly(\numKeys)$.
Thus there exists $L(\numKeys)\le C^{1/2}\poly(\numKeys)$ such that
\[
  \|\nabla_\theta \enc_\theta(\inVec)\|_2 \;\le\; L(\numKeys)
  \quad\text{for all }\|\theta\|_2\le R_\theta(\numKeys).
\]

For any $\theta,\theta'$ with $\|\theta\|_2,\|\theta'\|_2\le R_\theta(\numKeys)$, the mean value
inequality in $\R^P$ yields
\[
  |\enc_\theta(\inVec) - \enc_{\theta'}(\inVec)|
  \;\le\;
  \sup_{\tilde\theta \in [\theta,\theta']}
  \|\nabla_\theta \enc_{\tilde\theta}(\inVec)\|_2
  \cdot \|\theta-\theta'\|_2
  \;\le\;
  L(\numKeys)\,\|\theta-\theta'\|_2.
\]
Since $L(\numKeys)\le\poly(\numKeys)$ by construction, this proves the claim.
TODO: Cite ??? to show that assumption 4 holds. 
\end{proof}

\subsubsection{Proof of \Cref{bounding_precision}}
\begin{proof} \label{pf: bounding_precision}
\textbf{Step 1: Allowed code noise.}
From \Cref{rad_noisy_decoding}, there is a constant $c_0>0$ such that,
if the code for fact $i$ is perturbed by at most $c_0\rho$ in
an appropriate sense (as in the theorem’s proof), then the score margin
remains positive:
\[
    s_{ii} - s_{ij} \;\ge\; \Omega(\rho)\,\|\vvec_i-\vvec_j\|\,\|\uvec_i\|.
\]
Thus the encoder codes are robust to perturbations of size
$\Theta(\rho)$. Using (i), we have
\[
    \rho \;\ge\; \frac{1}{\poly(\numKeys)},
\]
so the allowed code noise is at least $1/\poly(\numKeys)$.

\medskip
\noindent
\textbf{Step 2: From parameter perturbation to code perturbation.}
Let $\theta_\star$ be the ideal encoder parameters and
$\tilde\theta$ any other parameter vector. For each key $\key_i$,
define the code perturbation
\[
    \Delta_i := \enc_{\tilde\theta}(\key_i) - \enc_{\theta_\star}(\key_i).
\]
By the Lipschitz assumption (ii),
\[
    \|\Delta_i\|
    =
    \|\enc_{\tilde\theta}(\key_i) - \enc_{\theta_\star}(\key_i)\|
    \;\le\;
    L(\numKeys)\,\|\tilde\theta - \theta_\star\|
    \quad\forall i.
\]
To keep the codes within the robustness radius from Step~1, it suffices
to impose
\[
    \|\Delta_i\| \;\le\; c_0\rho
    \quad\forall i.
\]
A sufficient condition is therefore
\[
    \|\tilde\theta - \theta_\star\|
    \;\le\;
    \delta(\numKeys)
    := \frac{c_0\rho}{L(\numKeys)}.
\]
Using (i) and (ii), we obtain
\[
    \delta(\numKeys)
    \;\ge\;
    \frac{c_0}{\poly(\numKeys)\,\poly(\numKeys)}
    \;=\;
    \frac{1}{\poly(\numKeys)}.
\]
So there is a ball of radius at least $1/\poly(\numKeys)$ around $\theta_\star$
in parameter space such that any $\tilde\theta$ in this ball produces
codes that \Cref{rad_noisy_decoding} can tolerate.

\medskip
\noindent
\textbf{Step 3: Quantization and choice of grid size.}
Now quantize each coordinate of $\theta_\star$ to a grid of step size
$\Delta>0$, obtaining $\tilde\theta$. Each coordinate changes by at most
$\Delta/2$, so
\[
    \|\tilde\theta - \theta_\star\|_2
    \;\le\;
    \sqrt{P}\,\frac{\Delta}{2}.
\]
To guarantee $\|\tilde\theta - \theta_\star\|\le\delta(\numKeys)$, it is enough
to choose $\Delta$ so that
\[
    \sqrt{P}\,\frac{\Delta}{2}
    \;\le\;
    \delta(\numKeys)
    \quad\Longleftrightarrow\quad
    \Delta \;\le\; \frac{2\,\delta(\numKeys)}{\sqrt{P}}.
\]
Using $\delta(\numKeys)\ge 1/\poly(\numKeys)$ and $P\le\poly(\numKeys)$ from (iii), we get
\[
    \frac{2\,\delta(\numKeys)}{\sqrt{P}}
    \;\ge\;
    \frac{1}{\poly(\numKeys)}.
\]
Thus the admissible step size $\Delta$ can be as large as
$1/\poly(\numKeys)$. In particular, we may pick
\[
    \Delta := \numKeys^{-c}
\]
for some constant $c>0$ large enough so that $\Delta\le 2\delta(\numKeys)/\sqrt{P}$.
This ensures $\|\tilde\theta - \theta_\star\|\le\delta(\numKeys)$ and, by
Step~2, that the induced code perturbations are within the noise budget
of \Cref{rad_noisy_decoding}. Hence decoding remains correct.

\medskip
\noindent
\textbf{Step 4: Bit complexity.}
By \cref{prec-assm-mag}, each parameter lies in an interval of length at most
$\text{range} \le 2\,\poly(\numKeys)$. With grid spacing
$\Delta = \numKeys^{-c} = 1/\poly(\numKeys)$, the number of distinct values per
parameter is at most
\[
    \frac{\text{range}}{\Delta}
    \;\le\;
    \frac{\poly(\numKeys)}{1/\poly(\numKeys)}
    = \poly(\numKeys).
\]
Therefore the number of bits per parameter is
\[
    \log_2\Big(\frac{\text{range}}{\Delta}\Big)
    = O(\log \poly(\numKeys))
    = O(\log \numKeys).
\]
This proves that encoder parameters require only $O(\log \numKeys)$ bits of
precision.

\end{proof}

\end{document}